\documentclass{article}
\usepackage{natbib}
\usepackage{amsmath}
\usepackage{amsthm}
\usepackage{amssymb}
\usepackage{mathtools}
\usepackage{bbm}
\usepackage{bm}
\usepackage{tikz}
\usepackage{nicefrac}
\usepackage{pgfplots}
\usepackage{tabularx}
\usepackage{wrapfig}
\usepackage{subcaption}
\usepackage{hhline}
\usepackage{algorithm,algorithmicx}
\usepackage[noend]{algpseudocode}
\usetikzlibrary {datavisualization}
\usetikzlibrary{arrows.meta, positioning} 
\usetikzlibrary{patterns}
\usetikzlibrary{calc}
\usepackage{graphicx} 
\usepackage{hyperref}
\usepackage{cleveref}
\usepackage{color}
\usepackage[suppress]{color-edits}
\addauthor{hb}{magenta}
\definecolor{dgreen}{RGB}{0,165,0}
\addauthor{am}{dgreen}
\usepackage{enumitem}
\usepackage[margin=1in]{geometry}
\usepackage{authblk}

\usepackage{enumitem}

\newcommand{\Ins}{\texttt{Insert}}

\newcommand{\Draw}{\texttt{Draw}}
\newcommand{\Wei}{\texttt{CReward}}
\newcommand{\RT}{\texttt{Range Tree}}
\newcommand{\IT}{\texttt{Interval Tree}}
\newcommand{\LC}{\texttt{Left\_child}}
\newcommand{\ROOT}{\texttt{Root}}
\newcommand{\RC}{\texttt{Right\_child}}
\newcommand{\EH}{\texttt{Height}}

\newcommand{\RRF}{h}
\newcommand{\PAR}{\texttt{Par}}
\newcommand{\LZM}{\texttt{Lazy\_Update\_Message}}

\newcommand{\RA}{\texttt{Int}}

\newcommand{\Update}{\texttt{Update}}
\newcommand{\Hp}{\Psi} %
\newcommand{\hp}{\psi} %
\newcommand{\Ax}{\Gamma}
\newcommand{\ax}{\gamma}
\newcommand{\bp}{c}

\newcommand{\Unif}{\mathcal{U}}

\newcommand{\Poly}{\text{Poly}}
\newcommand{\Rg}{\mathcal{P}}
\newcommand{\rg}{\rho}
\newcommand{\fu}{f}
\newcommand{\Fu}{F}
\newcommand{\SC}{A}
\newcommand{\Exp}{\texttt{Weighted Majority}}
\newcommand{\Band}{\texttt{EXP3}}
\newcommand{\param}{n}
\newcommand{\Alg}{\texttt{ALG}}
\newcommand{\Opt}{\texttt{OPT}}
\newcommand{\Tree}{\mathcal{T}}
\newcommand{\ATree}{\mathcal{E}}

\newcommand{\LZL}{\mathcal{L}}
\newcommand{\LZR}{\mathcal{R}}

\newcommand{\AS}{\text{action parameter space}}

\newcommand{\FP}{\text{reward function parameters}}
\newcommand{\rew}{\omega}
\newcommand{\CR}{r}
\newcommand{\MF}{g} 
\newcommand{\Dist}{\mathcal{D}} 
\newcommand{\FCA}{\texttt{First\_Common\_Ancestor}}
\newcommand{\Loc}{\texttt{Locate}}
\DeclareMathOperator{\Expect}{\mathbbm{E}}
\DeclarePairedDelimiter\abs{\lvert}{\rvert}

\newtheorem{theorem}{Theorem}[section]
\newtheorem{corollary}{Corollary}[theorem]
\newtheorem{lemma}[theorem]{Lemma}

\newtheorem{observation}{Observation}
\newtheorem{remark}{Remark}
\newtheorem{example}{Example}

\newtheorem{definition}{Definition}

\algnewcommand\algorithmicinput{\textbf{Input:}}
\algnewcommand\Input{\item[\algorithmicinput]}

\title{Efficient Online Proportional Sampling with Applications to Smoothed Online Learning}

\author[1]{Amirmahdi Mirfakhar}
\author[2]{Maria-Florina Balcan}
\author[1]{Hedyeh Beyhaghi}

\affil[1]{University of Massachusetts Amherst\\}
\affil[2]{Carnegie Mellon University\\
}

\date{}

\begin{document}

\maketitle

\begin{abstract}
We study the problem of efficient online proportional sampling from a high-dimensional domain under a $\sigma$-smoothed adversary, where the sampling distribution is induced by a dynamically evolving weight function defined over a sequence of piecewise-structured partitions. This setting captures a broad range of applications,
including principal-agent games (e.g., pricing and contract design), and algorithm configuration and parameter tuning. The central challenge is maintaining an efficient data structure as the induced partition grows increasingly complex over time---naively, the number of subregions can grow as $O(t^d)$ by round $t$ in $d$ dimensions. We design a data structure that supports efficient updates and proportional sampling while avoiding the cost of explicitly maintaining this exponential growth, where the discontinuities are structured from axis-parallel hyperplanes. Under a $\sigma$-smoothed adaptive adversary, we prove a tight $O(\sqrt{\sigma T})$ bound on the depth of our data structure, and an $O(\log T)$ bound under a random-order adversary---to our knowledge, the first such results for this class of problems. We apply this framework to online learning with piecewise-structured rewards, obtaining efficient no-regret algorithms under both full-information and bandit feedback, with provable sublinear regret guarantees.
\end{abstract}

\section{Introduction}

Many sequential decision-making problems require a learner to repeatedly select actions from a continuous domain in response to an environment whose feedback is evolving over time. A natural and recurring structure in such settings is \emph{piecewise-continuous} feedback: the reward function is piecewise-Lipschitz within regions of the action space, but changes abruptly across region boundaries. This structure arises organically in a broad range of applications --- in principal-agent settings such as dynamic pricing \citep{DBLP:journals/tmlr/BalcanB24,blum2005near} and contract design \citep{zhu23sample}, where the agent's best response to the learner's action creates sharp discontinuities in the reward function, and in algorithm configuration and parameter tuning \citep{balcan2018dispersion,DBLP:conf/nips/BalcanKST22,DBLP:conf/nips/BalcanS21,gupta2016pac}, where an algorithm's overall performance over instances as a function of its parameters is naturally piecewise-continuous.

Underlying all of these settings is a common computational primitive: \emph{proportional sampling} from a dynamically evolving weight function. At each round, the learner must select an action with probability proportional to the cumulative rewards observed so far --- a distribution that changes as new feedback arrives and the partition of the domain is refined. Proportional sampling is well-studied in static settings \citep{cochran1977sampling, brewer1983sampling, cheung2014}, and plays a central role in randomized algorithms \citep{motwani1995randomized}, online learning \citep{cesa2006prediction}, and importance sampling. However, the dynamic setting we consider is fundamentally different: each round introduces a new piecewise-structured update to the weight function, and as these updates accumulate over $t$ rounds in a $d$-dimensional domain, the number of distinct regions induced by the partition grows as $O(t^d)$. This combinatorial explosion makes naively maintaining the weight function and sampling from it computationally infeasible --- and yet, to our knowledge, the question of how to do this efficiently has received almost no attention. Prior work on online learning with piecewise-structured rewards \citep{balcan2018dispersion} has focused primarily on regret minimization and sample complexity, either assuming access to a sampling oracle or ignoring computational efficiency altogether. The only related result is due to~\citep{cohen2017online}, who give an efficient algorithm for the one-dimensional piecewise-constant case---a special case of our setting. We address this gap directly.

We study the following online setting. At each round $t \in \{1, \ldots, T\}$, an adversary introduces a partition of a $d$-dimensional domain $[0,1]^d$ into regions, each assigned a structured reward function. The cumulative reward function $H_t(x)$ accumulates these rewards over time, and the learner's goal at each round is to sample an action $x_t$ proportionally to a function of $H_t(x)$. The reward functions we consider are piecewise-polynomial with partition boundaries defined by hyperplanes from a fixed set of directions, e.g., axis-parallel hyperplanes. The key difficulty is that as partitions accumulate over $t$ rounds in $d$ dimensions, the number of distinct regions grows as $O(t^d)$, and the learner must sample from the induced distribution in real time without explicitly enumerating all regions.

The central computational challenge is maintaining a representation of $H_t$ that supports efficient sampling as the partition grows. A natural approach is to maintain a data structure that tracks the evolving partition and aggregates rewards across regions. However, this faces two fundamental difficulties. First, each new partition introduced by the adversary interacts with all previous ones: a new hyperplane along one coordinate must be reconciled with all existing boundaries along every other coordinate, triggering a cascade of updates that can touch $O(t^d)$ regions in a single round. Second, even if one could maintain the partition explicitly, sampling from the induced distribution requires computing region-wise cumulative weights
--- a task that becomes increasingly expensive as the partition refines. These difficulties compound in high dimensions, and under a fully adversarial environment, such cascades can be forced to occur at every round, making efficient maintenance provably intractable. This motivates a natural restriction on the adversary, which we discuss next.

We resolve the challenges above by providing a $d$-dimensional hierarchical data structure that maintains the evolving weight function and supports efficient proportional sampling under a $\sigma$-smoothed adversary. We address the potential $O(t^d)$ cascade triggers per round with two key ideas. First, we introduce a \emph{deferred insertion} mechanism that avoids propagating each new boundary through all affected regions immediately, instead recording contributions at higher levels of the hierarchy and recovering them efficiently during sampling. Second, we represent cumulative rewards using a \emph{parametric vector encoding} that allows deferred contributions to be scaled and composed correctly across regions in $O(1)$ time, even for piecewise-linear and more general polynomial rewards. Together, these ideas reduce the per-round cost from $O(t^d)$ to a quantity that depends only on the depth of the data structure --- which, under $\sigma$-smoothness, we show remains sublinear in $T$.
Our main contributions are as follows:
\begin{itemize}
    \item \textbf{Efficient hierarchical data structure for proportional sampling.} We design a $d$-dimensional hierarchical interval-tree data structure that supports efficient updates, cumulative reward queries, and exact proportional sampling under dynamically evolving piecewise-polynomial weight functions. Under a $\sigma$-smoothed adaptive adversary, \hbedit{every} operation runs in $O(d \cdot t^{\frac{d+1}{2}})$ time per round, and $O(d \log^{d+1} t)$ under a random-order oblivious adversary. (\Cref{thm:general_query_runtime})

    \item \textbf{{Smoothed analysis of tree depth and the longest increasing subsequence problem}.} We prove a tight $O(\sqrt{T})$ bound on the depth of our data structure under a $\sigma$-smoothed adaptive adversary, and $O(\log T)$ under a random-order oblivious adversary. To our knowledge, these are the first such results for hierarchical interval-tree structures under smoothed adversaries. {Our analysis introduces and optimally solves the expected length of the ``longest increasing subsequence under smoothed adversary'' problem, which is of independent interest.} (\Cref{thm:balancedness})

    \item \textbf{Efficient online learning.} We apply our framework to online learning with piecewise-structured rewards, obtaining efficient implementations of the Weighted Majority algorithm under full-information feedback and $\textsc{EXP3}$ under bandit feedback while satisfying no-regret for piecewise constant and linear settings. (\Cref{thm:full-info,thm:bandit})
\end{itemize}

\subsection{Overview of Results and Techniques}

\paragraph{Efficient and Scalable Multi-Dimensional Data Structure.}
{Our framework is built on a hierarchical interval-tree structure with one layer per coordinate. In \(d\) dimensions, the construction is recursive: each node at one level maintains an associated interval tree over the next coordinate, yielding a tree-of-trees representation in which \hbedit{\textit{atomic regions}---subregions that do not contain any further subregions, used in contrast to \textit{non-atomic} regions---}are defined by intersections of intervals across coordinates. In the online proportional sampling setting, successive refinements induce up to \(O(t^d)\) atomic regions by round \(t\), making direct maintenance of the full structure infeasible.}
We introduce a deferred insertion mechanism, called \emph{Lazy Insertion}, which avoids cascading each newly introduced boundary through all associated structures as in classical multi-dimensional interval trees~\citep{de2000computational}. This mechanism is enabled by a novel compact parametric vector representation of rewards, which allows updates and insertions to be handled efficiently while preserving scalable propagation across subregions. We then analyze the height and balance of the resulting data structure under adaptive and oblivious \(\sigma\)-smooth adversaries, and use these bounds to derive the efficiency and regret guarantees in our proportional sampling and online learning applications.

\paragraph{Lazy and Regular Associated $\IT$s.}
Our data structure maintains cumulative weight information over all atomic and non-atomic regions induced by the evolving partition of the space. Concretely, at each level it stores an interval tree over the endpoints obtained by projecting region boundaries onto the corresponding coordinate. In a classical insertion scheme, the endpoints induced by each newly formed region must be propagated into all associated structures corresponding to the relevant subregions, so that every node explicitly reflects the current partition.
Our key observation is that, since the data structure is designed to preserve proportional sampling rather than explicitly storing every induced subregion in all levels, many of the classical insertions can be deferred. To achieve this, each node maintains two associated data structures: a regular structure, which is updated as in the classical setting, and a lazy structure, which stores deferred insertions. During an update, newly introduced endpoints are propagated only along the traversal paths needed to locate the affected regions. Whenever the traversal reaches a node whose subtree lies entirely inside the relevant interval, the cascade stops: instead of inserting the remaining endpoints into all descendant associated structures, they are recorded in the lazy structure of that node and later scaled to the underlying subregions during queries. As a result, while classical maintenance may require \(O(t^d)\) insertions per newly induced partition, our lazy mechanism reduces the number of required insertions to \(O\left(\prod_{i\in[d]} \EH(\Tree^{(i)})\right)\), where \(\EH(\Tree^{(i)})\) denotes the expected height of the interval tree at level \(i\), built over the endpoints obtained by projecting boundaries onto the \(i\)-th coordinate. The conceptual idea is illustrated in~\Cref{fig:lazyinsertion} for the two-dimensional case, which provides useful intuition for the underlying architecture.

This approach differs from classical lazy propagation in one-dimensional dynamic trees~\citep{cohen2017online,ibtehaz2021multidimensional}, where segments are inserted explicitly and only their weight updates are deferred. In contrast, \emph{Lazy Insertion} defers the insertions themselves, allowing the affected subtrees to remain implicitly represented while preserving correctness.

\paragraph{Scalable Cumulative Reward Vectors.}
To ensure that the information maintained in the lazy structures remains constantly scalable and can be accurately combined with the regular structures during sampling, we introduce \emph{scalable cumulative reward vectors} in place of scalar cumulative rewards. For an inserted region, each such vector represents the {aggregate reward} (integral of the reward function) over that region, with its entries capturing the contributions of the different monomial terms arising in the integration process. Although summing these entries recovers the usual real-valued reward of the region, the vector representation allows each term to be scaled independently through closed-form expressions. This property is essential beyond piecewise-constant rewards, such as for piecewise-linear and more general structured reward functions, where direct scalar scaling is no longer feasible. In turn, this representation enables the lazy insertion structures to scale maintained reward information down to affected subregions in \(O(1)\) time and to compose it correctly with the explicitly maintained regular information, thereby preserving accurate proportional sampling. We illustrate this mechanism further in Example~\ref{example:scaling} in Section~\ref{sec:model}.

\paragraph{Balanced Interval Trees under $\sigma$-Smoothed Adversaries.}
Having reduced the insertion cost per region from \(O(t^d)\) to a traversal-depth term, the overall efficiency is governed by the expected height of the interval tree at each level. We therefore analyze this height under \(\sigma\)-smoothed adversaries.
We prove an asymptotically tight \(O(\sqrt{\sigma T})\) upper bound under both adaptive and oblivious adversaries. Thus, for the tree growth, adaptivity is no more powerful than obliviousness within the same sample budget. This contrasts with~\citep{haghtalab2024smoothed}, where adaptive adversaries are approximated by oblivious i.i.d.\ uniform samples using more draws. In our setting, {merely} the {arrival order of the endpoints} determines the height, and oblivious sequences already match the adaptive \(O(\sqrt{\sigma T})\) worst case. Under the more restrictive random-order oblivious adversary, the {expected} height further improves to \(O(\log T)\). This analysis implies efficiency of operations without the need to rebalance the tree.

\paragraph{Time Complexity.}
Our data structure supports online proportional sampling efficiently, even though the number of atomic regions can grow as \(O(t^d)\) by round \(t\). In particular, any data structure operation including sampling an action at round \(t\) from the current proportional distribution, can be implemented in time $O\!\left(d \prod_{i=0}^d \EH(\Tree^{(i)}) \right)$, where under an adaptive \(\sigma\)-smoothed adversary this becomes \(O(d\, t^{\frac{d+1}{2}})\), while under an oblivious \(\sigma\)-smoothed adversary it becomes \(O(d \log^{d+1} t)\).
The running time captures both the hierarchical traversal of the multi-dimensional interval tree and the lazy insertion mechanism used to maintain scalable cumulative reward information.

\paragraph{Data Structure and Efficient Regret Analysis under Full Information and Bandit Feedback.}
Our data structure also extends to adversarial online learning under both \emph{full-information} and \emph{bandit} feedback. We analyze the regret of the efficient \(\Exp\) and $\Band$ algorithm under both \emph{full-information} and \emph{bandit-feedback} settings in a $d$-dimensional $\AS$, where the adversary selects piecewise constant or linear reward functions. Our analysis extends the results of~\citep{cohen2017online} and aligns with the framework of~\citep{balcan2018dispersion}, demonstrating that the regret remains sublinear in time.

\paragraph{Organization.}
{In \Cref{sec:model}, we formalize our model of proportional sampling with dynamically evolving, piecewise-continuous weight functions.
\Cref{sec:que} introduces our framework for an efficient and scalable data structure.
\Cref{sec:ds} presents efficient implementation of our algorithm for proportional sampling. 
We then extend our framework to online learning in \Cref{sec:learning}, where we describe efficient algorithms for both full-information and bandit feedback settings. \Cref{sec:rel_work} provides extensive literature review. \Cref{section:quereis,section:correctness,section:generalpar,section:complexity} provide the details of our data structure’s queries, their correctness proofs, and runtime analysis. Finally, \Cref{sec:learningapp,app:bandit} describe our online learning algorithms and establish their regret guarantees.}
\section{Related Work}
\label{sec:rel_work}

Proportional sampling arises across survey sampling, theoretical computer 
science, and online learning, but has been studied almost exclusively in 
discrete or geometrically static settings. In statistics, probability 
proportional to size (PPS) sampling has a long history~\citep{cochran1977sampling,
brewer1983sampling,cheung2014}; in randomized algorithms and sketching it 
underlies sublinear methods~\citep{alon1999space,motwani1995randomized,
cormode2011synopses}; and in randomized linear algebra it drives core sampling 
schemes~\citep{drineas2006fast}. Our setting generalizes these to continuous 
domains with adversarially evolving geometry with piecewise structure, a regime 
none of these frameworks are designed to handle. Efficient proportional sampling 
in such setups bridges the gap between theory and application by making these 
algorithms practical in realistic, evolving environments.

Another important application of proportional sampling is in online learning, 
where it underlies foundational algorithms such as the exponentially-weighted 
forecaster (EWF)~\citep{cesa2006prediction}, Follow the Perturbed 
Leader~\citep{kalai2005efficient}, and EXP3~\citep{auer2002nonstochastic}. 
The proportional sampling underlying these algorithms typically operates over 
finite or convex action spaces with scalar-valued feedback. A separate line of 
work has studied online learning in settings with piecewise structure, 
particularly in applications such as pricing, contract design, and security 
games~\citep{blum2005near,balcan2018dispersion,dick2020semi,balcan2021learning,
DBLP:journals/tmlr/BalcanB24,balcan2024accelerating,
balcan2024algorithmconfigurationstructuredpfaffian}, focusing on regret 
guarantees and sample complexity under structural assumptions, but without 
addressing efficient sampling from evolving feedback functions. Across all these 
settings, proportional sampling is not merely a statistical tool but a 
computational primitive whose efficiency in highly evolving, piecewise structured 
domains determines the practicality of these algorithms, and bridging this gap 
between theory and application is the central challenge our work addresses.

To support efficient proportional sampling in such settings, we build on and 
extend the classical data structure literature. Classical data structures such 
as balanced binary trees, range trees, and quad-trees provide efficient support 
for operations over structured domains and spatial 
decompositions~\citep{cormen2022introduction,de2000computational}. These 
structures are well-suited to settings where the underlying partition is fixed 
and aggregates are scalar-valued---supporting value updates over predetermined 
subregions but not structural change. In contrast, our setting couples dynamic 
geometric refinement with vector-valued maintenance and continuous sampling: 
adversarially chosen hyperplanes arrive over time, progressively refining the 
partition and restructuring which subregions exist. We maintain vector-valued 
statistics that allow information about each region to be efficiently scaled 
down to newly created subregions, enabling tractable maintenance as the geometry 
evolves, and support proportional sampling from distributions defined by 
cumulative, piecewise polynomial functions.

Designing data structures for online learning has also been studied in the 
context of fixed hierarchical partitions~\citep{cesa2017algorithmic,bubeck2011x}, 
where chaining and recursive partitioning of the context space are used to 
construct structured sets of experts and obtain improved regret bounds, with 
a focus on controlling the complexity of the expert class. Our setting differs 
fundamentally: the partition is not fixed but evolves as adversarially chosen 
piecewise-structured rewards arrive over time, and the central challenge is 
maintaining efficient proportional sampling over this dynamic geometry rather 
than controlling expert class complexity. Our framework yields regret guarantees 
under both full-information and bandit feedback, but its primary contribution 
is the design of scalable sampling mechanisms that make these guarantees 
achievable in practice.

Our work connects to the smoothed analysis 
framework~\citep{spielman2001smoothed,gupta2016pac}, which provides the 
necessary conditions under which learning remains tractable despite 
discontinuities. The most directly related work is due to~\cite{cohen2017online}, 
who proposed an efficient algorithm for one-dimensional learning with fixed, 
piecewise-constant feedback against a $\sigma$-smoothed adversary; we build 
on and significantly generalize their framework to accommodate 
high-dimensional domains, dynamically evolving piecewise structure, and more 
expressive piecewise polynomial functions.

Our analysis of hierarchical interval-tree height under $\sigma$-smoothed 
adversaries is the first of its kind. While prior smoothing 
results~\citep{haghtalab2024smoothed} approximate the \emph{set} of 
adversarially generated endpoints by i.i.d.\ samples, they do not capture 
the \emph{order of arrival}, which is crucial for tree growth in our setting. 
This distinction leads to our tight $O(\sqrt{\sigma T})$ bound under adaptive 
$\sigma$-smoothed adversaries, in contrast to the $O(\log T)$ behavior that 
holds under i.i.d.\ arrivals~\citep{pittel1984growing}.

Finally, recent work on structured learning in multi-dimensional settings has 
begun to address computational concerns~\citep{DBLP:conf/iclr/BalcanDL20,
balcan2024accelerating}, but falls short of the logarithmic-in-time complexity 
required for efficient online sampling in dynamic environments. Our work is the 
first to design a data structure that supports exact proportional sampling over 
a continuously partitioned domain with time-varying piecewise polynomial rewards, 
while providing regret guarantees under both full-information and bandit feedback, 
closing the gap between the theoretical foundations laid by prior work and 
the computational demands of realistic, evolving environments.
\section{Model: Adversarial Online Proportional Sampling}\label{sec:model}

We consider a proportional sampling problem over the domain \([0,1]^d\), where \(d \in \mathbbm{N}\). At each round \(t\), the learner samples a point \(x_t \in [0,1]^d\), while a piecewise-structured reward function \(\RRF_t\) is adversarially defined as follows:

\begin{enumerate}

    \item \textbf{Action Space Partitioning:}
    At each round \(t\), the domain \([0,1]^d\) is partitioned, as in Figure~\ref{fig:2dft}, into disjoint regions \(\Rg_t = \{ \rg_{1,t}, \rg_{2,t}, \dots \}\) by a set of \(k \in \mathbb{N}\) hyperplanes \(\Hp_t = \{ \hp_{1,t}, \dots, \hp_{k,t} \}\) chosen by an adversary. Each hyperplane is parallel to some direction \(\ax_i\) in a fixed set \(\Ax = \{\ax_1, \dots, \ax_m\}\) of pairwise non-parallel reference directions. Under the \(\sigma\)-smooth model, for each direction \(\ax_i\), hyperplanes parallel to \(\ax_i\) are drawn independently from an adversarially chosen \(\sigma\)-smooth distribution \(\Dist^{(i)}_t\) over \([0,1]\), with independence across directions, {where a distribution is $\sigma$-smooth if its probability density function is bounded above by $\sigma \geq 1$}. Smoothness is imposed separately within each directional group. \hbedit{We present the partitioning in full generality; however, for simplicity, we present our technical operations for the axis-parallel case.}

    \item \textbf{Reward Assignment:}
    A piecewise-structured reward function \(\RRF_t : [0,1]^d \to \mathbb{R}_{\geq 0}\) is assigned over the partition \(\Rg_t\). Specifically, each region \(\rg_{j,t} \in \Rg_t\) is associated with a non-negative reward function \(h_{j,t}\), and \(h_t(x)=h_{j,t}(x)\) for all \(x \in \rg_{j,t}\). We assume that all region-wise reward functions share a common functional form (e.g., a fixed polynomial), while their parameters—namely, the coefficients of the corresponding monomials—may vary across regions and rounds. Thus, the reward function differs across regions only through its parameters, not its structure. We define the cumulative reward function as
\[
H_t(x)\doteq\sum_{t'=1}^t \RRF_{t'}(x).
\]
\end{enumerate}

We study the problem of efficiently sampling \(x_t\) from a distribution proportional to a transformation \(\zeta: \mathbb{R} \to \mathbb{R}_{\geq 0}\) of the cumulative reward:
\begin{align}
    x_t \sim \frac{\zeta\!\left(H_t(x)\right)}{\int_{x \in [0,1]^d} \zeta\!\left(H_t(x)\right)}.
    \label{eq:proportional}
\end{align}
Two canonical choices of \(\zeta\) recover standard sampling schemes: 
When \(\zeta(H_t(x))=H_t(x)\), this reduces to proportional sampling, i.e.,
    \(
    x_t \sim \frac{H_t(x)}{\int_{x \in [0,1]^d} H_t(x)}.
    \) 
When \(\zeta(H_t(x))=\exp(\eta H_t(x))\) for \(\eta \in (0,1)\), we obtain exponential-weighted sampling:
    \(
    x_t \sim \frac{\exp(\eta H_t(x))}{\int_{x \in [0,1]^d} \exp(\eta H_t(x))}.
    \)
The first case naturally extends to polynomial choices of \(\zeta\), while the second case connects this sampling framework to exponential-weight methods in adversarial online learning.

\paragraph{Smoothed Adversarial Models.}
The sequence of partitions is governed by the hyperplanes \(\{\Hp_t\}_{t\ge1}\).
At round \(t\), the adversary selects distributions \(\Dist^{(\cdot)}_t\) corresponding to directions $\Ax$, where the hyperplanes in \(\Hp_t\) are sampled from. The adversary is \emph{adaptive} if the selection of the distribution depends on previously sampled hyperplanes, based on the past hyperplane sets \(\Hp_1,\dots,\Hp_{t-1}\), and \emph{oblivious} if the selection is fixed a priori. We also consider \emph{random-order oblivious}, in which the adversary first selects $Tk$ distributions, and the sampled hyperplanes arrive in a uniformly random order.
\section{Our Framework: Efficient and Scalable  Data Structure $\Tree$}\label{sec:que}

We now introduce our approach for designing a new multi-dimensional data structure that enables efficient proportional sampling and further supports online learning against a \(\sigma\)-smoothed adversary.

\subsection{Reward Function Parametric Representation.}

{As discussed in the introduction, supporting efficient operations in this rapidly evolving partition requires handling both deferred insertions and downward-scalable auxiliary structures that remain consistent with regular insertions. We capture this through a novel \emph{parametric representation} of the reward. Rather than storing rewards directly, we assign each region \(\rg_{i,t} \in \Rg_t\) a coefficient vector in a fixed reward family, from which the reward is recovered via a known mapping. We establish additivity {of these vectors} across rounds, updates of the form \(H_t = H_{t-1} + h_t\) reduce to simple vector additions. This yields efficient maintenance of cumulative rewards while preserving consistency across regular and deferred operations.}

We reinterpret the piecewise reward function \(\RRF_t\), assumed to be piecewise polynomial, as a composition of two components: a mapping function \(\MF\) and a parameter function \(\fu_t\). We model the piecewise polynomial reward function \(\RRF_t\) as
$\RRF_t(x)=\MF(x,\fu_t(x))$,
where:
\(\MF:\mathbb{R}^d\times\mathbb{R}^{\param}\to\mathbb{R}\) 
    is a known mapping that specifies a fixed family of log-concave 
    polynomial rewards through a fixed set of \(\param\) monomials in 
    \(x\), and \(\fu_t:\Rg_t\to\mathbb{R}^{\param}\) is a parameter function 
    that assigns to each induced region its coefficient vector.

For instance, for $\RRF_t(x) = 3x^{(0)} + 5x^{(1)}$, the monomial 
vector is $\langle x^{(0)}, x^{(1)}\rangle$, $\fu_t = \langle 3, 
5\rangle$, and $\MF(x, \fu_t(x)) = \langle x^{(0)}, x^{(1)}\rangle 
\cdot \fu_t(x)$ recovers $\RRF_t$. Thus the monomial family is fixed 
while the adversary controls region-wise coefficients via reward parameter function $\fu_t$; 
examples for higher-degree polynomials are in~\Cref{examp:decompose}. 
As formalized in~\Cref{sec:aux_param}, this induces a natural additive structure over time for parameter functions as 
$$\Fu_t(x) \doteq \sum_{t' \leq t} \fu_{t'}(x).$$

\paragraph{Remark.}
In both feedback models, $\MF$ is fixed and known; under full 
information the learner observes $\fu_t$ entirely, while under bandit 
feedback only $\fu_t(\rg_{i,t})$ for the region containing $x_t$ is 
revealed.

\paragraph{Reward Aggregation over Regions.}
\begin{definition}
{The aggregate reward of region $\rg$ under $\Fu_t$ is}
\begin{align}
    \CR(\rg) = \int_{x \in \rg} \MF(x, \Fu_t(x))\,dx. \label{eq:crew}
\end{align}
\end{definition}

\subsection{Downward Scalability via Vector Encoding of $\CR(\rg)$.}
We represent the cumulative reward of a region $\rg$ as a vector 
$\vec{\CR}(\rg)$, where each entry corresponds to a monomial component 
of the reward, enabling computation of $\CR(\rg')$ for any subregion 
$\rg' \subseteq \rg$ by scaling each entry independently using only 
the {boundaries of $\rg'$ and $\rg$.}

For example, consider a linear reward of the form
\(
g(x,\Fu_t(x)) = c_1 x^{(0)} + c_2 x^{(1)}.
\)
For a rectangular region \(\rg = [a_1{:}a_2][b_1{:}b_2]\), the cumulative reward decomposes to the entries of \(\vec{\CR}(\rg)\) as
\[
\CR(\rg) = c_1 \int_{\rg} x^{(0)} + c_2 \int_{\rg} x^{(1)}
\xrightarrow{\text{vector representation}}
\vec{\CR}(\rg) = \left\langle c_1 \int_{\rg} x^{(0)},\; c_2 \int_{\rg} x^{(1)} \right\rangle.
\]
To compute the {aggregate} reward over a subregion \(\rg'=[a'_1:a'_2][b'_1:b'_2] \subseteq \rg\), each component is scaled independently using closed-form factors that depend only on the endpoints of \(\rg\) and \(\rg'\), and the results are summed as
\[\CR(\rg') = 
\vec{\CR}^{(0)}(\rg)\cdot \frac{(a_2'^2 - a_1'^2)(b'_2 - b'_1)}{(a_2^2 - a_1^2)(b_2 - b_1)}
+ \vec{\CR}^{(1)}(\rg)\cdot \frac{(a'_2 - a'_1)(b_2'^2 - b_1'^2)}{(a_2 - a_1)(b_2^2 - b_1^2)}.\]
We defer the multivariate construction and scaling procedure to any kind of region to \Cref{section:scaling}.

\subsection{Tree-Based Data Structure \(\Tree\)}
\(\Tree\) is a multi-layered tree-of-trees, one layer per coordinate, 
that maintains cumulative rewards and supports proportional sampling 
over a $d$-dimensional evolving space. We defer implementation details 
to~\Cref{sec:ds} and focus here on the key primitives. At each round 
$t$, \(\Tree\) supports:
\begin{itemize}
    \item \(\Ins(\rg_{i,t})\): insert region \(\rg_{i,t} \in \Rg_t\) 
    induced by discontinuities of \(\fu_t\).
    \item \(\Update(\rg_{i,t})\): update cumulative rewards with 
    \(\fu_t(\rg_{i,t})\) across all affected and newly created regions.
    \item \(\Wei(\rg, t)\): return aggregate reward \(\CR(\rg)\) under 
    \(\Fu_t\) for a query region \(\rg\).
    \item \(\Draw(t)\): sample $x_t \in \AS$ with probability 
    proportional to \(\zeta\!\left(H_t(x)\right)\) at round $t$.
\end{itemize}
Proportional Sampling at each round proceeds as follows:
\begin{enumerate}
    \item \textbf{Sampling:} issue $\Draw(t)$ to $\Tree(t{-}1)$ to 
    proportionally sample $x_t$ w.r.t.\ cumulative parameters $\Fu_{t-1}$.
    \item \textbf{Update:} upon observing $\fu_t$, insert new regions 
    $\rg_{i,t} \in \Rg_t$ and update $\Tree$ with $\fu_t(\rg_{i,t})$.
\end{enumerate}
By the following theorem, the per-round cost of $\Draw(t)$ (similar to other queries in~\Cref{section:complexity}.) decomposes 
recursively across layers, each contributing a factor of 
$\EH(\Tree^{(i)})$, as the height of that layer measured as its longest 
root-to-leaf path, a fundamental concept governing query complexity in 
tree-based structures, whose efficiency under smoothness in our highly 
evolving setting is established below.
\begin{theorem}\label{thm:general_query_runtime}
In $\Tree$ over a $d$-dimensional domain, $\Draw(t)$ runs in
$O\!\left(d \prod_{i=0}^d \mathbbm{E}[\EH(\Tree^{(i)})] \right)$ time,
which becomes $O(d\, t^{\frac{d+1}{2}})$ under adaptive and oblivious  and $O(d \log^{d+1} t)$ under a 
random-order $\sigma$-smoothed adversaries. Where the expectation reflects the adversarial randomness.
\end{theorem}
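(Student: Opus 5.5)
The proof splits into two parts: a structural bound on the cost of $\Draw(t)$ in terms of the layer heights, and then a probabilistic bound on those heights under each adversary model (the latter is what \Cref{thm:balancedness} supplies).

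\textbf{Structural part.} We show by induction on the number of remaining layers that a sampling descent starting at a layer-$i$ tree costs $O\bigl((d-i)\prod_{j\ge i}\EH(\Tree^{(j)})\bigr)$. $\Draw(t)$ works as an inverse-CDF descent. At a node $v$ of a layer-$i$ interval tree, it compares the aggregate rewards $\CR$ of the two children and descends into one of them with probability proportional to that weight. When it reaches the atomic interval along coordinate $i$, it recurses into the associated tree of layer $i+1$.

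Each node's aggregate must combine two sources:
\begin{itemize}
  \item its regular value;
  \item the deferred contributions stored in the lazy structures of its ancestors along the current path, scaled down via the vector encoding $\vec{\CR}$.
\end{itemize}
The scaling of \Cref{section:scaling} costs $O(1)$ per entry, treating the number of monomials $\param$ as a constant. Along a root-to-leaf path of length at most $\EH(\Tree^{(i)})$, the lazy contributions can be accumulated incrementally, so each step does $O(1)$ vector work, except when a lazy structure must itself be queried. A lazy structure is an interval tree over the next coordinates, and querying it is again a sub-descent of the same type. Hence each of the $\EH(\Tree^{(i)})$ steps at layer $i$ may trigger a query costing the bound for layers $i+1,\dots,d$. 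This gives the product recursion
\[
C_i \le \EH(\Tree^{(i)})\bigl(C_{i+1}+O(1)\bigr),
\]
and the extra factor $d$ comes from the per-layer bookkeeping of passing scaled vectors across layers. The layer-$0$ factor in $\prod_{i=0}^d$ is the outermost descent. Taking expectations, the product of expectations is justified because hyperplanes in different directions are drawn independently, so the heights of different layers are independent.

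\textbf{Height part.} Since interval-tree depth equals binary-search-tree depth on the arrival order of the projected endpoints, we invoke \Cref{thm:balancedness}:
\begin{itemize}
  \item under adaptive or oblivious $\sigma$-smoothed adversaries, $\mathbbm{E}[\EH(\Tree^{(i)})]=O(\sqrt{\sigma t})$ with $O(kt)$ endpoints per coordinate;
  \item under random order, it is $O(\log t)$.
\end{itemize}
Substituting $d+1$ factors gives $O(d\,t^{(d+1)/2})$ (treating $\sigma,k$ as constants) and $O(d\log^{d+1}t)$ respectively.

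\textbf{Main obstacle.} The hardest step is justifying that lazy-structure queries compose multiplicatively rather than blowing up. The concern is that lazy structures at different ancestors might each need a full sub-query. I would first try to show that along a single path the deferred contributions relevant to the chosen child can be merged into one running scaled vector plus one recursive query per step. This is correct only if the lazy endpoints recorded at an ancestor are consistent with the child's interval, which the Lazy Insertion invariant (insertion stops exactly at fully covered nodes) should guarantee.
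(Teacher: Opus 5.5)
Your height part matches the paper, but the structural part has a genuine gap. The recursion $C_i\le\EH(\Tree^{(i)})\bigl(C_{i+1}+O(1)\bigr)$ does not describe where the cost of $\Draw(t)$ comes from. It also produces only $d$ height factors, and neither ``the outermost descent'' nor ``bookkeeping'' supplies the $(d+1)$-th.

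In $\Draw(t)$, a layer-$i$ step does not spawn a sub-descent into layers $i+1,\dots,d$. The algorithm runs $d$ sequential stages. At stage $i$ it walks root-to-leaf in the auxiliary tree $\ATree^{(i)}$ and issues a fresh $\Wei^{(i)}$ query at each of the $O(\EH(\Tree^{(i)}))$ visited nodes.

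The ingredient you are missing is the cost of one such query (\Cref{lemma:weicomp}). It first collects the set $\mathcal{S}^{(i)}$ of $O\bigl(\prod_{j<i}\EH(\Tree^{(j)})\bigr)$ structures through nested ancestor-path walks in layers $0,\dots,i-1$. It then locates the query endpoints separately in each collected structure, at cost $O(\EH(\Tree^{(i)}))$ apiece. Your recursion reproduces, at best, the cost of a single deepest query.

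Multiplying by the length of the stage-$i$ walk gives $\EH(\Tree^{(i)})^2\prod_{j<i}\EH(\Tree^{(j)})$ for stage $i$. The last stage dominates, so summing over the $d$ stages gives $d\,\EH(\Tree^{(d-1)})\prod_{i\in[d]}\EH(\Tree^{(i)})$. The extra factor is the last-layer height appearing twice: once as walk length and once as per-structure locate cost.

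The step that would fail is your claim that lazy contributions accumulate incrementally, so that each step costs $O(1)$ plus one recursive query. This can be arranged at stage $0$, where every remaining range is $[0:1]$ and root vectors suffice. For $i\ge1$, however, the queried interval along coordinate $i$ is a proper subinterval. Its endpoints are distributed disjointly across the members of $\mathcal{S}^{(i)}$ (\Cref{lemma:unique}) and are absent from most of them. Each structure therefore needs its own locate, with leaf-level boundary pruning and scaling, at every step of the walk.

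For the same reason, the remedy you sketch under ``main obstacle'' cannot work. Merging the ancestors' lazy data into one running vector with one recursive query per step would require overlaying their endpoint sets. That means performing exactly the insertions that laziness defers, so you have to pay per structure, as the paper does.

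Separately, your justification of $\mathbbm{E}[\prod]=\prod\mathbbm{E}$ via independence of layer heights fails under an adaptive adversary, whose $\Dist^{(i)}_t$ may depend on past draws along other coordinates. The paper does not argue independence; it plugs the per-layer bound of \Cref{thm:balancedness} into each factor.
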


\paragraph{Balancedness of $\Tree$ under Smoothness.}

A critical bottleneck for efficiency in any tree-based structure is the
height: in the worst case, $\EH(\Tree^{(i)})$ grows as $O(TK)$, and 
classical rebalancing is prohibitively expensive in our tree-of-trees 
design due to $O(t^d)$ cascading updates. Surprisingly, we show that 
smoothness alone prevents any adversary from forcing linear height 
growth, remarkably resolving the need for explicit rebalancing and 
making $\Tree$ provably balanced without it. By the following theorem, we establish tight bounds sublinear in $T$ independent of the adversary's adaptivity; detailed discussion and overview of the techniques are presented 
in~\Cref{sec:detailsbalanced}.
\begin{theorem}\label{thm:balancedness}
Fix a coordinate $i \in [d]$ and let $\mathcal{A}$ be a $\sigma$-smooth 
adversary. Let $x_1, \dots, x_{Tk} \in [0,1]$ be points drawn 
sequentially from distributions $\Dist^{(i)}_1, \dots, \Dist^{(i)}_{Tk}$ 
and inserted into $\Tree^{(i)}$ as partition endpoints along coordinate 
$i$. Then all bounds below are tight:
\[
\mathbb{E}[\EH^{(i)}] =
\begin{cases}
O(\sqrt{\sigma Tk}), & \text{if $\mathcal{A}$ is adaptive or oblivious},\\[3pt]
O(\log(Tk)), & \text{if $\mathcal{A}$ is oblivious and random-order}.
\end{cases}
\]
\end{theorem}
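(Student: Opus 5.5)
We plan to reduce the height of $\Tree^{(i)}$ to a statement about monotone subsequences of the insertion sequence $x_1,\dots,x_{n}$, with $n\doteq Tk$. The key fact is that $\Tree^{(i)}$ is grown by unbalanced binary-search insertion without rebalancing. By $\sigma$-smoothness all points are almost surely distinct.

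\textbf{Step 1: structural reduction.} We first show the classical characterization of ancestors in an unbalanced binary search tree. The point $x_j$ has $x_\ell$ as an ancestor iff $\ell<j$ and no $x_m$ with $m<\ell$ lies strictly between $x_\ell$ and $x_j$. Split the ancestors of $x_j$ by value:
\begin{itemize}
\item those smaller than $x_j$ appear in increasing value as their insertion time increases, since each later one must be closer to $x_j$;
\item those larger than $x_j$ appear in decreasing value.
\end{itemize}
Hence every root-to-leaf path has length at most $\mathrm{LIS}(x_{1:n})+\mathrm{LDS}(x_{1:n})+1$. By symmetry ($x\mapsto 1-x$ preserves $\sigma$-smoothness) it suffices to bound $\mathbb{E}[\mathrm{LIS}]$.

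\textbf{Step 2: smoothed LIS upper bound (adaptive case).} Fix indices $i_1<\dots<i_L$. Under an adaptive adversary, $x_{i_r}$ has conditional density at most $\sigma$ given the past. Multiplying these conditional densities, the joint density of $(x_{i_1},\dots,x_{i_L})$ on $[0,1]^L$ is at most $\sigma^L$. The chamber $\{y_1<\dots<y_L\}$ has volume $1/L!$, so
\[
\Pr[x_{i_1}<\dots<x_{i_L}]\le \sigma^L/L!.
\]
A union bound over index sets then gives
\[
\Pr[\mathrm{LIS}\ge L]\le \binom{n}{L}\frac{\sigma^L}{L!}\le \Big(\frac{e^2\sigma n}{L^2}\Big)^L.
\]
This is geometrically small once $L\ge e^2\sqrt{\sigma n}$. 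Summing the tail gives $\mathbb{E}[\mathrm{LIS}]=O(\sqrt{\sigma n})$, and this covers the oblivious case as well.

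The main obstacle is justifying the joint-density bound rigorously when distributions are chosen adaptively and the chosen indices are interleaved with unchosen ones. We would handle it by integrating out the intermediate, unchosen points one at a time using the tower property. Each chosen coordinate still contributes a conditional density factor of at most $\sigma$.

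\textbf{Step 3: tightness for the adaptive/oblivious bound.} Assume $\sigma\le n$ and, for simplicity, $\sigma$ an integer. Split $[0,1]$ into $\sigma$ consecutive intervals $I_1,\dots,I_\sigma$ of length $1/\sigma$. The oblivious adversary draws the first $n/\sigma$ points uniformly from $I_1$, the next $n/\sigma$ uniformly from $I_2$, and so on; each distribution has density exactly $\sigma$. Within each block, the classical i.i.d.\ result gives $\mathbb{E}[\mathrm{LIS}]\sim 2\sqrt{n/\sigma}$. Because the blocks occupy increasing intervals in time order, their increasing subsequences concatenate. This yields an increasing subsequence of expected length $\Omega(\sigma\sqrt{n/\sigma})=\Omega(\sqrt{\sigma n})$.

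Such a subsequence is also a root-to-leaf path in $\Tree^{(i)}$. Indeed, when points arrive in globally increasing blocks, every element of $I_{b}$ is inserted into the right spine region beyond all earlier blocks. Applying Step 1 in reverse, we would check that each block's tree hangs below the maximum of the previous block, so block heights add. Within a block the tree is a random BST of height $\Theta(\log(n/\sigma))$. If that additive bound is too weak to reach $\sqrt{\sigma n}$, we would instead make each block short (length about $\sqrt{n/\sigma}$ points) and nest the intervals so as to force long chains directly.

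\textbf{Step 4: random-order case.} The $n$ points are almost surely distinct, and under the random-order oblivious adversary they are inserted in a uniformly random permutation. This is independent of the values, since the values come from the fixed distributions and the permutation is drawn separately. Hence $\Tree^{(i)}$ is distributed as a random binary search tree on $n$ keys. The classical results of Devroye and of \citep{pittel1984growing} give $\mathbb{E}[\EH]=O(\log n)$. Tightness is immediate, since any binary tree on $n$ nodes has height at least $\log_2 n$.
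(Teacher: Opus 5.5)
Your upper bound (Steps 1--2) and your random-order case (Step 4) are sound. The upper bound takes a different route from the paper. The paper reduces the height to the length of the canonical right spine $r^*$ through an informal ``$r^*$-averse'' symmetry argument. It then counts records using $\sum_t (x_t-M_{t-1})_+\le 1$, $\mathbb{E}[(x_t-M_{t-1})_+\mid M_{t-1}]=\sigma p_t^2/2$ and Cauchy--Schwarz. Your deterministic bound $\text{height}\le\mathrm{LIS}+\mathrm{LDS}+1$, combined with the $\sigma^L/L!$ union bound, controls every root-to-leaf path at once and needs none of the paper's structural lemmas about optimal adversaries.

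The gap is Step~3, the tightness of the $\sqrt{\sigma Tk}$ bound. An increasing subsequence is not a root-to-leaf path: the height is bounded \emph{above} by $\mathrm{LIS}+\mathrm{LDS}$, but not \emph{below} by $\mathrm{LIS}$. For i.i.d.\ uniform keys, $\mathrm{LIS}\approx 2\sqrt{n}$ while the height is only $\approx 4.31\ln n$. In your block construction, each block forms a random BST hanging below the maximum of the previous block. A block therefore adds to the depth of later blocks only the length of its own right spine, i.e.\ its number of left-to-right maxima, whose mean is $H_{n/\sigma}\approx\ln(n/\sigma)$. The expected height is thus $O(\sigma(1+\log(n/\sigma)))$. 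This is $o(\sqrt{\sigma n})=o(\sigma\sqrt{n/\sigma})$ whenever $n/\sigma\to\infty$, and only $\Theta(\log n)$ for constant $\sigma$. Your fallback with short nested blocks is not worked out, so as stated it does not close the gap.

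What is missing is a lower bound on a path that is guaranteed to lie in the tree. The right spine is the natural choice: each new maximum is inserted as the right child of the previous maximum, so the height is at least the number of records. Take the oblivious sliding window $\Dist_t=\Unif([a_t,a_t+1/\sigma])$ with $\sigma(a_{t+1}-a_t)=s$ constant. By independence,
$\Pr(x_t>M_{t-1})=\int_0^1\prod_{t'<t}\min\{1,u+s(t-t')\}\,du$.
Restrict to $u\in[1-\sqrt{s},1]$. All but fewer than $1/\sqrt{s}$ factors then equal $1$, and each of the remaining ones is at least $1-\sqrt{s}$. Hence every round is a record with probability at least $\sqrt{s}(1-\sqrt{s})^{1/\sqrt{s}}\ge\sqrt{s}/4$ for $s\le 1/4$. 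The paper's choice $a_t=(t-1)/(\sigma Tk)$, i.e.\ $s=1/(Tk)$, gives $\Omega(\sqrt{Tk})$. Sliding the window across all of $[0,1]$, with $s=(\sigma-1)/(n-1)$, gives $\Omega(\sqrt{\sigma n})$ for $2\le\sigma\le n/4$. Some restriction on $\sigma$ is unavoidable: for $\sigma=1$ the only admissible law is uniform, and the height is $\Theta(\log n)$.
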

We reduce both adversary types to an $r^*$-averse adversary 
(\Cref{def:pathaverseadv}) maximizing height along a canonical root-to-leaf path, 
identify a monotone optimal strategy, and prove tight bounds via a 
dynamic program for the adaptive case and~\cite{pittel1984growing} 
for the random-order case; full proofs in~\Cref{sec:proofbalanced}.

\paragraph{Connection to Longest Increasing Subsequence.}
Our analysis reveals a surprising tight connection to the longest 
increasing subsequence under a smoothed adversary, a natural and 
previously unstudied problem, for which we establish a tight 
$O(\sqrt{\sigma T})$ bound as a byproduct of our techniques.
\section{Online Proportional Sampling for $\zeta(H_t(x)) = \MF(x,\Fu_t(x))$}\label{sec:ds}
We now introduce \(\Tree\), our data structure for online proportional sampling in this case that maintains aggregate reward vectors $\vec{\CR}(.)$ over regions in the evolving space, and supports efficient and accurate sampling through deferred insertions and downward-scaling. For concreteness and clarity, we present the main ideas in the two-dimensional axis-parallel setting and defer the remaining technical details, along with the full \(d\)-dimensional generalization, to the appendix.
\paragraph{$2$-Dimensional Case with Axis-Parallel Hyperplanes.}\label{sec:2dAPrelax}
We consider a two-dimensional action space with axis-parallel hyperplanes, and focus on the case \(k=2\), where at each round \(t\) one hyperplane parallel to each axis arrives:
\(
\hp_{1,t} = \{x^{(0)} = \bp^{(0)}_t\},
\;
\hp_{2,t} = \{x^{(1)} = \bp^{(1)}_t\}, 
\;
\text{where } \bp^{(0)}_t, \bp^{(1)}_t \sim \Dist^{(0)}_t,\Dist^{(1)}_t.
\) Crucially, the hyperplanes partition each coordinate into intervals, so every region can be represented via its coordinate-wise projections as the alignment of one \emph{horizontal} and one \emph{vertical} interval. For example, \(\rg_{1,t} = [0{:}\bp^{(0)}_t][\bp^{(1)}_t{:}1]\) (\Cref{fig:2dft}).

\begin{wrapfigure}{r}{0.25\textwidth}
    \vspace{-20pt}
    \centering
    \scalebox{0.7}{
    \begin{tikzpicture}
        \definecolor{lightblue}{RGB}{200,220,255}
        \draw[thick, red!50] (0,0) -- (4,0);
        \draw[thick, red!50] (0,4) -- (4,4);
        \draw[thick, lightblue] (0,0) -- (0,4);
        \draw[thick, lightblue] (4,0) -- (4,4);
        \coordinate (ct) at (1,0);
        \coordinate (cpt) at (0,3);
        \coordinate (intersection) at (1,3);
        \draw[dashed] (ct) -- ++(0,4) node[above] {$\hp_{1,t}$};
        \draw[dashed] (cpt) -- ++(4,0) node[right] {$\hp_{2,t}$};
        \fill[black] (intersection) circle (2pt);
        \node at (0.5,3.5) {$\rg_{1,t}$};
        \node at (2.5,3.5) {$\rg_{2,t}$};
        \node at (0.5,1.5) {$\rg_{3,t}$};
        \node at (2.5,1.5) {$\rg_{4,t}$};
        \node[below left] at (0,0) {$0$};
        \node[below] at (4,0) {1};
        \node[left] at (0,4) {1};
    \end{tikzpicture}
    }
    \vspace{-5pt}
    \caption{\(\fu_t\).}
    \vspace{-5pt}
    \label{fig:2dft}
\end{wrapfigure}

\subsection{Description of the Data Structure $\Tree$} \label{section:ds-ps}

We now describe the technical core of $\Tree$, a two-dimensional 
interval-tree structure for maintaining $\Fu_t$. At a high level, 
$\Tree$ is a \emph{tree of trees}: each level corresponds to one 
coordinate, with interval trees tracking projections onto that 
coordinate, where every node represents an interval whose endpoints 
are determined by its children's left and rightmost endpoints. Each node points to interval trees 
over the other coordinate, maintaining the corresponding projections. 
To support deferred insertions, these projections are organized into 
\emph{regular} and \emph{lazy} insertion trees: projections are 
inserted regularly when the corresponding region only intersects the 
node interval, and lazily when it fully contains it. Both tree types 
share the same structure but are maintained and queried differently, 
together capturing all regions without overlap.
\usetikzlibrary{shapes.geometric,positioning}

\definecolor{lightblue}{RGB}{200,220,255}
\definecolor{lightred}{RGB}{255,210,210}

\begin{figure}[h]
    \centering

    \begin{minipage}{0.25\textwidth}
    \centering
    \scalebox{0.5}{
    \begin{tikzpicture}[scale=0.75]
    \node[font=\large] at (0.3,3.3) {(a)};
    \node[draw, circle, fill=lightblue, font=\scriptsize, minimum size=5mm, inner sep=0pt] (root) at (5,3) {$[0,1]$};

    \node[draw, circle, fill=lightblue, minimum size=5mm, inner sep=0pt] (left1) at (3,2) {};
    \node[draw, circle, fill=lightblue, minimum size=5mm, inner sep=0pt] (right1) at (7,2) {};
    \draw[thick] (root) -- (left1);
    \draw[thick] (root) -- (right1);

    \node[draw, circle, fill=lightblue, font=\scriptsize, minimum size=5mm, inner sep=0pt] (left2) at (2,1) {$[0,\bp^{(0)}_{t_2}]$};
    \node[draw, circle, fill=lightblue, minimum size=5mm, inner sep=0pt] (right2) at (4,1) {};
    \draw[thick] (left1) -- (left2);
    \draw[thick] (left1) -- (right2);

    \node[font=\scriptsize] at (5,1) {$\cdots$};
    \node[draw, circle, fill=lightblue, minimum size=5mm, inner sep=0pt] (right4) at (6,1) {};
    \node[draw, circle, fill=lightblue, font=\scriptsize, minimum size=5mm, inner sep=0pt] (right3) at (8,1) {$[\bp^{(0)}_{t_i},1]$};
    \draw[thick] (right1) -- (right4);
    \draw[thick] (right1) -- (right3);

    \draw[thick] (left2) -- (1,0);
    \draw[thick] (left2) -- (3,0);
    \draw[thick] (right3) -- (6,0);
    \draw[thick] (right3) -- (8.5,0);

    \node[draw, circle, fill=lightblue, minimum size=3.5mm, inner sep=0pt] at (1,0) {};
    \node[draw, circle, fill=lightblue, minimum size=3.5mm, inner sep=0pt] at (3,0) {};
    \node[draw, circle, fill=lightblue, minimum size=3.5mm, inner sep=0pt] at (6,0) {};
    \node[draw, circle, fill=lightblue, minimum size=3.5mm, inner sep=0pt] at (8.5,0) {};

    \draw[thick, lightblue] (0,0) -- (9,0);

    \draw[thick] (0,0.15) -- (0,-0.15);
    \node[below, font=\scriptsize] at (0,0) {$0$};

    \draw[thick] (2,0.15) -- (2,-0.15);
    \node[below, font=\scriptsize] at (2,0) {$\bp^{(0)}_{t_1}$};

    \draw[thick] (4,0.15) -- (4,-0.15);
    \node[below, font=\scriptsize] at (4,0) {$\bp^{(0)}_{t_2}$};

    \node[below, font=\scriptsize] at (4.7,0) {$\cdots$};

    \draw[thick] (5.3,0.15) -- (5.3,-0.15);
    \node[below, font=\scriptsize] at (5.3,0) {$\bp^{(0)}_{t_i}$};

    \draw[thick] (7.8,0.15) -- (7.8,-0.15);
    \node[below, font=\scriptsize] at (7.8,0) {$\bp^{(0)}_{t_j}$};

    \draw[thick] (9,0.15) -- (9,-0.15);
    \node[below, font=\scriptsize] at (9,0) {$1$};

    \end{tikzpicture}
    }
    \end{minipage}
    \hspace{0.025\textwidth}
    \begin{minipage}{0.26\textwidth}
    \centering
    \scalebox{0.5}{
    \begin{tikzpicture}[scale=0.75]
    \node[font=\large] at (0,3.3) {(b)};
    \node[draw, circle, fill=lightblue, font=\scriptsize, minimum size=5mm, inner sep=0pt] (rt) at (2,3) {$[\bp^{(0)}_{t_i},\bp^{(0)}_{t_j}]$};
    \node[draw, circle, fill=lightred, font=\scriptsize, minimum size=5mm, inner sep=0pt] (root) at (5,3) {$[0,1]$};
    \draw[dashed] (rt) -- (root);

    \node[draw, circle, fill=lightred, minimum size=5mm, inner sep=0pt] (left1) at (3,2) {};
    \node[draw, circle, fill=lightred, minimum size=5mm, inner sep=0pt] (right1) at (7,2) {};
    \draw[thick] (root) -- (left1);
    \draw[thick] (root) -- (right1);

    \node[draw, circle, fill=lightred, font=\scriptsize, minimum size=5mm, inner sep=0pt] (left2) at (2,1) {$[0,\bp^{(1)}_{t_1}]$};
    \node[draw, circle, fill=lightred, minimum size=5mm, inner sep=0pt] (right2) at (4,1) {};
    \draw[thick] (left1) -- (left2);
    \draw[thick] (left1) -- (right2);

    \node[font=\scriptsize] at (5,1) {$\cdots$};
    \node[draw, circle, fill=lightred, minimum size=5mm, inner sep=0pt] (right4) at (6,1) {};
    \node[draw, circle, fill=lightred, font=\scriptsize, minimum size=5mm, inner sep=0pt] (right3) at (8,1) {$[\bp^{(1)}_{t_1},1]$};
    \draw[thick] (right1) -- (right4);
    \draw[thick] (right1) -- (right3);

    \draw[thick] (left2) -- (1,0);
    \draw[thick] (left2) -- (3,0);
    \draw[thick] (right3) -- (6,0);
    \draw[thick] (right3) -- (8.5,0);

    \node[draw, circle, fill=lightred, minimum size=3.5mm, inner sep=0pt] at (1,0) {};
    \node[draw, circle, fill=lightred, minimum size=3.5mm, inner sep=0pt] at (3,0) {};
    \node[draw, circle, fill=lightred, minimum size=3.5mm, inner sep=0pt] at (6,0) {};
    \node[draw, circle, fill=lightred, minimum size=3.5mm, inner sep=0pt] at (8.5,0) {};

    \draw[thick, lightred] (0,0) -- (9,0);

    \draw[thick] (0,0.15) -- (0,-0.15);
    \node[below, font=\scriptsize] at (0,0) {$0$};

    \draw[thick] (1.6,0.15) -- (1.6,-0.15);
    \node[below, font=\scriptsize] at (1.6,0) {$\bp^{(1)}_{t_2}$};

    \draw[thick] (4,0.15) -- (4,-0.15);
    \node[below, font=\scriptsize] at (4,0) {$\bp^{(1)}_{t_j}$};

    \node[below, font=\scriptsize] at (4.6,0) {$\cdots$};

    \draw[thick] (5.1,0.15) -- (5.1,-0.15);
    \node[below, font=\scriptsize] at (5.1,0) {$\bp^{(1)}_{t_1}$};

    \draw[thick] (7,0.15) -- (7,-0.15);
    \node[below, font=\scriptsize] at (7,0) {$\bp^{(1)}_{t_i}$};

    \draw[thick] (9,0.15) -- (9,-0.15);
    \node[below, font=\scriptsize] at (9,0) {$1$};

    \end{tikzpicture}
    }
    \end{minipage}
    \hspace{0.025\textwidth}
    \begin{minipage}{0.26\textwidth}
    \centering
    \scalebox{0.5}{
    \begin{tikzpicture}[scale=0.9]
    \node[font=\large] at (-2.5,1.6) {(c)};
    \node[draw, circle, fill=lightblue, font=\scriptsize, minimum size=7mm, inner sep=0pt] (v) at (0,1.7)
    {$[\bp^{(0)}_{t_i},\bp^{(0)}_{t_j}]$};

    \node[draw, regular polygon, regular polygon sides=3,
          shape border rotate=0,
          fill=lightred, minimum size=12mm, inner sep=1pt] (lazyL) at (-3,-1) {};

    \node[draw, regular polygon, regular polygon sides=3,
          shape border rotate=0,
          fill=lightred, minimum size=12mm, inner sep=1pt] (reg) at (0,-1) {};

    \node[draw, regular polygon, regular polygon sides=3,
          shape border rotate=0,
          fill=lightred, minimum size=12mm, inner sep=1pt] (lazyR) at (3,-1) {};

    \draw[dashed, thick] (v) -- (lazyL);
    \draw[dashed, thick] (v) -- (reg);
    \draw[dashed, thick] (v) -- (lazyR);

    \node[below=2pt of lazyL, font=\scriptsize] {left lazy $\LZL^{(1)}$};
    \node[below=2pt of reg, font=\scriptsize] {regular $\Tree^{(1)}$};
    \node[below=2pt of lazyR, font=\scriptsize] {right lazy $\LZR^{(1)}$};

    \end{tikzpicture}
    }
    \end{minipage}

    \caption{A $2$-dimensional interval-tree $\Tree$. (a): top-level interval tree $\Tree^{(0)}$ over x-coordinate. (b): second-level interval tree $\Tree^{(1)}$, either regular or lazy, over y-coordinate associated with a node in the top-level. (c): each top-level node points to regular and lazy insertion trees over the other coordinate.}
    \label{fig:tree-composite}
\end{figure}

Here are the components of the data structure in more detail:

\begin{enumerate}

\item \textbf{Top-Level Tree \(\Tree^{(0)}\) (Horizontal Intervals).}
As illustrated in \Cref{fig:tree-composite}(a), the top-level tree organizes the first coordinate of the action space. Its leaves are the atomic horizontal intervals obtained by projecting atomic regions onto this coordinate. Each internal node accordingly represents a horizontal interval as the contiguous union of its children's intervals, with the root corresponding to the full interval \([0,1]\).

\item \textbf{Second-Level Trees (Vertical Intervals).}
As shown in \Cref{fig:tree-composite}(c), each node \(v_0 \in \Tree^{(0)}\) is associated with three interval trees over the second coordinate: one regular insertion tree \(\Tree^{(1)}_{v_0}\) and two lazy insertion trees \(\LZL^{(1)}_{v_0}\) and \(\LZR^{(1)}_{v_0}\). These trees have the same structure as the top-level tree, but over intervals induced by projecting regions onto the second coordinate (\Cref{fig:tree-composite}(b)).

\usetikzlibrary{shapes.geometric, positioning, backgrounds, arrows.meta, decorations.pathmorphing, decorations.markings}

\definecolor{lightblue}{RGB}{200,220,255}
\definecolor{specialblue}{RGB}{120,175,255}
\definecolor{darkblue}{RGB}{60,90,145}
\definecolor{lightred}{RGB}{190,95,95}
\definecolor{lightgreen}{RGB}{220,240,210}

\begin{figure}[h]
\centering
\scalebox{0.45}{
\begin{tikzpicture}[xscale=1.9, scale=0.9, >=Stealth, every node/.style={font=\small}]

\begin{scope}[on background layer]
    \fill[lightgreen, opacity=0.75] (1.1,1.05) rectangle (11.3,2.35);
    \node[left, lightred, font=\large] at (1.0,2.35) {$\bp^{(1)}_2$};
    \node[left, lightred, font=\large] at (1.0,1.05) {$\bp^{(1)}_1$};

    \fill[darkblue!45, opacity=0.75] (5.85,0.2) -- (7.25,3.15) -- (8.95,0.2) -- cycle;
\end{scope}

\draw[thick, darkblue] (0.2,0.2) -- (12.0,0.2);

\foreach \x in {1.1,1.4,3.15,4.2,5.85,8.15,8.95,10.25,11.3}{
    \draw[thick, darkblue] (\x,0.38) -- (\x,0.02);
}

\node[below, font=\large] at (1.1,0.02) {$\bp^{(0)}_1$};
\node[below, font=\large] at (5.85,0.02) {$\bp^{(0)}_i$};
\node[below, font=\large] at (8.95,0.02) {$\bp^{(0)}_j$};
\node[below, font=\large] at (11.3,0.02) {$\bp^{(0)}_2$};

\node[draw, circle, fill=lightblue, minimum size=6.5mm] (v) at (7.9,4.35) {$v_0$};
\node[draw, circle, fill=lightblue, minimum size=6.5mm] (u) at (7.25,3.15) {$u_0$};

\node[draw, circle, fill=specialblue, minimum size=4mm] (topR) at (6.9,5.1) {};
\node[draw, circle, fill=lightblue, minimum size=3.5mm] (topSmall) at (7.0,6.6) {};
\node[draw, circle, fill=lightblue, minimum size=4mm] (root) at (5.9,7.2) {};

\node[draw, circle, fill=lightblue, minimum size=4mm] (leftA) at (5.5,4.55) {};
\node[draw, circle, fill=lightblue, minimum size=4mm] (leftB) at (4.35,3.35) {};
\node[draw, circle, fill=lightblue, minimum size=4mm] (leftC) at (2.65,2.35) {$z_0$};

\node[draw, circle, fill=lightblue, minimum size=4mm] (rightA) at (8.75,3.55) {};
\node[draw, circle, fill=lightblue, minimum size=4mm] (rightB) at (9.8,2.55) {};
\node[draw, circle, fill=lightblue, minimum size=4mm] (rightC) at (10.7,1.6) {};

\draw[
    thick,
    dashed,
    decorate,
    decoration={snake, amplitude=0.8mm, segment length=4mm},
    darkblue
]
(root) -- (topSmall);

\draw[thick, darkblue] (topSmall) -- (10.5,4.5);
\draw[thick, darkblue] (root) -- (1.85,3.85);

\draw[
    thick,
    dashed,
    decorate,
    decoration={snake, amplitude=0.8mm, segment length=4mm},
    darkblue
]
(topSmall) -- (topR);

\draw[thick, darkblue] (topR) -- (v);
\draw[thick, darkblue] (topR) -- (leftA);

\draw[
    thick, darkblue,
    postaction={decorate},
    decoration={
        markings,
        mark=between positions 0 and 1 step 0.06 with {
            \draw[darkblue!60] (0,-1.5pt) -- (0,1.5pt);
        }
    }
]
(topR) -- (leftA) -- (leftB) -- (leftC) -- (1.1,0.2);

\draw[
    thick, darkblue,
    postaction={decorate},
    decoration={
        markings,
        mark=between positions 0 and 1 step 0.06 with {
            \draw[darkblue!60] (0,-1.5pt) -- (0,1.5pt);
        }
    }
]
(topR) -- (v) -- (rightA) -- (rightB) -- (rightC) -- (11.3,0.2);

\draw[thick, darkblue] (v) -- (u);
\draw[thick, darkblue] (u) -- (5.85,0.2);
\draw[thick, darkblue] (u) -- (8.95,0.2);

\newcommand{\redtriangle}[7]{%
\begin{scope}[shift={#1}, scale=#2]
    \filldraw[draw=lightred, fill=lightred!35, thick]
        (-0.9,-0.55) -- (0.9,-0.55) -- (0,1.35) -- cycle;

    \draw[thick, lightred] (-0.9,-0.55) -- (0.9,-0.55);

    \node[lightred, font=\bfseries] at #4 {#3};

    \node[red, font=\bfseries\scriptsize] at (0,0.1) {$\ \vec{\CR}_{#6,#7}$};

    #5
\end{scope}
}

\redtriangle{(3.3,1.3)}{0.40}{$\LZR^{(1)}_{z_0}$}{(0,2.05)}{
    \draw[thick, lightred] (-0.45,-0.70) -- (-0.45,-0.40);
    \draw[thick, lightred] (0.45,-0.70) -- (0.45,-0.40);
    \node[below, lightred] at (-0.45,-0.70) {$\bp^{(1)}_1$};
    \node[below, lightred] at (0.45,-0.70) {$\bp^{(1)}_2$};
}{z_0}{w_1}
\draw[->, thick, darkblue, bend right=10] (leftC) to (3.10,1.75);

\redtriangle{(6.35,3.8)}{0.40}{$\LZL^{(1)}_{v_0}$}{(0,2.15)}{
    \draw[thick, lightred] (-0.45,-0.70) -- (-0.45,-0.40);
    \draw[thick, lightred] (0.45,-0.70) -- (0.45,-0.40);
    \node[below, lightred] at (-0.45,-0.70) {$\bp^{(1)}_1$};
    \node[below, lightred] at (0.45,-0.70) {$\bp^{(1)}_2$};
}{v_0}{w_1}
\draw[->, thick, darkblue, bend right=20] (v) to (6.7,4.40);

\redtriangle{(9.7,3.82)}{0.40}{$\Tree^{(1)}_{v_0}$}{(0,2.15)}{
    \draw[thick, lightred] (-0.45,-0.70) -- (-0.45,-0.40);
    \draw[thick, lightred] (0.45,-0.70) -- (0.45,-0.40);
    \node[below, lightred] at (-0.45,-0.70) {$\bp^{(1)}_1$};
    \node[below, lightred] at (0.45,-0.70) {$\bp^{(1)}_2$};
}{v_0}{v_1}
\draw[->, thick, darkblue, bend left=15] (v) to (9.4,4.50);

\redtriangle{(5.35,1.4)}{0.45}{$\LZL^{(1)}_{u_0}$}{(0,2.1)}{
    \node[lightred] at (-0.45,-0.55) {$\times$};
    \node[lightred] at (0.45,-0.55) {$\times$};
    \node[below, lightred] at (-0.45,-0.7) {$\bp^{(1)}_1$};
    \node[below, lightred] at (0.45,-0.7) {$\bp^{(1)}_2$};
}{u_0}{w_1}
\draw[->, thick, darkblue, bend right=15] (u) to (5.6,2.1);

\redtriangle{(9,1.8)}{0.45}{$\Tree^{(1)}_{u_0}$}{(0,2.1)}{
    \node[lightred] at (-0.45,-0.55) {$\times$};
    \node[lightred] at (0.45,-0.55) {$\times$};
    \node[below, lightred] at (-0.45,-0.7) {$\bp^{(1)}_1$};
    \node[below, lightred] at (0.45,-0.7) {$\bp^{(1)}_2$};
}{u_0}{v_1}
\draw[->, thick, darkblue, bend left=10] (u) to (8.55,2.70);

\end{tikzpicture}
}
\caption{Lazy and Regular insertion of the green region
\(\rg=[\bp^{(0)}_1:\bp^{(0)}_2][\bp^{(1)}_1:\bp^{(1)}_2]\).}
\label{fig:lazyinsertion}
\end{figure}

Formally, as illustrated in~\Cref{fig:lazyinsertion}, for a region
\(\rg=[\bp^{(0)}_1:\bp^{(0)}_2][\bp^{(1)}_1:\bp^{(1)}_2]\) in $\fu_t$, its vertical projection
\([\bp^{(1)}_1:\bp^{(1)}_2]\) is inserted as follows:
\begin{itemize}
    \item Regular \(\Tree^{(1)}_{v_0}\): if \(\RA(v_0)\) intersects but is not strictly contained in \([\bp^{(0)}_1,\bp^{(0)}_2]\) (equivalently, contains either \(\bp^{(0)}_1\) or \(\bp^{(0)}_2\)), the projection is regularly inserted at \(v_0\) (e.g., \(v_0\) and \(z_0\), but not \(u_0\) in \Cref{fig:lazyinsertion});
    \item Lazy \(\LZL^{(1)}_{v_0}\) / \(\LZR^{(1)}_{v_0}\): if \(\RA(\LC(v_0))\) or \(\RA(\RC(v_0))\) is contained in \([\bp^{(0)}_1:\bp^{(0)}_2]\), the projection is recorded lazily in the corresponding left or right lazy tree, avoiding insertion into that subtree (e.g., \(v_0\) and \(z_0\) in \Cref{fig:lazyinsertion}).
\end{itemize}
During queries, deferred contributions stored in lazy structures are recovered via scaling. For example, in~\Cref{fig:lazyinsertion}, the vertical interval \([\bp^{(1)}_1:\bp^{(1)}_2]\) of the region $\rg$ is not inserted into \(\Tree^{(1)}_{u_0}\), \(\LZL^{(1)}_{u_0}\), or \(\LZR^{(1)}_{u_0}\). Instead, the information stored in \(\LZL^{(1)}_{v_0}\) is scaled and applied to regions whose horizontal projections lie within \(\RA(u_0)\).

\item
\textbf{Aggregate Reward Vectors.}
As mentioned, we maintain a \hbedit{aggregate} reward vector \(\vec{\CR}(\rg)\) for each region in the sampling structure. These vectors are stored at the nodes of the second-level trees and are indexed by the chain of nodes followed from the top-level tree, where \emph{\(v\)-nodes} correspond to regular insertions and \emph{\(w\)-nodes} to lazy insertions (e.g., \(\vec{\CR}_{v_0,v_1}\) and \(\vec{\CR}_{v_0,w_1}\) in the \Cref{fig:lazyinsertion}). This indexing cleanly separates their contributions: regular vectors are applied directly, while lazy ones are recovered via scaling. 

For instance, in two dimensions, a node \(v_0 \in \Tree^{(0)}\) together with \(v_1 \in \Tree^{(1)}_{v_0}\) defines a region \(\rg=[\RA(v_0)][\RA(v_1)]\), and \(\vec{\CR}_{v_0,v_1}\) stores the \hbedit{aggregate} reward from regular insertions, which are applied directly and are not scaled to subregions. In contrast, \(\vec{\CR}_{v_0,w_1}\) corresponds to a lazy insertion and is scalable along the first coordinate. More generally, \(\vec{\CR}_{v_0,\ldots,w_j,\ldots}\) represents the region \([\RA(v_0)]\cdots[\RA(w_j)]\cdots\), with a lazy component at level \(j\) is scalable along dimension \(j-1\). This notation extends naturally to higher dimensions.

\item \textbf{Auxiliary Trees \(\ATree^{(0)}\) and \(\ATree^{(1)}\).}
The auxiliary trees index all horizontal and vertical endpoints introduced by the adversary up to round \(t\). They do not store reward information; they are standard interval trees used only to quickly identify the atomic regions needed by the drawing procedure.

\end{enumerate}

\subsection{Query Handling, Structural Properties, and Time 
Complexity}
\label{section:ds-qhandling}
To obtain the efficiency and correctness of $\Tree$, matching the 
$\mathbb{E}\!\left[\prod_{i\in[d]}\EH(\Tree^{(i)})\right]$ complexity 
of \Cref{thm:general_query_runtime}, all query handling is built on 
two primitives applied recursively along each coordinate over the 
projection of regions onto that coordinate. For a region $\rg$ with $[\bp^{(i)}_1:\bp^{(i)}_2]$ as its projection 
along coordinate $i$ onto a given level-$i$ interval tree (either 
lazy or regular), the primitives are:
\begin{itemize}
    \item \textbf{Locate} leaves $l_i$ and $u_i$ as atomic intervals 
    with $\bp^{(i)}_1, \bp^{(i)}_2 \in \RA(l_i), \RA(u_i)$.
    \item \textbf{Traverse} from $l_i$ and $u_i$ upward along paths 
    $p_{l_i}$ and $p_{u_i}$, meeting at the first common ancestor 
    $v_{l_i,u_i}$.
\end{itemize}

\paragraph{Handling \(\Ins(\rg)\) and \(\Update(\rg)\).}
Both queries proceed by recursively applying 
locate-and-traverse across all layers of $\Tree$. At each level, the 
paths $p_{l_i}$ and $p_{u_i}$ determine which next-level structures 
receive the projection: regular insertions on $p_{l_i}\cup p_{u_i}$, 
and lazy insertions before the paths merge, right-lazy on 
$p_{l_i}\setminus p_{u_i}$ and left-lazy on $p_{u_i}\setminus p_{l_i}$. 
While insertion decides regular or lazy placement, the update step 
ensures arithmetic soundness of the cumulative reward vectors: regular updates integrate over $\rg'$, a subregion whose $i-1$-th 
projection intersects that of $\rg$, while lazy updates apply over $\rg''$, a subregion whose $i-1$-th projection is uniformly
covered by that of $\rg$, preserving downward scalability. In two dimensions these take the form
\[
\vec{\CR}_{v_0,v_1} \gets \vec{\CR}_{v_0,v_1} + \vec{\SC}_{\rg'}, 
\qquad 
\vec{\CR}_{v_0,w_1} \gets \vec{\CR}_{v_0,w_1} + \vec{\SC}_{\rg''},
\]
where $\rg' = [\RA(v_0)\cap[\bp^{(0)}_1,\bp^{(0)}_2]][.]$ and $\rg'' = [\RA(v_0)][.]$, with $\vec{\SC}_{\rg'}$ and $\vec{\SC}_{\rg''}$ computed as the 
vectorized aggregate reward via~\eqref{eq:crew} with $\fu_t(\rg)$ 
as the coefficient vector.

\paragraph{Handling \(\Draw(t)\).}
Has two sampling stages:

(i) atomic region $\rg\sim\dfrac{\int_{x \in \rg} \MF(x, \Fu_t(x))dx}{\int_{x' \in [0,1]^d} \MF(x', \Fu_t(x'))dx}$, 
(ii) sample $x_t \sim\dfrac{\MF(x_t, \Fu_t(x_t))dx}{\int_{x \in \rg} \MF(x, \Fu_t(x))dx}$ within $\rg$.

\paragraph{Stage 1: Atomic region sampling.}
An atomic region is sampled by sequentially selecting an atomic 
interval along each of the $d$ coordinates via root-to-leaf traversals 
of $\ATree^{(0)}, \dots, \ATree^{(d-1)}$, with left/right moves at 
each node taken with probability computed by $\Wei(\cdot)$ conditioned 
on intervals fixed along prior coordinates. In two dimensions, 
\emph{horizontal sampling} traverses $\ATree^{(0)}$ to select 
$\RA(v_0)$, followed by \emph{vertical sampling} of $\ATree^{(1)}$ 
conditioned on $\RA(v_0)$, together forming atomic region 
$[\RA(v_0)][\RA(v_1)]$ proportionally to $\CR([\RA(v_0)][\RA(v_1)])$. Computing $\Wei(\cdot)$ is highly non-trivial: it constructs set of interval trees 
$\mathcal{S}^{(1)}_{v_0}$ via recursive locate-and-traverse by 
gathering $\Tree^{(1)}_{v_0}$ with $\LZL^{(1)}_{v_0'}$ or 
$\LZR^{(1)}_{v_0'}$ for each $v_0' \in p_{v_0}$ and scaling lazy 
contributions, since atomic endpoints are distributed across structures 
in $\mathcal{S}^{(1)}_{v_0}$ and require boundary marginal pruning at 
each; completeness, endpoint-uniqueness, and downward scalability of 
lazy reward vectors together ensure correct aggregation without 
double-counting.

\paragraph{Stage 2: Sampling within $\rg$.} Given the sampled atomic region $\rg$, we sample $x_t \in \rg$ 
proportionally to $\MF(x, \Fu_t(x))$ via uniform sampling for 
piecewise-constant rewards, and Hit-and-Run (\Cref{them:vempa}) for 
more complex piecewise log-concave rewards such as high-degree 
polynomial and $\exp$ function.

\section{Efficient Adversarial Online Learning \(\zeta(H_t(x)) = \exp(\eta\MF(x,\Fu_t(x)))\)}
\label{sec:learning}

We now show how \(\Tree\) serves as the computational backbone for 
$\Exp$ and \(\Band\) (\Cref{alg:exp3} and~\ref{alg:band}), 
adapting our data structure to support efficient per-round sampling 
under \(\zeta(x) = \exp(\eta x)\) in our highly evolving setting. At 
each round $t$, the adversary selects hyperplanes $\Hp_t$ and defines 
a piecewise-structured reward $\RRF_t$ over the induced partition 
$\Rg_t$ of $[0,1]^d$, and the regret of policy \(\pi\) is
\refstepcounter{equation}
\noindent
$\operatorname{Regret}(\pi, T) \doteq 
\max_{x \in [0,1]^d} \sum_{t=1}^T \RRF_t(x) 
- \Expect[\sum_{t=1}^T \RRF_t(x_t)].$
\hfill(\theequation)

We consider \emph{full-information} and \emph{bandit} feedback, 
focusing on piecewise-constant and piecewise-linear rewards $h_t$
respectively---a scope dictated by the non-additivity of exponentially 
weighted cumulative rewards under \(\zeta(x)=\exp(\eta x)\), which 
breaks the lazy scaling mechanism and renders both exact integration 
and regret bounding intractable beyond degree one. Each feedback model 
therefore requires a tailored set of assumptions:
\begin{itemize}
    \item \emph{Full-information:} piecewise-constant rewards; adversary 
    $\sigma$-smoothed along one direction,
    \item \emph{Bandit:} piecewise-linear rewards; general 
    $\sigma$-smoothed adversary of \Cref{sec:model} 
\end{itemize}
The following theorem reflects the implication of $\Tree$ for online 
learning under full-information feedback, where the setup restriction 
stems from the computational inefficiency of scaling arising from the 
arithmetic challenges of handling $\zeta(x) = \exp(\eta x)$ 
(reflected in Example~\ref{examp:full}); the full setup, challenges, 
and proof are presented in~\Cref{app:full-info}.
\begin{theorem}[Full-Information Regret and Time Complexity]
\label{thm:full-info}
Consider a $d$-dimensional domain with axis-parallel $\Ax$  where at each round, hyperplanes 
parallel to $\ax_1 \in \Ax$ are drawn from $\Dist^{(1)}_t$ and 
hyperplanes parallel to each $\ax_i \in \Ax \setminus \{\ax_1\}$ are 
selected from a fixed set $\mathcal{M}^{(i)}$ of at most $M$ 
hyperplanes. The regret of $\Exp$ for piecewise-constant rewards is $O\!\left(\sqrt{T\left(\log\!\left(k^2T^3\sigma\right) + 
(d-1)\log M + 1\right)}\right)$, with per-round time 
complexity $O(dM^{d-1}\sqrt{\sigma Tk})$ against adaptive/oblivious 
adversaries and $O(dM^{d-1}\log(Tk))$ against a random-order 
adversary.
\end{theorem}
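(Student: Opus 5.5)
The proof splits into a regret part and a running-time part, handled separately.

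\textbf{Regret.} The standard Weighted Majority (exponential weights) analysis over a continuum gives the usual potential argument with $W_t=\int_{[0,1]^d}\exp(\eta H_t(x))\,dx$:
\[
\log\frac{W_T}{W_0}\le \eta\,\mathbb{E}\Big[\sum_t h_t(x_t)\Big]+\eta^2 T,
\]
assuming rewards in $[0,1]$. Since $W_0=1$, it remains to lower bound $W_T$ around the optimizer $x^*$, i.e.\ to exhibit a set of non-negligible volume on which $H_T\ge H_T(x^*)-O(1)$. This is where the structure enters.

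Because $H_T$ is piecewise constant, $x^*$ lies in an atomic region, which is a product of intervals. Along each fixed direction $\ax_i$, $i\ge 2$, the atomic interval is formed by consecutive hyperplanes of $\mathcal{M}^{(i)}$. Our choice is to restrict the analysis to the coarse grid formed by these at most $M$ hyperplanes per direction, each of width at least some minimal gap. Folding the gap into the constant (or assuming it is $\ge 1/M$) contributes a factor of roughly $M^{-(d-1)}$ to the volume.

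Along $\ax_1$ we use $\sigma$-smoothness. The $Tk$ endpoints each fall within distance $\epsilon$ of the optimal boundary point with probability at most $2\sigma\epsilon$. Choosing $\epsilon=1/(\sigma k^2T^3)$, a union bound shows that with probability $1-O(1/T)$ no endpoint falls within $\epsilon$ of the endpoints of the optimal interval, hence the optimal interval has width at least $\epsilon$. The failure event costs at most $T\cdot O(1/T)=O(1)$ regret.

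Then
\[
\log W_T\ge \eta H_T(x^*)-\log(k^2T^3\sigma)-(d-1)\log M,
\]
and setting $\eta=\sqrt{(\log(k^2T^3\sigma)+(d-1)\log M+1)/T}$ yields the stated bound. The main obstacle here is correctly handling the adaptive adversary in the smoothness union bound. I would condition on the history: each new endpoint is $\sigma$-smooth given the past, and the relevant target points are earlier endpoints. Concretely, I would argue that every pair of endpoints is $\epsilon$-separated, which gives a union over $(Tk)^2$ pairs and explains the $k^2T^2\cdot T$ factor.

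\textbf{Running time.} I would instantiate the data structure $\Tree$ with the $\ax_1$ coordinate as the top level $\Tree^{(0)}$ and the fixed directions as the lower levels. For $\zeta=\exp(\eta\,\cdot)$ with piecewise-constant rewards, the value on each atomic region is $\exp(\eta F)$ times its volume. Lazy scaling along coordinate $0$ then remains valid as long as the lower levels are completely refined from the start. We achieve this by pre-inserting all $\mathcal{M}^{(i)}$ hyperplanes, so that the lower-level trees are fixed balanced trees over at most $M$ atoms each. With this setup, exponentiation happens only at atomic granularity in the lower levels.

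Following the recursion of \Cref{thm:general_query_runtime}, each $\Draw$ or $\Update$ costs
\[
O\big(d\cdot \EH(\Tree^{(0)})\cdot M^{d-1}\big),
\]
where the $M^{d-1}$ factor bounds enumeration over the lower-level atoms. This is conservative but matches the statement. Plugging in \Cref{thm:balancedness} for the single smoothed coordinate gives $O(\sqrt{\sigma Tk})$ or $O(\log(Tk))$ for $\EH(\Tree^{(0)})$, which yields the two claimed complexities.

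The delicate point in this part is justifying that the multiplicative exponential weights commute with lazy scaling along coordinate $0$. In one dimension, constant weight times length scales linearly, so a lazily stored subtree weight can be scaled by an interval-length ratio. I expect this verification, in the style of Example~\ref{examp:full}, to be the main technical obstacle.
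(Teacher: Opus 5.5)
Your regret argument follows the paper's route: the exponential-weights potential bound, a volume lower bound on the atomic cell containing $x^*$, pairwise $\epsilon$-separation of the $Tk$ smooth endpoints (Observation~\ref{obs:ck1}) with $\epsilon = 1/(\sigma k^2T^3)$, and a $1/M$ gap along the fixed directions. The paper uses that gap assumption only implicitly in \Cref{cor:fullfeed}. Conditioning on the history to handle the adaptive case is a valid addition.

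The gap is in the running-time part, at exactly the step you defer. The time bound only means something if $\Tree$ actually samples from $\exp(\eta H_t(x))$. Your suggested justification is that a lazily stored subtree weight can be rescaled to $\RA(v_0)$ by an interval-length ratio. That does not touch the real obstruction. Scaling along coordinate $0$ is harmless. The problem is that a deferred region and the regularly inserted regions contribute to the same exponent. On a cell $Q$ we have $\int_Q e^{\eta(c+H')}\neq\int_Q e^{\eta c}+\int_Q e^{\eta H'}$, so nothing stored in a lazy tree can simply be added to the regular tree's value; this is Example~\ref{examp:full}.

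The paper's mechanism is different. The lazy trees $\LZL^{(1)}_{v_0'}$ and $\LZR^{(1)}_{v_0'}$ store, at each leaf of the fixed lower grid, the additive parameter shift $c$, not an exponentiated weight. The deferred region covers all of $\RA(v_0)$ along coordinate $0$, and its other boundaries lie on the fixed grid. Hence $c$ is constant on each aligned cell $\RA(v_0)\times a$, and it factors out: $\int_Q e^{\eta(c+H')}=e^{\eta c}\int_Q e^{\eta H'}$. So the shift is applied multiplicatively, $\CR_{v_0,l_1}\gets\exp(\eta c)\,\CR_{v_0,l_1}$, leaf by leaf over the $M^{d-1}$ aligned leaves, and the values are then re-summed bottom-up. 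Shifts from different ancestors compose, since $e^{\eta c_1}e^{\eta c_2}=e^{\eta(c_1+c_2)}$.

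Pre-refining the lower levels, which you do, is exactly what makes lazy and regular leaves align. So you have the right ingredient, but the factorization step that uses it is missing. Without it, your $O(dM^{d-1}\sqrt{\sigma Tk})$ bound is the cost of a procedure not yet shown to sample correctly.

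A smaller accounting point concerns the single factor of $\EH(\Tree^{(0)})$. If every $\Wei^{(0)}$ call re-applies the shifts of all $O(\EH(\Tree^{(0)}))$ ancestors, the top-level descent costs $O(M^{d-1}\EH(\Tree^{(0)})^2)$. To get one factor, carry the accumulated per-leaf shift vector down the descent, so each visited node costs $O(M^{d-1})$. This is how the per-node accounting in \Cref{thm:expcomp} has to be read.
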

Since new endpoints arrive only along $\ax_1$, the height of 
$\Tree^{(0)}$ follows \Cref{thm:balancedness}, giving 
$O(\sqrt{\sigma KT})$ under adaptive and oblivious adversaries and 
$O(\log KT)$ under random-order, while any $i$-th level tree 
$\Tree^{(i)}_{v_{i-1}}$, $\LZL^{(i)}_{v_{i-1}}$, or 
$\LZR^{(i)}_{v_{i-1}}$ for $i \in \{1, \dots, d-1\}$ is initialized 
once over $\mathcal{M}^{(i)}$ and remains of height $O(\log M)$ 
throughout. The fixed $M$ endpoints along each remaining coordinate 
enable efficient leaf-to-leaf scaling across all lazy structures in 
$\mathcal{S}^{(1)}_{v_0}$, bounding the per-round cost. 
The regret bound follows from the standard $\Exp$ analysis applied 
to the piecewise-constant setting.

The following theorem establishes the regret and time complexity of 
$\Band$ under the general $\sigma$-smooth model with a fixed grid 
discretization; full details, with regret and time complexity analysis 
are in~\Cref{app:bandit}. 
\begin{theorem}[Bandit Regret and Time Complexity]
\label{thm:bandit}
Consider a $d$-dimensional domain where the adversary follows the 
general $\sigma$-smooth model of \Cref{sec:model} and the action space 
is discretized into a fixed grid of $(1/\mu)^d$ cells prior to the 
game. The regret of $\Band$ for piecewise-linear rewards is 
$O\!\left(\operatorname{poly}(d,k,\sigma,\log T)\,T^{2/3}\right)$, 
with per-round time complexity $O\!\left(d\log^d(k\sigma T^{1/3})\right)$.
\end{theorem}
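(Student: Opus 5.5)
The plan is to prove the two ingredients separately, regret and per-round time, with the regret bound following from a discretization argument on top of the standard \(\Band\) analysis.

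\textbf{Regret.} First restrict the learner to the fixed grid of \((1/\mu)^d\) cells, each with side \(\mu\). Within each cell the reward is linear in a hidden, adversarially refined way, so we treat each cell's sampled point as an arm of an \(\Band\) instance. The regret then splits into two terms.
\begin{itemize}
\item \emph{Estimation term.} Running \(\Band\) over \(N=(1/\mu)^d\) arms with importance-weighted estimates built from the revealed \(\fu_t(\rg_{i,t})\) gives the usual \(O(\sqrt{TN\log N})\) bound.
\item \emph{Discretization term.} We compare the best continuous action \(x^*\) with the best grid-respecting action, via the dispersion technique of \citep{balcan2018dispersion}. Under \(\sigma\)-smoothness, each of the \(k\) hyperplanes per round falls within distance \(\mu\) of \(x^*\) along its direction with probability at most \(\sigma\mu\). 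The expected number of discontinuities separating \(x^*\) from the chosen nearby grid-respecting point is therefore at most \(O(k\sigma\mu T)\), plus a \(\tilde O(\sqrt T)\) deviation term.
\item \emph{Lipschitz term.} On rounds where no discontinuity intervenes, the linear pieces have bounded coefficients, so the loss is \(O(\mu d T)\).
\end{itemize}
Balancing \(\sqrt{T\mu^{-d}}\) against \(\mu T\) naively gives a worse exponent. So I would instead treat the grid as defining \(1/\mu\) positions per coordinate, use the data structure to run exponential weights over the continuous cell content, and cap the effective number of explored arms at \(\mu^{-1}\) per dimension with \(\mu = T^{-1/3}\). The \(\operatorname{poly}(d,k,\sigma,\log T)\) factor absorbs the dimension dependence, giving \(T^{2/3}\). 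This balancing is the step I expect to be delicate, and I would follow the explore/exploit-style \(T^{2/3}\) analysis of \Cref{alg:band}, which mixes uniform exploration with probability \(\gamma\approx T^{-1/3}\).

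\textbf{Time complexity.} With the grid fixed, endpoints inserted in each coordinate are snapped to the grid or to the \(\sigma\)-smooth hyperplanes falling within the \(T^{1/3}\)-scale effective horizon. The relevant per-level interval trees then index at most \(O(k\sigma T^{1/3})\) distinct endpoints. These are inserted in a fixed balanced order at initialization, or obey the random-order regime, so by \Cref{thm:balancedness} each level has height \(O(\log(k\sigma T^{1/3}))\).

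Plugging these heights into \Cref{thm:general_query_runtime} with the \(d\)-fold product gives \(O(d\log^d(k\sigma T^{1/3}))\). The \(d\) factor comes from the coordinatewise sampling stages, and the product over levels comes from locate-and-traverse. The update after feedback touches only the region containing \(x_t\), via \(\Ins\) and \(\Update\), at the same cost.

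The main obstacle is showing that the number of endpoints per coordinate really is bounded by \(\operatorname{poly}(k,\sigma)T^{1/3}\) rather than \(kT\). I would argue this by snapping each hyperplane to the grid, which loses only the already-accounted dispersion error.
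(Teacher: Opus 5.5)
\textbf{The regret argument does not reach $T^{2/3}$.} You correctly note that \Band\ over the $\mu^{-d}$ grid cells, plus a dispersion term of order $k\sigma\mu T$, balances only to $T^{(d+1)/(d+2)}$. Your proposed fix, ``cap the effective number of explored arms at $\mu^{-1}$ per dimension'', has no mechanism behind it. \Cref{alg:band} mixes uniform exploration over all of $[0,1]^d$ and importance-weights whichever of the $\mu^{-d}$ cells $I_t$ contains $x_t$. The per-round variance quantity therefore satisfies $\mathbb{E}[\hat h_t(x_t)\mid\text{past}]=\sum_I \frac{1}{q_t(I)}\int_I q_t h_t\le\mu^{-d}$, and this is tight, e.g.\ for $h_t\equiv 1$. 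Nothing in the algorithm restricts exploration to $\mu^{-1}$ arms, so the decisive step is asserted rather than proved.

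The paper's proof (\Cref{thm:ddblin}, \Cref{cor:ddblin}) is organized differently:
\begin{itemize}
\item it telescopes the continuous potential $W_t=\int\exp(\eta\hat H_t)$;
\item it lower-bounds $W_{T+1}$ on the optimal cell $\rg^*$ using linearity of $\hat H_{T+1}$ there, giving $-\frac{d\log\mu}{\eta}+\frac{d}{\eta}$; this replaces both your $\log N$ and your separate Lipschitz term;
\item it bounds the discretization loss by $dk\mu\sigma T$ via \Cref{obs:ck1} per coordinate, the exploration loss by $2\gamma T$, and the variance term by $2\eta T/\mu$;
\item it then sets $\gamma=\tfrac{1}{2}T^{-1/3}$, $\mu=T^{-1/3}/(k\sigma)$, $\eta=T^{-2/3}$.
\end{itemize}
However, $2\eta T/\mu$ is the one-dimensional bound of \citep{cohen2017online}, where there are $\mu^{-1}$ cells. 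The computation above gives $2\eta T\mu^{-d}$ in $d$ dimensions. So your doubt about the naive balancing is well founded, and appealing to that step will not close your gap. A $T^{2/3}$ rate for $d\ge 2$ needs an additional ingredient that neither your argument nor the paper's supplies.

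\textbf{The time-complexity argument reaches the right expression by the wrong mechanism, and your ``main obstacle'' does not arise.} In \Band\ the grid is fixed before the game and $\hat{\fu}_t$ is supported on the grid cell $I_t$. No adversarial endpoint is ever inserted. Each level-$i$ tree is built once, balanced, over exactly $1/\mu$ grid endpoints. Each round only changes the stored values at one leaf and the aggregates above it; there is no $\Ins$. Your first alternative, a balanced tree built at initialization, is therefore the entire story. Heights are $O(\log(1/\mu))$ by construction, and $\log(1/\mu)=\log(k\sigma T^{1/3})$ simply because of the tuned $\mu$ (\Cref{thm:bandcomp}).

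Invoking \Cref{thm:balancedness} is both unnecessary and unavailable here. The statement allows adaptive and oblivious adversaries, for which that theorem gives $O(\sqrt{\sigma Tk})$ rather than a logarithm, and the random-order regime is not assumed. Separately, \Cref{thm:general_query_runtime} has $d+1$ height factors, so plugging into it would give $\log^{d+1}$ rather than $\log^{d}$.
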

Since the grid is fixed prior to the game, $\Tree$ remains static 
throughout with no insertions, and each level-$i$ tree maintains 
height $O(\log(1/\mu))$. The per-round cost follows from the 
$d$-stage traversal of the static hierarchy. The regret bound 
balances the bias from grid discretization, which depends on how 
well the fixed grid captures discontinuities in $\RRF_t$ under 
$\sigma$-smoothness, against the variance from importance-weighted 
estimation; the optimal $\mu$ yields the stated $T^{2/3}$ rate.
\section*{Acknowledgments}
The authors thank Guy Blelloch, Avrim Blum, and Cameron Musco for helpful discussions, and the anonymous reviewers for their comments.
\bibliography{references}

\begin{thebibliography}{32}
\providecommand{\natexlab}[1]{#1}
\providecommand{\url}[1]{\texttt{#1}}
\expandafter\ifx\csname urlstyle\endcsname\relax
  \providecommand{\doi}[1]{doi: #1}\else
  \providecommand{\doi}{doi: \begingroup \urlstyle{rm}\Url}\fi

\bibitem[Alon et~al.(1999)Alon, Matias, and Szegedy]{alon1999space}
Noga Alon, Yossi Matias, and Mario Szegedy.
\newblock The space complexity of approximating the frequency moments.
\newblock \emph{J. Comput. Syst. Sci.}, 58\penalty0 (1):\penalty0 137--147,
  1999.
\newblock \doi{10.1006/JCSS.1997.1545}.
\newblock URL \url{https://doi.org/10.1006/jcss.1997.1545}.

\bibitem[Auer et~al.(2002)Auer, Cesa{-}Bianchi, Freund, and
  Schapire]{auer2002nonstochastic}
Peter Auer, Nicol{\`{o}} Cesa{-}Bianchi, Yoav Freund, and Robert~E. Schapire.
\newblock The nonstochastic multiarmed bandit problem.
\newblock \emph{{SIAM} J. Comput.}, 32\penalty0 (1):\penalty0 48--77, 2002.
\newblock \doi{10.1137/S0097539701398375}.
\newblock URL \url{https://doi.org/10.1137/S0097539701398375}.

\bibitem[Balcan and Beyhaghi(2024)]{DBLP:journals/tmlr/BalcanB24}
Maria{-}Florina Balcan and Hedyeh Beyhaghi.
\newblock New guarantees for learning revenue maximizing menus of lotteries and
  two-part tariffs.
\newblock \emph{Transactions on Machine Learning Research}, 2024.
\newblock URL \url{https://openreview.net/forum?id=mhawjZcmrJ}.

\bibitem[Balcan and Sharma(2021)]{DBLP:conf/nips/BalcanS21}
Maria{-}Florina Balcan and Dravyansh Sharma.
\newblock Data driven semi-supervised learning.
\newblock In Marc'Aurelio Ranzato, Alina Beygelzimer, Yann~N. Dauphin, Percy
  Liang, and Jennifer~Wortman Vaughan, editors, \emph{Advances in Neural
  Information Processing Systems 34: Annual Conference on Neural Information
  Processing Systems 2021, NeurIPS 2021, December 6-14, 2021, virtual}, pages
  14782--14794, 2021.
\newblock URL
  \url{https://proceedings.neurips.cc/paper/2021/hash/7c93ebe873ef213123c8af4b188e7558-Abstract.html}.

\bibitem[Balcan et~al.(2018)Balcan, Dick, and Vitercik]{balcan2018dispersion}
Maria-Florina Balcan, Travis Dick, and Ellen Vitercik.
\newblock Dispersion for data-driven algorithm design, online learning, and
  private optimization.
\newblock In \emph{2018 IEEE 59th Annual Symposium on Foundations of Computer
  Science (FOCS)}, pages 603--614. IEEE, 2018.

\bibitem[Balcan et~al.(2020)Balcan, Dick, and Lang]{DBLP:conf/iclr/BalcanDL20}
Maria{-}Florina Balcan, Travis Dick, and Manuel Lang.
\newblock Learning to link.
\newblock In \emph{8th International Conference on Learning Representations,
  {ICLR} 2020, Addis Ababa, Ethiopia, April 26-30, 2020}. OpenReview.net, 2020.
\newblock URL \url{https://openreview.net/forum?id=S1eRbANtDB}.

\bibitem[Balcan et~al.(2021)Balcan, Khodak, Sharma, and
  Talwalkar]{balcan2021learning}
Maria{-}Florina Balcan, Mikhail Khodak, Dravyansh Sharma, and Ameet Talwalkar.
\newblock Learning-to-learn non-convex piecewise-lipschitz functions.
\newblock In \emph{Advances in Neural Information Processing Systems 34: Annual
  Conference on Neural Information Processing Systems 2021, NeurIPS 2021,
  December 6-14, 2021, virtual}, pages 15056--15069, 2021.
\newblock URL
  \url{https://proceedings.neurips.cc/paper/2021/hash/7ee6f2b3b68a212d3b7a4f6557eb8cc7-Abstract.html}.

\bibitem[Balcan et~al.(2022)Balcan, Khodak, Sharma, and
  Talwalkar]{DBLP:conf/nips/BalcanKST22}
Maria{-}Florina Balcan, Misha Khodak, Dravyansh Sharma, and Ameet Talwalkar.
\newblock Provably tuning the elasticnet across instances.
\newblock In Sanmi Koyejo, S.~Mohamed, A.~Agarwal, Danielle Belgrave, K.~Cho,
  and A.~Oh, editors, \emph{Advances in Neural Information Processing Systems
  35: Annual Conference on Neural Information Processing Systems 2022, NeurIPS
  2022, New Orleans, LA, USA, November 28 - December 9, 2022}, 2022.
\newblock URL
  \url{http://papers.nips.cc/paper\_files/paper/2022/hash/b21a34c4e8dba253f05f4a5adc68ba73-Abstract-Conference.html}.

\bibitem[Balcan et~al.(2024{\natexlab{a}})Balcan, Nguyen, and
  Sharma]{balcan2024algorithmconfigurationstructuredpfaffian}
Maria-Florina Balcan, Anh~Tuan Nguyen, and Dravyansh Sharma.
\newblock Algorithm configuration for structured pfaffian settings,
  2024{\natexlab{a}}.
\newblock URL \url{https://arxiv.org/abs/2409.04367}.

\bibitem[Balcan et~al.(2024{\natexlab{b}})Balcan, Seiler, and
  Sharma]{balcan2024accelerating}
Maria{-}Florina Balcan, Christopher Seiler, and Dravyansh Sharma.
\newblock Accelerating {ERM} for data-driven algorithm design using
  output-sensitive techniques.
\newblock In \emph{Advances in Neural Information Processing Systems 38: Annual
  Conference on Neural Information Processing Systems 2024, NeurIPS 2024,
  Vancouver, BC, Canada, December 10 - 15, 2024}, 2024{\natexlab{b}}.
\newblock URL
  \url{http://papers.nips.cc/paper\_files/paper/2024/hash/850d6cd6cca1398d9251e5ae870fad0e-Abstract-Conference.html}.

\bibitem[Blum and Hartline(2005)]{blum2005near}
Avrim Blum and Jason~D. Hartline.
\newblock Near-optimal online auctions.
\newblock In \emph{Proceedings of the Sixteenth Annual {ACM-SIAM} Symposium on
  Discrete Algorithms, {SODA} 2005, Vancouver, British Columbia, Canada,
  January 23-25, 2005}, pages 1156--1163. {SIAM}, 2005.
\newblock URL \url{http://dl.acm.org/citation.cfm?id=1070432.1070597}.

\bibitem[Brewer and Hanif(1983)]{brewer1983sampling}
Ken~RW Brewer and Muhammad Hanif.
\newblock \emph{Sampling with unequal probabilities}.
\newblock New York: Springer, 1983.

\bibitem[Bubeck et~al.(2011)Bubeck, Munos, Stoltz, and
  Szepesv{\'a}ri]{bubeck2011x}
S{\'e}bastien Bubeck, R{\'e}mi Munos, Gilles Stoltz, and Csaba Szepesv{\'a}ri.
\newblock X-armed bandits.
\newblock \emph{Journal of Machine Learning Research}, 12\penalty0 (5), 2011.

\bibitem[Cesa-Bianchi and Lugosi(2006)]{cesa2006prediction}
Nicolo Cesa-Bianchi and Gabor Lugosi.
\newblock \emph{Prediction, Learning, and Games}.
\newblock Cambridge University Press, 2006.

\bibitem[Cesa-Bianchi et~al.(2017)Cesa-Bianchi, Gaillard, Gentile, and
  Gerchinovitz]{cesa2017algorithmic}
Nicol{\`o} Cesa-Bianchi, Pierre Gaillard, Claudio Gentile, and S{\'e}bastien
  Gerchinovitz.
\newblock Algorithmic chaining and the role of partial feedback in online
  nonparametric learning.
\newblock In \emph{Conference on Learning Theory}, pages 465--481. PMLR, 2017.

\bibitem[Cheung(2014)]{cheung2014}
Adam Ka~Lok Cheung.
\newblock \emph{Probability Proportional Sampling}, pages 5069--5071.
\newblock Springer Netherlands, Dordrecht, 2014.
\newblock ISBN 978-94-007-0753-5.

\bibitem[Cochran(1977)]{cochran1977sampling}
William~Gemmell Cochran.
\newblock \emph{Sampling techniques}.
\newblock john wiley \& sons, 1977.

\bibitem[Cohen-Addad and Kanade(2017)]{cohen2017online}
Vincent Cohen-Addad and Varun Kanade.
\newblock Online optimization of smoothed piecewise constant functions.
\newblock In \emph{Artificial Intelligence and Statistics}, pages 412--420.
  PMLR, 2017.

\bibitem[Cormen et~al.(2022)Cormen, Leiserson, Rivest, and
  Stein]{cormen2022introduction}
Thomas~H Cormen, Charles~E Leiserson, Ronald~L Rivest, and Clifford Stein.
\newblock \emph{Introduction to algorithms}.
\newblock MIT press, 2022.

\bibitem[Cormode et~al.(2012)Cormode, Garofalakis, Haas, and
  Jermaine]{cormode2011synopses}
Graham Cormode, Minos Garofalakis, Peter~J. Haas, and Chris Jermaine.
\newblock Synopses for massive data: Samples, histograms, wavelets, sketches.
\newblock \emph{Found. Trends Databases}, 4\penalty0 (1--3):\penalty0 1--294,
  January 2012.
\newblock ISSN 1931-7883.

\bibitem[De~Berg(2000)]{de2000computational}
Mark De~Berg.
\newblock \emph{Computational geometry: algorithms and applications}.
\newblock Springer Science \& Business Media, 2000.

\bibitem[Dick et~al.(2020)Dick, Pegden, and Balcan]{dick2020semi}
Travis Dick, Wesley Pegden, and Maria{-}Florina Balcan.
\newblock Semi-bandit optimization in the dispersed setting.
\newblock In \emph{Proceedings of the Thirty-Sixth Conference on Uncertainty in
  Artificial Intelligence, {UAI} 2020, virtual online, August 3-6, 2020},
  volume 124 of \emph{Proceedings of Machine Learning Research}, pages
  909--918. {AUAI} Press, 2020.
\newblock URL \url{http://proceedings.mlr.press/v124/dick20a.html}.

\bibitem[Drineas et~al.(2006)Drineas, Kannan, and Mahoney]{drineas2006fast}
Petros Drineas, Ravi Kannan, and Michael~W. Mahoney.
\newblock Fast monte carlo algorithms for matrices {II:} computing a low-rank
  approximation to a matrix.
\newblock \emph{{SIAM} J. Comput.}, 36\penalty0 (1):\penalty0 158--183, 2006.
\newblock \doi{10.1137/S0097539704442696}.
\newblock URL \url{https://doi.org/10.1137/S0097539704442696}.

\bibitem[Gupta and Roughgarden(2016)]{gupta2016pac}
Rishi Gupta and Tim Roughgarden.
\newblock A pac approach to application-specific algorithm selection.
\newblock In \emph{Proceedings of the 2016 ACM Conference on Innovations in
  Theoretical Computer Science}, pages 123--134, 2016.

\bibitem[Haghtalab et~al.(2024)Haghtalab, Roughgarden, and
  Shetty]{haghtalab2024smoothed}
Nika Haghtalab, Tim Roughgarden, and Abhishek Shetty.
\newblock Smoothed analysis with adaptive adversaries.
\newblock \emph{Journal of the ACM}, 71\penalty0 (3):\penalty0 1--34, 2024.

\bibitem[Ibtehaz et~al.(2021)Ibtehaz, Kaykobad, and
  Rahman]{ibtehaz2021multidimensional}
Nabil Ibtehaz, M~Kaykobad, and M~Sohel Rahman.
\newblock Multidimensional segment trees can do range updates in
  poly-logarithmic time.
\newblock \emph{Theoretical Computer Science}, 854:\penalty0 30--43, 2021.

\bibitem[Kalai and Vempala(2005)]{kalai2005efficient}
Adam~Tauman Kalai and Santosh~S. Vempala.
\newblock Efficient algorithms for online decision problems.
\newblock \emph{J. Comput. Syst. Sci.}, 71\penalty0 (3):\penalty0 291--307,
  2005.
\newblock \doi{10.1016/J.JCSS.2004.10.016}.
\newblock URL \url{https://doi.org/10.1016/j.jcss.2004.10.016}.

\bibitem[Lov{\'a}sz and Vempala(2007)]{lovasz2007geometry}
L{\'a}szl{\'o} Lov{\'a}sz and Santosh Vempala.
\newblock The geometry of logconcave functions and sampling algorithms.
\newblock \emph{Random Structures \& Algorithms}, 30\penalty0 (3):\penalty0
  307--358, 2007.

\bibitem[Motwani and Raghavan(1995)]{motwani1995randomized}
Rajeev Motwani and Prabhakar Raghavan.
\newblock \emph{Randomized Algorithms}.
\newblock Cambridge University Press, 1995.

\bibitem[Pittel(1984)]{pittel1984growing}
Boris Pittel.
\newblock On growing random binary trees.
\newblock \emph{Journal of Mathematical Analysis and Applications},
  103\penalty0 (2):\penalty0 461--480, 1984.

\bibitem[Spielman and Teng(2001)]{spielman2001smoothed}
Daniel~A. Spielman and Shang{-}Hua Teng.
\newblock Smoothed analysis of algorithms: Why the simplex algorithm usually
  takes polynomial time.
\newblock \emph{CoRR}, cs.DS/0111050, 2001.
\newblock URL \url{https://arxiv.org/abs/cs/0111050}.

\bibitem[Zhu et~al.(2023)Zhu, Bates, Yang, Wang, Jiao, and Jordan]{zhu23sample}
Banghua Zhu, Stephen Bates, Zhuoran Yang, Yixin Wang, Jiantao Jiao, and
  Michael~I. Jordan.
\newblock The sample complexity of online contract design.
\newblock In \emph{Proceedings of the 24th {ACM} Conference on Economics and
  Computation, {EC} 2023, London, United Kingdom, July 9-12, 2023}, page 1188.
  {ACM}, 2023.

\end{thebibliography}
\newpage
\appendix
\begin{center}
    {\LARGE\bfseries Appendix}
\end{center}
\textbf{Roadmap to the Appendix}
\begin{itemize}
    \item \Cref{sec:aux_param} presents \Cref{examp:decompose}, illustrating the idea behind the parametric representation of reward functions. \Cref{section:scaling} then introduces the vector encoding of cumulative rewards and the downward-scaling formulation that transfers cumulative rewards from a region to its subregions without re-integration.

    \item \Cref{sec:proofbalanced} proves \Cref{thm:balancedness}, giving tight bounds on the expected height of the interval trees at each layer. Specifically, \Cref{sec:reductiontoone} reduces the problem to one dimension; \Cref{sec:canpath} introduces the canonical root-to-leaf path and the $r^*$-averse adversary; \Cref{sec:monotopt} establishes the monotone structure of the optimal adversarial strategy; and \Cref{sec:adaptiveadvh,sec:oblivioush,sec:randord} prove the three parts of the theorem — the tight $\Theta(\sqrt{\sigma Tk})$ bounds for adaptive and oblivious adversaries, and the $O(\log(Tk))$ bound for random-order adversaries, respectively.

    \item Serving as a warm-up, \Cref{section:ds-qhandlingapp} presents how the basic queries $\Ins$ and $\Update$ are handled through an example insertion in two dimensions. \Cref{sec:exampdraw} presents the backbone of our sampling by describing the $\Draw$ query, whose final step uses the hit-and-run technique (\Cref{them:vempa}).

    \item \Cref{section:quereis} handles the full set of queries $\Ins$, $\Update$, and $\Draw$ via detailed steps and pseudocode for the $2$-dimensional case, forming the backbone of query handling in higher dimensions. \Cref{Sec:Weigeneral} details the cumulative reward retrieval query $\Wei(.)$ required by $\Draw$, explaining how downward scaling is carried out within the data structure. \Cref{section:draw} then uses this query to sample coordinate-by-coordinate, which in two dimensions corresponds to \emph{horizontal} and \emph{vertical} sampling.

    \item \Cref{section:correctness} proves the correctness of all data structure operations: the structural properties induced by $\Ins$ and $\Update$, the uniqueness and completeness of the second-level structures, the correctness of both $\Wei$ subroutines, and finally that $\Draw$ returns each atomic region with the correct probability.

    \item \Cref{section:generalpar} generalizes the constructions and query-handling operations from the $2$-dimensional case to $d$ dimensions.

    \item \Cref{section:complexity} analyzes the time complexity of $\Ins$, $\Update$, $\Wei$, and $\Draw$ under adaptive, oblivious, and random-order adversaries, along with the memory complexity of the data structure. 

    \item \Cref{sec:learningapp} overviews the application to efficient adversarial online learning: the regret objective and the two feedback models (full-information and bandit), constituting the setup for the proofs of Theorems 6.1 and 6.2. 

     \item \Cref{app:full-info} treats the full-information setting, where the learner runs $\Exp$ by sampling proportionally to $\exp(\eta \MF(x, \Fu_t(x)))$, whose non-additivity across lazy and regular structures requires restricting the adversary. Specifically, \Cref{app:challenges} identifies the computational obstacles of exponential weighting — non-additivity of $\exp$ (\Cref{examp:full}), non-existence of closed-form aggregate rewards for degree $\ge 2$, and inefficient scaling under piecewise-linear rewards — and \Cref{app:full-setup} presents the resulting restricted adversarial model (smoothness along one designated direction, at most $M$ fixed hyperplanes along the rest). \Cref{app:full-ds} presents the corresponding data structure and the modified $\Wei$/$\Draw$ routines with multiplicative exponential scaling; \Cref{section:regretful} collects the smoothness-related lemmas underlying the regret analysis, which proves the piecewise-constant and more general piecewise-linear regret bounds with corollaries instantiating $\eta$; and \Cref{thm:expcomp} establishes the $O(dM^{d-1}\sqrt{\sigma Tk} + d)$ per-round time complexity.

    \item \Cref{app:bandit} treats the bandit setting, where the learner runs $\Band$ by sampling proportionally to $\exp(\eta \MF(x, \hat{\Fu}_t(x)))$ over a fixed grid of $(1/\mu)^d$ cells, whose static partition removes the need for insertions, lazy structures, and any restriction on the adversary beyond standard $\sigma$-smoothness. Specifically, \Cref{app:bandit-setup} presents the grid-based setup and adversarial model, and \Cref{app:bandit-ds} presents the corresponding data structure — storing estimated parameter vectors $\hat{\Fu}_t$ at the leaves — and its $\Update$/$\Draw$ handling. \Cref{section:regretband} proves the regret bound for piecewise-linear rewards with its $O(\mathrm{poly}(d,k,\sigma,\log T)\, T^{2/3})$ instantiation, and \Cref{thm:bandcomp} establishes the $O(d\log^d(1/\mu) + d^3)$ per-round time complexity.
\end{itemize}
\section{Parametric Representation of Reward Functions (\Cref{sec:que})}\label{sec:aux_param}

We begin with simple examples illustrating how the mapping function \(\MF\) and the parameter function \(\fu_t\) jointly represent the reward.

\begin{example}[Parametric Representation of Piecewise Polynomial Rewards]
\label{examp:decompose}
We illustrate the decomposition \(\RRF_t(x) = \MF(x,\fu_t(x))\) with simple examples.

\paragraph{Piecewise-linear reward.}
Consider a reward function of the form
\[
\RRF_t(x) = 3x^{(0)} + 5x^{(1)},
\]
where \(x^{(i)}\) denotes the \(i\)-th coordinate of \(x\). The monomial vector is \(\langle x^{(0)}, x^{(1)} \rangle\) fixed among all piecewise linear, and the corresponding coefficient vector is \(\langle 3,5 \rangle\). We define the parameter function as \(\fu_t(x) = \langle 3,5 \rangle\), and the mapping function as
\[
\MF(x,\fu_t(x)) = \langle x^{(0)}, x^{(1)} \rangle \cdot \fu_t(x),
\]
which recovers \(\RRF_t(x)\).

\paragraph{Higher-order polynomial reward.}
For a reward function such as
\[
\RRF_t(x) = 2x^{(0)} + 3(x^{(1)})^2,
\]
we define \(\fu_t(x) = \langle 2,3 \rangle\) and
\[
\MF(x,\fu_t(x)) = \langle x^{(0)}, (x^{(1)})^2 \rangle \cdot \fu_t(x),
\]
which again recovers \(\RRF_t(x)\).

\medskip
In both cases, the mapping function \(\MF\) specifies the monomial structure, while the parameter function \(\fu_t\) provides the corresponding coefficients.
\end{example}

We next record the key structural property that makes this representation useful algorithmically.

\begin{observation}[Additivity of the Parameter Function]
The parameter function \(\fu_t\) defines the coefficient vector for the reward at round \(t\), and this contribution is additive across rounds. Thus, we define the cumulative parameter function as
\[
\Fu_t(x) \doteq \sum_{t' \leq t} \fu_{t'}(x),
\]
where \(\Fu_t(x)\) represents the total coefficient vector at point \(x \in [0,1)^d\), summing the contributions from all rounds \(t' \leq t\).
In particular, \(\Fu_t(x)\) accumulates \(\fu_{t'}(\rg_{i,t'})\) over all regions \(\rg_{i,t'}\) such that \(x \in \rg_{i,t'}\).
\end{observation}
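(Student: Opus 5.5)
The proof rests on one fact: the mapping $\MF$ is \emph{linear in its parameter argument}. By construction in \Cref{sec:que}, the family of rewards is fixed by a monomial vector $m(x)=\langle m_1(x),\dots,m_{\param}(x)\rangle$ that is shared by all regions and all rounds. Writing $\MF(x,\theta)=m(x)\cdot\theta$ for $\theta\in\mathbb{R}^{\param}$, the model's requirement that region-wise rewards differ only through their coefficients means every $\RRF_{t'}$ is expressed against this same $m(x)$. I would state this explicitly first, since everything else follows from it.

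\textbf{Step 1: $\fu_{t'}$ is well defined pointwise.} Fix $x\in[0,1)^d$ and a round $t'$. The regions $\Rg_{t'}$ are disjoint and cover the domain, so $x$ lies in exactly one region $\rg_{i,t'}$. I adopt the half-open convention, assigning each hyperplane to one side, which is why the observation is stated on $[0,1)^d$. Then $\fu_{t'}(x)$ is defined as $\fu_{t'}(\rg_{i,t'})$, the coefficient vector of the region containing $x$. Hence
\[
\Fu_t(x)=\sum_{t'\le t}\fu_{t'}(x)=\sum_{t'\le t}\ \sum_{i:\,x\in\rg_{i,t'}}\fu_{t'}(\rg_{i,t'}),
\]
where each inner sum has exactly one term. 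This gives the second sentence of the observation.

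\textbf{Step 2: additivity recovers $H_t$.} By linearity of the inner product,
\[
H_t(x)=\sum_{t'\le t}\RRF_{t'}(x)=\sum_{t'\le t} m(x)\cdot\fu_{t'}(x)=m(x)\cdot\Fu_t(x)=\MF(x,\Fu_t(x)).
\]
Two consequences follow directly.
\begin{itemize}
\item The recursion $\Fu_t=\Fu_{t-1}+\fu_t$ reduces each round's update to a vector addition.
\item Integrating over any region $\rg$ gives $\CR(\rg)=\sum_{t'}\int_{\rg} m(x)\cdot\fu_{t'}(x)\,dx$, so the aggregate reward vectors are additive across rounds componentwise. This is the form used later by $\Update$.
\end{itemize}

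\textbf{Main obstacle.} The only delicate point is the boundary convention. Points lying on a hyperplane would otherwise belong to two regions and be double-counted. I would handle this in one of two ways:
\begin{itemize}
\item fix the half-open assignment above, so each point has a unique region; or
\item note that the hyperplanes are drawn from $\sigma$-smooth distributions and form a measure-zero set, so they do not affect $\CR$ or the sampling distribution in \eqref{eq:proportional}.
\end{itemize}
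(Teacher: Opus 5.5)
Your proposal is correct and follows the paper's reasoning. The paper gives no separate proof because the observation is essentially definitional there: it rests on $\MF$ being a fixed monomial vector dotted with the region's coefficient vector, so linearity in the parameter argument yields $H_t(x)=\MF(x,\Fu_t(x))$, which is exactly what you make explicit (minor aside: hyperplanes have Lebesgue measure zero regardless of $\sigma$-smoothness, so the appeal to smoothness in your boundary remark is unnecessary).
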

\subsection{Scaling Cumulative Rewards Across Regions via a Vector Encoding}\label{section:scaling} 

\paragraph{Vector Encoding of Region's Cumulative Rewards.}
In our framework, we store a vector-valued cumulative reward \(\vec{\CR}(\rg) \in \mathbb{R}^{\param}\) for each region \(\rg\), rather than the scalar reward \(\CR(\rg)\). This vector is obtained by integrating the parameter function \(\Fu_t(x)\) over \(\rg\), as formalized in Equation~\eqref{eq:crew}. Intuitively, each coordinate of this vector corresponds to the contribution of a single monomial in the fixed polynomial reward family. Thus, \(\vec{\CR}(\rg)\) compactly captures how different components of the reward accumulate over the region.

This representation enables a key operation in our framework: \emph{scaling}. Instead of recomputing integrals over new regions from scratch, we can reuse the vector representation of a parent region and transfer it to subregions using only geometric information.

We now formalize this idea.

\subsubsection{Cumulative Multivariate Polynomial Reward Function over Hyperrectangular Regions}

We consider reward functions that are multivariate polynomials defined over axis-aligned rectangular regions. Let
\[
\rg : [x_1^{(0)} : x_2^{(0)}][x_1^{(1)} : x_2^{(1)}]\cdots[x_1^{(d-1)} : x_2^{(d-1)}]
\]
be a \(d\)-dimensional hyperrectangle over which we have integrated a polynomial reward function. Our goal is to transfer this cumulative reward to another rectangular region
\[
\rg': [x_1'^{(0)} : x_2'^{(0)}][x_1'^{(1)} : x_2'^{(1)}]\cdots[x_1'^{(d-1)} : x_2'^{(d-1)}],
\]
using only the geometry of \(\rg\) and \(\rg'\), without recomputing the integral from the original function.

To formalize this process, we represent the reward function in monomial form.

\begin{definition}\label{def:hpoly}
Let \( \Poly \) be a polynomial reward function of total degree at most \( \param \) over \( d \) variables. We represent it as
\[
\Poly(x^{(0)}, x^{(1)}, \ldots, x^{(d-1)}) = \sum_{\alpha \in A} c_\alpha \prod_{i=0}^{d-1} \left(x^{(i)}\right)^{\alpha_i},
\]
where:
\begin{itemize}
    \item \( A \subseteq \mathbbm{Z}_{\ge 0}^d \) is a finite set of multi-indices such that \( \sum_{i=0}^{d-1} \alpha_i \leq \param \) for each \( \alpha \in A \),
    \item Each monomial is indexed by \( \alpha = (\alpha_0, \alpha_1, \ldots, \alpha_{d-1}) \),
    \item \( c_\alpha \in \mathbbm{R} \) is the coefficient of the monomial indexed by \( \alpha \).
\end{itemize}
\end{definition}

We fix a canonical ordering over the set \( A = \{\alpha^1, \alpha^2, \dots, \alpha^{\abs{A}}\} \), and define the coefficient vector as
\[
\vec{C} \doteq \langle c_{\alpha^1}, c_{\alpha^2}, \dots, c_{\alpha^{\abs{A}}} \rangle.
\]
This ordering allows us to represent polynomial quantities consistently as vectors.

\begin{observation}
The number of multi-indices in the set \( A \) for a polynomial \( \Poly \) as defined in \eqref{def:hpoly} is at most \( O(n^d) \), where \( n \) is the degree of the polynomial.
\end{observation}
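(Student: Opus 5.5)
The statement to prove is the final Observation: the multi-index set $A$ of a polynomial $\Poly$ as in \Cref{def:hpoly} has size at most $O(n^d)$, where $n$ bounds the total degree. The plan is to bound $\abs{A}$ by the number of all multi-indices that could possibly appear. That set is
\[
\mathcal{N}_{d,n} \doteq \Bigl\{\alpha\in\mathbbm{Z}_{\ge 0}^d : \textstyle\sum_{i=0}^{d-1}\alpha_i\le n\Bigr\},
\]
and the definition gives $A\subseteq \mathcal{N}_{d,n}$ directly. So $\abs{A}\le \abs{\mathcal{N}_{d,n}}$, and it remains to bound $\abs{\mathcal{N}_{d,n}}$.

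The crude bound needs one observation. Every $\alpha\in\mathcal{N}_{d,n}$ has each coordinate $\alpha_i\in\{0,1,\dots,n\}$, since all coordinates are nonnegative and sum to at most $n$. Hence $\mathcal{N}_{d,n}\subseteq\{0,\dots,n\}^d$, which gives $\abs{A}\le (n+1)^d$.

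To turn this into the asymptotic form, I will use $(n+1)^d\le (2n)^d = 2^d n^d$ for $n\ge 1$. With $d$ treated as a fixed dimension, as throughout the paper, this is $O(n^d)$. The degenerate case $n=0$ gives only the constant monomial, so $\abs{A}\le 1$ there.

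For a sharper count, one can use stars and bars: $\abs{\mathcal{N}_{d,n}}=\binom{n+d}{d}$. Adjoin a slack variable $\alpha_d=n-\sum_i\alpha_i\ge 0$, which turns the condition into a bijection with nonnegative solutions of $\sum_{i=0}^{d}\alpha_i=n$. Then $\binom{n+d}{d}\le \frac{(n+d)^d}{d!}$, which is again $O(n^d)$ for fixed $d$ and is moreover polynomial in $n$ with leading constant $1/d!$.

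There is no real obstacle here. The only point needing care is interpretation: the hidden constant ($2^d$ or $1/d!$ up to lower-order terms) depends on $d$, so the statement should be read as $d$ fixed. That reading is consistent with the paper's complexity bounds, which already carry factors exponential in $d$.
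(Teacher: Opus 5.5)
Your proof is correct. The containment $A \subseteq \{0,\dots,n\}^d$ gives $\abs{A} \le (n+1)^d \le 2^d n^d$ for $n \ge 1$, and the stars-and-bars count $\binom{n+d}{d}$ is a valid sharpening. The paper states this observation without proof, so your argument supplies the standard justification it leaves out.

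One small refinement of your caveat about the constant. Writing $\binom{n+d}{d} = \prod_{i=1}^{d}(1 + n/i) \le (n+1)\,n^{d-1} \le \tfrac{3}{2}\, n^d$ for $n \ge 2$ shows the hidden constant is actually uniform in $d$ once $n \ge 2$. The $d$-dependence is forced only at $n = 1$, where $\abs{A}$ can equal $d+1$.
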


We now recall that the integral of a multivariate polynomial over a rectangular region decomposes into independent contributions from each monomial, which can be computed in closed form.

Given
\[
\rg: [x^{(0)}_1:x^{(0)}_2][x^{(1)}_1:x^{(1)}_2] \cdots [x_1^{(d-1)}:x_2^{(d-1)}],
\]
for each coordinate \( i \in \{0, 1, \dotsc, d-1\} \), we define
\begin{align}
    \Delta^{(i)}(\alpha_i) := \frac{\left(x_2^{(i)}\right)^{\alpha_i + 1} - \left(x_1^{(i)}\right)^{\alpha_i + 1}}{\alpha_i + 1}. \label{eq:polydelta}
\end{align}

\begin{remark}[Polynomial Reward Function Integration over Rectangular Regions]
For any degree-\( \param \) polynomial \( \Poly \) as defined in Definition~\ref{def:hpoly}, the cumulative reward over \( \rg \) can be expressed as
\begin{align}
\int_{\rg} \Poly(x^{(0)}, x^{(1)}, \cdots, x^{(d-1)}) \, d\rg = \sum_{\alpha \in A} c_{\alpha} \prod_{i=0}^{d-1} \Delta^{(i)}(\alpha_i). \label{eq:polyinteg}
\end{align}
\end{remark}

\subsubsection{Scaling $\int_\rg \Poly$ to $\int_{\rg'} \Poly$} 

We now describe how to transfer the cumulative reward of a polynomial from a region \(\rg\) to a subregion \(\rg'\) without recomputing the integral from scratch.

Suppose we have already computed \(\int_\rg \Poly\) using~\eqref{eq:polyinteg}. A direct computation over \(\rg'\) would require access to all coefficients \(c_\alpha\). Instead, we show how to reuse the information from \(\rg\).

Using the decomposition from~\eqref{eq:polyinteg}, we can express the integral over \( \rg' \) as:
\begin{align}
\int_{\rg'} \Poly
= \sum_{\alpha \in A} c_{\alpha} \prod_{i=0}^{d-1} \Delta^{(i)}(\alpha_i) \cdot 
\frac{(x_2'^{(i)})^{\alpha_i + 1} - (x_1'^{(i)})^{\alpha_i + 1}}{(x_2^{(i)})^{\alpha_i + 1} - (x_1^{(i)})^{\alpha_i + 1}}.
\end{align}

To avoid explicitly storing the coefficients \( c_\alpha \), we define a cumulative reward vector for the region \( \rg \), denoted \( \vec{\SC}_{\rg} \in \mathbb{R}^{|A|} \), as:
\begin{align}
\vec{\SC}_{\rg} \doteq \langle c_{\alpha^1} \prod_{i=0}^{d-1} \Delta^{(i)}(\alpha^1_i), \dotsc, c_{\alpha^{|A|}} \prod_{i=0}^{d-1} \Delta^{(i)}(\alpha^{|A|}_i) \rangle. \label{eq:scalevecs}
\end{align}

Each entry of this vector corresponds to the contribution of a single monomial to the integral over \(\rg\).

The integral over \( \rg \) is simply the sum of the entries of this vector:
\begin{align}
    \int_\rg \Poly = \sum_{j=1}^{|A|} \vec{\SC}_\rg^{(j)}.
\end{align}

We can now compute the integral over \( \rg' \) by scaling each entry independently using geometry-dependent factors:
\begin{align}
    \int_{\rg'} \Poly = \sum_{j=1}^{|A|} \vec{\SC}_\rg^{(j)} \cdot \prod_{i=0}^{d-1} 
    \frac{(x_2'^{(i)})^{\alpha^j_i + 1} - (x_1'^{(i)})^{\alpha^j_i + 1}}{(x_2^{(i)})^{\alpha^j_i + 1} - (x_1^{(i)})^{\alpha^j_i + 1}}. \label{eq:scalingmain}
\end{align}

This shows that the cumulative reward over \(\rg'\) can be computed using only the vector \(\vec{\SC}_\rg\) and the geometric relationship between \(\rg\) and \(\rg'\).

\begin{definition}[Scaling]
Let \(\rg\) be a region with associated parameter function \(\Fu_t\), and let \(\CR(\rg)\) denote its cumulative reward. We define \emph{scaling} as the process of computing \(\CR(\rg')\) for any subregion \(\rg' \subseteq \rg\) with the same \(\Fu_t\), without performing a new integration over \(\rg'\). Scaling operates on the vector \(\vec{\CR}(\rg)\) using closed-form scaling rules derived from \Cref{eq:scalingmain}.
\end{definition}

\begin{example}\label{example:scaling}
Let \(\rg = [a_1{:}a_2][b_1{:}b_2]\) be a rectangular region in two dimensions, and suppose the mapping function is linear: \(\MF(x, \Fu_t(x)) =  \langle x^{(0)}, x^{(1)} \rangle\cdot \Fu_t(x) \). If \(\Fu_t(x) = \langle 2c_1, c_2 \rangle\), corresponding to the monomials \(x^{(0)}\) and \(x^{(1)}\), then the cumulative reward and its vector representation are:
\begin{align*}
   \CR(\rg) &= c_1 (a_2^2 - a_1^2)(b_2 - b_1) + \frac{c_2}{2} (a_2 - a_1)(b_2^2 - b_1^2),\\
   \vec{\CR}(\rg) &= \left\langle c_1 (a_2^2 - a_1^2)(b_2 - b_1),\; \frac{c_2}{2} (a_2 - a_1)(b_2^2 - b_1^2) \right\rangle.
\end{align*}

Unlike real-valued rewards, \(\vec{\CR}(\rg)\) enables direct scaling to subregions \(\rg' = [a'_1{:}a'_2][b'_1{:}b'_2]\) without re-integration. Since each entry corresponds to a known monomial, we can scale them independently using closed-form expressions determined by the monomial degrees and the geometry of \(\rg'\), without access to \(\Fu_t\). Specifically:
\begin{align*}
    \CR(\rg') =\; & \vec{r}^{(0)}(\rg) \cdot \frac{(a'^2_2 - a'^2_1)(b'_2 - b'_1)}{(a^2_2 - a^2_1)(b_2 - b_1)} 
    + \vec{r}^{(1)}(\rg) \cdot \frac{(a'_2 - a'_1)(b'^2_2 - b'^2_1)}{(a_2 - a_1)(b^2_2 - b^2_1)},
\end{align*}
where the superscript \(i\) in \(\vec{r}^{(i)}(\rg)\) refers to the entries of the vector \(\vec{\CR}(\rg)\), corresponding to different monomial components. This expression shows that \(\CR(\rg')\) can be computed solely from the endpoints of the subregion \(\rg'\) and the precomputed vector \(\vec{\CR}(\rg)\), without requiring direct access to the parameter function used to compute \(\vec{\CR}(\rg)\).
\end{example}
\section{ Tree Height under Smoothed Adversaries: Proof of Theorem~\ref{thm:balancedness}}
\label{sec:proofbalanced}

\subsection{Balancedness of the Data Structure is Guaranteed Under Smoothness}\label{sec:detailsbalanced}

In this section, we analyze the height of the data structure under adaptive and oblivious adversaries. Since the adversary acts independently across coordinates under the smoothing assumption, it suffices to analyze a fixed coordinate $i$ (see~\cref{sec:reductiontoone}).

\begin{theorem} \label{thm:balancednessmain}
Fix a coordinate $i \in [d]$ and let $\mathcal{A}$ be a $\sigma$-smooth adversary. Let $x_1, \dots, x_{Tk} \in [0,1]$ be points drawn sequentially from $\sigma$-smooth distributions $\mathcal{D}^{(i)}_1, \dots, \mathcal{D}^{(i)}_{Tk}$ supported on $[0,1]$ and inserted into the interval tree $\Tree^{(i)}$ as partition endpoints along coordinate $i$. Then,
\[
\mathbb{E}[\EH^{(i)}] =
\begin{cases}
O(\sqrt{\sigma Tk}), & \text{if $\mathcal{A}$ is adaptive},\\[3pt]
O(\sqrt{\sigma Tk}), & \text{if $\mathcal{A}$ is oblivious},\\[3pt]
O(\log(Tk)), & \text{if $\mathcal{A}$ is oblivious and random-order}.
\end{cases}
\]
Where the upper bound for all adversaries are proven to be asymptotically tight.
\end{theorem}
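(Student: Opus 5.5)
The plan is to reduce the tree height to a longest-monotone-subsequence quantity and then bound that quantity by a first-moment count over index sets, using only the $\sigma$-smoothness of the conditional densities. Write $n = Tk$. Since $\Tree^{(i)}$ is never rebalanced, its shape is that of the binary search tree obtained by inserting the endpoints $x_1,\dots,x_n$ in arrival order; up to an additive constant for the leaves, the height is the maximum node depth. Because the $x_j$ have bounded densities, they are almost surely distinct.

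\textbf{Step 1 (deterministic reduction).} For a node $x_j$, the point $x_i$ with $i<j$ is an ancestor of $x_j$ iff no point inserted before $x_i$ lies strictly between $x_i$ and $x_j$. Consequently, listed in insertion order, the ancestors of $x_j$ that are smaller than $x_j$ strictly increase: each successive left-ancestor is closer to $x_j$. Symmetrically, the ancestors larger than $x_j$ strictly decrease. Hence
\[
\EH^{(i)} \le \mathrm{LIS}(x_1,\dots,x_n) + \mathrm{LDS}(x_1,\dots,x_n) + O(1),
\]
and by the symmetry $x\mapsto 1-x$, which preserves $\sigma$-smoothness, it suffices to bound $\mathbb{E}[\mathrm{LIS}]$.

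\textbf{Step 2 (upper bound under adaptivity).} Fix indices $i_1<\dots<i_m$. Even for an adaptive adversary, the joint density of $(x_{i_1},\dots,x_{i_m})$ is at most $\sigma^m$. To see this, integrate out the other coordinates in time order and note that each conditional density is bounded by $\sigma$. Therefore
\[
\Pr[x_{i_1}<\dots<x_{i_m}] \le \sigma^m\,\mathrm{vol}\{0\le y_1<\dots<y_m\le1\} = \sigma^m/m!.
\]
A union bound over index sets gives
\[
\Pr[\mathrm{LIS}\ge m] \le \binom{n}{m}\frac{\sigma^m}{m!} \le \Big(\frac{e^2\sigma n}{m^2}\Big)^m .
\]
Taking $m = c\sqrt{\sigma n}$ with $c>e$ makes this probability geometrically small in $m$. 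Summing the tail then yields $\mathbb{E}[\mathrm{LIS}] = O(\sqrt{\sigma n})$. The oblivious case is a special case of the adaptive one.

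\textbf{Step 3 (random order).} If the $n$ values are fixed by the oblivious choice of distributions and then presented in a uniformly random order, the relative order of the a.s.\ distinct values is a uniform random permutation. The tree is therefore a random binary search tree, and Pittel's theorem \citep{pittel1984growing} gives $\mathbb{E}[\EH^{(i)}] = O(\log n)$. The matching lower bound $\Omega(\log n)$ is immediate, since every binary tree on $n$ nodes has height at least $\log_2 n$.

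\textbf{Step 4 (tightness of $\sqrt{\sigma n}$; main obstacle).} The difficulty is that a large LIS does not by itself produce a deep path, because ancestors must be nested. For example, disjoint increasing blocks only produce a chain of random BSTs of height about $\sigma\log(n/\sigma)$. My first attempt is an adaptive construction that always targets the currently deepest leaf interval $I$. While $|I|\ge 1/\sigma$, the adversary plays the uniform distribution on $I$, which is $\sigma$-smooth, so each insertion deepens the path. Once $|I|<1/\sigma$, it plays a width-$1/\sigma$ uniform window containing $I$, so the point lands in $I$ with probability $\sigma|I|$. The remaining task is to show that with $n$ draws the path gains $\Omega(\sqrt{\sigma n})$ levels. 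The intended mechanism is to insert near an endpoint of $I$ so that $I$ shrinks only slightly, for instance by biasing the window to make the leaf interval lose a $\Theta(1/\ell)$ fraction after $\ell$ successes. Then the waiting times sum like $\sum_\ell \ell/\sigma$, which is about $\ell^2/\sigma$, and this gives depth $\ell \approx \sqrt{\sigma n}$.

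To make the bound hold for an oblivious adversary as well, I would derandomize this with fixed nested windows sliding monotonically, so that the arrival order alone creates the nested monotone chain, mirroring an increasing-subsequence lower bound of order $\sqrt{\sigma n}$. Verifying the shrinkage rate in this step is where I expect the real work to lie.
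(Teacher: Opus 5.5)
Your Steps 1--3 are correct and take a genuinely different route from the paper. The paper reduces, informally, to an adversary that extends the rightmost root-to-leaf path. It identifies that path's length with the number of records $1+\sum_t\mathbf{1}\{x_t>M_{t-1}\}$ and bounds it by a length budget: $\sum_t\mathbb{E}[(x_t-M_{t-1})_+]\le 1$, each term equals $\sigma p_t^2/2$ while the record probability is $\sigma p_t$, so Cauchy--Schwarz gives $1+\sqrt{2\sigma Tk}$. You instead bound every root-to-leaf path at once by $\mathrm{LIS}+\mathrm{LDS}$; your ancestor characterization is right. You then control the LIS through the joint-density bound $\sigma^m$ and a first-moment count. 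Your route buys three things. First, a bound on the true height rather than on one canonical path; the paper's claim that concentrating on one path is without loss of generality is only asserted. Second, an exponential tail. Third, an actual bound on the longest increasing subsequence, which the paper advertises but whose proof only counts left-to-right maxima (for i.i.d.\ uniform points these number about $\ln n$, versus an LIS of about $2\sqrt n$). One small fix in Step 2: bound the selected coordinates' conditional densities by $\sigma$ and integrate out the others in \emph{reverse} time order, so that each integrates to $1$. Your Step 3 also supplies a point the paper skips: uniformly random arrival makes the relative order a uniform permutation even when the distributions differ.

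The gap is the tightness claim for adaptive and oblivious adversaries. The statement asserts it, and your Step 4 leaves it open; moreover, the construction as written fails. If the width-$1/\sigma$ window contains all of the leaf interval $I$, then conditioned on hitting $I$ the new point is uniform on $I$. The longer surviving piece then has expected length $\tfrac34|I|$, the hitting probability $\sigma|I|$ decays geometrically, and you gain only $O(\log n)$ further levels. Biasing toward an endpoint would force the window to overlap only an end portion of $I$, contradicting your construction.

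The missing idea is that nesting comes for free on the rightmost path. That path consists exactly of the left-to-right maxima, so $\EH(\Tree^{(i)})$ is at least the number of records, and you only need an adversary that produces many records. The paper uses the oblivious sliding window $\mathcal{D}_t=\Unif([a_t,a_t+1/\sigma])$ with $a_t=(t-1)/(\sigma n)$. Then $\Pr(x_t>M_{t-1})=\int_0^1\prod_{t'<t}\min\{1,\,u+(t-t')/n\}\,du$. Restrict to $u\in[1-1/\sqrt n,1]$: factors with $t-t'\ge\sqrt n$ equal $1$, and the at most $\sqrt n$ others are at least $1-1/\sqrt n$. This gives $\Omega(1/\sqrt n)$ for $t\ge 2\sqrt n$, hence $\Omega(\sqrt n)$ records. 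That example alone only yields $\Theta(\sqrt{Tk})$. To get $\Omega(\sqrt{\sigma n})$ obliviously, run independent copies on $\Theta(\sigma)$ disjoint blocks of length $2/\sigma$, visited left to right with $\Theta(n/\sigma)$ points each, so that records add across blocks. Adaptively, keep the window overlapping $(M_{t-1},M_{t-1}+p]$ with $p\asymp(\sigma n)^{-1/2}$. Each round is then a record with probability $\sigma p$ at expected length cost $\sigma p^2/2$, giving $\Omega(\sqrt{\sigma n})$ records directly.
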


 At a high level, we reduce both adaptive and oblivious adversaries to an $r^*$-averse adversary (see~\cref{def:pathaverseadv}) that seeks to maximize the height of $\Tree^{(i)}$ along the canonical root-to-leaf path $r^*$. We then characterize a monotone optimal strategy and analyze the three settings separately. For the adaptive case, we derive an optimal dynamic program and prove a tight $\Theta(\sqrt{\sigma Tk})$ bound. We then show that oblivious adversaries cannot, in general, improve this worst-case behavior to a logarithmic bound by analyzing the simple though tricky example
\(
\mathcal{D}_t=\Unif\!\left(\left[\frac{t-1}{\sigma Tk},\,\frac{1}{\sigma}+\frac{t-1}{\sigma Tk}\right]\right),
\)
in which all $Tk$ monotonically shifted uniform distributions have a nonempty common intersection, yet the expected height is still $\Theta(\sqrt{Tk})$. Finally, for the random-order oblivious case, using the result of~\cite{pittel1984growing} we show that the additional symmetry yields the sharper $O(\log(Tk))$ bound.

\paragraph{Longest Increasing Subsequence under Smoothed Adversary.} Our tree-height analysis is closely related to a previously unstudied problem of independent interest. We introduce and solve the expected length of the \emph{longest increasing subsequence}---a well-studied problem in algorithm design---under the smoothed adversarial input model. The problem is as follows: Suppose an adversary (oblivious or adaptive) is generating an online sequence of real numbers in $[0,1]$, with the goal of maximizing the expected length of the longest increasing subsequence. At every round, the adversary chooses a $\sigma$-smooth distribution, where a random draw from the distribution forms the next input in the sequence. Our analysis establishes a surprising asymptotically tight upper bound of \(O(\sqrt{\sigma T})\) for this problem.

\subsection{Reducing the Problem to $1$-dimension} \label{sec:reductiontoone}

We reduce the analysis to a single dimension. To prove Theorem~\ref{thm:balancedness}, it suffices to study a $\sigma$-smooth adversary that maximizes the expected height of $\Tree$ along one coordinate.

By construction, the data structure is layered coordinate-wise, and the adversary acts independently across coordinates. Thus, we focus on the first coordinate without loss of generality, and the same analysis applies to any level-$i$ interval tree $\Tree^{(i)}$. The endpoints along this coordinate are inserted into a primary interval tree $\Tree^{(0)}$, which stores at most $O(kt)$ endpoints up to time $t$, corresponding to projections of the regions induced by the $k$ hyperplanes in $\Hp_t$ at each round. Endpoints from the remaining coordinates are stored in secondary trees attached to nodes of $\Tree^{(0)}$, decoupling the first coordinate from the others.

Moreover, under this setup in~\Cref{sec:model}, the distributions $\mathcal{D}^{(i)}_j$ are independent across coordinates. Hence, the randomness governing $\Tree^{(0)}$ is independent of that of the secondary trees. It follows that analyzing $\Tree^{(0)}$ in isolation suffices: once $\Tree^{(0)}$ is balanced, the same holds for every coordinate layer, and thus for the entire structure $\Tree$. Consequently, a height bound for the first coordinate yields Theorem~\ref{thm:balancedness}.

We next analyze the maximum expected height that a $\sigma$-smooth adversary can induce in the interval tree $\Tree^{(0)}$.

We first argue that it suffices to consider a one-dimensional worst-case reduction. Consider an adversary that selects hyperplanes perpendicular to the first coordinate, i.e., vertical hyperplanes that only partition the $x$-axis while spanning the remaining coordinates. In this case, all induced regions are intervals along the first coordinate (together with the box boundary $[0,1]^d$), and all resulting endpoints are inserted exclusively into $\Tree^{(0)}$. This maximizes the adversary's ability to increase the height along this coordinate. Therefore, an upper bound for the case where all $Tk$ endpoints are inserted into $\Tree^{(0)}$ yields a valid upper bound for the general problem.

We now formalize the induced insertion process. In round $\tau \in \{1,\dots,T\}$, the adversary selects a batch of endpoints
$\bp^{(0)}_{1,\tau}, \dots, \bp^{(0)}_{k^{(0)}_\tau,\tau}$ along the first coordinate. Over $t$ rounds, these generate a sequence of at most $kt$ insertions into $\Tree^{(0)}$. For notational simplicity, we relabel all endpoints sequentially as $x_1, \dots, x_{Tk}$, and use this notation throughout the proof.

Finally, we flatten the batched process into $Tk$ sequential insertions. That is, instead of selecting up to $k$ endpoints per round, the adversary chooses one endpoint at a time over $Tk$ rounds. This only strengthens the adversary, as it can adapt each choice based on all previous insertions. Therefore, an upper bound under this sequential process directly implies the same bound for the original batched setting.

We thus consider the following equivalent formulation:
\begin{quote}
\textbf{Over $Tk$ rounds, the adversary selects distributions $\mathcal{D}_1, \dots, \mathcal{D}_{Tk}$ supported on $[0,1]$, each with density bounded by $\sigma$. In round $t \in \{1,\dots,Tk\}$, a point $x_t \sim \mathcal{D}_t$ is sampled and inserted into $\Tree^{(0)}$. The goal is to bound the worst-case expected height of $\Tree^{(0)}$ over all adaptive (or oblivious) choices of $\{\mathcal{D}_t\}_{t=1}^{Tk}$.}
\end{quote}

\subsection{Focusing on Root-to-Leaf Path Averse}\label{sec:canpath}

We now introduce a reduction that simplifies the analysis of the expected height of $\Tree^{(0)}$, denoted by $\mathbb{E}[\EH(\Tree^{(0)})]$. Recall that the height of a binary tree (including $\IT$s) is the length of its longest root-to-leaf path. Since the insertion process induces a random tree, bounding its height can be reduced to understanding how an adversary can keep \emph{extending} a single root-to-leaf path. Indeed, for the purpose of maximizing height, an adversary that concentrates its choices on lengthening one path is at least as powerful as one that disperses its effort across multiple paths. Accordingly, it suffices to analyze the strongest strategy for lengthening a fixed path.

By symmetry among root-to-leaf paths, we may fix a convenient representative without loss of generality. We focus on the path that starts at $\ROOT(\Tree^{(0)})$ and, at each internal node, proceeds to the right child; we refer to this as the \emph{canonical path}. In \cref{def:pathaverseadv}, we formalize an adversary that steers insertions to extend this path, and we later characterize the optimal policy for doing so. This reduction is the key step toward deriving the desired upper bound on the maximum expected height of $\Tree^{(0)}$, and hence on the balancedness of the entire structure.

\begin{definition}[Canonical Path $r^*$]\label{def:canpath}
Let $r^*$ denote the path in $\Tree^{(0)}$ that starts at the root and repeatedly takes the right child. Formally,
$$r^* = (v_0=\ROOT(\Tree^{(0)}),\, v_1,\, v_2,\, \ldots),$$
where $v_i = \RC(v_{i-1})$ for all $i \ge 1$.
\end{definition}

\begin{figure}[!ht]
    \centering
  \begin{tikzpicture}[scale=0.5, every node/.style={circle, draw, minimum size=0.5cm}, >=Stealth]

\node (A) at (0,0) {$v_0$};
\node (B) at (2,-2) {$v_1$};
\node (C) at (4,-4) {$v_2$};
\node (D) at (6,-6) {};

\draw[->] (A) -- (B);
\draw[->] (B) -- (C);
\draw[dotted, ->] (C) -- (D);

\coordinate (TA) at ($(A)+(-2.5,-2.5)$);
\draw (A) -- (TA);
\draw (TA) -- ++(-1,-1.5) -- ++(2,0) -- cycle;

\coordinate (TB) at ($(B)+(-2.5,-2.5)$);
\draw (B) -- (TB);
\draw (TB) -- ++(-1,-1.5) -- ++(2,0) -- cycle;

\coordinate (TC) at ($(C)+(-2.5,-2.5)$);
\draw (C) -- (TC);
\draw (TC) -- ++(-1,-1.5) -- ++(2,0) -- cycle;

\end{tikzpicture}

    \caption{The canonical path $r^*$.}
    \label{fig:path*}
\end{figure}

Figure~\ref{fig:path*} illustrates the canonical path $r^*$. We focus on adversaries that seek to maximize the growth of $\Tree^{(0)}$ by steering insertions to extend $r^*$ over the $Tk$ (flattened) insertion rounds. We refer to such adversaries as \emph{$r^*$-averse}.

\begin{definition}[$r^*$-averse Adversary and Optimal Policy $\pi_{r^*}$] \label{def:pathaverseadv}
An \emph{$r^*$-averse adversary} is an adversary that, over $Tk$ rounds, selects distributions $\mathcal{D}_1,\dots,\mathcal{D}_{Tk}$ (supported on $[0,1]$ and satisfying the $\sigma$-smoothness constraint) with the objective of extending the canonical path $r^*$, and thereby maximizing the expected height of $\Tree^{(0)}$. We denote by $\pi_{r^*}$ an optimal policy for this objective.
\end{definition}

\subsection{The Monotonic Structure of the Optimal Strategy $\pi_{r^*}$ for the $r^*$-Averse Adversary}\label{sec:monotopt}

We now characterize the structure of the optimal policy $\pi_{r^*}$ for the $r^*$-averse adversary. We show that, in both the adaptive and oblivious settings, the adversary may be restricted without loss of generality to a simple monotone family of strategies. First, among all $\sigma$-smooth distributions on $[0,1]$, it is optimal to choose a uniform distribution over an interval of length $1/\sigma$. Second, the support of this interval may be assumed to move weakly to the right over time. Together, these two properties imply that the optimal strategy is fully described by a sequence of nonnegative shifts.

We formalize these two structural properties below.

\begin{lemma}[Reduction to uniform intervals] \label{lem:uniform-optimal}
Fix any round $t$ and any admissible history up to round $t-1$. Among all distributions on $[0,1]$ whose density is bounded above by $\sigma$, there exists an optimal choice for the $r^*$-averse adversary at round $t$ that is uniform over an interval of length $1/\sigma$.
\end{lemma}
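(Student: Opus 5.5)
The plan is to show that the $r^*$-averse adversary's objective at round $t$ is linear in the chosen distribution, and then to apply a bathtub-type argument. Fix the history $x_1,\dots,x_{t-1}$ and hence the current tree $\Tree^{(0)}$ together with its canonical path $r^*$. Let $V(x)$ denote the optimal expected continuation value, meaning the expected final length of $r^*$ under the optimal policy from round $t+1$ on, given that $x_t=x$.

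For any distribution $\mathcal{D}_t$ with density $p\le\sigma$, the adversary's value is $\int_0^1 p(x)V(x)\,dx$. This is linear in $p$, and the feasible set $\{p: 0\le p\le \sigma,\ \int p=1\}$ is convex and weak-$*$ compact. By the bathtub principle, a maximizer is $p^*=\sigma\mathbbm{1}_S$, where $S$ is a superlevel set of $V$ of measure $1/\sigma$. It therefore suffices to show that some superlevel set of $V$ of measure $1/\sigma$ can be taken to be an interval.

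The structural step is to observe that $V$ depends on $x$ only through the position of $x$ relative to the current endpoints. Inserting $x$ splits exactly one atomic interval (leaf) of the current tree. Two values of $x$ in the same atomic interval produce isomorphic trees and isomorphic futures, up to an order-preserving reparametrization of that leaf, so the attainable continuation is essentially a function of which leaf is split and where inside it. The key claim is that $V$ is maximized on a contiguous block, namely the rightmost region adjacent to the deepest node of $r^*$. Splitting the rightmost leaf is what extends $r^*$, and within that leaf, inserting further left leaves more room to the right for subsequent extensions. This makes $V$ monotone nondecreasing toward the rightmost atomic interval once restricted to relevant positions. Since the rightmost atomic interval $[x_{\max},1]$ is an interval, a superlevel set of $V$ can be chosen as an interval $[a,a+1/\sigma]$ inside it, or straddling its left boundary when $1-x_{\max}<1/\sigma$. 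In the straddling case, mass that falls off the canonical leaf yields a value no larger than mass placed on that leaf, which is again consistent with a contiguous choice.

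The main obstacle is the monotonicity or unimodality of $V$: the value of the remaining budget depends on the interaction between the position of $x$ and the future smoothness constraint. I would prove it by a coupling. For $x<x'$ both in the rightmost leaf, shift any future policy played after $x'$ into one played after $x$ via the affine map of $[x',1]$ onto $[x,1]$. This map expands lengths, so densities shrink and smoothness is preserved. It yields a policy achieving at least the same path growth, so $V(x)\ge V(x')$ on that leaf. This shows $V$ is nonincreasing in $x$ on the canonical leaf and no larger off it, so the optimal superlevel set is an interval. If that claim proves too strong, I would fall back on the weaker statement that some maximizer of the linear functional is an interval. This follows by a rearrangement: move mass $p$ off the canonical leaf onto $[x_{\max},1]$, pushed as far left as the cap $\sigma$ allows. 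That gives a uniform density on an interval of length $1/\sigma$.
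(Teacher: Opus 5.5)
Your bathtub framing is sound: the round-$t$ objective is $\int_0^1 p(x)V(x)\,dx$, which is linear in $p$, so some $\sigma\mathbf{1}_S$ with $S$ a superlevel set of $V$ of measure $1/\sigma$ is optimal. It is also more careful than the paper's own argument, which only asserts that saturating the density on a length-$1/\sigma$ interval is the most concentrated, and hence optimal, choice.

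The gap is in the structural step. You claim that $V$ is no larger off the canonical leaf than on it. That claim is false, and your straddling-case argument and fallback rearrangement fail with it. Write $M=x_{\max}$ and let $W(m)$ be the optimal expected number of future extensions of $r^*$ when the current maximum is $m$. Then $V(x)=1+W(x)$ for $x>M$ and $V(x)=W(M)$ for $x<M$. With $n$ rounds remaining, each round extends $r^*$ with probability at most $\sigma(1-x)$ once the maximum is at least $x$, so $W(x)\le n\sigma(1-x)$. Meanwhile $W(M)\to\infty$ as $n\to\infty$: playing $\Unif([1-1/\sigma,1])$ every round already gives a number of records growing logarithmically in $n$. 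Choose $n$ with $W(M)>2$, then $x$ with $n\sigma(1-x)<1$; this gives $V(x)<2<V(x')$ for every $x'<M$. A draw far to the right buys one extension but burns almost all of the remaining room, whereas a draw below $M$ leaves the state untouched. Concretely, for $\sigma>1$, $M\in(1-1/\sigma,1)$ and a long horizon, shifting $[1-1/\sigma,1]$ slightly to the left strictly increases the objective. So covering the whole leaf $[x_{\max},1]$ is not optimal, and moving off-leaf mass onto $[x_{\max},1]$ can strictly lower the value.

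The missing observation makes the lemma true however on-leaf and off-leaf values compare. Inserting a point below $M$ splits a leaf that is not on $r^*$, so $r^*$ and its entire future evolution depend on the history only through the running maximum. Hence $V$ is constant, equal to $W(M)$, on $[0,M]$, and equals the nonincreasing function $1+W(x)$ on $(M,1]$. Every superlevel set of $V$ is then of the form $(M,y]$ or $[0,y]$. In the bathtub solution, the tie on the flat part can be broken by placing the extra mass on $[M-\delta,M]$, which yields an interval $[M-\delta,y]$ of length $1/\sigma$.

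For the monotonicity of $W$, use the trivial coupling rather than the affine map. Starting from maximum $x<x'$, run the same history-dependent policy on the same samples; every record relative to $x'$ is then a record relative to $x$. The affine map of $[x',1]$ onto $[x,1]$ does not give an admissible coupling. Mass the policy places below $x'$ must either be compressed, which raises the density above $\sigma$, or left in place. If left in place, it can overlap the image of $[x',1]$ inside $(x,x')$ and push the density above $\sigma$. It can also block images of later $x'$-records from being records.
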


\begin{proof}
Fix a round $t$ and condition on the history up to round $t-1$. At this point, the adversary chooses a distribution $\mathcal{D}_t$ on $[0,1]$ subject to the density constraint $f_t(x) \le \sigma$ for all $x \in [0,1]$. Since the objective is to maximize the probability of extending the canonical path $r^*$, the adversary benefits from concentrating as much mass as possible on the most favorable region of the domain. Under the pointwise density constraint, the maximum mass that can be assigned to any measurable set of length $\ell$ is at most $\sigma \ell$. Therefore, the most concentrated feasible choice is to saturate the density bound on an interval of minimal possible length, namely $1/\sigma$, and assign zero mass elsewhere. This yields a uniform distribution over an interval of length $1/\sigma$. Hence, among all $\sigma$-smooth distributions, there exists an optimal one of the form $\Unif([a,a+1/\sigma])$ for some $a \in [0,1-1/\sigma]$.
\end{proof}

\begin{lemma}[Monotone right-shifting structure] \label{lem:monotone-optimal}
Let $\mathcal{D}_t = \Unif([a_t,a_t+1/\sigma])$ denote the optimal interval choice at round $t$. Then there exists an optimal $r^*$-averse strategy such that $a_1 \le a_2 \le \cdots \le a_{Tk}$. Equivalently, the support of the chosen distribution moves weakly to the right over time.
\end{lemma}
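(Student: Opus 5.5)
We will prove \Cref{lem:monotone-optimal} with an exchange argument on the shifts, following the structure of the canonical path $r^*$. First we fix what extending $r^*$ means in $\Tree^{(0)}$. Because $r^*$ always goes to the right child, a newly inserted endpoint $x_t$ lengthens $r^*$ exactly when it lands to the right of every endpoint that currently lies on $r^*$. In other words, it must exceed the running record $m_{t-1}$, the largest point inserted so far along the path. A point with $x_t<m_{t-1}$ is routed into one of the left subtrees hanging off $r^*$ (\Cref{fig:path*}). Such a point never changes the state relevant to $r^*$. Hence the state of the $r^*$-averse adversary is summarized by $m_{t-1}$ and the number of remaining rounds. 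Its payoff is the expected number of future record-breaking insertions, which is the expected length of an increasing run of records.

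By \Cref{lem:uniform-optimal} we may assume $\mathcal{D}_t=\Unif([a_t,a_t+1/\sigma])$. We then write the value function
\[
V_s(m)=\max_{a}\Big(\sigma\!\int_{\max(a,m)}^{a+1/\sigma}\big(1+V_{s-1}(x)\big)\,dx+\sigma\,(m-a)_+\,V_{s-1}(m)\Big),
\]
where $s$ is the number of remaining rounds. The first key step is to show that $V_s$ is nonincreasing in $m$. A higher record leaves less room to the right, and this follows by induction on $s$ using a coupling: the strategy used from state $m'>m$ is also feasible from state $m$, and it can only produce more records there.

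The second key step is to show that the maximizing shift $a^*_s(m)$ can be chosen nondecreasing in $m$, and that it satisfies $a^*_s(m)\le m$ whenever $m\le 1-1/\sigma$. The comparison is between the two regions of the support. Mass placed below $m$ is wasted: it contributes only $V_{s-1}(m)$ and adds no record. Mass placed just above $a+1/\sigma$ is instead worth $1+V_{s-1}(x)$. Comparing these gives a first-order (threshold) condition on $a$. Monotonicity of $V_{s-1}$ then shows that this condition shifts right as $m$ increases.

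With that in hand, the conclusion follows because the record process is nondecreasing, $m_t\ge m_{t-1}$. Since the optimal shift is a monotone function of the state, the realized shifts satisfy $a_t=a^*(m_{t-1})\le a^*(m_t)=a_{t+1}$. For the oblivious case, where the shifts cannot depend on $m_t$, we would argue by a swap. Suppose some optimal sequence has $a_t>a_{t+1}$. Swapping the two intervals cannot decrease the expected number of records, because an earlier lower interval leaves more room above for the later higher one. Repeating such swaps, in the manner of bubble sort, yields a sorted optimal sequence.

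The main obstacle is the second step. Proving that the optimal shift is monotone in $m$ requires a single-crossing or supermodularity property of the objective in $(a,m)$. I would first try to verify that the derivative in $a$, namely
\[
\sigma\big(1+V_{s-1}(a+1/\sigma)\big)-\sigma\big(1+V_{s-1}(a)\big)\mathbbm{1}[a>m]-\sigma V_{s-1}(m)\mathbbm{1}[a<m],
\]
is increasing in $m$. That would follow from $V_{s-1}$ being nonincreasing. If needed, I would allow ties to be broken toward the larger shift.
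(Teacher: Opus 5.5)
\textbf{Different route, but with a gap.} The paper simply asserts that replacing any leftward move $a_t<a_{t-1}$ by a nonnegative shift cannot lower the objective. You instead set up the Bellman recursion and aim for monotone comparative statics. Two parts of your plan are sound. The first is the simulated-record coupling showing $V_s$ is nonincreasing in $m$. The second is the adjacent-swap argument in the oblivious case: conditioning on $M_{t-1}$, the order of $x_t,x_{t+1}$ matters only on the event that both exceed $M_{t-1}$. On that event the conditional draw from the higher window is stochastically larger, so it beats the other with probability at least $1/2$. Hence putting the lower window first cannot hurt.

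\textbf{The gap: horizon dependence.} The adaptive conclusion fails as written because the optimal shift depends on the remaining horizon as well as on the record. With $n=Tk$, round $t$ uses $a_t=a^*_{n-t+1}(m_{t-1})$ and round $t+1$ uses $a_{t+1}=a^*_{n-t}(m_t)$. Your chain $a^*(m_{t-1})\le a^*(m_t)$ drops the subscript. Monotonicity in $m$ only gives $a^*_{s-1}(m_t)\ge a^*_{s-1}(m_{t-1})$. You also need $a^*_{s-1}(m)\ge a^*_{s}(m)$, i.e.\ that the window moves right as the horizon shrinks. Running your own $a$-derivative comparison between the objectives with continuation values $V_{s-1}$ and $V_{s-2}$, this holds if $V_{r}(y)-V_{r-1}(y)$ (the value of one extra round) is nonincreasing in $y$ for every $r$. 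That is a decreasing-differences property of $V$ in $(s,m)$. It does not follow from each $V_r$ being nonincreasing, since a difference of nonincreasing functions need not be monotone, so it needs its own inductive proof. It does appear true in small cases: for $\sigma=2$ one gets $V_1(m)=\min\{1,2(1-m)\}$ and $V_2(m)=2(1-m^2)$, and $V_2-V_1$ is nonincreasing.

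\textbf{Smaller repairs.}
\begin{enumerate}
\item Your recursion is incorrect for windows lying entirely below $m$, and increasing differences in $(a,m)$ genuinely fails at $a=m-1/\sigma$. Restrict to the feasible set $a\ge\max(0,m-1/\sigma)$ before invoking Topkis; this is without loss of generality and ascending in $m$.
\item Breaking ties toward the larger shift is incompatible with your claim $a^*_s(m)\le m$, since at $s=1$ every $a\ge m$ is optimal. That claim is not needed, but whichever extremal maximizer you select must be used consistently for both the $m$-comparison and the $s$-comparison.
\end{enumerate}
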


\begin{proof}
Consider any optimal strategy expressed as a sequence of uniform interval distributions. Suppose that at some round $t$, the support moves to the left, that is, $a_t < a_{t-1}$. Since the adversary aims to extend the canonical path $r^*$, moving mass to the left cannot improve its ability to generate larger samples than those already realized along the path. In contrast, shifting the support weakly to the right can only improve, or leave unchanged, the chance of producing such samples. Therefore, replacing any leftward move by a nonnegative shift cannot decrease the adversary's objective. Repeating this adjustment whenever a leftward move occurs yields another optimal strategy whose supports are monotone nondecreasing over time. Thus, there exists an optimal strategy satisfying $a_1 \le a_2 \le \cdots \le a_{Tk}$.
\end{proof}

The two lemmas imply that the optimal $r^*$-averse strategy has a simple canonical form. It starts from the leftmost feasible interval and then shifts it weakly to the right over time. More precisely, the optimal strategy may be written as
\begin{align*}
\mathcal{D}_1 &:= \Unif\!\left(\left[0,\frac{1}{\sigma}\right]\right), \\
\mathcal{D}_t &= \mathcal{D}_{t-1} + \epsilon_t, \qquad t \ge 2,
\end{align*}
for some sequence of shifts $\epsilon_t \ge 0$.

The distinction between the adaptive and oblivious settings lies only in how these shifts are chosen. In the adaptive case, $\epsilon_t$ may depend on the observed history up to round $t-1$, in particular on the running maximum $\max\{x_1,\dots,x_{t-1}\}$ and the remaining horizon. In the oblivious case, the full sequence $(\epsilon_t)_{t=2}^{Tk}$ is fixed a priori.

An analogous structural property holds for any choice of canonical path. For a general path, the optimal policy adapts to its branching pattern, and the resulting evolution may not admit the rightward monotonicity description. Nevertheless, by symmetry among root-to-leaf paths, focusing on $r^*$---which always follows the right child---is without loss of generality and naturally yields a monotone rightward-shifting strategy.

We now analyze the height of the data structure under adaptive and oblivious $r^*$-averse adversaries, assuming the monotone optimal strategy $\pi_{r^*}$.

To make the dependence on the current maximum and the remaining horizon explicit, we formulate a \emph{Bellman optimality equation} for the choice of \(\mathcal{D}_t\) at each round. For analytical convenience, we first discretize the action space: instead of distributions supported on the continuous interval \([0,1]\), we consider distributions supported on a grid of \(N\) points. This converts the Bellman recursion into a standard finite dynamic programming problem. We analyze the discrete model and then lift the resulting insights and bounds to the original continuous setting.

\subsection{Adaptive $r^*$-averse Adversary}\label{sec:adaptiveadvh}
We now analyze the expected height of $\Tree^{(0)}$ under an $r^*$-averse adaptive adversary following the monotone strategy $\pi_{r^*}$.

Let $M_t := \max\{x_1,\dots,x_t\}$ denote the running maximum up to round $t$. Under the canonical form of the strategy, each distribution $\mathcal{D}_t$ is uniform over an interval of length $1/\sigma$, i.e.,
\[
\mathcal{D}_t = \Unif([a_t,\, a_t + \tfrac{1}{\sigma}]),
\]
where $0 = a_1 \le a_2 \le \cdots \le a_{Tk} \le 1 - \tfrac{1}{\sigma}$, and the shifts $a_t$ are chosen adaptively based on the history through $M_{t-1}$.

At round $t$, the tree height increases if and only if $x_t > M_{t-1}$. Since $x_t \sim \Unif([a_t, a_t + \tfrac{1}{\sigma}])$, the probability of this event depends on how much of the support lies to the right of $M_{t-1}$. Define
\[
p_t \doteq \bigl(a_t + \tfrac{1}{\sigma} - M_{t-1}\bigr)_+,
\]
which is the length of the portion of the support of $\mathcal{D}_t$ that lies above $M_{t-1}$. Then
\[
\Pr(x_t > M_{t-1} \mid M_{t-1}) = \sigma\, p_t.
\]

Under the $r^*$-averse reduction, the height of $\Tree^{(0)}$ is exactly the length of the canonical path $r^*$, so it is convenient to denote this quantity by $L_{Tk}$. Since each extension of $r^*$ corresponds to a new record, we have
\[
L_{Tk} = 1 + \sum_{t=2}^{Tk} \mathbf{1}\{x_t > M_{t-1}\}.
\]
Taking expectations and applying the law of total expectation yields
\begin{align}
\mathbb{E}[\EH(\Tree^{(0)})]
= \mathbb{E}[L_{Tk}]
= 1 + \sum_{t=2}^{Tk} \mathbb{E}\!\left[\Pr(x_t > M_{t-1} \mid M_{t-1})\right]
= 1 + \sum_{t=2}^{Tk} \sigma\, \mathbb{E}[p_t]. \label{eq:adaptiveeq1}
\end{align}

After the adversary selects $\mathcal{D}_t$, a sample $x_t \sim \mathcal{D}_t$ is drawn. If $x_t > M_{t-1}$, then the running maximum increases to $M_t = x_t$, and the interval $(M_{t-1},x_t]$ becomes inactive for all future rounds. Thus, each increase of the running maximum permanently removes part of the active range.

Since these increments are disjoint and contained in $[0,1]$, their total length is at most $1$. Equivalently, with the convention $M_0=0$,
\[
\sum_{t=1}^{Tk} (x_t-M_{t-1})_+ = M_{Tk} \le 1.
\]
Taking expectations yields
\[
\mathbb{E}\!\left[\sum_{t=1}^{Tk} (x_t-M_{t-1})_+\right] \le 1.
\]

Moreover, conditional on $M_{t-1}$, the portion of $\mathcal{D}_t$ lying to the right of $M_{t-1}$ has length $p_t$ and density $\sigma$. Therefore,
\[
\mathbb{E}\!\left[(x_t-M_{t-1})_+ \mid M_{t-1}\right]
= \int_0^{p_t} \sigma x\,dx
= \frac{\sigma p_t^2}{2}.
\]
Summing over $t$ and applying the law of total expectation gives
\begin{align}
\mathbb{E}\!\left[\sum_{t=1}^{Tk} (x_t-M_{t-1})_+\right]
&= \mathbb{E}\!\left[\sum_{t=1}^{Tk} \mathbb{E}\!\left[(x_t-M_{t-1})_+ \mid M_{t-1}\right]\right], \nonumber \\
&= \mathbb{E}\!\left[\sum_{t=1}^{Tk} \int_0^{p_t} \sigma x\,dx\right], \nonumber \\
&= \frac{1}{2}\,\mathbb{E}\!\left[\sum_{t=1}^{Tk} \sigma p_t^2\right] \le 1. \label{eq:adaptiveeq2}
\end{align}
Hence,
\(
\mathbb{E}\!\left[\sum_{t=1}^{Tk} \sigma p_t^2\right] \le 2.
\)

We now combine the equation~\eqref{eq:adaptiveeq1} and inequality~\eqref{eq:adaptiveeq2}. We have
\begin{align*}
\sum_{t=1}^{Tk}\sigma p_t
&\le
\sqrt{\left(\sum_{t=1}^{Tk}1\right)\left(\sum_{t=1}^{Tk}\sigma^2 p_t^2\right)} \\
&=
\sqrt{Tk}\,\sqrt{\sigma\sum_{t=1}^{Tk}\sigma p_t^2},
\end{align*}
where the inequality follows from Cauchy--Schwarz. Taking expectations, we obtain
\begin{align*}
\sum_{t=1}^{Tk}\sigma\,\mathbb{E}[p_t]
&=
\mathbb{E}\!\left[\sum_{t=1}^{Tk}\sigma p_t\right] \\
&\le
\sqrt{Tk}\,\mathbb{E}\!\left[\sqrt{\sigma\sum_{t=1}^{Tk}\sigma p_t^2}\right] \\
&\overset{(a)}\le
\sqrt{Tk}\,\sqrt{\sigma\,\mathbb{E}\!\left[\sum_{t=1}^{Tk}\sigma p_t^2\right]} \\
&\le
\sqrt{2\sigma Tk},
\end{align*}
where the inequality (a) follows from Jensen's inequality. Substituting into~\eqref{eq:adaptiveeq1} yields
\[
\mathbb{E}[\EH(\Tree^{(0)})]
\le
1+\sqrt{2\sigma Tk}
=
O(\sqrt{\sigma Tk}).
\]

\subsection{Oblivious $r^*$-averse Adversary}\label{sec:oblivioush}

We now turn to the oblivious $r^*$-averse adversary. In this setting, the distributions $\mathcal{D}_1,\dots,\mathcal{D}_{Tk}$ are fixed in advance, and only then are the samples $x_1,\dots,x_{Tk}$ drawn. Accordingly, the shifts $\epsilon_t$ are no longer functions of the running maximum $M_{t-1}$, but are instead chosen a priori.

This distinction changes the structure of the analysis. We therefore begin with the simpler case in which all supports $\mathcal{D}_t=[a_t,a_t+\frac{1}{\sigma}]$ have a nonempty common intersection, and then study a concrete example within this class that already exhibits the sharp $\Theta(\sqrt{Tk})$ behavior.

\paragraph{Intersecting supports.}
We say that the supports intersect if
\[
\bigcap_{t=1}^{Tk} \left[a_t,\,a_t+\frac{1}{\sigma}\right] \neq \emptyset.
\]
Assuming the strategy is monotone, the left endpoints satisfy
\[
0 = a_1 \le a_2 \le \cdots \le a_{Tk} \le 1-\frac{1}{\sigma}.
\]
Under this monotonicity, the intersection condition is equivalent to
\[
a_{Tk} \le \frac{1}{\sigma}.
\]

For such a sequence, the expected height of $\Tree^{(0)}$ admits an exact integral representation. Indeed,
\begin{align}
\mathbb{E}[\EH(\Tree^{(0)})]
&= \sum_{t=1}^{Tk} \Pr(x_t > M_{t-1}) \nonumber\\
&= \sum_{t=1}^{Tk} \int_{a_t}^{a_t+\frac{1}{\sigma}}
\sigma \,\Pr(M_{t-1} < x)\,dx \nonumber\\
&= \sum_{t=1}^{Tk} \int_{a_t}^{a_t+\frac{1}{\sigma}}
\sigma \,\prod_{t'=1}^{t-1}\Pr(x_{t'} < x)\,dx \nonumber\\
&= \sum_{t=1}^{Tk} \int_{0}^{1}
\prod_{t'=1}^{t-1}\Pr\!\left(x_{t'} < a_t+\frac{u}{\sigma}\right)\,du.
\label{eq:obliviouseq1}
\end{align}
Here we used the fact that, once the oblivious adversary fixes the intervals, the samples are independent.

The expression inside the product can be computed explicitly. For every $t'<t$, since $a_t\ge a_{t'}$, we have
\[
\Pr\!\left(x_{t'} < a_t+\frac{u}{\sigma}\right)
=
\min\!\left\{1,\,u+\sigma(a_t-a_{t'})\right\}.
\]
Substituting this into~\eqref{eq:obliviouseq1} yields
\begin{align}
\mathbb{E}[\EH(\Tree^{(0)})]
&= \sum_{t=1}^{Tk} \int_0^1 \prod_{t'=1}^{t-1}
\min\!\left\{1,\,u+\sigma(a_t-a_{t'})\right\}\,du.
\label{eq:oblivious-min-form}
\end{align}

To upper bound this expression, fix $t$ and $u$, and define the active set
\[
\mathcal{B}_t(u)
\doteq
\left\{t'<t:\ \sigma(a_t-a_{t'}) < 1-u\right\}.
\]
For indices $t'\notin \mathcal{B}_t(u)$, the quantity $u+\sigma(a_t-a_{t'})$ is at least $1$, so the corresponding factor in the product is exactly $1$. Therefore only the active indices matter, and
\[
\prod_{t'=1}^{t-1}
\min\!\left\{1,\,u+\sigma(a_t-a_{t'})\right\}
=
\prod_{t' \in \mathcal{B}_t(u)}
\left(u+\sigma(a_t-a_{t'})\right).
\]

We now bound this product exponentially. Since every active factor lies in $[0,1]$, the inequality
\[
y \le e^{-(1-y)}
\qquad\text{for all } y \in [0,1]
\]
implies
\[
\prod_{t' \in \mathcal{B}_t(u)}
\left(u+\sigma(a_t-a_{t'})\right)
\le
\exp\!\left(
-\sum_{t' \in \mathcal{B}_t(u)}
\left(1-u-\sigma(a_t-a_{t'})\right)
\right).
\]
Defining
\[
R_t(u)
\doteq
\sum_{t' \in \mathcal{B}_t(u)}
\left(1-u-\sigma(a_t-a_{t'})\right),
\]
we obtain
\begin{align}
\mathbb{E}[\EH(\Tree^{(0)})]
&\le
\sum_{t=1}^{Tk}\int_0^1 e^{-R_t(u)}\,du.
\label{eq:oblivious-final-bound}
\end{align}

We next instantiate this bound on a concrete monotone example:
\[
a_t=\frac{t-1}{\sigma Tk},
\qquad t=1,\dots,Tk.
\]
This sequence clearly satisfies
\[
a_{Tk}=\frac{Tk-1}{\sigma Tk}\le \frac{1}{\sigma},
\]
so all supports intersect. Moreover,
\[
\sigma(a_t-a_{t'})=\frac{t-t'}{Tk}.
\]
Hence, for fixed $t$ and $u$, the active set becomes
\[
\mathcal{B}_t(u)
=
\left\{t'<t:\frac{t-t'}{Tk}<1-u\right\},
\]
which is equivalent to
\[
t' > t-Tk(1-u).
\]
If we let
\[
N_t(u)\doteq |\mathcal{B}_t(u)|,
\]
then
\[
N_t(u)=\min\!\left\{t-1,\left\lfloor Tk(1-u)\right\rfloor\right\}.
\]
In this case, the active gaps are precisely
\[
\frac{1}{Tk},\frac{2}{Tk},\dots,\frac{N_t(u)}{Tk},
\]
so the exponent takes the explicit form
\begin{align}
R_t(u)
&=
\sum_{m=1}^{N_t(u)}\left(1-u-\frac{m}{Tk}\right) \nonumber\\
&=
N_t(u)(1-u)-\frac{N_t(u)(N_t(u)+1)}{2Tk}.
\label{eq:linear-Rt}
\end{align}
Substituting this into~\eqref{eq:oblivious-final-bound}, we get
\begin{align}
\mathbb{E}[\EH(\Tree^{(0)})]
&\le
\sum_{t=1}^{Tk}\int_0^1
\exp\!\left(
-N_t(u)(1-u)+\frac{N_t(u)(N_t(u)+1)}{2Tk}
\right)\,du.
\label{eq:linear-upper}
\end{align}

We now show that the right-hand side of~\eqref{eq:linear-upper} is \(O(\sqrt{Tk})\). The case \(t=1\) is trivial, since \(\Pr(x_1>M_0)=1\). So fix \(t\ge 2\). We split the integral into two regions according to whether \(N_t(u)=t-1\) or \(N_t(u)=\lfloor Tk(1-u)\rfloor\).

On the first region,
\[
0\le u\le 1-\frac{t-1}{Tk},
\]
we have \(N_t(u)=t-1\), and therefore
\[
R_t(u)
=
(t-1)(1-u)-\frac{t(t-1)}{2Tk}.
\]
Thus,
\begin{align}
\int_0^{1-\frac{t-1}{Tk}} e^{-R_t(u)}\,du
&=
\int_0^{1-\frac{t-1}{Tk}}
\exp\!\left(
-(t-1)(1-u)+\frac{t(t-1)}{2Tk}
\right)\,du.
\end{align}
Making the change of variables \(z=1-u\), this becomes
\begin{align}
e^{\frac{t(t-1)}{2Tk}}
\int_{\frac{t-1}{Tk}}^{1} e^{-(t-1)z}\,dz
&\le
e^{\frac{t(t-1)}{2Tk}}
\int_{\frac{t-1}{Tk}}^{\infty} e^{-(t-1)z}\,dz \nonumber\\
&=
\frac{1}{t-1}
\exp\!\left(
-\frac{(t-1)(t-2)}{2Tk}
\right).
\label{eq:linear-upper-part1}
\end{align}

On the second region,
\[
1-\frac{t-1}{Tk}\le u\le 1,
\]
we have
\[
N_t(u)=\lfloor Tk(1-u)\rfloor.
\]
Letting \(v=1-u\), we obtain \(0\le v\le \frac{t-1}{Tk}\) and
\[
N_t(u)=\lfloor Tkv\rfloor.
\]
Using~\eqref{eq:linear-Rt}, we can write
\[
R_t(u)
=
\lfloor Tkv\rfloor\,v
-
\frac{\lfloor Tkv\rfloor(\lfloor Tkv\rfloor+1)}{2Tk}.
\]
Since \(\lfloor Tkv\rfloor \ge Tkv-1\), it follows that
\[
R_t(u)\ge \frac{Tk}{2}v^2-v.
\]
Therefore,
\begin{align}
\int_{1-\frac{t-1}{Tk}}^1 e^{-R_t(u)}\,du
&\le
\int_0^1 \exp\!\left(-\frac{Tk}{2}v^2+v\right)\,dv \nonumber\\
&\le
e\int_0^1 e^{-\frac{Tk}{2}v^2}\,dv
=
O\!\left(\frac{1}{\sqrt{Tk}}\right).
\label{eq:linear-upper-part2}
\end{align}

Combining the two regions, we conclude that for every \(t\ge 2\),
\[
\int_0^1 e^{-R_t(u)}\,du
\le
\frac{1}{t-1}
\exp\!\left(
-\frac{(t-1)(t-2)}{2Tk}
\right)
+
O\!\left(\frac{1}{\sqrt{Tk}}\right).
\]
Summing this bound over \(t=2,\dots,Tk\), the first term contributes at most \(O(\log Tk)\), while the second contributes \(O(\sqrt{Tk})\). Together with the \(t=1\) term, this proves
\[
\mathbb{E}[\EH(\Tree^{(0)})]=O(\sqrt{Tk}).
\]

We next show that this order is attained by the same example. Returning to the exact formula~\eqref{eq:oblivious-min-form}, fix any \(t\ge 2\sqrt{Tk}\) and restrict the integral to the interval
\[
u\in\left[1-\frac{1}{\sqrt{Tk}},\,1\right].
\]
If \(t'<t-\sqrt{Tk}\), then
\[
u+\sigma(a_t-a_{t'})
=
u+\frac{t-t'}{Tk}
\ge
1-\frac{1}{\sqrt{Tk}}+\frac{\sqrt{Tk}}{Tk}
=1,
\]
so the corresponding factor is exactly \(1\). For the remaining at most \(\sqrt{Tk}\) indices, each factor is at least
\[
u\ge 1-\frac{1}{\sqrt{Tk}}.
\]
Hence,
\begin{align}
\Pr(x_t>M_{t-1})
&=
\int_0^1
\prod_{t'=1}^{t-1}
\min\!\left\{1,u+\frac{t-t'}{Tk}\right\}\,du \nonumber\\
&\ge
\int_{1-1/\sqrt{Tk}}^1
\left(1-\frac{1}{\sqrt{Tk}}\right)^{\sqrt{Tk}}\,du \nonumber\\
&=
\frac{1}{\sqrt{Tk}}
\left(1-\frac{1}{\sqrt{Tk}}\right)^{\sqrt{Tk}}
=
\Omega\!\left(\frac{1}{\sqrt{Tk}}\right).
\end{align}
Summing this lower bound over all \(t=2\sqrt{Tk},\dots,Tk\), we obtain
\[
\mathbb{E}[\EH(\Tree^{(0)})]=\Omega(\sqrt{Tk}).
\]

We have therefore shown that for the example
\[
a_t=\frac{t-1}{\sigma Tk},
\]
the expected height satisfies
\[
\mathbb{E}[\EH(\Tree^{(0)})]=\Theta(\sqrt{Tk}).
\]
Since oblivious adversaries are a subclass of adaptive adversaries, the adaptive upper bound \(O(\sqrt{Tk})\) also applies to the oblivious optimum. Combined with the above construction, this shows that the optimal oblivious \(r^*\)-averse adversary also achieves \(\Theta(\sqrt{Tk})\).

\subsection{Oblivious Random-Order Adversary}\label{sec:randord}
This follows from the fact that hyperplanes \(\hp_t\) arrive in uniformly random order within each direction \(\ax_i \in \Ax\), and layers evolve independently. The bound is implied by the classical \emph{Binary Search Tree} result:
\begin{theorem}[\cite{pittel1984growing}]
Let \(x_1, \dots, x_m\) be i.i.d.\ samples inserted into an initially empty Binary Search Tree. The expected height is \(O(\log m)\), with variance \(O(1)\).
\end{theorem}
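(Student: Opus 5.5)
The plan is to reduce to a uniformly random permutation and then bound the height by a first-moment (generating-function) argument on node depths. The $O(1)$ variance claim I would not re-derive; I would cite it.

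\textbf{Step 1: reduction to a random permutation.} Since each $x_j$ is drawn from a $\sigma$-smooth, hence atomless, distribution, ties occur with probability zero. The shape of the Binary Search Tree depends only on the relative ranks of $x_1,\dots,x_m$. For i.i.d.\ samples these ranks are exchangeable, so the rank vector is a uniformly random permutation of $[m]$. The same conclusion holds in the random-order oblivious setting of \Cref{sec:randord}, because a uniformly random arrival order composed with any fixed (a.s.\ distinct) set of values again yields a uniform permutation. It therefore suffices to bound the height $\EH_m$ of the random BST $\mathcal{B}_m$ built from a uniform permutation.

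\textbf{Step 2: moment generating function of depths.} For $\lambda>0$ define
\[
W_m(\lambda)=\sum_{v\in\mathcal{B}_m}\lambda^{\mathrm{depth}(v)}.
\]
Condition on the rank $J$ of the root, which is uniform on $[m]$. The two subtrees are independent random BSTs of sizes $J-1$ and $m-J$, each with depths shifted by one. This gives the recursion
\[
w_m=1+\frac{2\lambda}{m}\sum_{j=0}^{m-1}w_j,\qquad w_m\doteq\mathbb{E}[W_m(\lambda)].
\]
Differencing consecutive terms of the recursion yields
\[
w_m=\prod_{j=1}^{m}\Bigl(1+\frac{2\lambda-1}{j}\Bigr)\le C_\lambda\,m^{2\lambda-1}.
\]
The deepest node at depth $\EH_m$ contributes $\lambda^{\EH_m}$ to $W_m(\lambda)$. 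For $\lambda\ge 1$, Markov's inequality then gives
\[
\Pr(\EH_m\ge h)\le \lambda^{-h}\,C_\lambda\, m^{2\lambda-1}.
\]

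\textbf{Step 3: integrate the tail.} Fix $\lambda=e$. The tail bound becomes
\[
\Pr\bigl(\EH_m\ge c\log m+s\bigr)\le C\,e^{-s}\qquad\text{whenever } c\ge 2e-1 .
\]
Summing over $s\ge 0$ gives $\mathbb{E}[\EH_m]\le (2e-1)\log m+O(1)=O(\log m)$, which is the part of the theorem used in \Cref{thm:balancednessmain}.

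\textbf{Main obstacle: the variance.} The variance claim is $O(1)$, not merely $O(\log^2 m)$, and it is the genuinely hard part. The union-bound argument above only gives exponential upper tails around $c\log m$. It gives no concentration around the true mean $\alpha\log m-\beta\log\log m$ with $\alpha\approx 4.311$. Obtaining $O(1)$ variance requires matching lower-tail control, such as the second-moment/branching-random-walk arguments of Reed and Drmota for the random BST. I would not reprove this; I would invoke it as the cited result of \cite{pittel1984growing} and its refinements. For our purposes only the expectation bound is needed, and Steps 1--3 establish it self-containedly.
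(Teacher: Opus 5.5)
Your proof of the expectation bound is correct, and it takes a different route from the paper. The paper gives no argument for this statement: it imports Pittel's result as a black box and applies it to the random-order oblivious adversary.

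\textbf{Comparison with the paper.} What you give is a self-contained first-moment proof of the only part the paper actually uses, namely $\mathbb{E}[\text{height}]\le(2e-1)\ln m+O(1)$. The variance claim is cited in both treatments. Your Step~1 also makes explicit a reduction the paper leaves implicit. In the random-order model the endpoints are independent but not identically distributed. The i.i.d.\ theorem applies only because a uniformly random arrival order, independent of the a.s.\ distinct values, makes the rank vector a uniform permutation.

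What the citation buys over your argument is the $O(1)$ variance. You are right not to reprove it, and right that it needs the Reed/Drmota (2003) machinery. Pittel (1984) concerns the almost-sure first-order growth of the height, so your attribution is the more accurate one.

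Your method loses nothing at first order either. Taking $\lambda=c/2$ instead of $\lambda=e$ makes the Markov tail decay as soon as $c\ln(2e/c)<1$, i.e.\ for $c>\alpha\approx 4.311$. So the first-moment bound already matches the true leading constant.

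\textbf{A slip in Step~2.} Your recursion is $w_m=1+\frac{2\lambda}{m}\sum_{j<m}w_j$ with $w_0=0$. Differencing it gives $(m+1)w_{m+1}=(m+2\lambda)w_m+1$, whose solution is $w_m=\bigl(\prod_{j=1}^m(1+\frac{2\lambda-1}{j})-1\bigr)/(2\lambda-1)$. For instance $w_2=1+\lambda$, whereas your product gives $\lambda(2\lambda+1)$.

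Your product formula is instead the exact value of the external-node sum $v_m=\mathbb{E}\bigl[\sum_{u}\lambda^{\mathrm{depth}(u)}\bigr]$, taken over external nodes $u$. That sum satisfies $v_m=\frac{2\lambda}{m}\sum_{j<m}v_j$ with $v_0=1$. Since $w_m\le v_m/(2\lambda-1)\le v_m$ for $\lambda\ge 1$, the bound $w_m\le C_\lambda m^{2\lambda-1}$ and everything after it still hold.

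Working with external nodes is in fact the more natural choice here. They are exactly the atomic intervals stored at the leaves of the paper's interval tree, and the root-to-leaf length of that tree is the height being bounded.
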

\section{Warm-Up: Example Insertion and Sampling in $2$-Dimensions (\Cref{section:ds-qhandling})} \label{section:ds-qhandlingapp} 
To illustrate the use of the structure, consider the warm-up case where two hyperplanes arrive at time \(t\):  
\(\hp_{1,t} : x^{(0)} = \bp^{(0)}_t\) and \(\hp_{2,t} : x^{(1)} = \bp^{(1)}_t\).  
These partition the domain \([0,1)^2\) into four rectangular regions \(\rg_{i,t}\), each assigned a $\FP$ \(\fu_t(\rg_{i,t})\). The regions are:

\[
\begin{array}{ll}
\rg_{1,t}: [0{:}\bp^{(0)}_t][\bp^{(1)}_t{:}1],\; \fu_t(\rg_{1,t}) \quad & 
\rg_{2,t}: [\bp^{(0)}_t{:}1][\bp^{(1)}_t{:}1],\; \fu_t(\rg_{2,t}) \\
\rg_{3,t}: [0{:}\bp^{(0)}_t][0{:}\bp^{(1)}_t],\; \fu_t(\rg_{3,t}) \quad & 
\rg_{4,t}: [\bp^{(0)}_t{:}1][0{:}\bp^{(1)}_t],\; \fu_t(\rg_{4,t})
\end{array}
\]

Each region \(\rg_{i,t}\) is inserted by identifying all \(v_0 \in \Tree^{(0)}\) whose horizontal range \(\RA(v_0)\) intersects the horizontal interval of \(\rg_{i,t}\). Depending on the nature of the intersection:
\begin{itemize}
    \item [-] If \(\RA(v_0)\) is \emph{intersects}, the vertical interval is inserted into \(\Tree^{(1)}_{v_0}\) (regular insertion tree).
    \item [-] If \(\RA(v_0)\) is \emph{fully contained} in the region's horizontal interval, the update is deferred to \(\LZL^{(1)}_{v_0}\) or \(\LZR^{(1)}_{v_0}\), depending on the subtree alignment.
\end{itemize}

In all cases, the cumulative reward vector \(\fu_t(\rg_{i,t})\) is added to the corresponding node in the second-level tree, ensuring each rectangular region contributes correctly to later queries.

\paragraph{Handling \(\Ins(\rg_{1,t})\) Query:}
We insert the region \(\rg_{1,t} = [0{:}\bp^{(0)}_t][\bp^{(1)}_t{:}1]\) following the hierarchical structure of \(\Tree\). First, we refine the first-level tree \(\Tree^{(0)}\) to align with the endpoints \(0\) and \(\bp^{(0)}_t\), by locating the leaves containing each and splitting the corresponding leaves if needed. Let \(l_0\) and \(u_0\) denote the resulting leaves. We then define the upward paths \(p_{l_0}\) and \(p_{u_0}\) to the root and compute the lowest common ancestor \(v_{l_0,u_0} = \FCA(l_0, u_0)\). All nodes in \(p_{l_0} \cup p_{u_0}\) have horizontal ranges intersecting \([0{:}\bp^{(0)}_t]\), and the subtree of \(v_{l_0,u_0}\) contains all nodes fully contained in the interval.

We insert the vertical interval \([\bp^{(1)}_t{:}1]\) into each second-level tree according to the position of the corresponding \(v_0 \in \Tree^{(0)}\). For all \(v_0 \in p_{l_0} \cup p_{u_0}\), the insertion is regular into \(\Tree^{(1)}_{v_0}\) because the $\RA(v_0)$ intersects the horizontal interval of the inserted region. To identify lazy insertions, we consider nodes \(v_0 \in p_{l_0} \setminus p_{v_{l_0,u_0}}\) whose right child is not on that path, and nodes \(v_0 \in p_{u_0} \setminus p_{v_{l_0,u_0}}\) whose left child is not on that path. These are the roots of sibling subtrees entirely contained in \([0{:}\bp^{(0)}_t]\) and thus suitable for lazy updates.

Formally, we insert into \(\LZR^{(1)}_{v_0}\) if
\[
v_0 \in p_{l_0} \setminus p_{v_{l_0,u_0}} \quad \text{and} \quad \RC(v_0) \notin p_{l_0} \setminus p_{v_{l_0,u_0}},
\]
and into \(\LZL^{(1)}_{v_0}\) if
\[
v_0 \in p_{u_0} \setminus p_{v_{l_0,u_0}} \quad \text{and} \quad \LC(v_0) \notin p_{u_0} \setminus p_{v_{l_0,u_0}}.
\]

Lazy insertions defer propagation of contributions to query time, allowing efficient updates without redundant insertions across multiple second-level structures. This process naturally extends to higher dimensions by recursively inserting projections along each axis across the tree layers.

Specifically, when inserting \([\bp^{(1)}_t{:}1]\) into a regular insertion tree \(\Tree^{(1)}_{v_0}\), we first locate the leaves that contain the endpoints \(\bp^{(1)}_t\) and \(1\), and refine them by splitting their intervals. This introduces two new child nodes per leaf, converting each into an internal node and designating the new children as leaves. We refer to the resulting leaves containing \(\bp^{(1)}_t\) and \(1\) as \(l_1\) and \(u_1\), respectively. We then define the upward paths \(p_{l_1}\) and \(p_{u_1}\) from \(l_1\) and \(u_1\) to the root of \(\Tree^{(1)}_{v_0}\), and identify their first common ancestor \(v_{l_1,u_1} = \FCA(l_1, u_1)\). We traverse $p_{l_1} \cup p_{u_1}$ bottom-up to update the cumulative reward vectors, as described next. The procedure for inserting into lazy insertion data structures follows the same logic, with one important distinction: the update to the cumulative reward vectors differs slightly as detailed in the next section.

\paragraph{Handling \(\Update(\rg_{1,t})\) Query:}  
Once \(\rg_{1,t}\) is inserted, we update the cumulative reward vectors in all second-level structures where its vertical interval \([\bp^{(1)}_t{:}1]\) was inserted—either regularly or lazily—using with the $\FP$ vector \(\fu_t(\rg_{1,t})\). This update follows the standard \emph{regular} and \emph{lazy} weight update mechanisms for interval trees \citep{cohen2017online}, extended to vector-valued cumulative rewards as \emph{regular} and \emph{lazy \(\FP\) updates}. Unlike lazy insertions, which defer interval insertions, lazy \(\FP\) updates apply to reward vectors in second-level trees. While the update routine is structurally similar across regular and lazy insertion trees with minor implementation differences depending on the data structure type.

\begin{itemize}
    \item \textbf{Updating Regular Insertion $\IT$s:} 
For each node \(v_0 \in p_{l_0} \cup p_{u_0}\) where regular insertion occurred, we update the cumulative reward vector \(\vec{\CR}_{v_0,v_1}\) for every node \(v_1 \in \Tree^{(1)}_{v_0}\) whose vertical range \(\RA(v_1)\) intersects the interval \([\bp^{(1)}_t{:}1]\). The update adds the contribution of the region $\rg_{1,t}$ with its $\FP$ \(\fu_t(\rg_{1,t})\) over the intersected subregion 
\(
\rg' = [\RA(v_0) \cap [0{:}\bp^{(0)}_t]][\RA(v_1) \cap [\bp^{(1)}_t{:}1]],
\)
which captures the portion of \(\rg_{1,t}\) aligned with the rectangular region defined by \(v_0\) and \(v_1\).

To implement this, we first locate the leaves \(l_1\) and \(u_1\) in \(\Tree^{(1)}_{v_0}\) that contain the endpoints \(\bp^{(1)}_t\) and \(1\), respectively. We then traverse the paths \(p_{l_1}\) and \(p_{u_1}\) to the root, and for each node \(v_1\) along \(p_{l_1} \cup p_{u_1}\), we perform the update:
\[
\vec{\CR}_{v_0,v_1} \gets \vec{\CR}_{v_0,v_1} + \vec{\SC}_{\rg'},
\]
where \(\vec{\SC}_{\rg'}\) is the vector representation of the cumulative reward over \(\rg'\),i.e $\vec{\CR}(\rg')$ with the $\fu_t(\rg_{1,t})$ as the coefficient vector.

Now, similar to our general strategy for efficient updates, we apply a lazy propagation scheme to vertical intervals in \(\Tree^{(1)}_{v_0}\), specifically targeting those subtrees whose vertical range \(\RA(v_1)\) is fully contained within the vertical interval of the region \(\rg_{1,t}\). This follows the same logic as the lazy insertion mechanism.

To implement this, we traverse the truncated paths \(p_{l_1} \setminus p_{v_{l_1,u_1}}\) and \(p_{u_1} \setminus p_{v_{l_1,u_1}}\). During each traversal, we propagate the update message \(\fu_t(\rg_{1,t})\), referred to as the \emph{lazy \(\FP\) update message}, to specific children of nodes along the path, provided those children are not themselves on the path. The cumulative lazy updates received by a node \(v_1\) from its parent in \(\Tree^{(1)}_{v_0}\) are stored in \(\LZM_{v_0,v_1}\).

In particular, for each node \(v_1 \in p_{l_1} \setminus p_{v_{l_1,u_1}}\), we send a lazy \(\FP\) update message to its right child \(\RC(v_1)\) if
\(
\RC(v_1) \notin p_{l_1} \setminus p_{v_{l_1,u_1}},
\)
by setting
\[
\LZM_{v_0, \RC(v_1)} \gets \LZM_{v_0, \RC(v_1)} + \fu_t(\rg_{1,t}).
\]

Likewise, for each node \(v_1 \in p_{u_1} \setminus p_{v_{l_1,u_1}}\), we send a lazy \(\FP\) update message to its left child \(\LC(v_1)\) if
\(
\LC(v_1) \notin p_{u_1} \setminus p_{v_{l_1,u_1}},
\)
by setting
\[
\LZM_{v_0, \LC(v_1)} \gets \LZM_{v_0, \LC(v_1)} + \fu_t(\rg_{1,t}).
\]

These lazy update messages accumulated at each node encode the \(\FP\)s of all regions whose contributions affect the entire subtree rooted at either the left or right child. Since these contributions are additive, we can aggregate the corresponding vectors and defer integration until propagation, ensuring correctness and efficiency in handling lazy updates.

\item \textbf{Updating Lazy Insertion $\IT$s:} 

For lazy insertions, the update routine is identical for both \(\LZL^{(1)}_{v_0}\) and \(\LZR^{(1)}_{v_0}\). As in the case of regular insertion trees, updating a lazy insertion tree involves locating the leaves \(l_1\) and \(u_1\), and traversing the nodes \(w_1 \in p_{l_1} \cup p_{u_1}\) to update their cumulative reward vectors by integrating \(\fu_t(\rg_{1,t})\) over appropriate rectangles \(\rg''\). Specifically, for each such node we perform the update:
\[
\vec{\CR}_{v_0,w_1} \gets \vec{\CR}_{v_0,w_1} + \vec{\SC}_{\rg''},
\]
where \(\rg'' = [\RA(v_0)][\RA(w_1) \cap [\bp^{(1)}_t{:}1]]\) denotes the subregion over which lazy integration is performed. Unlike the regular insertion case, the horizontal component of the integration is not restricted to the intersection with the horizontal interval of \(\rg_{1,t}\); rather, it is fixed to \(\RA(v_0)\), since all affected subregions are fully contained within this range. This ensures that the contributions stored in lazy structures are correctly computed and remain scalable for future query processing.

Lazy $\FP$ update messages are also handled in the same manner as in regular insertion trees. That is, we send \(\fu_t(\rg_{1,t})\) as a lazy message \(\LZM\) to those nodes whose vertical range is fully contained within the vertical interval of \(\rg_{1,t}\).

\end{itemize}

In both regular and lazy insertion structures, lazy update messages are propagated to their corresponding subtrees during leaf location in each second-level interval tree. Specifically, while locating a leaf node \(v_1\) in a second-level tree associated with a node \(v_0 \in \Tree^{(0)}\), the algorithm retrieves the corresponding lazy message \(\LZM_{v_0,v_1}\), which encodes an \(\FP\) vector. It then integrates this vector over the appropriate region and propagates the resulting update downward through the subtree rooted at \(v_1\). The full procedure is detailed in Appendix~\ref{sec:update}.

Next we discuss the structural properties of our data structure that allows efficient sampling.

\subsection{Structural Properties of $\Tree$}\label{sec:structure}

\paragraph{Main Structure:}
The following observation clarifies the properties maintained during insertion, specifically how a region \(\rg\) is handled by the data structure upon insertion.
\begin{observation}\label{obs:ins}
After inserting the region \(\rg = [x^{(0)}_1{:}x^{(0)}_2][x^{(1)}_1{:}x^{(1)}_2]\), the vertical interval \([x^{(1)}_1{:}x^{(1)}_2]\) is inserted into the second-level data structures as follows:
\begin{enumerate}
    \item It is \emph{regularly inserted} into each regular insertion tree \(\Tree^{(1)}_{v_0}\), where \(v_0 \in p_{l_0} \cup p_{u_0}\) and the horizontal range \(\RA(v_0)\) intersects \([x^{(0)}_1{:}x^{(0)}_2]\).

    \item It is \emph{lazily inserted} into the corresponding lazy insertion tree \(\LZL^{(1)}_{v_0}\) or \(\LZR^{(1)}_{v_0}\) for each node \(v_0 \in (p_{l_0} \cup p_{u_0}) \setminus p_{v_{l_0,u_0}}\), such that the range of its respective child—\(\RA(\RC(v_0))\) or \(\RA(\LC(v_0))\)—is fully contained in \([x^{(0)}_1{:}x^{(0)}_2]\).

    \item No other node in \(\Tree^{(0)}\) maintains cumulative reward information for any region intersecting \(\rg\) except those that are regularly or lazily affected during the insertion of \(\rg\).
\end{enumerate}
\end{observation}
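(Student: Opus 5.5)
The plan is to read off all three items from the $\Ins$ procedure of \Cref{section:ds-qhandlingapp}, together with one structural fact about binary interval trees: the \emph{canonical decomposition} of an interval induced by two root-to-leaf paths. Fix $\rg = [x^{(0)}_1{:}x^{(0)}_2][x^{(1)}_1{:}x^{(1)}_2]$. After the refinement step, let $l_0,u_0$ be the leaves of $\Tree^{(0)}$ whose intervals contain $x^{(0)}_1$ and $x^{(0)}_2$. Let $p_{l_0}, p_{u_0}$ be their root paths and $v_{l_0,u_0}=\FCA(l_0,u_0)$.

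First I classify every node $v_0\in\Tree^{(0)}$ with $\RA(v_0)\cap[x^{(0)}_1{:}x^{(0)}_2]\neq\emptyset$. Since sibling intervals are contiguous and the leaves are ordered, such a $v_0$ satisfies exactly one of the following.
\begin{enumerate}
\item[(a)] $v_0\in p_{l_0}\cup p_{u_0}$. This covers all ancestors of $v_{l_0,u_0}$, which contain both endpoints.
\item[(b)] $v_0$ is the right child of a node of $p_{l_0}\setminus p_{v_{l_0,u_0}}$ not lying on that path, or the left child of a node of $p_{u_0}\setminus p_{v_{l_0,u_0}}$ not lying on that path.
\item[(c)] $v_0$ is a proper descendant of a node of type (b).
\end{enumerate}
To prove this, I argue by leaf order. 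A node not on either path has a leaf set that is a contiguous block avoiding $l_0$ and $u_0$. That block either lies strictly between $l_0$ and $u_0$, or it lies outside $[l_0,u_0]$. In the second case the interval is disjoint from $[x^{(0)}_1{:}x^{(0)}_2]$, because the endpoints fall inside $\RA(l_0)$ and $\RA(u_0)$. If the block lies strictly between them, I walk up from $v_0$ until the first ancestor that lies on $p_{l_0}$ or $p_{u_0}$; that ancestor must lie strictly below $v_{l_0,u_0}$. The child taken on this walk is then the right child of a $p_{l_0}$-node or the left child of a $p_{u_0}$-node, which gives type (b) or (c). The same leaf-order argument shows that type (b) and (c) nodes have intervals fully contained in $[x^{(0)}_1{:}x^{(0)}_2]$.

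Items 1 and 2 then follow by matching this classification against the procedure's rules. Regular insertion is performed exactly on $p_{l_0}\cup p_{u_0}$, and every such node intersects the horizontal interval, which is item 1. Lazy insertion into $\LZR^{(1)}_{v_0}$ or $\LZL^{(1)}_{v_0}$ is performed exactly at the parents of the type-(b) nodes, with the side determined by which child is off the path. Full containment of those children gives item 2.

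For item 3, the procedure touches no second-level structure other than those listed. By the classification, every remaining node is one of two kinds. It may be disjoint from the horizontal interval, in which case it carries no information about $\rg$. Otherwise it is of type (b) or (c), whose information is carried implicitly by the lazy tree of the unique type-(b) root's parent and is never stored at the node itself. Uniqueness of that parent follows because the type-(b) subtrees are pairwise disjoint. I expect the main obstacle to be the classification step, especially the boundary cases: $l_0=u_0$, where no lazy insertions occur, and children of $v_{l_0,u_0}$, which lie on the paths. These are where the set differences $p\setminus p_{v_{l_0,u_0}}$ in the lazy rules must be checked carefully.
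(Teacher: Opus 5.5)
Your proposal is correct and follows essentially the same route as the paper. The paper justifies this observation informally in its discussion of the correctness of $\Ins$: nodes on $p_{l_0}\cup p_{u_0}$ receive regular insertions, the off-path subtrees hanging inside $\RA(v_{l_0,u_0})$ are fully contained in $[x^{(0)}_1{:}x^{(0)}_2]$ and are handled through their parents' lazy trees, and $\rg$'s data is written nowhere else. Your leaf-order classification into types (a)--(c) makes that informal split rigorous; just fix, as the paper's $\Update$ description does, that $l_0$ and $u_0$ are the refined leaves having $x^{(0)}_1$ as left and $x^{(0)}_2$ as right endpoint, so that blocks outside $[l_0,u_0]$ are disjoint from the horizontal interval only up to a shared boundary point, which carries no reward.
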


In the following, we describe how our hierarchical data structure supports online learning algorithms in drawing an action at each round with the correct probability. This is achieved by first identifying an atomic region with probability proportional to its cumulative reward, as maintained in the data structure.

\paragraph{Uniqueness and Completeness of the Information in the Data Structures Along the Paths:}

Let \(v_0 \in \Tree^{(0)}\) be a node. We construct the set \(\mathcal{S}_{v_0}^{(1)}\), which includes the regular insertion tree \(\Tree^{(1)}_{v_0}\) as well as the corresponding lazy data structures of all ancestors of \(v_0\) along the path \(p_{v_0}\) toward the root. Specifically, for each ancestor \(v_0'\) of \(v_0\), we include \(\LZL^{(1)}_{v_0'}\) in \(\mathcal{S}_{v_0}^{(1)}\) if \(v_0\) lies in the left subtree of \(v_0'\), and \(\LZR^{(1)}_{v_0'}\) otherwise. This construction ensures that, for each node \(v_0' \in p_{v_0}\), exactly one second-level data structure—either regular or lazy—is included in \(\mathcal{S}_{v_0}^{(1)}\), which we denote by \(S^{(1)}_{v_0'}\).

The following Lemmas~\ref{lemma:unique} and~\ref{lemma:complete} together guarantee that: (i) every region intersecting \(\RA(v_0)\) and influencing it is covered by some structure in \(\mathcal{S}_{v_0}^{(1)}\); and (ii) each such region is stored in exactly one of these structures.

\begin{lemma}[Uniqueness of Endpoints Across \(\mathcal{S}^{(1)}_{v_0}\)]\label{lemma:unique}
Let \(v_0 \in \Tree^{(0)}\) be a fixed node, and let \(\mathcal{S}^{(1)}_{v_0}\) denote the set of second-level data structures constructed along the path \(p_{v_0}\) from \(v_0\) to the root, as defined in Algorithm~\ref{alg:crr1}. Then, for any region whose vertical interval has been inserted into one of these structures, this interval appears in exactly one data structure in \(\mathcal{S}^{(1)}_{v_0}\). That is, no two data structures \(S_{v_0'}, S_{v_0''} \in \mathcal{S}^{(1)}_{v_0}\) maintain vertical intervals at their leaves that share a common endpoint.
\end{lemma}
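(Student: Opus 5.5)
We fix a region \(\rg=[x^{(0)}_1{:}x^{(0)}_2][x^{(1)}_1{:}x^{(1)}_2]\) and track where its vertical interval lands, using the insertion rule of Observation~\ref{obs:ins}. The argument has three steps: a case analysis on the position of \(v_0\) relative to \([x^{(0)}_1{:}x^{(0)}_2]\), a mutual-exclusivity argument along \(p_{v_0}\), and a lift from intervals to shared endpoints.

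\textbf{Step 1: where the interval lands.} Let \(l_0,u_0\) be the located leaves and \(v_{l_0,u_0}\) their first common ancestor. By construction, the vertical interval is stored only in three kinds of places:
\begin{itemize}
\item the regular trees \(\Tree^{(1)}_{v}\) for \(v\in p_{l_0}\cup p_{u_0}\);
\item \(\LZR^{(1)}_{v}\) for \(v\in p_{l_0}\setminus p_{v_{l_0,u_0}}\) whose right child is off the path;
\item \(\LZL^{(1)}_{v}\) for \(v\in p_{u_0}\setminus p_{v_{l_0,u_0}}\) whose left child is off the path.
\end{itemize}
The lazy entries correspond exactly to the maximal subtrees fully contained in \([x^{(0)}_1{:}x^{(0)}_2]\), i.e.\ the canonical decomposition of the interval. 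These maximal subtrees have pairwise disjoint ranges, and their ranges are also disjoint from the boundary atoms \(\RA(l_0)\) and \(\RA(u_0)\).

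\textbf{Step 2: at most one structure per region.} Now intersect this storage set with \(\mathcal{S}^{(1)}_{v_0}\), which contains \(\Tree^{(1)}_{v_0}\) and, for each proper ancestor \(v_0'\), only the lazy tree on the side leading to \(v_0\). There are two cases.
\begin{itemize}
\item \emph{Case (a): \(v_0\in p_{l_0}\cup p_{u_0}\).} The interval is regularly stored at \(v_0\). I must then show that no ancestor \(v_0'\) stored it lazily on the side containing \(v_0\). If it did, the whole child subtree containing \(v_0\) would be fully contained in \([x^{(0)}_1{:}x^{(0)}_2]\) and disjoint from both boundary paths. That contradicts \(v_0\) lying on a boundary path, because a boundary leaf's atom straddles or touches an endpoint and so cannot lie strictly inside a lazily covered subtree.
\item \emph{Case (b): \(v_0\notin p_{l_0}\cup p_{u_0}\).} Then \(\Tree^{(1)}_{v_0}\) does not receive the interval. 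At most one ancestor can hold the lazy copy on \(v_0\)'s side, namely the parent of the unique maximal canonical subtree containing \(v_0\), if one exists. Two such ancestors would give nested maximal canonical subtrees, which contradicts their disjointness.
\end{itemize}

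\textbf{Step 3: from intervals to shared endpoints.} The endpoint formulation requires more than the single-region count, and this is the step I expect to be the main obstacle. The endpoints stored at leaves of a second-level tree are exactly the projections of regions inserted there, and regions from different rounds may share a vertical endpoint value. Almost surely, under smoothness, distinct rounds produce distinct endpoints, since continuous densities give ties probability zero. Within one round, regions come from the same hyperplane set, and the one-region argument above applies to each region.

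The remaining subtlety is that leaf splitting during insertion places endpoints only in the structures that received the insertion, so no structure holds stray copies. I would verify this by induction over insertions, checking that the locate-and-split steps in a lazy tree never touch regular trees of descendants.

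Combining the two cases with the tie-free property gives that each inserted vertical interval, and hence each endpoint, appears in exactly one structure of \(\mathcal{S}^{(1)}_{v_0}\). Case (b) alone yields only "at most one"; the "exactly one" part holds for regions that reach \(\mathcal{S}^{(1)}_{v_0}\) at all, which is the hypothesis of the lemma.
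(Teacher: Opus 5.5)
Your Steps 1--2 are the paper's argument. The paper also splits the claim into lazy-versus-lazy and regular-versus-lazy. For the first, once an ancestor takes the interval lazily, no lower node of $p_{v_0}$ takes it lazily again; this is your nested-canonical-subtrees point in Case (b). For the second, a node of $p_{l_0}\cup p_{u_0}$ never lies in a lazily covered subtree; this is your Case (a). Indexing the cases by whether $v_0$ is on a boundary path is only a reorganization. In Case (a) the operative reason is the disjointness you state in Step 1, not that the boundary atom ``touches an endpoint''. A covered subtree's range may itself have $x^{(0)}_1$ as its left endpoint, so touching alone is no obstruction.

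The paper stops there and treats the endpoint sentence as a restatement. Your Step 3 attempts more, and as written it fails in two places.
\begin{itemize}
\item \emph{The endpoints $0$ and $1$.} Every second-level tree is initialized over $[0,1)$. Regions such as $[0{:}\bp^{(0)}_t][\bp^{(1)}_t{:}1]$ carry vertical endpoint $1$ in every round. So ``distinct rounds produce distinct endpoints'' is false, and no smoothness argument rescues the literal endpoint claim. What can be proved is the version for adversarially drawn interior endpoints.
\item \emph{Regions within one round.} Distinct regions of a round share vertical endpoints; e.g.\ $\rg_{1,t}$ and $\rg_{2,t}$ have the same vertical interval but different horizontal ones. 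Applying the one-region argument to each separately does not stop them from landing in different members of $\mathcal{S}^{(1)}_{v_0}$. You need a cross-region step.
\end{itemize}
The cross-region step goes as follows. Within a round, two horizontal projections $I_A,I_B$ are either equal, in which case they are routed to identical structures, or interior-disjoint. Regular insertion of $A$ at $v_0$ forces $\RA(v_0)$ to overlap $I_A$ in positive length. Lazy coverage of $v_0$'s side by $B$ forces $\RA(v_0)\subseteq I_B$, which is a contradiction. Two lazy coverages would force $\RA(v_0)\subseteq I_A\cap I_B$, which has empty interior.

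The ``stray copies'' you worry about do exist, but at the first level rather than in second-level leaf splitting. When a leaf $l_0$ of $\Tree^{(0)}$ is split, Algorithm~\ref{alg:ins1} copies $\Tree^{(1)}_{l_0}$ into $\LZL^{(1)}_{l_0}$ and $\LZR^{(1)}_{l_0}$. So your Step 1 list of storage locations, like Observation~\ref{obs:ins}, is incomplete. This is harmless: $\mathcal{S}^{(1)}_{l_0}$ never contains $l_0$'s own lazy trees, and the new children of $l_0$ receive fresh regular trees. That is the case your induction has to check.
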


\begin{lemma}[Completeness of Endpoints Across \(\mathcal{S}^{(1)}_{v_0}\)]\label{lemma:complete}
For a given node \(v_0 \in \Tree^{(0)}\), let \(\mathcal{S}^{(1)}_{v_0}\) denote the set of data structures constructed along the path \(p_{v_0}\) from the root \(\ROOT(\Tree^{(0)})\) to \(v_0\), as defined in Algorithm~\ref{alg:crr1}. Then, for every region inserted up to round \(t\), if its horizontal interval intersects the horizontal range \(\RA(v_0)\), its vertical interval is contained in at least one data structure \(S_{v_0'} \in \mathcal{S}^{(1)}_{v_0}\). In other words, the union of all data structures in \(\mathcal{S}^{(1)}_{v_0}\) covers all vertical intervals relevant to \(v_0\) that have been inserted so far.
\end{lemma}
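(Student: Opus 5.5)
The statement to prove is the Completeness Lemma (\Cref{lemma:complete}). The plan is to trace the insertion of an arbitrary region $\rg = [x^{(0)}_1{:}x^{(0)}_2][x^{(1)}_1{:}x^{(1)}_2]$ whose horizontal interval meets $\RA(v_0)$. We then show that the root-to-$v_0$ path $p_{v_0}$ must pass through a node where the vertical interval of $\rg$ was recorded, and that the recording lands in the specific structure $S^{(1)}_{v_0'}$ chosen for $\mathcal{S}^{(1)}_{v_0}$. The only ingredients are \Cref{obs:ins} and the nesting property of interval trees: $\RA(\cdot)$ of a child is contained in that of its parent, and two siblings have disjoint interiors.

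Let $l_0,u_0$ be the leaves located at insertion time, and let $v_{l_0,u_0}$ be their first common ancestor. Later insertions only split leaves, which refines intervals without changing ancestor relations, so these nodes remain valid ancestors of the current nodes. Walk down $p_{v_0}$ from $\ROOT(\Tree^{(0)})$, and let $v_0'$ be the deepest node on $p_{v_0}$ that also lies on $p_{l_0}\cup p_{u_0}$. The root is always such a node, so $v_0'$ exists. There are two cases.

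\emph{Case (a): $v_0' = v_0$.} Then $v_0 \in p_{l_0}\cup p_{u_0}$. By \Cref{obs:ins}(1), the vertical interval was regularly inserted into $\Tree^{(1)}_{v_0}$, and $\Tree^{(1)}_{v_0}\in\mathcal{S}^{(1)}_{v_0}$.

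\emph{Case (b): $v_0'$ is a strict ancestor of $v_0$.} Let $c$ be the child of $v_0'$ on $p_{v_0}$. By the choice of $v_0'$, $c$ is not on $p_{l_0}\cup p_{u_0}$. We argue that $v_0'$ lies strictly below $v_{l_0,u_0}$, on $p_{l_0}\setminus p_{v_{l_0,u_0}}$ or on $p_{u_0}\setminus p_{v_{l_0,u_0}}$.

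First, suppose $v_0'$ were a strict ancestor of $v_{l_0,u_0}$ (on the common path). Then $c$, being off both paths, is the sibling subtree of the path toward $v_{l_0,u_0}$. Hence $\RA(c)$ is disjoint from $\RA(v_{l_0,u_0})\supseteq[x^{(0)}_1{:}x^{(0)}_2]$, up to shared endpoints. So $\RA(v_0)\subseteq\RA(c)$ cannot meet the horizontal interval of $\rg$ except at an endpoint, which we exclude by the half-open convention. This is a contradiction.

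Second, suppose $v_0'=v_{l_0,u_0}$. Both of its children lie on the two paths, so $c$ would be on a path, which is again a contradiction.

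So, say, $v_0'\in p_{l_0}\setminus p_{v_{l_0,u_0}}$ and $c$ is not on $p_{l_0}$. If $c=\LC(v_0')$, then $\RA(c)$ lies left of $x^{(0)}_1$, since $l_0$ lies in the right subtree. This would again contradict the intersection hypothesis. Therefore $c=\RC(v_0')$, and $\RA(c)$ lies between $x^{(0)}_1$ and $\RA(v_{l_0,u_0})$'s split point, hence inside $[x^{(0)}_1{:}x^{(0)}_2]$. By \Cref{obs:ins}(2), the interval was inserted into $\LZR^{(1)}_{v_0'}$. Since $v_0$ lies in the right subtree of $v_0'$, this is exactly the structure placed in $\mathcal{S}^{(1)}_{v_0}$. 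The case $v_0' \in p_{u_0}\setminus p_{v_{l_0,u_0}}$ is symmetric and yields $\LZL^{(1)}_{v_0'}$.

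The main obstacle is the boundary bookkeeping in case (b). We must rigorously justify that a child falling off both located paths has a range either wholly inside or wholly outside the horizontal interval of $\rg$, with the direction dictated by the side of the path. We also need these relations to persist after later leaf splits. For the first point, I would prove a small auxiliary claim by induction on depth: for each node $w$ on $p_{l_0}$ below $v_{l_0,u_0}$, $x^{(0)}_1\in\RA(w)$ and $\RA(w)\subseteq\RA(v_{l_0,u_0})$. The left/right dichotomy then follows from the ordering of leaves. For the second point, splits only subdivide leaves, so both the ancestor relations and the containment of intervals are preserved.
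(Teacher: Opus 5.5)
Your deepest-common-node argument is a sharper version of the paper's own dichotomy: regular insertion at $v_0$ when $\rg$'s horizontal interval only partially covers $\RA(v_0)$, and lazy insertion at an ancestor when it covers it fully. However, case (b) has a genuine hole. It does not handle nodes $v_0$ created \emph{after} $\rg$ was inserted, by later splits of the boundary leaves $l_0$ or $u_0$, or of the single leaf $l_0=u_0=v_{l_0,u_0}$ when $\rg$'s horizontal interval was atomic.

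For such a $v_0$, the deepest node of $p_{v_0}$ on $p_{l_0}\cup p_{u_0}$ is the old leaf itself, $v_0'=l_0$. Its child $c$ on $p_{v_0}$ did not exist when $\rg$ was inserted. Your step ``if $c=\LC(v_0')$ then $\RA(c)$ lies left of $x^{(0)}_1$, since $l_0$ lies in the right subtree'' is then false. Here $l_0$ is $v_0'$ itself, and both of its later children lie inside $\RA(l_0)\subseteq[x^{(0)}_1{:}x^{(0)}_2]$. Also, \Cref{obs:ins}(2) says nothing about $\LZL^{(1)}_{l_0}$ or $\LZR^{(1)}_{l_0}$, because $l_0$ had no children at insertion time. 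Likewise, your claim that both children of $v_{l_0,u_0}$ lie on the two paths fails when $v_{l_0,u_0}$ was a leaf. Your remark that splits preserve ancestor relations and containments does not help. The problem is not that old relations change; it is that the new nodes received nothing when $\rg$ was inserted.

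What closes this case is the copy step in Algorithm~\ref{alg:ins1}: whenever a leaf $L$ is split, $\Tree^{(1)}_L$ is copied into both $\LZL^{(1)}_L$ and $\LZR^{(1)}_L$. Since $L\in p_{l_0}\cup p_{u_0}$, the vertical interval of $\rg$ sits in $\Tree^{(1)}_L$ by \Cref{obs:ins}(1). After the split it therefore sits in both lazy trees of $L$, and Algorithm~\ref{alg:crr1} places into $\mathcal{S}^{(1)}_{v_0}$ exactly the one matching the side of $L$ that contains $v_0$. Deeper descendants are handled the same way, since $L$ stays the deepest common node. Soundness of the copy also needs an atomicity fact: every interval stored in a leaf's regular tree comes from a region whose horizontal interval contains the whole leaf range.

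You should add this as a third case, citing Algorithm~\ref{alg:ins1} rather than \Cref{obs:ins}. It covers $v_0$ being a descendant, created after $\rg$'s insertion, of $l_0$, of $u_0$, or of $v_{l_0,u_0}$ when that node was a leaf. With it, your argument is complete and more careful than the paper's, which asserts the fully-contained case ``by definition'' without identifying the ancestor or the side.
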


We now describe how our drawing algorithm operates, assuming all required structural properties established thus far.

\subsection{Drawing \(x_t\) via \(\Draw(t)\)}\label{sec:exampdraw}

As described above, the \(\Draw(t)\) query samples an atomic region \(\rg = [\bp^{(0)}_{t_1}:\bp^{(0)}_{t_2}][\bp^{(1)}_{t_1}:\bp^{(1)}_{t_2}] \in \Fu_t\) with probability proportional to its cumulative reward:

\[
\Pr(\rg \gets \Draw(t)) = \frac{\CR(\rg)}{\sum_{\rg' \in \Fu_t} \CR(\rg')},
\]
where $\Pr(\rg \gets \Draw(t))$ is the probability of the drawing algorithm to return $\rg$.

The sampling process consists of two stages: \textit{Horizontal Sampling} followed by \textit{Vertical Sampling}. In the horizontal sampling stage, the drawing algorithm selects an atomic horizontal interval \([\bp^{(0)}_{t_1} : \bp^{(0)}_{t_2}]\) by randomly traversing a root-to-leaf path in the auxiliary interval tree \(\ATree^{(0)}\), which encodes the horizontal structure of the action space. This traversal identifies a rectangular region whose horizontal interval is atomic and whose vertical interval spans the full range $[0:1]$ of the second coordinate of the action space. In the subsequent vertical sampling stage, the algorithm selects an atomic vertical interval \([\bp^{(1)}_{t_1} : \bp^{(1)}_{t_2}]\) within this subspace by traversing the corresponding auxiliary tree \(\ATree^{(1)}\). The combination of the selected atomic horizontal and vertical intervals uniquely determines an atomic region. At each stage, a cumulative reward retrieval query is used to compute the transition probabilities for navigating left or right within the auxiliary trees to get to the atomic intervals stored at the leaves at each stage.
After sampling region $\rg$ with the correct probability, this algorithm samples $x_t$ within that region proportional to its cumulative reward using \emph{Hit-and-Run} sampling algorithm.

\paragraph{Cumulative Reward Retrieval $\Wei(.)$ Query:}
This query is responsible for retrieving the cumulative reward of the queried region while drawing. Although the \(\Wei\) routine can handle arbitrary rectangular queries, the \(\Draw\) algorithm only invokes it for two specific types of structured regions, denoted by \(\Wei^{(0)}\) and \(\Wei^{(1)}\). The query \(\Wei^{(0)}(\rg_1: [x^{(0)}_1:x^{(0)}_2][0:1])\), used during horizontal sampling, retrieves the cumulative reward over a rectangular region with horizontal interval \([x^{(0)}_1:x^{(0)}_2]\) and full vertical range \([0:1]\). The endpoints of this horizontal interval are assumed to belong to the set \(\bp^{(0)}_{t'}\) for some \(t' \in [t]\), as defined by the adversary up to round \(t\). In contrast, the query \(\Wei^{(1)}(\rg_2: [x'^{(0)}_1:x'^{(0)}_2][x^{(1)}_1:x^{(1)}_2])\), used during vertical sampling, retrieves the cumulative reward over a region whose horizontal interval \([x'^{(0)}_1:x'^{(0)}_2]\) corresponds to an atomic segment (i.e., a leaf in \(\Tree^{(0)}\)), and whose vertical interval \([x^{(1)}_1:x^{(1)}_2]\) has endpoints in \(\bp^{(1)}_{t'}\) for \(t' \in [t]\). Both types of queries are efficiently supported by the hierarchical interval tree \(\Tree\), including its regular and lazy insertion structures.

To compute \(\Wei^{(0)}(\rg_1: [x^{(0)}_1:x^{(0)}_2][0:1])\), the algorithm first identifies the leaves \(l_0\) and \(u_0\) in \(\Tree^{(0)}\) that contain \(x^{(0)}_1\) and \(x^{(0)}_2\), respectively, and determines their first common ancestor \(v_{l_0,u_0}\). By construction, the horizontal interval \([x^{(0)}_1:x^{(0)}_2]\) is fully contained in \(\RA(v_{l_0,u_0})\). The algorithm then retrieves the cumulative reward over the rectangular region \([\RA(v_{l_0,u_0})][0:1]\) and prunes away contributions from subregions that fall outside the query region \(\rg_1\).

Thanks to the structural guarantees provided by Lemmas~\ref{lemma:unique} and~\ref{lemma:complete}, the cumulative reward over \([\RA(v_{l_0,u_0})][0:1]\) is distributed across multiple second-level data structures, including both regular and lazy insertion trees associated with nodes whose horizontal ranges intersect or contain \(\RA(v_{l_0,u_0})\). By Lemma~\ref{lemma:unique}, each contributing region is stored in exactly one such data structure, allowing the total to be computed by aggregating disjoint contributions from the set \(\mathcal{S}^{(1)}_{v_{l_0,u_0}}\), which includes:

\begin{itemize}
    \item The cumulative reward vector \(\vec{\CR}_{v_{l_0,u_0}, \ROOT(\Tree^{(1)}_{v_{l_0,u_0}})}\), stored at the root of the regular insertion tree \(\Tree^{(1)}_{v_{l_0,u_0}}\), which encodes the contribution of all regions whose vertical intervals intersect \(\RA(v_{l_0,u_0})\);
    \item Scaled cumulative reward vectors stored at the roots of the lazy insertion trees \(\LZL^{(1)}_{v_0}\) or \(\LZR^{(1)}_{v_0}\), where each \(v_0 \in p_{v_{l_0,u_0}}\) is an ancestor whose horizontal range \(\RA(v_0)\) fully contains \(\RA(v_{l_0,u_0})\), and whose lazy subtree therefore contributes to the query region.
\end{itemize}

The algorithm then prunes the cumulative reward contributions corresponding to portions of the region \([\RA(v_{l_0,u_0})][0:1]\) that fall outside the query rectangle \(\rg: [x^{(0)}_1:x^{(0)}_2][0:1]\). This pruning is performed in two symmetric passes. First, it traverses the nodes \(v_0 \in p_{l_0}\). For each such node, if the horizontal range of its left child \(\RA(\LC(v_0))\) lies entirely outside the interval \([x^{(0)}_1:x^{(0)}_2]\), the algorithm subtracts the cumulative reward over the region \([\RA(\LC(v_0))][0:1]\). This quantity is retrieved via a recursive call to \(\Wei^{(0)}([\RA(\LC(v_0))][0:1])\), where no further pruning is necessary. A symmetric pruning pass is applied along the path \(p_{u_0}\). For each node \(v_0 \in p_{u_0}\), if the horizontal range of its right child \(\RA(\RC(v_0))\) does not intersect \([x^{(0)}_1:x^{(0)}_2]\), the cumulative reward over the region \([\RA(\RC(v_0))][0:1]\) is subtracted. This procedure ensures that only the contribution from the queried region is retained in the final result. The complete implementation of the cumulative reward query \(\Wei^{(0)}\) is detailed in Section~\ref{Sec:Wei0}.

The query \(\Wei^{(1)}(\rg_2:[x'^{(0)}_1:x'^{(0)}_2][x^{(1)}_1:x^{(1)}_2])\) is handled in a manner similar to \(\Wei^{(0)}\), with the main distinction arising from the structure of the queried region \(\rg_2\), whose horizontal interval \([x'^{(0)}_1:x'^{(0)}_2]\) is assumed to be atomic. Consequently, both endpoints of the horizontal interval are contained within a single leaf \(l_0 \in \Tree^{(0)}\), such that \(\RA(l_0) = [x'^{(0)}_1:x'^{(0)}_2]\).

Given this atomic interval, the algorithm constructs the set \(\mathcal{S}^{(1)}_{l_0}\), consisting of the regular insertion tree \(\Tree^{(1)}_{l_0}\) and the lazy insertion trees associated with the ancestors of \(l_0\). Together, these structures maintain the vertical subdivision necessary for handling the query region \(\rg_2\). 

Unlike the \(\Wei^{(0)}\) case, however, the vertical interval \([x^{(1)}_1:x^{(1)}_2]\) does not span the full range \([0:1]\), and its endpoints may not align with any node boundaries in the second-level trees. This is due to the deferred insertion strategy used in maintaining the lazy insertion data structures: vertical intervals are incorporated only through scaling, and their endpoints may be distributed across multiple second-level trees. 

As a result, each data structure in \(\mathcal{S}^{(1)}_{l_0}\) must be traversed to locate the leaves \(l_1\) and \(u_1\) containing \(x^{(1)}_1\) and \(x^{(1)}_2\), respectively. The algorithm then computes their first common ancestor—denoted \(v_{l_1,u_1}\) in regular insertion trees and \(w_{l_1,u_1}\) in lazy ones. It retrieves the cumulative reward over the region \([\RA(l_0)][\RA(v_{l_1,u_1})]\) (or \([\RA(l_0)][\RA(w_{l_1,u_1})]\)) and prunes it to match the exact query rectangle \(\rg_2\).

Since the endpoints \(x^{(1)}_1\) and \(x^{(1)}_2\) may lie in the interior of the ranges \(\RA(l_1)\) and \(\RA(u_1)\), additional leaf-level pruning and scaling is required. In lazy insertion trees, this is handled correctly by construction, as guaranteed by Corollary~\ref{corr:lazleaf}. For the regular insertion tree \(\Tree^{(1)}_{l_0}\), correctness is ensured by Lemma~\ref{lemma:regleaf}, which applies specifically because \(l_0\) is a leaf in \(\Tree^{(0)}\). In essence, once we are assured that \(l_0\) is a leaf, our update mechanism—particularly the way lazy \(\FP\) update messages are propagated—allows us to treat \(\Tree^{(1)}_{l_0}\) similarly to a lazy insertion data structure. As a result, the same scalability and pruning guarantees hold. The full procedure for answering the cumulative reward query \(\Wei^{(1)}\) is described in Section~\ref{sec:Wei1}. 

This structured decomposition, combined with the scalability of cumulative reward vectors and efficient detachment, enables \(\Wei^{(1)}\) to support fast and accurate evaluation of cumulative rewards over sub-rectangles, as required by the \(\Draw(t)\) routine. We now describe how the drawing algorithm employs these two subclasses of cumulative reward retrieval queries along  with horizontal and vertical sampling to locate an atomic region with probability proportional to its cumulative reward.

\paragraph{Horizontal Sampling.}
By the horizontal sampling step, the \(\Draw(t)\) algorithm samples a region \(\rg_1 = [\bp^{(0)}_{t_1}:\bp^{(0)}_{t_2}][0:1]\) with probability proportional to \(\CR(\rg_1)/\CR([0:1][0:1])\), where the horizontal interval \([\bp^{(0)}_{t_1}:\bp^{(0)}_{t_2}]\) is atomic. To this end, the algorithm performs a randomized root-to-leaf traversal of the first-level auxiliary interval tree \(\ATree^{(0)}\). The leaf reached by this traversal corresponds to a rectangular region with atomic horizontal interval and full vertical range.

The traversal begins at the root node \(u_0 = \ROOT(\ATree^{(0)})\). Then, while \(u_0\) remains an internal node, the algorithm sets \(u_0\) to its right child with probability
\[
\Pr(u_0 \gets \RC(u_0)) = \frac{\CR([\RA(\RC(u_0))][0:1])}{\CR([\RA(u_0)][0:1])},
\]
and to its left child with the complementary probability. The values \(\CR([\RA(\RC(u_0))][0:1])\) are computed using the query \(\Wei^{(0)}([\RA(\RC(u_0))][0:1])\). The traversal continues until reaching a leaf \(l_0\), where \(\RA(l_0) = [\bp^{(0)}_{t_1}:\bp^{(0)}_{t_2}]\).

\paragraph{Vertical Sampling.}
Given the selected horizontal interval \(\RA(l_0)\), we sample a vertical atomic interval to complete the region \(\rg_2\). This is done by traversing the auxiliary tree \(\ATree^{(1)}\) top-down. At each internal node \(u_1\), the next move is selected via:
\[
\Pr(u_1 \gets \RC(u_1)) = \frac{\CR([\RA(l_0)][\RA(\RC(u_1))])}{\CR([\RA(l_0)][\RA(u_1)])},
\]
and \(\Pr(u_1 \gets \LC(u_1)) = 1 - \Pr(u_1 \gets \RC(u_1))\). The cumulative reward values are computed using \(\Wei^{(1)}\). This continues until reaching a leaf \(l_1\), yielding \(\RA(l_1) = [\bp^{(1)}_{t_1}:\bp^{(1)}_{t_2}]\), and completing the sampled region \(\rg_2 = [\RA(l_0)][\RA(l_1)]\). This region $\rg_2$ then will be returned as the drawn region $\rg$. The detailed queries and their pseudocode implementations are provided in Appendices~\ref{section:quereis}.

\paragraph{Sampling \(x_t\) within Region \(\rg\).}
After sampling a region \(\rg\) with probability proportional to its cumulative reward, we now sample a point \(x_t \in \rg\) according to a distribution proportional to the cumulative reward within \(\rg\). Specifically, we aim to sample \(x_t\) with probability
\[
x_t \sim \frac{\CR(x)}{\CR(\rg)} = \frac{\MF(x, \Fu_t(x))}{\int_{\rg}\MF(x, \Fu_t(x))dx}.
\]
To accomplish this, we invoke the following result regarding the \emph{Hit-and-Run}~\citep{lovasz2007geometry} algorithm:

\begin{theorem}[\citep{lovasz2007geometry}]\label{them:vempa}
Let \( f : \mathbb{R}^d \gets \mathbb{R}_{> 0} \) be a log-concave function supported on a convex body \( Q \subset \mathbb{R}^d \). Suppose we have access to a membership oracle for \( Q \) and an evaluation oracle for \( f \), and that the initial distribution is \( Y \)-warm with respect to the target distribution \( \pi(x) \propto f(x) \). Then, after
\[
O\left(d^3 \log\left( \frac{Y}{\varepsilon} \right)\right)
\]
steps, the distribution of the Hit-and-Run Markov chain is within total variation distance \(\varepsilon\) of \(\pi\).
\end{theorem}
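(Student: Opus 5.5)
This result is imported from \citep{lovasz2007geometry}, so the plan below reconstructs the standard conductance argument rather than anything specific to this paper. The target is $\pi \propto f$ on $Q$, and the chain is Hit-and-Run: from $x$, pick a uniformly random direction and sample the next point on the chord through $x$ with density proportional to $f$.

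\textbf{Step 1: reduce mixing to conductance.} I would use the Lov\'asz--Simonovits framework. Define the $s$-conductance
\[
\Phi_s=\inf_{s<\pi(S)\le 1/2}\frac{\int_S P_x(Q\setminus S)\,d\pi(x)}{\pi(S)-s}.
\]
If the initial distribution $\mu_0$ is $Y$-warm, then for every $t$
\[
d_{TV}(\mu_t,\pi)\le Ys+Y\Bigl(1-\tfrac{\Phi_s^2}{2}\Bigr)^t.
\]
Taking $s=\varepsilon/(2Y)$, it suffices to prove $\Phi_s \gtrsim 1/(d^{3/2}\,\mathrm{polylog})$. Here the $O(d^3)$ count implicitly assumes $f$ is in near-isotropic (rounded) position; otherwise a factor $(R/r)^2$ appears, and I would state that normalization explicitly.

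\textbf{Step 2: geometric ingredients.}
\begin{itemize}
\item \emph{Isoperimetry.} I would use an isoperimetric inequality for log-concave measures in the Hilbert (cross-ratio) metric $d_Q$: for any partition $Q=S_1\cup S_2\cup S_3$,
\[
\pi(S_3)\gtrsim d_Q(S_1,S_2)\,\pi(S_1)\pi(S_2).
\]
This follows from the localization lemma, which reduces the claim to a one-dimensional log-concave needle.
\item \emph{One-step overlap.} Points that are close in a step-size-adapted local metric have transition kernels $P_u,P_v$ with $d_{TV}(P_u,P_v)\le 1-c$. Proving this requires controlling the chord length through a typical point. Define $F(x)$ as the distance at which a $1/8$ fraction of the level-set mass around $x$ is missed. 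Log-concavity gives $F(x)\gtrsim$ (distance to the boundary)$/\sqrt d$ except on a small-measure set; this is where the $\sqrt d$ loss enters.
\end{itemize}

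\textbf{Step 3: combine into a conductance bound.} This is the standard argument. Let $S_1'$ be the points of $S$ with $P_x(Q\setminus S)<c/2$, and let $S_2'$ be the analogous points of $Q\setminus S$. By the overlap lemma, $S_1'$ and $S_2'$ are far apart in the local metric, hence far apart in $d_Q$ after discarding the small exceptional set. Isoperimetry then gives a large boundary layer $S_3'$. Each point of the boundary layer leaves its side with constant probability, so
\[
\Phi_s\gtrsim \frac{1}{\sqrt d\,\cdot \sqrt d\,\cdot \mathrm{polylog}}.
\]
Squaring gives the $d^3$ scaling, and the logarithmic terms from the exceptional set contribute the $\log(Y/\varepsilon)$ factor.

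\textbf{Main obstacle.} I expect the one-step overlap lemma to be the hardest step. It must show that Hit-and-Run's kernels from nearby points overlap, uniformly over level sets of a general log-concave $f$, not just for the uniform distribution on $Q$. I would first try to reduce it to the convex-body case by working on the level sets $\{f\ge e^{-k}\}$, applying the chord-length estimates there, and then integrating over $k$ using the exponential tail of $f$.
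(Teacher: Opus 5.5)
The paper gives no proof of this statement; it imports it from Lov\'asz and Vempala. Your outline (Lov\'asz--Simonovits $s$-conductance, cross-ratio isoperimetry via localization, a one-step overlap lemma through a step-size function) is the architecture of their proof. The problem is that the bookkeeping in Step 3 does not give the bound you claim.

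\textbf{The missing factor $r/R$.} Step 1 asks for $\Phi_s\gtrsim d^{-3/2}/\mathrm{polylog}$. Step 3 instead produces $\Phi_s\gtrsim 1/(\sqrt d\cdot\sqrt d\cdot\mathrm{polylog})=1/(d\,\mathrm{polylog})$, and squaring that gives $d^2$ steps, not $d^3$. The factor is lost in the sentence ``far apart in the local metric, hence far apart in $d_Q$.'' The overlap lemma separates $S_1'$ and $S_2'$ only at the Euclidean scale of the step size, which is governed by $r$. But $d_Q(u,v)$ measures $|u-v|$ relative to chord lengths, which are of order $R$ up to logarithms after truncating the tail of $f$. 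The correct bound is $\Phi_s\gtrsim r/(dR\,\mathrm{polylog})$, giving $d^2(R/r)^2$ steps up to logarithms. The $d^3$ appears only because $R/r=O(\sqrt d)$ in near-isotropic position. As written, Step 3 would give a shape-independent $O^*(d^2)$ mixing time, which is false. In the box $[0,R]\times[0,r]^{d-1}$, a hit-and-run step moves $x_1$ by $O(r)$ in expectation, so the cut $\{x_1<R/2\}$ has conductance $O(r/R)$. With $R/r\gg d^3$, the same example shows that the roundedness hypothesis you flagged in Step 1 is needed for the statement itself to be true, not just as a normalization.

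\textbf{The logarithms do not collapse.} You say the log terms from the exceptional set ``contribute the $\log(Y/\varepsilon)$ factor,'' but that is not what happens. With $s=\varepsilon/(2Y)$, the exceptional set you discard has measure tied to $s$. So the polylog in your conductance bound is $\mathrm{polylog}(dRY/(r\varepsilon))$. Squaring it and multiplying by the $\ln(2Y/\varepsilon)$ from the Lov\'asz--Simonovits bound gives $O\bigl(d^2(R/r)^2\,\mathrm{polylog}(dRY/(r\varepsilon))\bigr)$ steps, which is the form of the Lov\'asz--Vempala theorem. Neither your argument nor the cited source yields the literal $O(d^3\log(Y/\varepsilon))$. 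The statement in the paper is a loose paraphrase that drops both the roundedness hypothesis and the higher powers of the logarithm. You should make the near-isotropic normalization and the polylogarithmic factors explicit. Finally, the overlap lemma for general log-concave $f$, which you rightly identify as the crux, is still only a plan in your sketch.
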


We now verify that the conditions of Theorem~\ref{them:vempa} are satisfied in our setting. First, we assume that the mapping \(\MF(x, \cdot)\) is a log-concave polynomial and that the vector \(\fu_t\) is positive at every round, ensuring that the resulting function \(\MF(x, \Fu_t(x))\) is log-concave. Second, since our action space \(\AS\) is a rectangular region, it is convex, and a membership oracle can be implemented efficiently via simple geometric checks. Therefore, we can apply the Hit-and-Run algorithm to sample \(x_t \in \rg\) in \(O(d^3)\) steps with high accuracy from the target distribution.

\subsection{Correctness and Generalization}
The correctness of all core operations—including \(\Ins\), \(\Update\), \(\Wei\), and \(\Draw\)—is formally and in detale established in Appendix~\ref{section:correctness}. Appendix~\ref{section:generalpar} then presents the generalization of our data structure to \(d\)-dimensions. This involves extending the hierarchical design so that each node at every level is augmented with two next-level structures corresponding to regular and lazy insertions. The structural properties proven in Section~\ref{sec:structure} carry over to this generalized setting.
\section{Query Details and Implementations in $2$-Dimensions (\Cref{section:ds-qhandling})}\label{section:quereis}
We now describe how to initialize the data structure and handle the queries introduced in Section~\ref{sec:que}. We begin with the insertion and update routines for an arbitrary rectangular region \(\rg = [x^{(0)}_1:x^{(0)}_2][x^{(1)}_1:x^{(1)}_2]\), which form the basis for maintaining \(\Tree\) at each time step. We then illustrate these procedures using the region \(\rg_{1,t} \in \Rg_t\) as a representative example. The same steps apply to all other regions in \(\Rg_t\) and, more generally, to any rectangular region.

\subsection{Initialization:}

We initialize the data structure \(\Tree\) by creating a two-dimensional \(\IT\). This initialization involves constructing \(\Tree^{(0)}\) as a balanced \(\IT\) that initially contains the single horizontal interval \([0,1)\), represented by a single node \(\ROOT(\Tree^{(0)})\). Simultaneously, the associated second-level data structures \(\Tree^{(1)}\), \(\LZL^{(1)}\), and \(\LZR^{(1)}\) are initialized as \(\IT\)s over the vertical interval \([0,1)\). Every time a new node is inserted into \(\Tree^{(0)}\), we follow the same initialization steps by creating its associated second-level lazy and regular insertion \(\IT\)s.

\subsection{Locating an Endpoint \(x^{(i)}\)}
Many of the queries described below rely on a common subroutine: locating the leaf node in an \(\Tree^{(i)}\) whose interval contains a given endpoint \(x^{(i)}\). The search starts at the root \(\ROOT(\Tree^{(i)})\) and recursively moves to the left or right child, depending on which subtree contains \(x^{(i)}\), until reaching a leaf representing the atomic interval that includes \(x^{(i)}\). 

\begin{algorithm}[H]
    \begin{algorithmic}[1]
    \caption{$\Loc(x^{(i)})$}
    \label{alg:locate}
    \Input{Endpoint \(x^{(i)}\)}
         \State $v_i \gets \ROOT(\Tree^{(i)})$ 
         \While{$v_i$ is not a leaf}
         \If{\(x^{(i)} \in \RC(v_i)\)}
         \State $v_i \gets \RC(v_i) $
         \Else
         \State $v_i \gets \LC(v_i) $
         \EndIf
         \EndWhile
         \State Return $v_i$
    \end{algorithmic}
\end{algorithm}

\subsection{Locating the Smallest Interval Containing \([x^{(i)}_1{:}x^{(i)}_2]\)}
Several queries require finding the node in \(\Tree^{(i)}\) whose interval is the smallest one that fully contains \([x^{(i)}_1{:}x^{(i)}_2]\). This node is the \emph{first common ancestor} of the leaves \(l_i\) and \(h_i\) that contain the endpoints \(x^{(i)}_1\) and \(x^{(i)}_2\), respectively. We denote this node by \(\FCA(l_i, h_i)\), and trace the paths from \(l_i\) and \(h_i\) to the root as \(p_{l_i}\) and \(p_{h_i}\).

\begin{algorithm}[H]
    \begin{algorithmic}[1]
    \caption{$\FCA(l_i,h_i)$}
    \label{alg:fca}
    \Input{Leaves $l_i,h_i$}
        \State Set $p_{l_i} = l_i$, $p_{h_i} = h_i$, $l = l_i$, and $h=h_i$
         \While{$p_{l_i} \cap p_{h_i} = \emptyset$}
         \State $l\gets \PAR(l)$
         \State $h\gets \PAR(h)$
         \State $p_{l_i}\gets p_{l_i} \cup l$
         \State $p_{h_i}\gets p_{h_i} \cup h$
         \EndWhile
         \State Return $p_{l_i} \cap p_{h_i}$
    \end{algorithmic}
\end{algorithm}

\subsection{Insert Query \(\Ins(\rg:[x^{(0)}_1:x^{(0)}_2][x^{(1)}_1:x^{(1)}_2])\):} 

To insert a region \(\rg = [x^{(0)}_1{:}x^{(0)}_2][x^{(1)}_1{:}x^{(1)}_2]\) chosen by the adversary, we follow a three-step process. First, we insert the horizontal interval \([x^{(0)}_1{:}x^{(0)}_2]\) into \(\Tree^{(0)}\). Next, we insert the vertical interval \([x^{(1)}_1{:}x^{(1)}_2]\) into the appropriate second-level trees---either regular or lazy structures---associated with nodes \(v_0 \in \Tree^{(0)}\) whose horizontal range \(\RA(v_0)\) intersects \([x^{(0)}_1{:}x^{(0)}_2]\). Finally, we update the auxiliary trees by inserting \([x^{(0)}_1{:}x^{(0)}_2]\) into \(\ATree^{(0)}\) and \([x^{(1)}_1{:}x^{(1)}_2]\) into \(\ATree^{(1)}\). We now describe each step in detail.

\paragraph{Insertion of the Horizontal Interval $[x^{(0)}_1:x^{(0)}_2]$ :}
If the horizontal interval \([x^{(0)}_1{:}x^{(0)}_2]\) does not exist in \(\Tree^{(0)}\), we insert it using the standard interval tree procedure. Once inserted, we identify the set of nodes \(v_0 \in \Tree^{(0)}\) whose second-level trees require either regular or lazy insertion of the vertical interval \([x^{(1)}_1{:}x^{(1)}_2]\). These are nodes such that \(\RA(v_0)\) intersects or is fully contained by \([x^{(0)}_1{:}x^{(0)}_2]\).

To determine the affected nodes, we locate the leaves \(l_0\) and \(u_0\) containing \(x^{(0)}_1\) and \(x^{(0)}_2\), and define the paths \(p_{l_0}\), \(p_{u_0}\), and \(p_{v_{l_0,u_0}}\) as before, where \(v_{l_0,u_0} = \FCA(l_0, u_0)\). We then identify the nodes along these paths for which \([x^{(1)}_1{:}x^{(1)}_2]\) should be regularly or lazily inserted, as detailed in the following subsections.

\begin{algorithm}[H]
    \begin{algorithmic}[1]
    \caption{$\Ins$ an interval}
    \label{alg:ins1}
    \Input{Horizontal Interval $[x^{(0)}_1:x^{(0)}_2]$}
         \State $l_0 \gets \Loc(x^{(0)}_1)$ 
         \State $u_0 \gets \Loc(x^{(0)}_2)$ 
         \State $[a_{l_0},b_{l_0}] \gets \RA(l_0)$ \State $[a_{u_0},b_{u_0}] \gets \RA(u_0)$
         \State Make new nodes $v_0$ and $v'_0$
         \State $\RA(v_0) \gets [a_{l_0},x^{(0)}_1]$, $\RA(v'_0) \gets [x^{(0)}_1,b_{l_0}]$
         \State $\LC(l_0) \gets v_0$, $\PAR(v_0)\gets l_0$
         \State $\RC(l_0) \gets v'_0$, $\PAR(v'_0)\gets l_0$
         \State $\LZR^{(1)}_{l_0} \gets \Tree^{(1)}_{l_0}$
         \State $\LZL^{(1)}_{l_0} \gets \Tree^{(1)}_{l_0}$
         
         \State Make new nodes $v_1$ and $v'_1$
         \State $\RA(v_1) \gets [a_{u_0},x^{(0)}_2]$, $\RA(v'_1) \gets [x^{(0)}_2,b_{u_0}]$
         \State $\RC(u_0) \gets v_1$, $\PAR(v_1)\gets u_0$
         \State $\LC(u_0) \gets v'_1$, $\PAR(v'_1)\gets u_0$
         \State $\LZL^{(1)}_{u_0} \gets \Tree^{(1)}_{u_0}$
         \State $\LZR^{(1)}_{u_0} \gets \Tree^{(1)}_{u_0}$
    \end{algorithmic}
\end{algorithm}

\paragraph{Regular Insertion of the Vertical Interval \([x^{(1)}_1:x^{(1)}_2]\):} As discussed earlier, regular insertion of the vertical interval \([x^{(1)}_1:x^{(1)}_2]\) is required for all nodes \(v_0 \in \Tree^{(0)}\) such that \(\RA(v_0)\) intersects but is not fully contained by \([x^{(0)}_1:x^{(0)}_2]\). These nodes lie along the union of paths \(p_{l_0} \cup p_{u_0}\).

Thus, we traverse each of these paths bottom-up and insert \([x^{(1)}_1:x^{(1)}_2]\) with its corresponding $\FP$ \(\fu_t(\rg_{1,t})\) into the regular second-level trees \(\Tree^{(1)}_{v_0}\). This insertion follows the standard interval tree procedure used for horizontal intervals.

\begin{algorithm}[H]
    \begin{algorithmic}[1]
    \caption{Regular Insertion of an Interval}
    \label{alg:insreg}
    \Input{an interval \([x^{(1)}_1:x^{(1)}_2]\), paths $p_{l_0},p_{u_0}$}
         \For{All nodes in $v_0 \in p_{l_0} \cup p_{u_0}$ }
         \State Insert the vertical interval $[x^{(1)}_1:x^{(1)}_2]$ in $\Tree^{(1)}_{v_0}$
         \EndFor  
    \end{algorithmic}
\end{algorithm}

\paragraph{Lazy Insertion of the Vertical Interval \([x^{(1)}_1:x^{(1)}_2]\):}
By definition, lazy insertion of the vertical interval \([x^{(1)}_1:x^{(1)}_2]\) applies to nodes \(v_0 \in (p_{l_0} \cup p_{u_0}) \setminus p_{v_{l_0,u_0}}\) whose child subtrees are fully contained by the horizontal interval \([x^{(0)}_1:x^{(0)}_2]\). Specifically:

\begin{itemize}
    \item For $v_0 \in p_{l_0}\setminus p_{v_{l_0,u_0}}$: If \(\RA(\LC(v_0))\) is fully contained by \([x^{(0)}_1:x^{(0)}_2]\), we lazily insert \([x^{(1)}_1:x^{(1)}_2]\) into \(\LZL^{(1)}_{v_0}\),
    \item For $v_0 \in p_{u_0}\setminus p_{v_{l_0,u_0}}$:If \(\RA(\RC(v_0))\) is fully contained by \([x^{(0)}_1:x^{(0)}_2]\), we insert it into \(\LZR^{(1)}_{v_0}\).
\end{itemize}

To perform the insertion, we traverse the nodes in \((p_{l_0} \cup p_{u_0}) \setminus p_{v_{l_0,u_0}}\) bottom-up and apply the standard interval tree insertion routine to each lazy structure. These insertions defer updates to the affected subtrees, preserving efficiency while ensuring that the contributions from \([x^{(1)}_1:x^{(1)}_2]\) are later applied via scaling during queries.

\begin{algorithm}[H]
    \begin{algorithmic}[1]
    \caption{Lazy Insertion of an Interval}
    \label{alg:inslaz}
    \Input{an interval \([x^{(1)}_1:x^{(1)}_2]\), paths $p_{l_0},p_{u_0},p_{v_{l_0,u_0}}$}
         \For{All nodes in $v_0 \in p_{l_0} \setminus p_{v_{l_0,u_0}}$ }
         \If{$\LC(v_0) \in p_{l_0}$}
         \State Insert \([x^{(1)}_1:x^{(1)}_2]\) into $\LZR^{(1)}_{v_0}$
         \EndIf
         \EndFor  
         \For{All nodes in $v_0 \in p_{u_0} \setminus p_{v_{l_0,u_0}}$ }
         \If{$\RC(v_0) \in p_{u_0}$}
         \State Insert \([x^{(1)}_1:x^{(1)}_2]\) into $\LZL^{(1)}_{v_0}$
         \EndIf
         \EndFor  
    \end{algorithmic}
\end{algorithm}

\paragraph{Update \(\Tree\) with \(\fu_t(\rg)\):}
Once the horizontal interval of \(\rg\) is inserted and its vertical interval is lazily or regularly inserted into the appropriate second-level trees, we invoke the \(\Update(\rg)\) query. This step aggregates the associated \(\FP\) of \(\rg\) into all second-level structures where its vertical interval resides, updating the cumulative profile from \(\Fu_{t-1}\) to \(\Fu_t\). This ensures that the drawing algorithm samples actions correctly and that both lazy and regular insertions reflect the contribution of \(\fu_t(\rg)\).

\subsection{Update Query \(\Update(\rg:[x^{(0)}_1:x^{(0)}_2][x^{(1)}_1:x^{(1)}_2])\):} \label{sec:update}
This query updates the data structure by incorporating the \( \FP \) value \( \fu_t(\rg) \) of the inserted region \( \rg \) into the existing structure that currently maintains \( \Fu_{t-1} \), thereby producing \( \Fu_t \). To perform this update, we modify the second-level interval trees \( \Tree^{(1)}_{v_0} \), \( \LZL^{(1)}_{v_0} \), and \( \LZR^{(1)}_{v_0} \) associated with the nodes \( v_0 \in \Tree^{(0)} \), ensuring that the vertical interval \( [x^{(1)}_1:x^{(1)}_2] \) of the region \( \rg \) is incorporated into one of these second-level structures.

This update follows standard \emph{regular} and \emph{lazy weight update} techniques for interval trees (see \cite{cohen2017online}), which we adapt to our vector-valued cumulative rewards as \emph{regular} and \emph{lazy \( \FP \) updates}. The lazy \( \FP \) updates are distinct from lazy insertions. While lazy insertions are used to insert new intervals, lazy \( \FP \) updates are specifically employed to update the cumulative reward vectors in the second-level trees. Although the routine for handling lazy \( \FP \) updates remains consistent across all second-level data structures—whether lazy or regular insertion trees—the updates are applied only to the second-level structures of the affected nodes \( v_0 \in \Tree^{(0)} \) that are impacted by the insertion of \( \rg \).

Although the update process is mostly similar in both lazy and regular insertion data structures, there are minor differences when updating the cumulative reward vectors in each. For simplicity, we describe the process for the regular insertion data structure \( \Tree^{(1)}_{v_0} \); however, we also highlight the differences with the lazy insertion data structures \( \LZL^{(1)}_{v_0} \) and \( \LZR^{(1)}_{v_0} \) as we go through.

\paragraph{Locate a Node \(v_0 \in \Tree\) with Affected \(\Tree^{(1)}_{v_0}\):}
We locate the leaves \(l_0\) and \(u_0\) in \(\Tree^{(0)}\) whose horizontal ranges \(\RA(l_0)\) and \(\RA(u_0)\) have \(x^{(0)}_1\) and \(x^{(0)}_2\) as their left and right endpoints, respectively. Let \(p_{l_0}\) and \(p_{u_0}\) denote their respective paths to the root, and let \(v_{l_0,u_0} = \FCA(l_0, u_0)\) be their first common ancestor, with path \(p_{v_{l_0,u_0}}\) to the root.

\begin{algorithm}[H]
    \begin{algorithmic}[1]
    \caption{Locate the Horizontal interval}
    \label{alg:upd1}
    \Input{Horizontal Interval $[x^{(0)}_1:x^{(0)}_2]$}
         \State $l_0 \gets \Loc(x^{(0)}_1)$ 
         \State $u_0 \gets \Loc(x^{(0)}_2)$ 
         \State Set $v_{l_0,u_0} \gets \FCA(l_0,u_0)$
    \end{algorithmic}
\end{algorithm}

We now update all second-level trees corresponding to nodes along \(p_{l_0}\), \(p_{u_0}\), and \(p_{v_{l_0,u_0}}\) where the vertical interval \([x^{(1)}_1:x^{(1)}_2]\) has been inserted. For each such node \(v_0 \in p_{l_0} \cup p_{u_0}\), we apply the update procedure to its associated \(\Tree^{(1)}_{v_0}\) as follows:

    \paragraph{Locate the Vertical Interval \([x^{(1)}_1:x^{(1)}_2]\) in \(\Tree^{(1)}_{v_0}\):} 
   At this stage, we locate the vertical interval \([x^{(1)}_1:x^{(1)}_2]\) within the relevant second-level tree \(\Tree^{(1)}_{v_0}\). Let \(l_1\) and \(u_1\) be the leaves containing the left and right endpoints, and let \(p_{l_1}\), \(p_{u_1}\), and \(p_{v_{l_1,u_1}}\) denote the paths from these leaves and their first common ancestor \(v_{l_1,u_1} = \FCA(l_1, u_1)\) to the root, respectively.
 \begin{algorithm}[H]
    \begin{algorithmic}[1]
    \caption{Locate the Vertical Interval}
    \label{alg:upd2}
    \Input{Vertical Interval $[x^{(1)}_1:x^{(1)}_2]$, Regular Insertion Tree $\Tree^{(1)}_{v_0}$}
         \State $l_1 \gets \Loc(x^{(1)}_1)$ 
         \State $u_1 \gets \Loc(x^{(1)}_2)$ 
         \State Set $v_{l_1,u_1} \gets \FCA(l_1,u_1)$
    \end{algorithmic}
\end{algorithm}

\paragraph{Regular Reward Function Parameters Update in \(\Tree^{(1)}_{v_0},\) $\LZL^{(1)}_{v_0},$ and $\LZR^{(1)}_{v_0}$:}
We update the cumulative reward vectors \( \vec{\CR}_{v_0,v_1} \) for all \( v_1 \in p_{l_1} \cup p_{u_1} \) using the following update rule:
\begin{align}
    \vec{\CR}_{v_0,v_1} &\leftarrow \vec{\CR}_{v_0,v_1} + \vec{\SC}_{\rg'}, \label{alg:lupdl'}
\end{align}
where \( \rg' = [\RA(v_0) \cap [x^{(0)}_1:x^{(0)}_2]][\RA(v_1) \cap [x^{(1)}_1:x^{(1)}_2]] \), and \( \vec{\SC}_{\rg'} \) is computed by integrating the mapping function \( \MF \) over \( \rg' \) with coefficient vector \( \fu_t(\rg) \), as described in Section~\ref{section:scaling}.

 \begin{algorithm}[H]
    \begin{algorithmic}[1]
    \caption{Regular Reward Function Parameters Update in $\Tree^{(1)}_{v_0}$}
    \label{alg:upd3}
    \Input{$l_1$, $u_1$, and $\Tree^{(1)}_{v_0}$}
        \For{$v_1 \in p_{l_1} \cup p_{u_1}$}
        \State Set \( \rg' = [\RA(v_0) \cap [x^{(0)}_1:x^{(0)}_2]][\RA(v_1) \cap [x^{(1)}_1:x^{(1)}_2]] \)
        \State Set  $\vec{\CR}_{v_0,v_1} \gets \vec{\CR}_{v_0,v_1} + \vec{\SC}_{\rg'}$
        \EndFor
    \end{algorithmic}
\end{algorithm}

In contrast, for the lazy insertion structures \( \LZL^{(1)}_{v_0} \) and \( \LZR^{(1)}_{v_0} \) for the nodes $w_1 \in p_{l_1} \cup p_{u_1} \setminus p_{v_{l_1,u_1}} $, which defer updates to be scaled down during queries, the update is performed as follows:
\begin{align}
    \vec{\CR}_{v_0,w_1} &\leftarrow \vec{\CR}_{v_0,w_1} + \vec{\SC}_{\rg''}, \label{alg:lupdl'1}
\end{align}
where \( \rg'' = [\RA(v_0)][\RA(w_1) \cap [x^{(1)}_1:x^{(1)}_2]] \) spans the full horizontal range of the subtree rooted at \( v_0 \). To simplify the scaling process when the lazy insertion structures are later accessed during query time, we treat the range \( \RA(v_0) \) as fully contained by the horizontal interval \( [x^{(0)}_1:x^{(0)}_2] \). This treatment facilitates the downward scaling of updates into the subtrees, ensuring that the update remains compatible with the information stored in \( \Tree^{(1)}_{v_0} \), without affecting the correctness of the algorithm.

\begin{algorithm}[H]
    \begin{algorithmic}[1]
    \caption{Regular Reward Function Parameters Update in $\LZL^{(1)}_{v_0}$ or $\LZR^{(1)}_{v_0}$ }
    \label{alg:upd4}
    \Input{$l_1$, $u_1$, and $\LZL^{(1)}_{v_0}$ or $\LZR^{(1)}_{v_0}$}
        \For{$w_1 \in p_{l_1} \cup p_{u_1}\setminus p_{v_{l_1,u_1}}$}
        \State Set \( \rg'' = [\RA(v_0)][\RA(w_1) \cap [x^{(1)}_1:x^{(1)}_2]] \)
        \State Set  $\vec{\CR}_{v_0,w_1} \gets \vec{\CR}_{v_0,w_1} + \vec{\SC}_{\rg'}$
        \EndFor
    \end{algorithmic}
\end{algorithm}

\paragraph{Lazy Reward Function Parameters Update in $\Tree^{(1)}_{v_0}$, $\LZL^{(1)}_{v_0}$, and $\LZR^{(1)}_{v_0}$:}
The lazy \( \FP \) update proceeds by traversing the paths \( p_{l_1} \setminus p_{v_{l_1,u_1}} \) and \( p_{u_1} \setminus p_{v_{l_1,u_1}} \) in a bottom-up manner. At each node \( v_1 \) along these paths, a lazy update message, either \( \LZM_{v_1, \RC(v_1)} \) or \( \LZM_{v_1, \LC(v_1)} \), is sent to the child whose vertical interval is strictly contained in \( [x^{(1)}_1:x^{(1)}_2] \). In this context, \( \LZM_{v'_0,v'_1} \) represents the lazy \( \FP \) update message sent by node \( \PAR(v'_1) \) to its child \( v'_1 \) in the second-level \( \IT \) associated with \( v'_0 \in \Tree^{(0)} \). Each lazy update message carries a cumulative \( \FP \) vector that has not yet been propagated. Since these vectors are intended to uniformly influence the entire subtree rooted at the recipient, they are aggregated and integrated during propagation. The final aggregated results are then applied during query-time integration.

We begin with the nodes \( v_1 \) along the path \( p_{l_1} \setminus p_{v_{l_1,u_1}} \). If the left child \( \LC(v_1) \) lies on the path, we send a lazy \( \FP \) update message to the right child \( \RC(v_1) \) as follows:
\begin{align}
    \LZM_{v_0,\RC(v_1)}  &\leftarrow \LZM_{v_0,\RC(v_1)} + \fu_t(\rg). \label{alg:lupd2}
\end{align}

Next, we apply the same update process to the nodes \( v_1 \in p_{u_1} \setminus p_{v_{l_1,u_1}} \), with a slight variation. If the right child \( \RC(v_1) \) also belongs to the path \( p_{u_1} \setminus p_{v_{l_1,u_1}} \), we send a lazy update message to the left child \( \LC(v_1) \), as follows:
\begin{align}
    \LZM_{v_0,\LC(v_1)}  &\leftarrow \LZM_{v_0,\LC(v_1)} + \fu_t(\rg). \label{alg:lupd3}
\end{align}

\begin{algorithm}[H]
    \begin{algorithmic}[1]
    \caption{Lazy Reward Function Parameters Update }
    \label{alg:upd5}
    \Input{$l_1$, $u_1$, $\Tree^{(1)}_{v_0}$ and $\LZL^{(1)}_{v_0}$ or $\LZR^{(1)}_{v_0}$}
        \For{$v_1 \in p_{l_1}\setminus p_{u_{l_1,u_1}}$}
        \If{$\LC(v_1) \in p_{l_1}\setminus p_{u_{l_1,u_1}}$}
        \State $\LZM_{v_0,\RC(v_1)} \gets \LZM_{v_0,\RC(v_1)} + \fu_t(\rg)$
        \EndIf
        \EndFor
        \For{$v_1 \in p_{u_1}\setminus p_{u_{l_1,u_1}}$}
        \If{$\RC(v_1) \in p_{u_1}\setminus p_{u_{l_1,u_1}}$}
        \State $\LZM_{v_0,\LC(v_1)} \gets \LZM_{v_0,\LC(v_1)} + \fu_t(\rg)$
        \EndIf
        \EndFor

    \end{algorithmic}
\end{algorithm}

\paragraph{Lazy Reward Function Update Messages Propagation}
We now describe how lazy \( \FP \) update messages are propagated and cleared in both regular and lazy insertion data structures. These propagations are triggered during insertions, updates, or query operations (e.g., a \( \Draw \) query), typically when locating a leaf. For such operations, the algorithm identifies the relevant leaves \( l_1 \) and \( u_1 \) in \( \Tree^{(1)}_{v_0} \), and applies any pending lazy update messages ( \( \LZM \) ) along the corresponding root-to-leaf paths, \( p_{l_1} \) or \( p_{u_1} \).

The mechanism is uniform across both the regular insertion structure \( \Tree^{(1)}_{v_0} \) and the lazy insertion structures \( \LZL^{(1)}_{v_0} \) and \( \LZR^{(1)}_{v_0} \). In all cases, the algorithm traverses the path top-down and applies the lazy messages just before descending to a child.

For instance, consider the downward traversal of \( p_{l_1} \), starting from \( v_1 = \ROOT(\Tree^{(1)}_{v_0}) \). Before descending to the right child \( \RC(v_1) \), the algorithm updates the corresponding cumulative reward vector as:
\begin{align}
    \vec{\CR}_{v_0,\RC(v_1)} \leftarrow \vec{\CR}_{v_0,\RC(v_1)} + \vec{\SC}_{\rg'}, \label{eq:lz1}
\end{align}
where \( \rg' :[\RA(v_0)][\RA(\RC(v_1))] \) in both the regular and lazy insertion structures. The coefficient vector used to compute \( \vec{\SC}_{\rg'} \) is \( \LZM_{v_0,\RC(v_1)} \). This guarantees that deferred updates are consistently applied and that contributions remain correctly scaled.

Then, as the algorithm proceeds to the child \( \RC(v_1) \), it pushes the lazy message further down to both its children. Let \( v'_1 = \LC(\RC(v_1)) \) and \( v''_1 = \RC(\RC(v_1)) \). The propagation is done by:
\begin{align}
    \LZM_{v_0,v'_1} &\leftarrow \LZM_{v_0,v'_1} + \LZM_{v_0,\RC(v_1)}, \label{eq:lz2} \\
    \LZM_{v_0,v''_1} &\leftarrow \LZM_{v_0,v''_1} + \LZM_{v_0,\RC(v_1)}, \label{eq:lz3}
\end{align}
and the original message is then cleared:
\[
    \LZM_{v_0,\RC(v_1)} \leftarrow \vec{0}.
\]

The same procedure applies symmetrically if the traversal goes through \( \LC(v_1) \). This unified handling across both regular and lazy data structures ensures that all lazy updates are correctly propagated and scaled, maintaining the correctness and efficiency of cumulative reward computations.

\begin{algorithm}[H]
    \begin{algorithmic}[1]
    \caption{Lazy Reward Function Parameters Update Message Propagation}
    \label{alg:upd6}
    \Input{$l_1$, $p_{l_1} =\{l_1,\PAR(l_1),\PAR(\PAR(l_1)),\cdots,\ROOT(\Tree^{(1)}_{v_0})\}$, $\Tree^{(1)}_{v_0}$($\LZL^{(1)}_{v_0}$ or $\LZR^{(1)}_{v_0}$)}
    \State Traverse $p_{l_1}$ downwards by setting $v_1 \gets \ROOT(\Tree^{(1)}_{v_0})$
     \State $v'_1 \gets \RC(v_1)]$
      \State $v''_1 \gets \LC(v_1)]$
        \While{$v_1$ is not a leaf}
        \If{$\RC(v_1) \in p_{l_1}$}
        \State $\rg' \gets [\RA(v_0)][\RA(\RC(v_1))]$
        \State $v'_1 \gets \RC(\RC(v_1))]$
        \State $v''_1 \gets \LC(\RC(v_1))]$
        \State $ \vec{\CR}_{v_0,\RC(v_1)} \gets \vec{\CR}_{v_0,\RC(v_1)} + \vec{\SC}_{\rg'}$
        \State $\LZM_{v_0,v'_1} \gets \LZM_{v_0,v'_1} +$ $\LZM_{v_0,\RC(v_1)}$
          \State $\LZM_{v_0,v''_1} \gets \LZM_{v_0,v''_1} +$ $\LZM_{v_0,\RC(v_1)}$
        \State $\LZM_{v_0,\RC(v_1)} \leftarrow \vec{0}$

        \Else
        \State $\rg' \gets [\RA(v_0)][\RA(\LC(v_1))]$
        \State $v'_1 \gets \RC(\LC(v_1))]$
        \State $v''_1 \gets \LC(\LC(v_1))]$
        \State $ \vec{\CR}_{v_0,\LC(v_1)} \gets \vec{\CR}_{v_0,\LC(v_1)} + \vec{\SC}_{\rg'}$
        \State $\LZM_{v_0,v'_1} \gets \LZM_{v_0,v'_1} +$ $\LZM_{v_0,\LC(v_1)}$
        \State $\LZM_{v_0,v''_1} \gets \LZM_{v_0,v''_1} +$ $\LZM_{v_0,\LC(v_1)}$
        \State $\LZM_{v_0,\LC(v_1)} \leftarrow \vec{0}$
        \EndIf
        \EndWhile

    \end{algorithmic}
\end{algorithm}

\subsection{Cumulative Reward Retrieval Query $\Wei(\rg:[x^{(0)}_1:x^{(0)}_2][x^{(1)}_1:x^{(1)}_2])$} \label{Sec:Weigeneral}
This query returns the cumulative rewards associated with a given rectangular region \( \rg \) by aggregating all relevant vectors \( \vec{\CR}_{v_0,v_1} \) stored across the data structure and map it to the real valued cumulative reward. Specifically, it gathers contributions from nodes \( v_0 \in \Tree^{(0)} \) and nodes \( v_1 \) in the corresponding second-level structures—\( \Tree^{(1)}_{v_0} \), \( \LZL^{(1)}_{v_0} \), or \( \LZR^{(1)}_{v_0} \)—such that:
\begin{itemize}
    \item the horizontal range \( \RA(v_0) \) intersects with or fully contains the horizontal interval \( [x^{(0)}_1:x^{(0)}_2] \), and
    \item the vertical range \( \RA(v_1) \) intersects with or fully contains the vertical interval \( [x^{(1)}_1:x^{(1)}_2] \).
\end{itemize}

In our multi-stage drawing algorithm (described in Section \ref{section:draw}), we work with locating atomic intervals along each coordinate in a sequential manner, following the hierarchy of \( \Tree \). To clarify the notation, we define atomic intervals as those that are indivisible along their respective coordinates. Specifically, we use primes in the intervals \( [x'^{(0)}_1:x'^{(0)}_2] \) and \( [x'^{(1)}_1:x'^{(1)}_2] \) to indicate that these intervals are atomic with respect to the corresponding coordinate. Here, \( [x'^{(0)}_1:x'^{(0)}_2] \) represents an atomic horizontal interval in the \( x^{(0)} \)-coordinate, and \( [x'^{(1)}_1:x'^{(1)}_2] \) represents an atomic vertical interval in the \( x^{(1)} \)-coordinate.

Based on this structure, we divide the cumulative reward retrieval query into two subroutines. The algorithm first locates an atomic horizontal interval \( [x'^{(0)}_1:x'^{(0)}_2] \) by querying regions of the form \( \rg' : [x^{(0)}_1:x^{(0)}_2][0:1] \), handled by the subroutine \( \Wei^{(0)}(\rg') \). Once the horizontal atomic interval is identified, the algorithm then locates the atomic vertical interval \( [x'^{(1)}_1:x'^{(1)}_2] \) by querying over \( \rg'': [x'^{(0)}_1:x'^{(0)}_2][x^{(1)}_1:x^{(1)}_2] \), using the subroutine \( \Wei^{(1)}(\rg'') \).

We therefore distinguish between the following two classes of cumulative reward queries:
\begin{itemize}
    \item \(\Wei^{(0)}(\rg')\): queries of the form \([x^{(0)}_1:x^{(0)}_2][0:1]\), which return cumulative rewards over general horizontal intervals and the full vertical range.
    \item \(\Wei^{(1)}(\rg'')\): queries of the form \([x'^{(0)}_1:x'^{(0)}_2][x^{(1)}_1:x^{(1)}_2]\), where the horizontal interval is atomic and the vertical interval may be arbitrary.
\end{itemize}

Here, \( \Wei^{(i)}(\rg) \) retrieves the cumulative reward for a region \( \rg \) where the first \( i-1 \) coordinate intervals are atomic and fixed by previous stages, while the \( i \)-th coordinate spans a general interval defined by the endpoints inserted by the adversary. The remaining coordinates correspond to open subspaces, represented by the interval \( [0:1] \).

This separation is crucial: for instance, in the two-dimensional case, all horizontal intervals are stored in \( \Tree^{(0)} \), while vertical intervals are distributed across second-level data structures attached to individual nodes in \( \Tree^{(0)} \). As a result, once the horizontal interval is fixed, retrieving cumulative rewards for vertical atomic intervals must be handled differently and locally across the second-level structures.

Handling both types of queries involves first identifying the set of second-level data structures \( \mathcal{S}^{(1)} \) in \( \Tree \) whose stored intervals correspond to all the regions that either intersect with or fully contain the query region \( \rg \). We then retrieve the relevant portions of the cumulative rewards from these structures and aggregate them to compute the total cumulative reward \( \CR(\rg) \). A crucial requirement for the correctness of this aggregation is that the data structures in \( \mathcal{S}^{(1)} \) store disjoint information with respect to the \( \FP \) of any region intersecting \( \rg \). In other words, if the vertical intervals stored in these data structures do not share any common endpoints, there will be no region that intersects \( \rg \) and is counted twice. This disjointness ensures that there is no overlap or double-counting when summing the contributions across different subtrees, which follows directly from the recursive and non-overlapping design of the lazy and regular structures in \( \Tree \).

\subsubsection{\(\Wei^{(0)}(\rg:[x^{(0)}_1:x^{(0)}_2][0:1])\)} \label{Sec:Wei0}

To process this query, we first identify the leaves \(l_0\) and \(u_0\) in \(\Tree^{(0)}\) that contain the endpoints of the horizontal interval \([x^{(0)}_1:x^{(0)}_2]\), and compute their first common ancestor \(v_{l_0,u_0}\). Let \(p_{l_0}\), \(p_{u_0}\), and \(p_{v_{l_0,u_0}}\) denote the paths from \(l_0\), \(u_0\), and \(v_{l_0,u_0}\) to the root of \(\Tree^{(0)}\), respectively.

Since \(\RA(v_{l_0,u_0})\) contains \([x^{(0)}_1:x^{(0)}_2]\), we distinguish two cases: either \(\RA(v_{l_0,u_0}) = [x^{(0)}_1:x^{(0)}_2]\), or \(\RA(v_{l_0,u_0})\) strictly contains the horizontal interval \([x^{(0)}_1:x^{(0)}_2]\).

\begin{enumerate}
    \item \textbf{Case \(\RA(v_{l_0,u_0}) = [x^{(0)}_1:x^{(0)}_2]\):} 
    
    In this case, we aim to retrieve and return the cumulative reward of the region \( \rg \) as \( \CR([\RA(v_{l_0,u_0})][0,1]) \). This value consists of two contributing components:

\begin{itemize}
    \item The first component is the portion of the cumulative rewards from regions whose horizontal intervals intersect, but do not fully contain, the range \( \RA(v_{l_0,u_0}) \) maintained at \( v_{l_0,u_0} \). These regions contribute to the cumulative rewards of the region \( \rg \) due to their intersection. By definition, the aggregated cumulative \( \FP \) vectors of such regions over the region \( \rg : [\RA(v_{l_0,u_0})][0,1] \) can be directly retrieved from the root of the regular insertion tree \( \Tree^{(1)}_{v_{l_0,u_0}} \), which aggregates parts of the cumulative rewards of these regions.

    \item The second component is the portion of the cumulative rewards from regions whose horizontal intervals fully contain \( \RA(v_{l_0,u_0}) \). These regions are stored in the lazy insertion trees \( \LZL^{(1)}_{v_0} \) and \( \LZR^{(1)}_{v_0} \) for all nodes \( v_0 \) along the path \( p_{v_{l_0,u_0}} \), i.e., ancestors of \( v_{l_0,u_0} \). If \( v_{l_0,u_0} \) is in the left (resp.\ right) subtree of \( v_0 \), we retrieve the relevant data from \( \LZL^{(1)}_{v_0} \) (resp.\ \( \LZR^{(1)}_{v_0} \)) and apply the appropriate scaling.

\end{itemize}

We then collect all the second-level data structures into a set \( \mathcal{S}^{(1)} \), as previously mentioned, and query each structure to retrieve the corresponding part of the cumulative reward for the region \( \rg \). To formalize this process, we define the set \( \mathcal{S}^{(1)}_{v_{l_0,u_0}} \) to include all second-level structures relevant for computing the cumulative reward \( \CR(\rg) \). Initially, this set contains the regular insertion tree associated with $v_{l_0,u_0}$:

\[
\mathcal{S}^{(1)}_{v_{l_0,u_0}} \gets \left\{ \Tree^{(1)}_{v_{l_0,u_0}} \right\}.
\]
We then traverse the path \(v_0 \in p_{v_{l_0,u_0}}\) and augment this set as follows:
\begin{align*}
    \mathcal{S}^{(1)}_{v_{l_0,u_0}} &\gets \mathcal{S}^{(1)}_{v_{l_0,u_0}} \cup \LZL^{(1)}_{v_0} \quad \text{if } \LC(v_0) \in p_{v_{l_0,u_0}},\\
    \mathcal{S}^{(1)}_{v_{l_0,u_0}} &\gets \mathcal{S}^{(1)}_{v_{l_0,u_0}} \cup \LZR^{(1)}_{v_0} \quad \text{if } \RC(v_0) \in p_{v_{l_0,u_0}}.
\end{align*}
\begin{algorithm}[H]
    \begin{algorithmic}[1]
    \caption{ $\texttt{Construct}( \mathcal{S}^{(1)}_{v_{l_0,u_0}})$ }
    \label{alg:crr1}
    \Input{Region $\rg:[x^{(0)}_1:x^{(0)}_2][0:1])$}
        \State $l_0 \gets \Loc(x^{(0)}_1)$
        \State $u_0 \gets \Loc(x^{(0)}_2)$
        \State $v_{l_0,u_0} \gets \FCA(l_0,u_0)$
         \State $\mathcal{S}^{(1)}_{v_{l_0,u_0}} \gets \{\Tree^{(1)}_{v_{l_0,u_0}}\} $
         \For{$v_0 \in p_{v_{l_0,u_0}} $}
         \If{$\LC(v_0) \in  p_{v_{l_0,u_0}}$}
         \State $\mathcal{S}^{(1)}_{v_{l_0,u_0}}\gets \mathcal{S}^{(1)}_{v_{l_0,u_0}} \cup \LZL^{(1)}_{v_0}$
         \Else
         \State $\mathcal{S}^{(1)}_{v_{l_0,u_0}}\gets \mathcal{S}^{(1)}_{v_{l_0,u_0}} \cup \LZR^{(1)}_{v_0}$
         \EndIf
         \EndFor
    \end{algorithmic}
\end{algorithm}
We index the second-level data structures in the set \( \mathcal{S}^{(1)}_{v_{l_0,u_0}} \) by \( S^{(1)}_{v_0} \), where each structure corresponds to the second-level data associated with the node \( v_0 \in p_{v_{l_0,u_0}} \) and is inserted into \( \mathcal{S}^{(1)}_{v_{l_0,u_0}} \). In particular, the only regular insertion second-level tree in this set, \( \Tree^{(1)}_{v_{l_0,u_0}} \), is responsible for regions whose horizontal intervals intersect with, but do not fully contain, the horizontal interval \( \RA(v_{l_0,u_0}) \) of the region \( \rg \). The remaining second-level data structures in \( \mathcal{S}^{(1)}_{v_{l_0,u_0}} \) are lazy insertion trees associated with ancestor nodes \( v_0 \) for which \( \RA(v_0) \) fully contains \( \RA(v_{l_0,u_0}) \). In summary, \( S^{(1)}_{v_{l_0,u_0}} = \Tree^{(1)}_{v_{l_0,u_0}} \) and the rest are lazy structures. Thus, the cumulative reward for \( \rg \) is given by:

\begin{align}
    \Wei^{(0)}(\rg) = \underbrace{\CR_{S^{(1)}_{v_{l_0,u_0}}}(\rg)}_{\text{cumulative rewards of the regular insertion tree}} + \underbrace{\sum_{S^{(1)}_{v_0}\in \mathcal{S}^{(1)}_{v_{l_0,u_0}}\setminus S^{(1)}_{v_{l_0,u_0}}} \CR_{S^{(1)}_{v_0}}(\rg)}_{\text{cumulative rewards of the lazy insertion trees}}, \label{crew:detach}
\end{align}

where \( \CR_{S^{(1)}_{v_0}}(\rg) \) represents the cumulative reward of the region \( \rg \) with respect to the regions whose intervals are stored in the data structure \( S^{(1)}_{v_0} \).

For each term of the equation \eqref{crew:detach}, we retrieve the cumulative reward vectors from each data structure, apply scaling where needed, and aggregate the results to return \( \CR(\rg:[\RA(v_{l_0,u_0})][0:1]) \). For the regular insertion tree \( S^{(1)}_{v_{l_0,u_0}} \) in \( \mathcal{S}^{(1)}_{v_{l_0,u_0}} \), we retrieve the vectors without scaling, as they are computed by integrating over regions with the appropriate range on the first coordinate. For the lazy insertion trees in \( \mathcal{S}^{(1)}_{v_{l_0,u_0}} \), we scale each cumulative reward vector to the region \( \rg = [\RA(v_{l_0,u_0})][0:1] \), which is a subregion of \( [\RA(v_0)][0:1] \).

To compute the first term of equation \ref{crew:detach}, which is directly accessible from the cumulative reward vector stored at the root of the regular insertion tree \( S^{(1)}_{v_{l_0,u_0}} \), we have:
\begin{align}
    \CR_{S^{(1)}_{v_{l_0,u_0}}}(\rg) = \sum_{j \in [A]} \vec{\CR}^{(j)}_{v_{l_0,u_0},\ROOT(S^{(1)}_{v_0})},\label{crew:detach1} 
\end{align}
where \( A \) is the number of basis monomials in the mapping function \( \MF \), and \( \alpha^j_i \) is the exponent of the \( i \)-th coordinate variable in the \( j \)-th monomial. The term \( \vec{\CR}^{(j)}_{v_{l_0,u_0},\ROOT(S^{(1)}_{v_0})} \) represents the \( j \)-th entry of the cumulative reward vector. These basis terms are fixed and detailed in Section~\ref{section:scaling}.

For the lazy data structures appearing in the second term of equation \eqref{crew:detach}, we follow the same procedure as for the regular insertion trees. However, as previously mentioned, the scaling procedure is particularly important for the cumulative reward vectors at the roots of these data structures, such as \( \vec{\CR}_{v_0,\ROOT(S^{(1)}_{v_0})} \), since these vectors are computed over regions with a horizontal interval equal to \( \RA(v_0) \). We iterate over each data structure \( S^{(1)}_{v_0} \in \mathcal{S}^{(1)}_{v_{l_0,u_0}} \setminus S^{(1)}_{v_{l_0,u_0}} \), extract the cumulative reward vector at the root \( \vec{\CR}_{v_0,\ROOT(S^{(1)}_{v_0})} \), and scale it to \( \rg \). Let \( \RA(v_0) = [a^{(0)}_1:a^{(0)}_2] \), \( \RA(\ROOT(S^{(1)}_{v_0})) = [a^{(1)}_1 = 0 : a^{(1)}_2 = 1] \), and \( \RA(v_{l_0,u_0}) = [x^{(0)}_1:x^{(0)}_2] \), with \( [x^{(1)}_1 = 0 : x^{(1)}_2 = 1] \). We then apply the scaling rule from Equation~\eqref{eq:scalingmain}, yielding:

{\small\begin{align}
   \footnotesize{
   \sum_{S^{(1)}_{v_0} \in \mathcal{S}^{(1)}_{v_{l_0,u_0}} \setminus S^{(1)}_{v_{l_0,u_0}}} \CR_{S^{(1)}_{v_0}}(\rg) = \sum_{S^{(1)}_{v_0} \in \mathcal{S}^{(1)}_{v_{l_0,u_0}} \setminus S^{(1)}_{v_{l_0,u_0}}} \sum_{j \in [A]} \vec{\CR}^{(j)}_{v_0, \ROOT(S^{(1)}_{v_0})} \cdot \prod_{i=0}^{1} \frac{(x^{(i)}_2)^{\alpha^j_i + 1} - (x^{(i)}_1)^{\alpha^j_i + 1}}{(a^{(i)}_2)^{\alpha^j_i + 1} - (a^{(i)}_1)^{\alpha^j_i + 1}},}\label{crew:detach2}   
\end{align}}
where \( A \) is the number of basis monomials in the mapping function \( \MF \), and \( \alpha^j_i \) is the exponent of the \( i \)-th coordinate variable in the \( j \)-th monomial. The term \( \vec{\CR}^{(j)}_{v_0,\ROOT(S^{(1)}_{v_0})} \) represents the \( j \)-th entry of the cumulative reward vector. These basis terms are fixed and detailed in Section~\ref{section:scaling}.

Finally we would have,
\begin{align}
    \Wei^{(0)}(\rg) = \text{\eqref{crew:detach1}} + \text{\eqref{crew:detach2}}.
\end{align}

\item \textbf{Case \([x^{(0)}_1:x^{(0)}_2] \subset\RA(v_{l_0,u_0})\):}

For the second case, where \( [x^{(0)}_1:x^{(0)}_2] \subset \RA(v_{l_0,u_0}) \), we begin by computing \( \CR([\RA(v_{l_0,u_0})][0:1]) \) using the method described for the exact-match case via the query \( \Wei^{(0)}([\RA(v_{l_0,u_0})][0:1]) \). We then refine this result by subtracting contributions from subregions outside \( [x^{(0)}_1:x^{(0)}_2] \) but still within \( \RA(v_{l_0,u_0}) \).

To do this, we traverse the paths \( p_{l_0} \setminus p_{v_{l_0,u_0}} \) and \( p_{u_0} \setminus p_{v_{l_0,u_0}} \) in a bottom-up manner and prune subtrees that fall entirely outside the query interval. Specifically, for each \( v'_0 \in p_{l_0} \setminus p_{v_{l_0,u_0}} \), if \( \LC(v'_0) \) is not on the path, we subtract \( \CR([\RA(\LC(v'_0))][0:1]) \). Similarly, for each \( v''_0 \in p_{u_0} \setminus p_{v_{l_0,u_0}} \), if \( \RC(v''_0) \) is not on the path, we subtract \( \CR([\RA(\RC(v''_0))][0:1]) \). These values are computed recursively using \( \Wei^{(0)}(\cdot) \), ensuring modularity and correctness.

This pruning ensures that the final result corresponds exactly to the cumulative reward over \( [x^{(0)}_1:x^{(0)}_2][0:1] \), excluding any contributions from outside this interval.

\begin{algorithm}[H]
    \begin{algorithmic}[1]
    \caption{$\Wei^{(0)}(\rg)$}
    \label{alg:crr2}
    \Input{Region $\rg:[x^{(0)}_1:x^{(0)}_2][0:1])$, $l_0$, $u_0$, $v_{l_0,u_0}$}
        
        \If{$\RA(v_{l_0,u_0}) = [x^{(0)}_1:x^{(0)}_2] $} 
        \State  $ \CR_{S^{(1)}_{v_{l_0,u_0}}}(\rg) \gets \sum_{j \in [A]} \vec{\CR}^{(j)}_{v_{l_0,u_0},\ROOT(S^{(1)}_{v_0})}$
        \State $\CR_1 \gets 0$
        \For{$S^{(1)}_{v_0}\in \mathcal{S}^{(1)}_{v_{l_0,u_0}}\setminus S^{(1)}_{v_{l_0,u_0}}$}
        \State $[a^{(0)}_1:a^{(0)}_2] \gets \RA(v_0)$
        \State $[a^{(1)}_1: a^{(1)}_2 ]\gets [0:1]$
        \State $[x^{(1)}_1: x^{(1)}_2 ]\gets [0:1]$
        \State $\CR_1 \gets \CR_1 +\sum_{j \in [A]} \vec{\CR}^{(j)}_{v_0, \ROOT(S^{(1)}_{v_0})} \cdot \prod_{i=0}^{1} \frac{(x^{(i)}_2)^{\alpha^j_i + 1} - (x^{(i)}_1)^{\alpha^j_i + 1}}{(a^{(i)}_2)^{\alpha^j_i + 1} - (a^{(i)}_1)^{\alpha^j_i + 1}}$
        \EndFor
        \State Return $\CR_{S^{(1)}_{v_{l_0,u_0}}}(\rg) + \CR_1$
        \Else
        \State $\CR(\rg) \gets \Wei([\RA(v_{l_0,u_0})][0:1])$
        \For{$v_0 \in p_{l_0} \setminus p_{v_{l_0,u_0}}$}
        \If{$\RC(v_0) \in p_{l_0} \setminus p_{v_{l_0,u_0}}$}
            \State $\CR(\rg)\gets \CR(\rg) -\Wei^{(0)}([\RA(\LC(v_0))][0:1])$
        \EndIf
        \EndFor
        \For{$v_0 \in p_{u_0} \setminus p_{v_{l_0,u_0}}$}
        \If{$\LC(v_0) \in p_{u_0} \setminus p_{v_{l_0,u_0}}$}
            \State $\CR(\rg)\gets \CR(\rg) -\Wei^{(0)}([\RA(\RC(v_0))][0:1])$
        \EndIf
        \EndFor
        \State Return $\CR(\rg)$
        \EndIf
    \end{algorithmic}
\end{algorithm}

\end{enumerate}

\subsubsection{\(\Wei^{(1)}\left(\rg:[x'^{(0)}_1:x'^{(0)}_2][x^{(1)}_1:x^{(1)}_2]\right)\)} \label{sec:Wei1}
Given the definition of \( \Wei^{(1)} \), assume the horizontal interval \( [x'^{(0)}_1:x'^{(0)}_2] \) is atomic and corresponds to a leaf \( l_0 \in \Tree^{(0)} \), i.e., \( \RA(l_0) = [x'^{(0)}_1:x'^{(0)}_2] \). Let \( p_{l_0} \) denote the path from \( l_0 \) to the root. To compute \( \Wei^{(1)}(\rg) \), we collect all second-level data structures that affect \( \RA(l_0) \) into the set \( \mathcal{S}^{(1)}_{l_0} \), following the same method as for \( \Wei^{(0)} \). We similarly would have all data structures in \( \mathcal{S}^{(1)}_{l_0} \) to be lazy insertion data structures except \( S^{(1)}_{l_0} \), which is \( \Tree^{(1)}_{l_0} \). Thus, similar to equation \eqref{crew:detach}, we can write:

\begin{align}
    \Wei^{(1)}(\rg) = \underbrace{\CR_{S^{(1)}_{l_0}}(\rg)}_{\text{cumulative rewards of the regular insertion tree}} + \underbrace{\sum_{S^{(1)}_{v_0} \in \mathcal{S}^{(1)}_{l_0} \setminus S^{(1)}_{l_0}} \CR_{S^{(1)}_{v_0}}(\rg)}_{\text{cumulative rewards of the lazy insertion trees}}, \label{crew:detachplus}
\end{align}

For the first term of equation \eqref{crew:detachplus}, we retrieve \( \CR_{S^{(1)}_{l_0}}(\rg) \) from the data structure \( S^{(1)}_{l_0} \) without scaling. Then, we traverse the remaining lazy insertion data structures in \( \mathcal{S}^{(1)}_{l_0} \), retrieve the cumulative reward vectors for the region \( \rg':[\RA(v_0)][x^{(1)}_1:x^{(1)}_2] \) in each \( S^{(1)}_{v_0} \in \mathcal{S}^{(1)}_{l_0} \), and scale them to \( \rg \). Although the cumulative rewards are retrieved for regions with similar vertical intervals in each \( S^{(1)}_{v_0} \), the disjointness of the data structures in \( \mathcal{S}^{(1)}_{l_0} \) prevents double-counting, ensuring accurate aggregation.

The main distinction from the previous subclass of queries is that the cumulative reward vectors specifying the vertical interval \( [x^{(1)}_1:x^{(1)}_2] \) are not necessarily stored at the root of the data structures in \( \mathcal{S}^{(1)}_{l_0} \). Additionally, the way vertical intervals are maintained in the second-level data structures introduces another challenge. Specifically, the vertical interval endpoints are not stored within a single \( \IT \); rather, they are distributed across different data structures associated with various nodes in \( \Tree^{(0)} \). To clarify, when handling queries of the form \( \Wei^{(0)}(\cdot) \), we assume the vertical interval of the queried region is \( [0:1] \), which is guaranteed to be stored at the root of each data structure. However, in this subclass of queries, it is not guaranteed that there exists a leaf whose range exactly matches \( [x^{(1)}_1:x^{(1)}_2] \). As a result, when locating the endpoints of \( [x^{(1)}_1:x^{(1)}_2] \) in each data structure, we may end up at a leaf where neither endpoint corresponds to \( x^{(1)}_1 \) or \( x^{(1)}_2 \). We detail the process in the following section.

We first identify the leaves \( l_1 \) and \( u_1 \) in \( S^{(1)}_{v_0} \), where \( v_0 \in p_{l_0} \), that contain \( x^{(1)}_1 \) and \( x^{(1)}_2 \), respectively. Let \( \RA(l_1) = [a_1^{(1)}:a_2^{(1)}] \) and \( \RA(u_1) = [b_1^{(1)}:b_2^{(1)}] \), and define their first common ancestor \( v_{l_1,u_1} \) such that \( [a_1^{(1)}:b_2^{(1)}] \subseteq \RA(v_{l_1,u_1}) \).

For a given data structure \( S^{(1)}_{v_0} \), which can be either a regular or lazy insertion tree, if both endpoints of the vertical interval \( [x^{(1)}_1:x^{(1)}_2] \) are stored in \( S^{(1)}_{v_0} \), then \( a^{(1)}_1 = x^{(1)}_1 \) and \( b^{(1)}_2 = x^{(1)}_2 \). In this case, for both the first and second terms of equation \eqref{crew:detachplus}, to compute \( \CR_{S^{(1)}_{l_0}}(\rg) \), we first retrieve the cumulative reward for \( \rg' = [\RA(v_0)][a^{(1)}_1:b^{(1)}_2] \) in the same way as the \( \Wei^{(0)} \) case. This involves retrieving the cumulative reward over the larger region \( \rg'' = [\RA(v_0)][\RA(v_{l_1,u_1})] \), pruning the excess contributions outside \( [\RA(v_0)][a^{(1)}_1:b^{(1)}_2] \), and scaling the result over \( \RA(l_0) = [x'^{(0)}_1:x'^{(0)}_2] \) to match the target region \( [\RA(l_0)][a^{(1)}_1:b^{(1)}_2] \). 

As noted earlier, unlike \( \Tree^{(0)} \), \( S^{(1)}_{v_0} \) does not store all the endpoints of the vertical intervals. Therefore, pruning the unnecessary portions of \( \rg'' = [\RA(v_0)][\RA(v_{l_1,u_1})] \) requires handling partial coverage at the boundaries, specifically at \( \RA(l_1) \) and \( \RA(u_1) \). If \( x^{(1)}_1 \) is not stored in \( S^{(1)}_{v_0} \), then \( a^{(1)}_1 < x^{(1)}_1 < a^{(1)}_2 \), and similarly, if \( x^{(1)}_2 \) is missing from \( S^{(1)}_{v_0} \), then \( b^{(1)}_1 < x^{(1)}_2 < b^{(1)}_2 \). In these cases, we exclude \( \CR_{S^{(1)}_{v_0}}([\RA(v_0)][a^{(1)}_1:x^{(1)}_1]) \) and \( \CR_{S^{(1)}_{v_0}}([\RA(v_0)][x^{(1)}_2:b^{(1)}_2]) \) from \( \CR_{S^{(1)}_{v_0}}(\rg') \). While handling boundaries for both lazy and regular data structures is largely similar, a scaling step is necessary when handling boundaries in lazy data structures, i.e., in \( \mathcal{S}^{(1)}_{l_0} \setminus S^{(1)}_{l_0} \). We detail both cases below.

\begin{enumerate}
    \item \textbf{Compute $\CR_{S^{(1)}_{l_0}}(\rg)$:}
    
    For the regular insertion data structure in the first term of equation \eqref{crew:detachplus}, the cumulative reward retrieval for the region \( \rg \), denoted \( \CR_{S^{(1)}_{l_0}}(\rg) \), is computed as follows:

\begin{align}
    \CR_{S^{(1)}_{l_0}}(\rg) = \CR_{S^{(1)}_{l_0}}(\rg') - u_{l_1} - u_{u_1}, \label{crew1:reg}
\end{align}
where \( \rg' = [\RA(l_0)][a^{(1)}_1:b^{(1)}_2] \) is the region from which the cumulative reward is computed. Specifically, \( u_{l_1} = \CR_{S^{(1)}_{l_0}}([\RA(l_0)][a^{(1)}_1:x^{(1)}_1]) \) and \( u_{u_1} = \CR_{S^{(1)}_{l_0}}([\RA(l_0)][x^{(1)}_2:b^{(1)}_2]) \) represent the cumulative rewards of the excess contributions from the leaves \( l_1 \) and \( u_1 \), which must be excluded.

To compute \( u_{l_1} \) and \( u_{u_1} \), we exploit the fact that \( l_0 \), \( l_1 \), and \( u_1 \) are all leaves, and the regions \( [\RA(l_0)][\RA(l_1)] \) and \( [\RA(l_0)][\RA(u_1)] \) are atomic regions. Thus, the cumulative \( \FP \) vectors \( \vec{\CR}_{l_0,l_1} \) and \( \vec{\CR}_{l_0,u_1} \) can be scaled over the second coordinate to accurately compute  \( u_{l_1} = \CR_{S^{(1)}_{l_0}}([\RA(l_0)][a^{(1)}_1:x^{(1)}_1]) \) and \( u_{u_1} = \CR_{S^{(1)}_{l_0}}([\RA(l_0)][x^{(1)}_2:b^{(1)}_2]) \), as follows:

We compute the excess contribution from the lower boundary by scaling the vector \( \vec{\CR}_{l_0,l_1} \) to the region \( [\RA(l_0)][a^{(1)}_1:x^{(1)}_1] \). The resulting contribution is given by:

\begin{align}
    u_{l_1} = \sum_{j \in [A]} \vec{\CR}^{(j)}_{l_0,l_1} \cdot \frac{(x^{(1)}_1)^{\alpha^j_1 + 1} - (a^{(1)}_1)^{\alpha^j_1 + 1}}{(a^{(1)}_2)^{\alpha^j_1 + 1} - (a^{(1)}_1)^{\alpha^j_1 + 1}}, \label{crew1:reg1}
\end{align}

where no scaling is applied to the first coordinate because \( l_0 \) is a leaf, and its regular insertion tree \( S^{(1)}_{l_0} = \Tree^{(1)}_{l_0} \) contains the \( \FP \) values of regions whose horizontal intervals contain \( \RA(l_0) \). Therefore, the integration within this tree is performed over the entire \( \RA(l_0) \), and no scaling is necessary for that coordinate.

Similarly, the contribution from the upper boundary is computed by scaling the vector \( \vec{\CR}_{v_0,u_1} \) to the region \( [\RA(l_0)][x^{(1)}_2:b^{(1)}_2] \), as:

\begin{align}
    u_{u_1} = \sum_{j \in [A]} \vec{\CR}^{(j)}_{l_0,u_1} \cdot \frac{(b^{(1)}_2)^{\alpha^j_1 + 1} - (x^{(1)}_2)^{\alpha^j_1 + 1}}{(b^{(1)}_2)^{\alpha^j_1 + 1} - (b^{(1)}_1)^{\alpha^j_1 + 1}}. \label{crew1:reg2}
\end{align}

According to \eqref{crew1:reg}, subtracting \(u_{l_1} + u_{u_1}\) from the full-interval contribution \(\CR_{S^{(1)}_{l_0}}([\RA(l_0)][a^{(1)}_1:b^{(1)}_2])\), and applying the vertical scaling step, yields the desired cumulative reward \(\CR_{S^{(1)}_{l_0}}\) for the region \([x'^{(0)}_1:x'^{(0)}_2][x^{(1)}_1:x^{(1)}_2]\).

\item \textbf{Computing $\sum_{S^{(1)}_{v_0} \in \mathcal{S}^{(1)}_{l_0} \setminus S^{(1)}_{l_0}} \CR_{S^{(1)}_{v_0}}(\rg)$:}

To retrieve the cumulative reward for the region \( \rg \) in each lazy insertion data structure \( S^{(1)}_{v_0} \in \mathcal{S}^{(1)}_{l_0} \), we apply the same procedure as for the regular data structure \( S^{(1)}_{l_0} \). However, there is a slight difference in the scaling step because the horizontal range of the nodes \( \RA(v_0) = [x''^{(0)}_1:x''^{(0)}_2] \) does not match \( \RA(l_0) \), but instead fully contains it. Therefore, we define the vector representation of the equation \eqref{crew1:reg} and compute each term using the cumulative reward vectors in \( S^{(1)}_{v_0} \). After handling the boundaries, we scale the entire cumulative reward vector over the horizontal range \( \RA(l_0) \). Thus, we write:

\begin{align}
    \vec{\CR}_{S^{(1)}_{v_0}}(\rg) = \vec{\CR}_{S^{(1)}_{v_0}}(\rg') - \vec{u}_{l_1} - \vec{u}_{u_1}, \label{crew1:laz}
\end{align}
where \( \rg' = [\RA(v_0)][a^{(1)}_1:b^{(1)}_2] \). Specifically, \( \vec{u}_{l_1} = \vec{\CR}_{S^{(1)}_{v_0}}([\RA(v_0)][a^{(1)}_1:x^{(1)}_1]) \) and \( \vec{u}_{u_1} = \vec{\CR}_{S^{(1)}_{v_0}}([\RA(v_0)][x^{(1)}_2:b^{(1)}_2]) \) represent the cumulative reward vectors of the excess contributions from the leaves \( l_1 \) and \( u_1 \), which must be excluded.

After retrieving each vector in equation \eqref{crew1:laz}, we scale each vector separately to the target region \( \rg \). The reason for scaling the vectors \( \vec{\CR}_{S^{(1)}_{v_0}}(\rg') \), \( \vec{u}_{l_1} \), and \( \vec{u}_{u_1} \) separately is as follows: \( \vec{\CR}_{S^{(1)}_{v_0}}(\rg') \) is retrieved through intervals whose endpoints are directly stored in \( S^{(1)}_{v_0} \). During integration, we consider the entire horizontal range \( \RA(v_0) \), so the corresponding cumulative reward vector is only scaled over the first coordinate to match \( \RA(l_0) \). On the other hand, \( \vec{u}_{l_1} \) and \( \vec{u}_{u_1} \) represent the cumulative rewards of the excess regions, where the endpoints \( x^{(1)}_1 \) and \( x^{(1)}_2 \) are not stored in the data structure. Therefore, scaling must be applied over both the first and second coordinates.

To retrieve \( \vec{\CR}_{S^{(1)}_{v_0}}(\rg') \), we first retrieve the cumulative reward vector \( \vec{\CR}_{v_0,v_{l_1,u_1}} \), and then prune the contributions from the excess regions. Since all cumulative reward vectors in the lazy insertion data structure \( S^{(1)}_{l_0} \) are computed by integrating over the range \( \RA(v_0) \), the linear combination of these vectors remains valid. Thus, we retrieve the cumulative reward vector \( \vec{\CR}_{v_0,v_{l_1,u_1}} \) and then scale it over the horizontal interval \( \RA(l_0) \) as follows:

\begin{align}
    \sum_{j \in [A]} \vec{\CR}_{S^{(1)}_{v_0}}(\rg') \cdot \frac{(x'^{(0)}_2)^{\alpha^j_0 + 1} - (x'^{(0)}_1)^{\alpha^j_0 + 1}}{(x''^{(0)}_2)^{\alpha^j_0 + 1} - (x''^{(0)}_1)^{\alpha^j_0 + 1}}. \label{crew1:laz1}
\end{align}

Next, we retrieve \( \vec{u}_{l_1} \) and \( \vec{u}_{u_1} \), and scale them over both the vertical and horizontal coordinates to compute their contribution, which will be subtracted. Similar to the previous case, \( l_1 \) and \( u_1 \) are both leaves with \( \RA(l_1) = [a_1^{(1)}:a_2^{(1)}] \) and \( \RA(u_1) = [b_1^{(1)}:b_2^{(1)}] \). The cumulative reward vectors for the regions \( [\RA(v_0)][a_1^{(1)}:a_2^{(1)}] \) and \( [\RA(v_0)][b_1^{(1)}:b_2^{(1)}] \) are stored directly in \( \vec{\CR}_{v_0,l_1} \) and \( \vec{\CR}_{v_0,u_1} \), respectively. We then scale each vector over both the first and second coordinates to match the regions \( [\RA(v_0)][a^{(1)}_1:x^{(1)}_1] \) and \( [\RA(v_0)][x^{(1)}_2:b^{(1)}_2] \), which will be subtracted later.

We compute the excess contribution from the lower boundary by first scaling the vector \( \vec{\CR}_{v_0,l_1} \) to the region \( [\RA(v_0)][a^{(1)}_1:x^{(1)}_1] \), and then scaling it again to \( [\RA(l_0)][a^{(1)}_1:x^{(1)}_1] \). The contribution is computed as:

\begin{align}
    u_{l_1} = \sum_{j \in [A]} \vec{\CR}^{(j)}_{v_0,l_1} 
    \cdot 
    \frac{(x'^{(0)}_2)^{\alpha^j_0 + 1} - (x'^{(0)}_1)^{\alpha^j_0 + 1}}{(x''^{(0)}_2)^{\alpha^j_0 + 1} - (x''^{(0)}_1)^{\alpha^j_0 + 1}}
    \cdot \frac{(x^{(1)}_1)^{\alpha^j_1 + 1} - (a^{(1)}_1)^{\alpha^j_1 + 1}}{(a^{(1)}_2)^{\alpha^j_1 + 1} - (a^{(1)}_1)^{\alpha^j_1 + 1}}. \label{crew1:laz2}
\end{align}

Similarly, the contribution from the upper boundary is computed by scaling the vector \( \vec{\CR}_{v_0,u_1} \) to the region \( [\RA(l_0)][x^{(1)}_2:b^{(1)}_2] \), as follows:

\begin{align}
    u_{u_1} = \sum_{j \in [A]} \vec{\CR}^{(j)}_{v_0,u_1} 
    \cdot 
    \frac{(x'^{(0)}_2)^{\alpha^j_0 + 1} - (x'^{(0)}_1)^{\alpha^j_0 + 1}}{(x''^{(0)}_2)^{\alpha^j_0 + 1} - (x''^{(0)}_1)^{\alpha^j_0 + 1}}
    \cdot \frac{(b^{(1)}_2)^{\alpha^j_1 + 1} - (x^{(1)}_2)^{\alpha^j_1 + 1}}{(b^{(1)}_2)^{\alpha^j_1 + 1} - (b^{(1)}_1)^{\alpha^j_1 + 1}}. \label{crew1:laz3}
\end{align}

Finally, we compute \( \CR_{S^{(1)}_{v_0}}(\rg) \) according to equations \eqref{crew1:laz1}, \eqref{crew1:laz2}, and \eqref{crew1:laz3}, and aggregate over all the lazy insertion data structures in \( \mathcal{S}^{(1)}_{v_0} \) to get $\sum_{S^{(1)}_{v_0} \in \mathcal{S}^{(1)}_{l_0} \setminus S^{(1)}_{l_0}} \CR_{S^{(1)}_{v_0}}(\rg)$.

\end{enumerate}

\begin{algorithm}[H]
    \begin{algorithmic}[1]
    \caption{$\Wei^{(1)}(\rg)$}
    \label{alg:crr3}
    \Input{Region $\rg:[x'^{(0)}_1:x'^{(0)}_2][x^{(1)}_1:x^{(1)}_2])$, $l_0$}
        \State $r \gets 0$
        \For{$S^{(1)}_{v_0} \in \mathcal{S}^{(1)}_{l_0}$}
        \State $l_1 \gets \Loc(x^{(1)}_1)$
        \State $u_1 \gets \Loc(x^{(1)}_1)$
        \State $[a^{(1)}_1:a^{(1)}_2] \gets \RA(l_1)$
        \State $[b^{(1)}_1:b^{(1)}_2] \gets \RA(u_1)$
        \State $\rg' \gets [\RA(l_0)][a^{(1)}_1:b^{(1)}_2]$
        \State $u_{l_1} \gets \CR_{S^{(1)}_{l_0}}([\RA(l_0)][a^{(1)}_1:x^{(1)}_1])$ 
        \State \( u_{u_1} \gets \CR_{S^{(1)}_{l_0}}([\RA(l_0)][x^{(1)}_2:b^{(1)}_2]) \)
        \State $r \gets r + \CR(\rg') - u_{l_1} - u_{u_1}$
        \EndFor
        \State Return $r$
    \end{algorithmic}
\end{algorithm}

\subsection{Drawing Query \(\Draw(t)\)}\label{section:draw}
Now that we have all the necessary components, we describe the algorithm for handling the \(\Draw(t)\) query. The goal of this query is to sample an atomic region \(\rg = [x'^{(0)}_1{:}x'^{(0)}_2][x'^{(1)}_1{:}x'^{(1)}_2]\) with probability proportional to its cumulative reward:
\[
\frac{\CR(\rg)}{\sum_{\rg'} \CR(\rg')},
\]
where the summation is over all atomic regions \(\rg'\) formed over the action space \(\AS\).

The algorithm proceeds in two stages:

\begin{itemize}
    \item \textbf{Stage 1: Horizontal Sampling.} The algorithm first samples a region of the form \(\rg_1 = [x'^{(0)}_1{:}x'^{(0)}_2][0{:}1]\), where the horizontal interval \([x'^{(0)}_1{:}x'^{(0)}_2]\) is selected with probability
    \[
    \frac{\CR(\rg_1)}{\sum_{\rg' \in \Fu_t} \CR(\rg')}.
    \]
    This is implemented by traversing the top-level sampling tree \(\ATree^{(0)}\) from the root to a leaf \(l_0\) such that \(\RA(l_0) = [x'^{(0)}_1{:}x'^{(0)}_2]\), querying cumulative rewards of regions of the form \([x^{(0)}_1{:}x^{(0)}_2][0{:}1]\) at each node \(v_0\) along the path \(p_{l_0}\).
    
    \item \textbf{Stage 2: Vertical Sampling.} Conditional on the selected region \(\rg_1\), the algorithm samples a vertical sub-region \([x'^{(1)}_1{:}x'^{(1)}_2]\) such that the full atomic region \(\rg = [x'^{(0)}_1{:}x'^{(0)}_2][x'^{(1)}_1{:}x'^{(1)}_2]\) is sampled with probability
    \[
    \frac{\CR(\rg)}{\CR(\rg_1)}.
    \]
    This step is implemented by traversing the second-level sampling tree \(\ATree^{(1)}_{l_0}\), rooted at the node corresponding to the chosen leaf \(l_0\), down to a leaf \(l_1\) such that \(\RA(l_1) = [x'^{(1)}_1{:}x'^{(1)}_2]\), while querying cumulative rewards of regions of the form \([x'^{(0)}_1{:}x'^{(0)}_2][x^{(1)}_1{:}x^{(1)}_2]\).

    This two-stage procedure ensures that the overall sampling probability satisfies the desired marginal:
    \[
    \Pr(\rg \gets \Draw(t)) = \frac{\CR(\rg)}{\sum_{\rg'} \CR(\rg')}.
    \]
\end{itemize}

\subsubsection{Horizontal Sampling Stage}

The algorithm begins at the root node \(v_0 = \ROOT(\ATree^{(0)})\) and recursively traverses downward until reaching a leaf. At each internal node \(v_0 \in \ATree^{(0)}\), it decides whether to proceed to the left child \(\LC(v_0)\) or the right child \(\RC(v_0)\), based on the cumulative rewards of the corresponding regions.

Specifically, it moves to \(\RC(v_0)\) with probability
\begin{align}
 \Pr(v_0 \gets \RC(v_0)) = 
 \frac{\CR([\RA(\RC(v_0))][0{:}1])}
      {\CR([\RA(\RC(v_0))][0{:}1]) + \CR([\RA(\LC(v_0))][0{:}1])}, \label{eq:probright}  
\end{align}
and to \(\LC(v_0)\) with the complementary probability \(1 - \Pr(v_0 \to \RC(v_0))\).

To evaluate the required cumulative rewards, the algorithm uses the \(\Wei^{(0)}\) query:
\[
\Wei^{(0)}([\RA(\RC(v_0))][0{:}1]) \quad \text{and} \quad \Wei^{(0)}([\RA(\LC(v_0))][0{:}1]).
\]

This recursive process continues until a leaf node \(v_0 = l_0\) is reached, determining the atomic horizontal interval \(\RA(l_0)\) of the sampled region.

\begin{algorithm}[H]
    \begin{algorithmic}[1]
    \caption{Horizontal Sampling}
    \label{alg:draw1}
         \label{alg:DWR}
    \Input{$\ATree^{(0)}$ }
         \State $v_0\gets \ROOT(\ATree^{(0)})$ 
         \While {$v_0$ is not a leaf}
         \State Toss a Coin with Head probability $\frac{\Wei^{(0)}([\RA(\RC(v))][0:1])}{\Wei^{(0)}([\RA(\LC(v))][0:1]) + \Wei^{(0)}([\RA(\RC(v))][0:1])}$
         \If{Head}
         \State $v_0 \gets \RC(v_0)$
         \Else 
         \State $v_0 \gets \LC(v_0)$
         \EndIf
         \EndWhile
         \State Return $\rg_1:[\RA(v_0)][0:1]$ 
    \end{algorithmic}
\end{algorithm}

\subsubsection{Vertical Sampling Stage}

To select the vertical interval, the algorithm proceeds similarly using the tree \(\ATree^{(1)}_{l_0}\), which maintains the vertical atomic intervals corresponding to the horizontal leaf \(l_0\). Starting from the root \(v_1 = \ROOT(\ATree^{(1)}_{l_0})\), the algorithm recursively descends to a leaf.

At each internal node \(v_1\), the decision to move left or right is based on the cumulative reward values conditioned on the already selected horizontal interval \(\RA(l_0)\). The probability of moving to the right child \(\RC(v_1)\) is given by:
{\footnotesize\begin{align}
 \Pr(v_1 \gets \RC(v_1)) = 
 \frac{\CR([\RA(l_0)][\RA(\RC(v_1))])}
      {\CR([\RA(l_0)][\RA(\RC(v_1))]) + \CR([\RA(l_0)][\RA(\LC(v_1))])}, \label{eq:probrighty}
\end{align}
}
and the algorithm moves to \(\LC(v_1)\) with the remaining probability.

These values are computed using the \(\Wei^{(1)}\) query:
{\footnotesize\[
\Wei^{(1)}([\RA(l_0)][\RA(\RC(v_1))]) \quad \text{and} \quad \Wei^{(1)}([\RA(l_0)][\RA(\LC(v_1))]).
\]}
These queries are valid since \(\RA(l_0)\) is an atomic horizontal interval.

The process continues until a leaf node \(v_1 = l_1\) is reached in \(\ATree^{(1)}_{l_0}\), determining the atomic vertical interval \(\RA(l_1)\). The final sampled region is then \(\rg = [\RA(l_0)][\RA(l_1)]\), drawn with probability proportional to its cumulative reward.

\begin{algorithm}[H]
    \begin{algorithmic}[1]
    \caption{Vertical Sampling}
    \label{alg:draw2}
         \label{alg:DWR}
    \Input{$\ATree^{(1)}$, $\rg_1:[x'^{(0)}_1{:}x'^{(0)}_2][0{:}1]$ }
         \State $v_1\gets \ROOT(\ATree^{(1)})$ 
         \While {$v_1$ is not a leaf}
         \State Toss a Coin with {\footnotesize$\frac{\Wei^{(1)}([x'^{(0)}_1{:}x'^{(0)}_2][\RA(\RC(v_1))])}{\Wei^{(1)}([x'^{(0)}_1{:}x'^{(0)}_2][\RA(\LC(v_1))]) + \Wei^{(1)}([x'^{(0)}_1{:}x'^{(0)}_2][\RA(\RC(v_1))])}$}
         \If{Head}
         \State $v_1 \gets \RC(v_1)$
         \Else 
         \State $v_1 \gets \LC(v_1)$
         \EndIf
         \EndWhile
         \State Return $\rg:[x'^{(0)}_1{:}x'^{(0)}_2][\RA(v_1)]$ 
    \end{algorithmic}
\end{algorithm}

\subsubsection{Hit-and-Run~\citep{lovasz2007geometry} in the Sampled Atomic Region $\rg$}

After sampling an atomic region $\rg = 
[x^{(0)}_1:x^{(0)}_2][x^{(1)}_1:x^{(1)}_2]$ via \Cref{alg:draw2}, we sample $x_t \in \rg$ 
proportionally to $\MF(x, \Fu_t(x))$ using the Hit-and-Run algorithm. 
Since $\MF(x, \Fu_t(x))$ is log-concave over $\rg$ and each atomic 
region is a convex subset of $\AS$, the conditions of \Cref{them:vempa} 
are satisfied, guaranteeing efficient mixing within $O(d^3)$ steps.

\begin{algorithm}[H]
    \begin{algorithmic}[1]
    \caption{Hit-and-Run Sampling within Atomic Region $\rg$}
    \label{alg:hitandrun}
    \Input{Atomic region $\rg = \prod_{i=0}^{d-1}[x^{(i)}_1:x^{(i)}_2]$, 
    cumulative reward function $H_t$, accuracy $\varepsilon$}
    \State Initialize $x \gets$ any point in $\rg$
    \Repeat
        \State Sample a uniformly random direction 
        $\theta \in \mathbb{S}^{d-1}$
        \State Compute the chord $\ell = \{x + s\theta : s \in \mathbb{R}\} 
        \cap \rg$, giving interval $[s_{\min}, s_{\max}]$
        \State Sample $s^* \sim \frac{H_t(x + s\theta)}
        {\int_{s_{\min}}^{s_{\max}} H_t(x + s\theta)\,ds}$ 
        over $[s_{\min}, s_{\max}]$
        \State Set $x \gets x + s^*\theta$
    \Until{mixing criterion is met~(\Cref{them:vempa})}
    \State \Return $x_t \gets x$
    \end{algorithmic}
\end{algorithm}
\section{Correctness of the Data Structure Operations (\Cref{section:ds-qhandling})}\label{section:correctness}
In this section, we prove the effectiveness of our data structure in efficiently retrieving the cumulative reward of an atomic region queried by the online learning algorithm—either \(\Exp\) or \(\Band\)—during the action selection step at round \(t\).

\subsection{$\Ins$, $\Update$, and Structural Properties of $\Tree$}
We begin by establishing the correctness of \(\Ins\) and \(\Update\) queries at each round. We then analyze the resulting structure to prove key properties—such as size and update complexity—that are essential for the efficiency of our data structure.

\paragraph{Correctness of $\Ins$ Query:}
To insert a region \(\rg\) into our data structure, we begin by locating the atomic horizontal intervals at the leaves \(l_0\) and \(h_0\) of \(\Tree^{(0)}\) that contain the endpoints of \(\rg\)'s horizontal interval. We then split these leaves to insert the endpoints, following the standard \(\IT\) insertion routine. This process identifies all nodes \(v_0 \in \Tree^{(0)}\) whose horizontal ranges intersect the horizontal span of \(\rg\), by traversing the root-to-leaf paths \(p_{l_0}\) and \(p_{h_0}\). As a result, the set of nodes affected by the insertion consists of two parts: (i) the nodes along the union \(p_{l_0} \cup p_{h_0}\), which are directly and regularly updated, and (ii) the nodes in the subtree rooted at the lowest common ancestor \(v_{l_0,h_0}\) that are not on these paths but are lazily affected, as their horizontal ranges lie strictly within the horizontal interval of \(\rg\). These nodes are then used to update the cumulative reward vectors post insertion to the corresponding second-level \(\IT\)s.

Once these nodes \(v_0\) are located, we insert the vertical interval of \(\rg\) into the appropriate second-level structure—either the regular or lazy insertion \(\IT\)—associated with each \(v_0\), as dictated by the type of insertion. 

In summary, the following Observation~\ref{obs:ins} guarantees that the vertical interval of the region \(\rg\) is inserted into the appropriate regular and lazy insertion \(\IT\)s associated with the nodes in \(\Tree^{(0)}\), thereby covering all nodes affected by the insertion of \(\rg\).

\paragraph{Correctness of $\Update$ Query:}
After identifying and inserting the region \(\rg\), we proceed to update the cumulative reward information in all affected \(\IT\)s identified during the insertion step. This ensures that each tree correctly reflects the cumulative rewards over its associated subregions. The updating routine, in either lazy or regular insertion data structures, consists of regular and lazy $\FP$ updates and lazy update messages propagation which induce slightly different properties in each though they are handled mostly similar. 

We begin by characterizing the structural properties and cumulative reward updates in the regular insertion trees affected by the insertion of region \(\rg\). As noted above, all such trees \(\Tree^{(1)}_{v_0}\) correspond to nodes \(v_0 \in p_{l_0} \cup p_{h_0}\). We analyze the structure and update process by focusing on an arbitrary such node \(v_0\), as the argument applies uniformly to all affected nodes. However, a crucial distinction arises between leaf and non-leaf nodes in \(\Tree^{(0)}\). Our drawing algorithm treats these cases differently, and we correspondingly state the structural properties of \(\Tree^{(1)}_{v_0}\) through separate lemmas and observations for each type. In particular, for leaf nodes—namely \(v_0 = l_0\) or \(v_0 = h_0\)—we describe the update process through lemmas stated for \(\Tree^{(1)}_{l_0}\), which then naturally extend to \(\Tree^{(1)}_{h_0}\) and the associated regular insertion data structure of all other leaves in \(\Tree^{(0)}\).

\begin{lemma}\label{lemma:regroot}
For any node \( v_0 \in p_{l_0} \cup p_{h_0} \) (whether a leaf or internal node) in the first-level interval tree \(\Tree^{(0)}\), the cumulative reward vector \(\vec{\CR}_{v_0, \ROOT(\Tree^{(1)}_{v_0})}\) maintained at the root of its associated second-level tree \(\Tree^{(1)}_{v_0}\) correctly represents the cumulative rewards over the region \([\RA(v_0)][0{:}1]\), where \(\RA(v_0)\) denotes the horizontal range of node \(v_0\) with respect to the regions such that their vertical interval is regularly inserted in $\Tree^{(1)}_{v_0}$.
\end{lemma}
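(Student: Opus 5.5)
We prove \Cref{lemma:regroot} by induction on the sequence of $\Ins$/$\Update$ operations, maintaining the invariant that for every node $v_0\in\Tree^{(0)}$, the root vector $\vec{\CR}_{v_0,\ROOT(\Tree^{(1)}_{v_0})}$ equals $\sum_{\rg}\vec{\SC}_{[\RA(v_0)\cap \rg^{(0)}][\rg^{(1)}]}$. Here the sum ranges over all regions $\rg$ whose vertical interval $\rg^{(1)}$ was regularly inserted into $\Tree^{(1)}_{v_0}$, $\rg^{(0)}$ denotes the horizontal interval of $\rg$, and each term uses $\fu_{t'}(\rg)$ as its coefficient vector. By the additivity observation for $\Fu_t$, this sum is exactly the vectorized aggregate reward over $[\RA(v_0)][0{:}1]$ restricted to those regions. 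In the base case the tree is freshly initialized and all vectors are zero.

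For the inductive step, consider inserting a region $\rg$ with $v_0\in p_{l_0}\cup p_{h_0}$. By Observation~\ref{obs:ins}, $\rg^{(1)}$ is regularly inserted into $\Tree^{(1)}_{v_0}$ exactly for these nodes. Since $\ROOT(\Tree^{(1)}_{v_0})$ lies on $p_{l_1}\cup p_{u_1}$, \Cref{alg:upd3} adds $\vec{\SC}_{\rg'}$ to the root vector, where $\rg'=[\RA(v_0)\cap\rg^{(0)}][\RA(\ROOT)\cap\rg^{(1)}]=[\RA(v_0)\cap\rg^{(0)}][\rg^{(1)}]$. This is precisely the new summand required by the invariant. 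For nodes not on $p_{l_0}\cup p_{h_0}$, the root of $\Tree^{(1)}_{v_0}$ is left unchanged. This is consistent with the invariant: item 3 of Observation~\ref{obs:ins} shows that no new regular insertion occurred there, since lazily covered nodes receive contributions only through $\LZL$ and $\LZR$.

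We must also check two operations that could alter root vectors. The first is lazy message propagation (\Cref{alg:upd6}). It only sends messages to children of nodes and never to the root, because the root has no parent. The root vector therefore already accounts for the full integral, and propagation merely moves deferred contributions into lower nodes; this invariant is the standard one for lazy propagation in \citep{cohen2017online}. The second is the leaf split in \Cref{alg:ins1}. When $l_0$ is split, the new children have ranges strictly contained in $\RA(l_0)$, and the previous structure is copied into $\LZL^{(1)}_{l_0}$ and $\LZR^{(1)}_{l_0}$.

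The main obstacle is this leaf split. Each new child $v_0$ must have a root vector, computed over $\RA(v_0)$, that is consistent with the invariant. Either the child starts empty and all earlier contributions are recovered through the copied lazy structures of its parent, or the child's root must be obtained by scaling the parent's vector. We would take the first option: regions inserted before the split fully contain the child's range and are thus recorded lazily at the parent. This makes the invariant hold vacuously for the child, and subsequent insertions are handled by the inductive step.
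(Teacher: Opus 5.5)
Your proof is correct and follows the same route as the paper's. The root of $\Tree^{(1)}_{v_0}$ always lies on the regular update path, and the update there adds exactly $\vec{\SC}$ of $[\RA(v_0)\cap\rg^{(0)}][\rg^{(1)}]$, so summing over all regularly inserted regions gives the claim. Your explicit induction, with the added checks that lazy message propagation never modifies the root and that newly split children start with empty regular trees, makes rigorous what the paper's proof only asserts with ``the same argument applies to all previously inserted regions.''
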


\begin{proof}
The correctness of this lemma follows from how the algorithm performs the \(\Update(\rg)\) operation immediately after inserting a region \(\rg = [x^{(0)}_1 : x^{(0)}_2][x^{(1)}_1 : x^{(1)}_2]\). By construction, every node \(v_0 \in p_{l_0} \cup p_{h_0}\) satisfies \(\RA(v_0) \cap [x^{(0)}_1 : x^{(0)}_2] \neq \emptyset\), and hence the vertical interval of \(\rg\) is inserted into the associated regular insertion tree \(\Tree^{(1)}_{v_0}\).

The insertion of the vertical interval \([x^{(1)}_1 : x^{(1)}_2]\) into \(\Tree^{(1)}_{v_0}\) begins by locating the leaf nodes \(l_1\) and \(h_1\) that contain the interval's endpoints. The algorithm then traverses the paths \(p_{l_1}\) and \(p_{h_1}\) from these leaves up to the root of \(\Tree^{(1)}_{v_0}\), applying the update rule~\eqref{alg:lupdl'} at every node \(v_1\) along \(p_{l_1} \cup p_{h_1}\). Since the root is always included in these paths, its cumulative reward vector \(\vec{\CR}_{v_0, \ROOT(\Tree^{(1)}_{v_0})}\) is guaranteed to be updated.

Each update at a node \(v_1\) adds the cumulative reward vector \(\vec{\SC}_{\rg'}\), where the region \(\rg' = [\RA(v_0) \cap [x^{(0)}_1 : x^{(0)}_2]][\RA(v_1) \cap [x^{(1)}_1 : x^{(1)}_2]]\)\ corresponds to the portion of \(\rg\) that lies within the subregion induced by the node pair \((v_0, v_1)\). In particular, for the root node \(v_1 = \ROOT(\Tree^{(1)}_{v_0})\), the vertical range \(\RA(v_1) = [0 : 1]\) spans the entire vertical axis, and thus \(\rg' = [\RA(v_0) \cap [x^{(0)}_1 : x^{(0)}_2]] [x^{(1)}_1 : x^{(1)}_2]\), which is exactly the portion of \(\rg\) that intersects the rectangular region \([\RA(v_0)] [0 : 1]\). Therefore, this update ensures that the vector \(\vec{\CR}_{v_0, \ROOT(\Tree^{(1)}_{v_0})}\) accumulates the contribution of \(\rg\) over the intersection \([\RA(v_0)] [0 : 1]\). The same argument applies to all previously inserted regions that intersect this rectangular area. Hence, the root of each regular insertion tree correctly maintains the total cumulative reward over its corresponding subregion, as stated.

\end{proof}

The following corollary extends Lemma~\ref{lemma:regroot} to the case of lazy insertion interval trees. The result is derived similarly, based on the lazy update rule~\eqref{alg:lupdl'1}. Due to the design of lazy insertion trees, the cumulative reward vector maintained at the root remains unscaled at the time of insertion. This vector is scaled appropriately during the execution of the $\Draw$ algorithm when it is needed for computing rewards.

\begin{corollary}\label{corr:lazroot}
Let \( v_0 \in (p_{l_0} \cup p_{h_0}) \setminus p_{v_{l_0,h_0}} \) be a node in the horizontal interval tree \(\Tree^{(0)}\), such that the horizontal range of either its left or right child is fully contained in the horizontal interval of the inserted region \(\rg\). Then, the cumulative reward vector maintained at the root of the corresponding lazy insertion tree—either \(\vec{\CR}_{v_0, \ROOT(\LZL^{(1)}_{v_0})}\) or \(\vec{\CR}_{v_0, \ROOT(\LZR^{(1)}_{v_0})}\)—correctly accumulates the \textbf{unscaled} reward contributions of all regions that lazily affect the subtree rooted at \(\LC(v_0)\) or \(\RC(v_0)\), respectively, over the region \([\RA(v_0)] [0{:}1]\).
\end{corollary}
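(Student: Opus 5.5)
The plan follows the proof of Lemma~\ref{lemma:regroot} step for step, swapping in the lazy update rule~\eqref{alg:lupdl'1} for the regular rule~\eqref{alg:lupdl'}. The argument is an induction over the sequence of inserted regions. Invariant: before each insertion, the root vector of $\LZL^{(1)}_{v_0}$ (resp.\ $\LZR^{(1)}_{v_0}$) equals the sum, over all previously inserted regions $\rg$ whose vertical interval was lazily recorded there, of $\vec{\SC}$ computed with coefficient $\fu_{t(\rg)}(\rg)$ over $[\RA(v_0)][x^{(1)}_1:x^{(1)}_2]$. The base case is immediate, since the lazy trees are initialized with zero vectors, and when a leaf is split in Algorithm~\ref{alg:ins1} its lazy trees inherit the regular tree.

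For the inductive step, fix an inserted region $\rg=[x^{(0)}_1:x^{(0)}_2][x^{(1)}_1:x^{(1)}_2]$. By Observation~\ref{obs:ins}(2) and Algorithm~\ref{alg:inslaz}, its vertical interval is recorded in $\LZR^{(1)}_{v_0}$ exactly when $v_0\in p_{l_0}\setminus p_{v_{l_0,h_0}}$ with $\RA(\RC(v_0))\subseteq[x^{(0)}_1:x^{(0)}_2]$, and symmetrically for $\LZL^{(1)}_{v_0}$ on $p_{h_0}$. By Observation~\ref{obs:ins}(3), no other region touches these trees. It therefore suffices to check that this single insertion adds exactly the right term at the root.

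The check uses three facts. First, the locate step in the lazy tree returns leaves $l_1,h_1$ whose upward paths both terminate at $\ROOT$, so the root receives an update. Second, at the root $\RA(w_1)=[0:1]$, so $\rg''=[\RA(v_0)][x^{(1)}_1:x^{(1)}_2]$. Third, the horizontal factor is the full $\RA(v_0)$ rather than $\RA(v_0)\cap[x^{(0)}_1:x^{(0)}_2]$, which is precisely what ``unscaled'' means in the statement. The contribution is thus the integral over the fixed box $[\RA(v_0)]$ and not over the true overlap. By~\eqref{eq:scalingmain}, a later scaling by the factor from $\RA(v_0)$ to $\RA(\RC(v_0))$, or to any subinterval of it, recovers the true reward. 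This is valid because the child range is fully covered by $[x^{(0)}_1:x^{(0)}_2]$, so the coefficient is constant there. Finally, pending lazy messages $\LZM$ affect only non-root nodes, and the propagation step~\eqref{eq:lz1} writes only to children, so the root vector stays exact.

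The main obstacle is the root update itself. Algorithm~\ref{alg:upd4} iterates over $p_{l_1}\cup p_{h_1}\setminus p_{v_{l_1,h_1}}$, which formally excludes the common ancestor and hence the root. I would resolve this by reading the intended rule~\eqref{alg:lupdl'1} as applying to all of $p_{l_1}\cup p_{h_1}$, as the text of Section~\ref{section:ds-qhandlingapp} states, and argue additivity by linearity of~\eqref{eq:polyinteg} in the coefficient vector. With that reading, summing over all regions completes the induction.
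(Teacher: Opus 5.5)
Your proposal is correct and follows the paper's route. The paper only asserts that the corollary follows like Lemma~\ref{lemma:regroot} via rule~\eqref{alg:lupdl'1}: the root of the lazy tree receives each lazily recorded region integrated over $[\RA(v_0)][x^{(1)}_1:x^{(1)}_2]$ with no horizontal clipping, to be scaled later during $\Draw$.

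Your explicit repair of the node set in Algorithm~\ref{alg:upd4}, which as written excludes the common-ancestor path and hence the root, is exactly the assumption the paper tacitly imports from Lemma~\ref{lemma:regroot}. The one detail to add concerns the leaf-split copy in Algorithm~\ref{alg:ins1}. It preserves your invariant because, by the atomicity argument of Lemma~\ref{lemma:regleaf}, a leaf's regular root vector already uses the full horizontal range $\RA(v_0)$.
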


We now present additional structural properties that distinguish the regular insertion trees associated with leaf versus non-leaf nodes \(v_0 \in \Tree^{(0)}\), as well as highlight key differences between the regular and lazy insertion data structures. The following Lemma~\ref{lemma:regleaf} establishes a key structural property of the regular insertion data structures \(\Tree^{(1)}_{v_0}\) associated with the leaf nodes \(v_0 \in \Tree^{(0)}\). The proof hinges on how lazy \(\FP\) updates are applied and cleared within these structures. In particular, the fact that \(v_0\) is a leaf guarantees that all updates are correctly propagated, ensuring that cumulative reward vectors stored throughout the tree are accurate and up-to-date.

\begin{lemma}\label{lemma:regleaf}
Let \(v_0 \in \Tree^{(0)}\) be a leaf node (e.g., \(v_0 = l_0\)). Then, for every node \(v_1 \in \Tree^{(1)}_{l_0}\), the cumulative reward vector \(\vec{\CR}_{l_0, v_1}\) correctly represents the cumulative rewards over the region \([\RA(l_0)] [\RA(v_1)]\).
\end{lemma}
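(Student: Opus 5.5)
Because $l_0$ is a leaf, every region whose vertical interval is regularly inserted into $\Tree^{(1)}_{l_0}$ has a horizontal interval covering all of $\RA(l_0)$. Indeed, $l_0$ is atomic, so no inserted horizontal endpoint lies in its interior. Consequently, for each such region $\rg$ we have $\RA(l_0)\cap[x^{(0)}_1:x^{(0)}_2]=\RA(l_0)$. The regular update rule~\eqref{alg:lupdl'} and the lazy-message integration rule~\eqref{eq:lz1} then both integrate over the same horizontal range $\RA(l_0)$. This horizontal consistency is the first thing I would record. It means $\Tree^{(1)}_{l_0}$ behaves like a one-dimensional interval tree over the second coordinate, with weights integrated over the fixed strip $\RA(l_0)$. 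The claim then reduces to correctness of standard lazy weight propagation as in \citep{cohen2017online}, extended to vectors.

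I would prove by induction on the sequence of operations (insertions, updates, and leaf-locations that trigger propagation) an invariant for every $v_1\in\Tree^{(1)}_{l_0}$. The invariant states that $\vec{\CR}_{l_0,v_1}$, plus the integral over $[\RA(l_0)][\RA(v_1)]$ of the pending messages $\LZM_{l_0,v_1'}$ for all ancestors $v_1'$ of $v_1$ (including $v_1$ itself), equals the true aggregate $\vec{\CR}([\RA(l_0)][\RA(v_1)])$ contributed by the regions stored in this tree. On nodes that have just been located along $p_{l_1}$ or $p_{u_1}$, all ancestor messages are cleared by Algorithm~\ref{alg:upd6}, so the stored vector alone is exact; this is the sense in which the lemma's ``correctly represents'' is read, namely on access.

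For the inductive step I would check three cases. In a regular update, nodes on $p_{l_1}\cup p_{u_1}$ receive exactly $\vec{\SC}$ over $[\RA(l_0)][\RA(v_1)\cap[x^{(1)}_1:x^{(1)}_2]]$, which is the new region's true contribution there. The children hanging off the paths lie fully inside $[x^{(1)}_1:x^{(1)}_2]$, and they receive $\fu_t(\rg)$ as a message. Integrating that message over $[\RA(l_0)][\RA(\cdot)]$ gives exactly their true increment, and every other node is disjoint from the region. In a propagation step, moving a message from a node into its $\vec{\CR}$ and pushing it to both children preserves the invariant; this uses the additivity of the parameter function and the fact that the integral over a union of children equals the sum of the integrals. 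The third case is a leaf split in the second coordinate: messages are pushed before the split and the new children inherit them, and scaling via~\eqref{eq:scalingmain} is exact because the coefficient vector on an atomic leaf is constant.

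The main obstacle is the interaction with insertions into $\Tree^{(0)}$. If a later insertion splits $l_0$, Algorithm~\ref{alg:ins1} copies $\Tree^{(1)}_{l_0}$ into the lazy trees, and the new leaves start fresh. I would argue that after such a split, $l_0$ is no longer a leaf, so the lemma only concerns the current leaves. The new children inherit no regular data, and all earlier contributions are now lazy at $l_0$, which Corollary~\ref{corr:lazroot} covers. Hence the invariant holds from creation for each current leaf.
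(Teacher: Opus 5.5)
Your proposal is correct and follows essentially the paper's route. Because $\RA(l_0)$ is atomic, every region regularly inserted into $\Tree^{(1)}_{l_0}$ spans all of $\RA(l_0)$, so the regular rule~\eqref{alg:lupdl'} and the lazy-message rule~\eqref{eq:lz1} integrate over the same fixed horizontal strip. Your explicit invariant (stored vector plus pending ancestor messages), the on-access reading of ``correctly represents,'' and the handling of second-level leaf splits and later splits of $l_0$ make rigorous what the paper's proof leaves implicit.
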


\begin{proof}
The key observation underlying Lemma~\ref{lemma:regleaf} is that when \(v_0 = l_0\) is a leaf, its horizontal range \(\RA(l_0)\) is an atomic interval—i.e., it contains no other endpoints. This implies that any region whose vertical interval is inserted into \(\Tree^{(1)}_{l_0}\) must have a horizontal interval that fully contains \(\RA(l_0)\); otherwise, the existence of an endpoint within \(\RA(l_0)\) would contradict the assumption that \(l_0\) is a leaf. As a result, all cumulative rewards in \(\Tree^{(1)}_{l_0}\) are over regions that span the full horizontal range \(\RA(l_0)\), ensuring correctness at every node \(v_1 \in \Tree^{(1)}_{l_0}\).

Therefore, during the regular $\FP$ update  while inserting the vertical interval of $\rg$,$[x^{(1)}_1 : x^{(1)}_2]$, in the \(\Tree^{(1)}_{l_0}\), while traversing the paths $p_{l_1}$ and $p_{h_1}$ bottom-up, the cumulative reward vectors for each node $v_1 \in p_{l_1} \cup p_{h_1}$ is added with the cumulative reward vector \(\vec{\SC}_{\rg'}\) of the region $\rg' = [\RA(l_0) \cap [x^{(0)}_1 : x^{(0)}_2]][\RA(v_1) \cap [x^{(1)}_1 : x^{(1)}_2]]$ which is $[\RA(l_0) ][\RA(v_1) \cap [x^{(1)}_1 : x^{(1)}_2]]$ cause $l_0$ is a leaf. Thus, the regular $\FP$ update is correctly done for such node. 

An additional component in the proof involves verifying the correctness of the cumulative reward vectors at nodes within the subtree rooted at the first common ancestor \(v_{l_1,h_1}\), which receive lazy \(\FP\) update messages during propagation. This follows from the lazy update rule~\eqref{eq:lz1}, which uses the accumulated \(\FP\) message stored in \(\LZM_{v_1}\) at each node \(v_1\) to update its cumulative reward vector by adding \(\vec{\SC}_{\rg'}\), where \(\rg' = [\RA(l_0)][\RA(v_1)]\) and the coefficients vector is the $\FP$ vectors in \(\LZM_{v_1}\). The subsequent propagation steps~\eqref{eq:lz2} and~\eqref{eq:lz3} pass this lazy message to both children of \(v_1\). Notably, since all such updates are computed over the fixed horizontal range \(\RA(l_0)\), which matches the full horizontal span of every affected region in \(\Tree^{(1)}_{l_0}\), these lazy updates correctly contribute to the cumulative rewards and naturally align with the regular updates—ensuring correctness throughout the subtree.
\end{proof}
We emphasize that Lemma~\ref{lemma:regleaf} does not extend to regular insertion trees \(\Tree^{(1)}_{v_0}\) associated with non-leaf nodes \(v_0\). The key issue is that, in these cases, the lazy update messages originate from regions whose horizontal intervals may intersect only a subinterval of the node's horizontal range \(\RA(v_0)\), rather than fully containing it. Consequently, the cumulative lazy \(\FP\) updates are no longer additive across all such regions, as the integration intervals vary and do not align uniformly with \(\RA(v_0)\). Preserving the additivity property in this setting would require explicitly tracking the horizontal intersection of every inserted region with \(\RA(v_0)\), significantly increasing the complexity and inefficiency of the data structure.

\begin{corollary}\label{corr:lazleaf}
For any lazy insertion data structure, either \(\LZL^{(1)}_{v_0}\) or \(\LZR^{(1)}_{v_0}\), the cumulative reward vector \(\vec{\CR}_{v_0, w_1}\) maintained at any node \(w_1\) correctly represents the scalable cumulative reward over the region \([\RA(v_0)][\RA(w_1)]\), for all \(v_0 \in \Tree^{(0)}\).
\end{corollary}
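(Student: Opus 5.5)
The plan is to mirror the proof of Lemma~\ref{lemma:regleaf}, with the leaf property replaced by the defining property of lazy insertion. The key claim is a \emph{uniform horizontal span} invariant. Every region whose vertical interval is inserted into $\LZL^{(1)}_{v_0}$ (resp.\ $\LZR^{(1)}_{v_0}$) has a horizontal interval that fully contains $\RA(\LC(v_0))$ (resp.\ $\RA(\RC(v_0))$). This holds by the insertion rule in Observation~\ref{obs:ins} and Algorithm~\ref{alg:inslaz}. Moreover, every integration performed inside that tree uses the fixed horizontal range $\RA(v_0)$. This is exactly the convention of the lazy update rule~\eqref{alg:lupdl'1}, where $\rg''=[\RA(v_0)][\RA(w_1)\cap[x^{(1)}_1:x^{(1)}_2]]$. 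The same convention holds in the message propagation rule~\eqref{eq:lz1}, where $\rg'=[\RA(v_0)][\RA(\cdot)]$. Consequently, every vector $\vec{\CR}_{v_0,w_1}$ is, coordinatewise, the integral over $[\RA(v_0)][\RA(w_1)]$ of the monomials weighted by the summed coefficients of the contributing regions. It is therefore scalable in the sense of \Cref{section:scaling}: by~\eqref{eq:scalingmain} it transfers correctly to $[\RA(u_0)][\cdot]$ for any $u_0$ in the lazily covered subtree.

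I would structure the argument as an induction over the sequence of operations (insertions, updates, and message propagations) applied to the fixed lazy tree. The invariant I would maintain is this. For every node $w_1$, $\vec{\CR}_{v_0,w_1}$ plus the pending messages $\LZM_{v_0,w'}$ at ancestors $w'$ of $w_1$ together equal the vector reward over $[\RA(v_0)][\RA(w_1)]$, where the pending messages are integrated over $[\RA(v_0)][\RA(w')\cap\RA(w_1)]$. The invariant holds vacuously at initialization, and I would then check each operation in turn. First, a direct update on $p_{l_1}\cup p_{u_1}$ adds the integral of $\fu_t(\rg)$ over $[\RA(v_0)][\RA(w_1)\cap[x^{(1)}_1:x^{(1)}_2]]$. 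This is exactly the new contribution, because horizontally the region is treated as spanning $\RA(v_0)$. Second, the nodes hanging off the paths below the first common ancestor are fully covered vertically, so they receive the coefficient vector as a message via~\eqref{alg:lupd2}--\eqref{alg:lupd3}. This is correct by additivity of the parameter function. Third, propagation via~\eqref{eq:lz1}--\eqref{eq:lz3} converts a message into an integrated vector over $[\RA(v_0)][\RA(w_1)]$ and pushes it to both children, preserving the invariant. Finally, node splits during insertion create children whose stored vectors are obtained by vertical scaling. This split step is valid because the stored vectors are integrals against a single summed coefficient vector on the atomic leaf.

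Taken together, these steps show that after any locate traversal clears the messages along the path, $\vec{\CR}_{v_0,w_1}$ is exactly the scalable cumulative reward over $[\RA(v_0)][\RA(w_1)]$. This is the claim, and it parallels Corollary~\ref{corr:lazroot} at the root.

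I expect the main obstacle to be justifying that "treating $\RA(v_0)$ as fully covered" is consistent. The actual regions cover only one child's range, not all of $\RA(v_0)$. So I would need to argue that the vectors are only ever consumed after scaling down to a $u_0$ inside that child. This is guaranteed by the construction of $\mathcal{S}^{(1)}_{v_0}$ in Algorithm~\ref{alg:crr1}, which selects $\LZL$ or $\LZR$ according to the side on which the query node lies. On that side, the monomial integrals factor across coordinates, so scaling the "virtual" full-width integral recovers the true one. I would also state precisely that the correctness is of the "scalable" (unscaled) quantity, not of the actual reward over $[\RA(v_0)][\RA(w_1)]$.
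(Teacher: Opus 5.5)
Your proposal is correct and follows the paper's own reasoning. The paper justifies Corollary~\ref{corr:lazleaf} by the same observation: the lazy update rule~\eqref{alg:lupdl'1} and the propagation rule~\eqref{eq:lz1} always integrate over the fixed horizontal range $\RA(v_0)$, so the stored vectors are additive and horizontally scalable exactly as in Lemma~\ref{lemma:regleaf}. Your invariant tracking pending messages and leaf splits turns that one-paragraph intuition into an actual induction.

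One case to add: the base case is not vacuous. Algorithm~\ref{alg:ins1} sets $\LZL^{(1)}_{l_0}$ and $\LZR^{(1)}_{l_0}$ to copies of $\Tree^{(1)}_{l_0}$ when a leaf $l_0$ is split. Lemma~\ref{lemma:regleaf} handles this, since every region in a leaf's regular tree spans $\RA(l_0)$ and was integrated over $\RA(l_0)$, which is precisely your lazy convention with $v_0=l_0$.
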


The intuition behind the Corollary~\ref{corr:lazleaf} lies in the structure of the lazy insertion trees: both regular and lazy \(\FP\) updates are computed by integrating over regions whose horizontal interval exactly matches \(\RA(v_0)\). As a result, all contributions are aligned over a fixed base interval, ensuring that the cumulative reward vectors across nodes in the tree remain additive. This structural uniformity guarantees that the cumulative reward at any internal node can be decomposed into the sum of rewards over its subregions, preserving scalability for downstream computations.

Beyond these structural properties, which follow directly from the design of our \(\Ins\) and \(\Update\) operations, we now present two key lemmas—Lemma~\ref{lemma:unique} and Lemma~\ref{lemma:complete}—that are fundamental to ensuring the correctness of cumulative reward retrieval. These lemmas formalize how our data structure supports exact and complete access to cumulative rewards queried by the algorithm, as detailed in the next section.

\subsection{Correctness of $\Draw$}
In this section, we prove that the \(\Draw\) algorithm samples an atomic region with probability proportional to its cumulative reward. As discussed earlier, \(\Draw\) proceeds in two stages—\emph{horizontal} and \emph{vertical sampling}—each relying on a variant of the cumulative reward retrieval subroutine, denoted by \(\Wei\). The correctness of \(\Draw\) thus fundamentally depends on the correctness of these \(\Wei\) procedures. We first analyze the two variants, \(\Wei^{(0)}\) and \(\Wei^{(1)}\), in order. Prior to that, we establish two key structural lemmas—Lemmas~\ref{lemma:unique} and~\ref{lemma:complete}—which underpin the correctness and simplify the subsequent analysis.

\paragraph{Uniqueness and Completeness of the $\IT$s }
We now analyze two fundamental structural properties—\emph{uniqueness} and \emph{completeness}—that are critical to the correctness of our cumulative reward retrieval procedures. When handling a \(\Wei\) query, either \(\Wei^{(0)}\) or \(\Wei^{(1)}\), for a rectangular region \(\rg : [x^{(0)}_1:x^{(0)}_2][x^{(1)}_1:x^{(1)}_2]\), the algorithm identifies all data structures that contain cumulative reward information relevant to regions intersecting \(\rg\). It then extracts the corresponding contributions—either directly or via scaling—and aggregates them to compute the cumulative reward over \(\rg\). To ensure correctness, two conditions must be met: First, \textbf{uniqueness}—each intersecting region must contribute through exactly one of the collected data structures to avoid double-counting. Second, \textbf{completeness}—all regions that intersect \(\rg\) must be represented within the union of the collected data structures; omitting any such region would lead to inaccurate estimates. The following lemmas formalize and prove these two properties.

To retrieve the cumulative reward of a region \(\rg : [x^{(0)}_1:x^{(0)}_2][x^{(1)}_1:x^{(1)}_2]\), our algorithm first identifies the node \(v_0 \in \Tree^{(0)}\) whose horizontal range minimally contains \([x^{(0)}_1:x^{(0)}_2]\). It then constructs the set \(\mathcal{S}^{(1)}_{v_0}\) of second-level interval trees relevant for querying the vertical interval. This set includes the regular insertion tree \(\Tree^{(1)}_{v_0}\), as well as lazy insertion trees associated with the ancestors \(v_0'\) of \(v_0\) along the path to the root: if \(v_0\) lies in the left (resp., right) subtree of \(v_0'\), we include \(\LZL^{(1)}_{v_0'}\) (resp., \(\LZR^{(1)}_{v_0'}\)) in \(\mathcal{S}^{(1)}_{v_0}\). We refer to each element in this set by \(S_{v'_0}\), denoting the second-level data structure associated with the ancestor \(v'_0\). The complete construction of \(\mathcal{S}^{(1)}_{v_0}\) is detailed in Section~\ref{Sec:Weigeneral} and Algorithm~\ref{alg:crr1}.

Now, the following Lemmas~\ref{lemma:unique} and~\ref{lemma:complete} guarantee the uniqueness and completeness of $\mathcal{S}_{v_0}$ for any node $v_0 \in \Tree^{(0)}$.

\begin{lemma}[Uniqueness of Endpoints Across \(\mathcal{S}^{(1)}_{v_0}\)]\label{lemma:unique}
Let \(v_0 \in \Tree^{(0)}\) be a fixed node, and let \(\mathcal{S}^{(1)}_{v_0}\) denote the set of second-level data structures constructed along the path \(p_{v_0}\) from \(v_0\) to the root of \(\Tree^{(0)}\), as defined in Algorithm~\ref{alg:crr1}. Then, for any region whose vertical interval has been inserted into one of these structures, this interval appears in exactly one data structure in \(\mathcal{S}^{(1)}_{v_0}\). That is, no two data structures \(S_{v_0'}, S_{v_0''} \in \mathcal{S}^{(1)}_{v_0}\) maintain atomic vertical intervals at their leaves that share a common endpoint.
\end{lemma}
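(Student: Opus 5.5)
The plan is to reduce uniqueness to the structural characterization of insertions in Observation~\ref{obs:ins}: for each inserted region $\rg$, we identify precisely which second-level structures receive its vertical interval. We then show that among the structures collected along $p_{v_0}$ by Algorithm~\ref{alg:crr1}, at most one is of that kind. Fix $v_0$ and write $p_{v_0} = (v_0 = y_0, y_1, \dots, y_m = \ROOT(\Tree^{(0)}))$. Write $S_{y_j}$ for the structure contributed by $y_j$: $S_{y_0} = \Tree^{(1)}_{v_0}$ is regular, and for $j \ge 1$, $S_{y_j}$ is $\LZL^{(1)}_{y_j}$ or $\LZR^{(1)}_{y_j}$ according to whether $y_{j-1}$ is the left or right child of $y_j$. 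Thus $S_{y_j}$ for $j\ge1$ holds exactly the vertical intervals of regions whose horizontal interval fully contains $\RA(y_{j-1})$ but does not fully contain $\RA(y_j)$, since insertion stops at the highest fully covered node.

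The key steps, in order, are as follows. (i) Characterize lazy placement. A region $\rg$ with horizontal interval $I$ is recorded in the lazy tree of $y_j$ on the side of $y_{j-1}$ iff $\RA(y_{j-1}) \subseteq I$ and $\RA(y_j) \not\subseteq I$. Indeed, lazy insertion happens only at nodes of $(p_{l_0}\cup p_{u_0})\setminus p_{v_{l_0,u_0}}$ whose child is a maximal fully-covered subtree; this is the canonical decomposition of $I$ into $O(\EH)$ maximal node ranges. (ii) Characterize regular placement. $\rg$ is regularly inserted at $y_0$ iff $\RA(y_0)\cap I \neq \emptyset$ but $\RA(y_0)\not\subseteq I$. (iii) Use monotonicity along the path. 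Since $\RA(y_0)\subseteq \RA(y_1)\subseteq\cdots\subseteq\RA(y_m)$, the predicate ``$\RA(y_j)\subseteq I$'' is monotone: true for $j < j^*$ and false for $j \ge j^*$. Hence at most one index $j$ satisfies the lazy condition, namely $j = j^*$. If $j^* \ge 1$, then $\RA(y_0)\subseteq I$, so the regular condition at $y_0$ fails. If $j^*=0$, no lazy condition holds, and only the regular insertion can occur. So $\rg$'s vertical interval lives in exactly one $S_{y_j}$.

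The last step is to translate this into the endpoint statement. Endpoints in a second-level tree leaf arise only from inserted vertical intervals (besides $0,1$), and endpoints are a.s.\ distinct under $\sigma$-smoothness. Distinct regions from the same round that share a vertical endpoint must be treated with care. We handle this by viewing each stored endpoint as tagged by its inserting region, and then applying step (iii) per region.

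The main obstacle I expect is step (i). The algorithmic description of lazy insertion (Algorithm~\ref{alg:inslaz}, with its left/right conventions and the Observation's phrasing) must be matched exactly to the canonical maximal-subtree decomposition. In particular, we must verify that no node on $p_{v_{l_0,u_0}}$, and no node strictly inside a fully covered subtree, ever receives the interval. Leaf splitting during $\Ins$ complicates this, since it copies $\Tree^{(1)}_{l_0}$ into the new lazy trees. I would first try an induction on the number of insertions, maintaining the invariant of (i) and (ii). In that induction, I would treat the split step separately and check that the copy assigns each old interval to the child side consistently with the refined ranges.
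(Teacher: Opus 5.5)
Your proposal is correct and follows essentially the same route as the paper. The paper argues two things: no two ancestor lazy trees on $p_{v_0}$ can hold the same vertical interval, because a lazily recorded interval is never pushed into a descendant's lazy structure; and $\Tree^{(1)}_{v_0}$ cannot share an interval with those lazy trees, because regular insertion means $\RA(v_0)$ is only partially covered while lazy insertion means it is fully covered. Your monotone predicate ``$\RA(y_j)\subseteq I$'' along the path packages these two cases into one cleaner step. The paper takes your placement rules (i)--(ii) as holding ``by design'' without the induction over leaf splits you propose. Like your tagging, it proves only the per-region version, for which ``at most one'' suffices, since the lemma presupposes the interval already lies in some member of $\mathcal{S}^{(1)}_{v_0}$.
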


\begin{lemma}[Completeness of Endpoints Across \(\mathcal{S}^{(1)}_{v_0}\)]\label{lemma:complete}
For a given node \(v_0 \in \Tree^{(0)}\), let \(\mathcal{S}^{(1)}_{v_0}\) denote the set of data structures along the path \(p_{v_0}\) from the root of the outermost tree \(\ROOT(\Tree^{(0)})\) to the node \(v_0\), as constructed by Algorithm~\ref{alg:crr1}. Then, for every region inserted up to round \(t\), if its horizontal interval intersects the horizontal range \(\RA(v_0)\), its vertical interval is contained in at least one data structure \(S_{v_0'} \in \mathcal{S}^{(1)}_{v_0}\). In other words, the union of all data structures in \(\mathcal{S}^{(1)}_{v_0}\) includes all vertical intervals relevant to \(v_0\) that have been inserted so far.
\end{lemma}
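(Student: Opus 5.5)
The plan is an induction on the number of inserted regions. The induction invariant will be completeness at every node of \(\Tree^{(0)}\) simultaneously: for every \(v_0\in\Tree^{(0)}\) and every region \(\rg\) inserted so far whose horizontal interval meets \(\RA(v_0)\), the vertical interval of \(\rg\) lies in some structure of \(\mathcal{S}^{(1)}_{v_0}\). We have to check two things: that the invariant survives the insertion of a new region, and that it survives the structural changes to \(\Tree^{(0)}\) caused by leaf splitting (Algorithm~\ref{alg:ins1}).

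\textbf{The newly inserted region.} Fix \(v_0\) and a new region \(\rg\) with horizontal interval \([x^{(0)}_1:x^{(0)}_2]\) that meets \(\RA(v_0)\). Let \(l_0,u_0\) be the located leaves and \(v_{l_0,u_0}\) their first common ancestor. We split into two cases using Observation~\ref{obs:ins}.

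\begin{enumerate}
\item If \(v_0\in p_{l_0}\cup p_{u_0}\), then \(\RA(v_0)\) contains an endpoint of \(\rg\). The vertical interval is regularly inserted into \(\Tree^{(1)}_{v_0}\), which belongs to \(\mathcal{S}^{(1)}_{v_0}\) by Algorithm~\ref{alg:crr1}.
\item Otherwise, \(\RA(v_0)\) meets but does not contain an endpoint, so \(\RA(v_0)\subseteq[x^{(0)}_1:x^{(0)}_2]\). Then \(v_0\) lies in the subtree of \(v_{l_0,u_0}\) and off both paths. Walk from \(v_0\) up to the first ancestor \(v_0'\) lying on \(p_{l_0}\cup p_{u_0}\). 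Because \(v_0\) is off both paths, \(v_0'\neq v_{l_0,u_0}\) unless \(v_0\) sits in a child subtree of \(v_{l_0,u_0}\) that is entirely inside the interval. The child \(c\) of \(v_0'\) on the way to \(v_0\) is off the path, and its sibling is on it; this is the lazy-insertion condition. So \(\RA(c)\subseteq[x^{(0)}_1:x^{(0)}_2]\), and Observation~\ref{obs:ins}(2) puts the vertical interval into \(\LZL^{(1)}_{v_0'}\) if \(c=\LC(v_0')\), or into \(\LZR^{(1)}_{v_0'}\) if \(c=\RC(v_0')\). Since \(v_0\) lies in the subtree of \(c\), Algorithm~\ref{alg:crr1} selects exactly that lazy tree of \(v_0'\) into \(\mathcal{S}^{(1)}_{v_0}\).
\end{enumerate}

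The degenerate case in item 2 has to be handled separately. Here \(v_0\) lies below \(v_{l_0,u_0}\) in a child subtree whose range is fully contained in the interval, for example when \(x^{(0)}_1\) is the left end of \(\RA(v_{l_0,u_0})\). We treat this by noting that the located leaf's path then passes through that child, which reduces it to the generic situation above.

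\textbf{Earlier regions under leaf splits.} For regions inserted before the current one, the invariant holds at all old nodes by induction, and the sets \(\mathcal{S}^{(1)}_{v_0}\) of old nodes are unchanged, because the ancestors of an old node stay fixed. The only new nodes are the two children \(v_0,v_0'\) created when a leaf \(l\) is split. For these, Algorithm~\ref{alg:ins1} copies \(\Tree^{(1)}_l\) into \(\LZL^{(1)}_l\) and \(\LZR^{(1)}_l\). Any old region meeting \(\RA(v_0)\subseteq\RA(l)\) also met \(\RA(l)\). By induction its vertical interval was in some structure of \(\mathcal{S}^{(1)}_l\). That structure is either a lazy tree of an ancestor of \(l\), which is also in \(\mathcal{S}^{(1)}_{v_0}\), or \(\Tree^{(1)}_l\) itself, which is now present in the lazy copy of \(l\) on \(v_0\)'s side and is therefore in \(\mathcal{S}^{(1)}_{v_0}\).

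The main obstacle is the bookkeeping in the lazy case: verifying that the parent/child membership test in Algorithm~\ref{alg:inslaz} matches the left/right choice in Algorithm~\ref{alg:crr1}. The two algorithms use mirrored conventions for \(\LZL\) and \(\LZR\) in different places, so I would first fix one convention consistent with Observation~\ref{obs:ins} and then carry out item 2 against it.
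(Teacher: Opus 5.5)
Your proof is correct. Its core is the same case split the paper uses. If \(v_0\in p_{l_0}\cup p_{u_0}\), the vertical interval is regularly inserted into \(\Tree^{(1)}_{v_0}\). Otherwise \(\RA(v_0)\subseteq[x^{(0)}_1:x^{(0)}_2]\), and the interval sits in the lazy tree, on \(v_0\)'s side, of an ancestor that Algorithm~\ref{alg:crr1} collects. You go further in two ways: you make the lazy case concrete (the first on-path ancestor \(v_0'\), whose off-path child has an on-path sibling), and you wrap everything in an induction over the insertion history.

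The induction matters. The paper argues statically, as if \(v_0\) already existed when each region was inserted. Its sentence that a region fully containing \(\RA(v_0)\) ``is by definition inserted into a lazy structure of an ancestor'' is false at insertion time for leaves created by later splits. Suppose a region was regularly inserted into \(\Tree^{(1)}_l\), where \(l\) is the post-split leaf with \(\RA(l)=[x^{(0)}_1:b]\) on \(p_{l_0}\), and \(l\) is later split. Its new children are fully covered by the region, yet no lazy tree in their \(\mathcal{S}^{(1)}\) received it. Completeness holds only because Algorithm~\ref{alg:ins1} copies \(\Tree^{(1)}_l\) into \(\LZL^{(1)}_l\) and \(\LZR^{(1)}_l\). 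Your leaf-split step covers exactly this case. So your route is longer but closes a hole the paper's short argument passes over, while the paper's version buys brevity by implicitly treating \(\Tree^{(0)}\) as static.

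A few minor points.
(i) The ``degenerate case'' detour is unnecessary. When \(l_0\neq u_0\), the two children of \(v_{l_0,u_0}\) lie on \(p_{l_0}\) and \(p_{u_0}\) respectively, so \(v_0'\) is always strictly below \(v_{l_0,u_0}\). When \(l_0=u_0\), case 2 is vacuous.
(ii) ``Meets'' must mean overlap of positive length, with \(l_0,u_0\) the post-split leaves having \(x^{(0)}_1\) as left and \(x^{(0)}_2\) as right endpoint. Otherwise the leaf ending at \(x^{(0)}_1\) breaks both your step \(\RA(v_0)\subseteq[x^{(0)}_1:x^{(0)}_2]\) and the lemma itself.
(iii) Anchor the left/right convention on Algorithms~\ref{alg:inslaz} and~\ref{alg:crr1}, not on Observation~\ref{obs:ins}, whose ``respectively'' is mirrored. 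The two algorithms agree that the lazy tree is named by the side of the fully contained child, which is what you actually use in item 2.
(iv) Your claim that old nodes are unaffected also relies on the overwrite in Algorithm~\ref{alg:ins1}. That overwrite only touches \(\LZL^{(1)}_l\) and \(\LZR^{(1)}_l\), which belong to no pre-existing node's \(\mathcal{S}^{(1)}\); say so.
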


\begin{proof}[Proof of Lemmas~\ref{lemma:unique} and~\ref{lemma:complete}]

  The proof of both lemmas highly depend on our approach in inserting regions and updating the information about the cumulative rewards after each insertion. 

  By construction, each second-level data structure associated with a node \(v_0 \in \Tree^{(0)}\)—either the regular insertion tree \(\Tree^{(1)}_{v_0}\) or a lazy insertion tree of one of its ancestors—stores the vertical intervals of regions whose horizontal intervals intersect the horizontal range \(\RA(v_0)\). In particular, if a region \(\rg\) is inserted such that its horizontal interval intersects (but does not fully contain) \(\RA(v_0)\), then its vertical interval is inserted into \(\Tree^{(1)}_{v_0}\), the regular insertion tree of \(v_0\).

    On the other hand, if the horizontal interval of \(\rg\) fully contains \(\RA(v_0)\), then by definition it is inserted into a lazy insertion data structure associated with an ancestor \(v_0'\) of \(v_0\). Our data structure ensures that vertical intervals inserted into such lazy trees are never propagated downward into the second-level structures of descendants. Instead, their contributions are correctly scaled during retrieval when needed.

Therefore, the union of the vertical intervals stored across all second-level structures in \(\mathcal{S}^{(1)}_{v_0}\)—comprising \(\Tree^{(1)}_{v_0}\) and the lazy insertion trees of \(v_0\)'s ancestors along the path to the root—contains exactly the vertical intervals of all regions whose horizontal intervals intersect \(\RA(v_0)\), including those that fully contain it. Moreover, due to the regular insertion routine, any vertical interval inserted into a second-level tree associated with a descendant of \(v_0\) must also be inserted into \(\Tree^{(1)}_{v_0}\), ensuring no relevant information is omitted. This proves that every vertical interval relevant to \(\RA(v_0)\) is captured by \(\mathcal{S}^{(1)}_{v_0}\), completing the proof of completeness.

Recall from the construction of \(\mathcal{S}^{(1)}_{v_0}\) that this set includes exactly one regular insertion tree \(\Tree^{(1)}_{v_0}\), along with a collection of lazy insertion trees associated with the ancestors of \(v_0\) along the path \(p_{v_0}\) to the root of \(\Tree^{(0)}\). By design, each lazy insertion tree \(\LZL^{(1)}_{v_0'}\) or \(\LZR^{(1)}_{v_0'}\) in \(\mathcal{S}^{(1)}_{v_0}\) stores only those vertical intervals from regions whose horizontal intervals fully contain the horizontal range of either \(\LC(v_0')\) or \(\RC(v_0')\). Moreover, our insertion procedure ensures that once a vertical interval is inserted into a lazy structure of an ancestor \(v_0'\), it is not inserted into any descendant's second-level lazy data structure on the path \(p_{v_0}\). Thus, no two lazy insertion trees in \(\mathcal{S}^{(1)}_{v_0}\) share any common vertical interval.

It remains to verify that \(\Tree^{(1)}_{v_0}\) does not share any vertical interval with the lazy insertion trees of its ancestors. Suppose a vertical interval is inserted into \(\Tree^{(1)}_{v_0}\). Then, by the regular insertion routine, it is also inserted into the second-level regular insertion trees of all nodes along the path from the corresponding leaf to the root of \(\Tree^{(0)}\), including \(v_0\) and its ancestors in \(p_{v_0}\). Crucially, our algorithm prevents such an interval from being inserted into any lazy structure affecting \(\RA(v_0)\), as lazy insertions are restricted to regions whose horizontal interval strictly contains \(\RA(v_0)\) and it is not along the path $p_{v_0}$, and these are handled exclusively at the level of the relevant ancestor. Consequently, no vertical interval can appear in both \(\Tree^{(1)}_{v_0}\) and any lazy insertion tree in \(\mathcal{S}^{(1)}_{v_0}\), which completes the proof of uniqueness.
\end{proof}

Lemmas~\ref{lemma:unique} and~\ref{lemma:complete} guarantee a disjoint and complete organization of the second-level data structures that either directly or lazily affect a node \( v \in \Tree^{(0)} \), denoted by \(\mathcal{S}_v\) in Algorithm~\ref{alg:crr1}. This structural property enables efficient and modular handling of cumulative reward queries. A key observation is that each lazy data structure in \(\mathcal{S}_v\) uniformly influences its corresponding subtree, and thus can be queried independently. Combined with the scaling technique introduced in Section~\ref{section:scaling}, which preserves correctness while mapping contributions from various structures to the target range of \(v\), this design avoids redundant insertions at every round.

With the structural properties established in Lemmas~\ref{lemma:unique} and~\ref{lemma:complete}, we are now ready to formally prove the correctness of Algorithm~\ref{alg:crr2} and~\ref{alg:crr3}, which implements the cumulative reward retrieval procedure \(\Wei\). These lemmas ensure that all relevant contributions to the cumulative reward of a given query region are uniquely and completely represented within the data structures comprising \(\mathcal{S}^{(1)}_{v_0}\), thereby enabling accurate and efficient computation via \(\Wei\).

\paragraph{Correctness of $\Wei$ Query:}
As previously discussed, the cumulative reward retrieval procedure \(\Wei\) is decomposed into two specialized subroutines, \(\Wei^{(0)}(\cdot)\) and \(\Wei^{(1)}(\cdot)\), for handling queries in the two-dimensional action space. These are detailed in Section~\ref{Sec:Weigeneral}. We now formally establish the correctness of each subroutine through the following lemmas.

\begin{lemma}\label{lemma:Wei0}
For any region \(\rg = [x^{(0)}_1:x^{(0)}_2][0:1]\), the cumulative reward retrieval algorithm (Algorithm~\ref{alg:crr2} and~\ref{alg:crr3}) correctly computes \(\CR(\rg)\), the total cumulative reward over the region \(\rg\).
\end{lemma}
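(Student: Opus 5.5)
We prove Lemma~\ref{lemma:Wei0} by following the two branches of Algorithm~\ref{alg:crr2}, treating the exact-match case first and then reducing the general case to it by induction on the number of pruning steps.

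\textbf{Exact-match case} ($\RA(v_{l_0,u_0}) = [x^{(0)}_1:x^{(0)}_2]$). Fix $v=v_{l_0,u_0}$ and split the regions $\rg_{i,t'}$ inserted up to round $t$ whose horizontal interval meets $\RA(v)$ into two classes:
\begin{enumerate}
\item those whose horizontal interval intersects but does not fully contain $\RA(v)$;
\item those whose horizontal interval contains $\RA(v)$.
\end{enumerate}
Lemma~\ref{lemma:complete} says every such region appears in some structure of $\mathcal{S}^{(1)}_{v}$. Lemma~\ref{lemma:unique} says it appears in exactly one. So $\CR([\RA(v)][0:1])$ equals the sum in~\eqref{crew:detach}, with each term being the contribution of the regions stored in one structure $S^{(1)}_{v_0}$.

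For the regular term, Lemma~\ref{lemma:regroot} applies directly. The root vector $\vec{\CR}_{v,\ROOT(\Tree^{(1)}_{v})}$ already integrates each class-(1) region over its intersection with $[\RA(v)][0:1]$. Summing its entries therefore gives~\eqref{crew:detach1}.

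For each lazy term $S^{(1)}_{v_0}$ with $v_0$ an ancestor of $v$, Corollary~\ref{corr:lazroot} applies. The root vector stores, unscaled, the integral over $[\RA(v_0)][0:1]$ of every region whose horizontal interval covers the relevant child subtree of $v_0$, and hence covers $\RA(v)\subseteq\RA(v_0)$. On $[\RA(v_0)][0:1]$ each such region carries a fixed coefficient vector. Its contribution to $[\RA(v)][0:1]$ is therefore exactly the monomial-wise rescaling~\eqref{eq:scalingmain}. The vertical factor equals one because both vertical ranges are $[0:1]$. Since scaling is linear in the coefficient vector, we can apply it to the aggregated root vector rather than region by region. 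This yields~\eqref{crew:detach2}, and adding the two terms gives $\CR(\rg)$.

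\textbf{Strict-containment case.} We write $[\RA(v)] \setminus [x^{(0)}_1:x^{(0)}_2]$ as a disjoint union of horizontal ranges. These are the ranges of the off-path children along $p_{l_0}\setminus p_{v}$ that lie left of $x^{(0)}_1$, together with those along $p_{u_0}\setminus p_{v}$ that lie right of $x^{(0)}_2$. This is the standard canonical decomposition of a range by the two search paths in a binary tree. Each of these children is itself a node whose range is matched exactly, so the exact-match case gives its reward correctly. Additivity of $\CR$ over disjoint horizontal strips then shows that subtracting these terms from $\CR([\RA(v)][0:1])$ leaves $\CR(\rg)$. The recursion terminates because every recursive call falls into the exact-match branch.

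\textbf{Main obstacle.} The delicate step is justifying the lazy scaling. We must show that every region stored lazily at $v_0$ has a coefficient function that is \emph{constant on all of} $[\RA(v_0)][\RA(w)]$ for each node $w$, and not merely on the covered child. Otherwise~\eqref{eq:scalingmain} would overcount. The update rule~\eqref{alg:lupdl'1} integrates over the full $\RA(v_0)$ ``as if'' the region covered it. The point to argue is that the only quantity ever read back is its restriction to a child range $\RA(v)$ that the region does fully cover, since $v$ lies in the lazily marked subtree. Proportional scaling to that subrange therefore recovers the true integral exactly. I would make this explicit by tracking, for each lazy entry, the side (left or right) on which it was recorded. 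I would then check that Algorithm~\ref{alg:crr1} selects $\LZL$ versus $\LZR$ consistently with that side, and note the apparent left/right mismatch between the text and Algorithm~\ref{alg:inslaz} as something to reconcile.
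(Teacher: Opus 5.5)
Your proposal is correct and follows the paper's own argument. You reduce to exact-match queries $[\RA(v)][0:1]$, handling the strict case by pruning. You justify the decomposition~\eqref{crew:detach} via Lemmas~\ref{lemma:unique} and~\ref{lemma:complete}, and read off the regular and lazy terms via Lemma~\ref{lemma:regroot} and Corollary~\ref{corr:lazroot} together with scaling. You also go beyond the paper: you spell out the canonical decomposition behind the pruning step, and you explain why rescaling a lazy vector integrated over all of $\RA(v_0)$ is exact on the covered child, both of which the paper leaves implicit.
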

\begin{proof}
As detailed in Section~\ref{Sec:Wei0}, handling a query of the form \(\Wei^{(0)}(\rg)\) begins by identifying the node \(v_{l_0,h_0}\), defined as the lowest common ancestor of the leaves \(l_0\) and \(h_0\) whose horizontal ranges contain the endpoints of the queried region \(\rg\). The algorithm then retrieves the cumulative reward over the region \([\RA(v_{l_0,h_0})][0:1]\), followed by pruning the contribution of the portions outside \(\rg\). Hence, to prove correctness, it suffices to show that we can correctly retrieve the cumulative reward over any region of the form \([\RA(v_0)] [0:1]\) for an arbitrary node \(v_0 \in \Tree^{(0)}\), as this will directly imply correctness for \(v_{l_0,h_0}\) and any pruned subregions.

This retrieval is performed by first constructing the set \(\mathcal{S}^{(1)}_{v_0}\), which contains all second-level data structures—both regular and lazy—influencing node \(v_0\). The algorithm then computes the contribution of each structure to the region \([\RA(v_0)][0:1]\) independently and aggregates the results. This aggregation follows the detachment rule described in Equation~\eqref{crew:detach}, which decomposes the total reward into additive components corresponding to disjoint sources. In particular, the detachment separates the contribution from the regular insertion tree \(\Tree^{(1)}_{v_0}\)—stored directly at its root and requiring no scaling—from the contributions of the lazy insertion trees associated with the ancestors of \(v_0\) along the path \(p_{v_0}\), each of which must be appropriately scaled before aggregation. The relevant cumulative reward vectors are maintained at the roots of these data structures, as guaranteed by Lemma~\ref{lemma:regroot} and Corollary~\ref{corr:lazroot}. The procedures for retrieving and scaling these vectors are discussed in detail in Section~\ref{Sec:Wei0}.

Here, we focus on establishing the correctness of Equation~\eqref{crew:detach}, which follows directly from the structural guarantees of \emph{uniqueness} and \emph{completeness} for the set \(\mathcal{S}^{(1)}_{v_0}\), as formalized in Lemmas~\ref{lemma:unique} and~\ref{lemma:complete}.

\end{proof}
\begin{lemma}\label{lemma:Wei1}
Let \(\rg = [x'^{(0)}_1:x'^{(0)}_2] [x^{(1)}_1:x^{(1)}_2]\) be a rectangular region such that \([x'^{(0)}_1:x'^{(0)}_2]\) is an atomic horizontal interval. Then, the vertical cumulative reward retrieval procedure \(\Wei^{(1)}(\rg)\), as implemented in Algorithm~\ref{alg:crr2}, correctly returns \(\CR(\rg)\)—the total cumulative reward over the region \(\rg\).
\end{lemma}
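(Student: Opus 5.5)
The proof follows the template of Lemma~\ref{lemma:Wei0}: decompose $\CR(\rg)$ according to Equation~\eqref{crew:detachplus} into one contribution per structure in $\mathcal{S}^{(1)}_{l_0}$, and then check that each structure's contribution is computed correctly. Since $[x'^{(0)}_1:x'^{(0)}_2]=\RA(l_0)$ for a leaf $l_0$, completeness (Lemma~\ref{lemma:complete}) shows that every inserted region whose horizontal interval meets $\RA(l_0)$ has its vertical interval recorded in some $S^{(1)}_{v_0}\in\mathcal{S}^{(1)}_{l_0}$. Uniqueness (Lemma~\ref{lemma:unique}) shows it is recorded in exactly one. Hence $\CR(\rg)=\sum_{S^{(1)}_{v_0}}\CR_{S^{(1)}_{v_0}}(\rg)$, and it suffices to prove that the algorithm returns the correct term for each structure.

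\textbf{Regular term.} For $S^{(1)}_{l_0}=\Tree^{(1)}_{l_0}$, Lemma~\ref{lemma:regleaf} gives that every $\vec{\CR}_{l_0,v_1}$ equals the reward over $[\RA(l_0)][\RA(v_1)]$ from the regions stored there, once pending lazy messages along the located paths are pushed (Algorithm~\ref{alg:upd6}). The argument then has two steps.
\begin{enumerate}
\item Retrieving the reward of $[\RA(l_0)][a^{(1)}_1:b^{(1)}_2]$ from $v_{l_1,u_1}$ and pruning the off-path siblings is exactly the one-dimensional argument used for $\Wei^{(0)}$, now applied inside $\Tree^{(1)}_{l_0}$.
\item The leaves $l_1$ and $u_1$ of $\Tree^{(1)}_{l_0}$ contain no endpoint of any region stored in that tree, so $[\RA(l_0)][\RA(l_1)]$ is atomic for this structure. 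On it the parameter function is constant, and the scaling rule \eqref{eq:scalingmain}, applied only in coordinate $1$, gives $u_{l_1}$ and $u_{u_1}$ exactly as in \eqref{crew1:reg1} and \eqref{crew1:reg2}. Subtracting them yields \eqref{crew1:reg}.
\end{enumerate}

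\textbf{Lazy terms.} For each ancestor structure $\LZL^{(1)}_{v_0}$ or $\LZR^{(1)}_{v_0}$, Corollary~\ref{corr:lazleaf} gives that every stored vector is the scalable reward over $[\RA(v_0)][\RA(w_1)]$. Each such vector is built from regions whose horizontal extent is treated as all of $\RA(v_0)$, so these vectors are additive. The pruning of \eqref{crew1:laz} is therefore valid at the vector level.
\begin{enumerate}
\item Every region stored lazily at $v_0$ fully covers the child subtree containing $l_0$, and hence covers $\RA(l_0)$.
\item The parameter function restricted to $\RA(l_0)$ therefore agrees with the one used in the integration over $\RA(v_0)$.
\item Scaling horizontally by the factor in \eqref{crew1:laz1} is then exact by \eqref{eq:scalingmain}.
\item The boundary pieces $\vec{u}_{l_1},\vec{u}_{u_1}$ come from atomic leaves of this structure and are scaled in both coordinates, giving \eqref{crew1:laz2} and \eqref{crew1:laz3}.
\end{enumerate}
Summing the regular term and all lazy terms gives $\CR(\rg)$.

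\textbf{Main obstacle.} The hardest step is justifying that a leaf of a single structure $S^{(1)}_{v_0}$ is atomic with respect to that structure's own regions, even though it need not be globally atomic. This is what makes the per-coordinate scaling of the boundary pieces exact. I would argue it from uniqueness: endpoints of regions stored in $S^{(1)}_{v_0}$ are inserted into $S^{(1)}_{v_0}$ itself. Endpoints lying inside $\RA(l_1)$ therefore come only from other structures, and those regions are accounted for separately. I would also need to check that lazy messages are fully pushed along $p_{l_1}$ and $p_{u_1}$ before the vectors are read.
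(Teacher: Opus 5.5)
Your proposal is correct and follows the paper's own proof. Both decompose $\CR(\rg)$ via \eqref{crew:detachplus} over $\mathcal{S}^{(1)}_{l_0}$, use Lemma~\ref{lemma:regleaf} for the regular tree at the leaf $l_0$ and Corollary~\ref{corr:lazleaf} for the ancestors' lazy trees, and then prune and scale at the boundary leaves. Your explicit use of Lemmas~\ref{lemma:unique} and~\ref{lemma:complete}, and your point that the boundary leaves need only be atomic with respect to each structure's own stored intervals, make precise what the paper leaves implicit. Note that this per-structure atomicity follows directly from the interval-tree insertion routine, since every stored endpoint splits a leaf; uniqueness is needed only to ensure the remaining endpoints inside $\RA(l_1)$ are counted in other structures.
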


\begin{proof}
Before proceeding with the proof, observe that the horizontal interval of the query region \(\rg = [x'^{(0)}_1:x'^{(0)}_2][x^{(1)}_1:x^{(1)}_2]\) is atomic. Therefore, there exists a leaf node \(l_0 \in \Tree^{(0)}\) such that \(\RA(l_0) = [x'^{(0)}_1:x'^{(0)}_2]\), and we henceforth fix \(v_0 := l_0\) to reflect this property, as established in Section~\ref{sec:Wei1}.

In contrast to the \(\Wei^{(0)}\) retrieval, which accesses cumulative rewards stored at the roots of the data structures in \(\mathcal{S}^{(1)}_{l_0}\), the vertical interval \([x^{(1)}_1:x^{(1)}_2]\) in \(\rg\) may not align with the range of any single node. Consequently, cumulative rewards must be computed by aggregating contributions from nodes whose vertical ranges intersect \([x^{(1)}_1:x^{(1)}_2]\), requiring partial evaluation and pruning of leaf-level reward vectors.

To accomplish this, the algorithm constructs the set \(\mathcal{S}^{(1)}_{l_0}\) and detaches the contribution of each of its members to the region \(\rg\), as outlined in Equation~\eqref{crew:detachplus}. The correctness of this decomposition relies on the scalability of the cumulative reward vectors maintained within these data structures. Specifically, Lemma~\ref{lemma:regleaf} ensures that all nodes in the unique regular insertion tree \(\Tree^{(1)}_{l_0} \in \mathcal{S}^{(1)}_{l_0}\) store cumulative rewards over rectangles of the form \([\RA(l_0)][\RA(v_1)]\), which enables precise scaling and, consequently, accurate pruning with respect to the vertical interval. Likewise, Corollary~\ref{corr:lazleaf} guarantees that the lazy insertion trees in \(S^{(1)}_{v_0} \in \mathcal{S}^{(1)}_{l_0}\), whose cumulative reward vectors are all defined and integrated over the fixed horizontal interval \(\RA(v_0)\), support correct scaling to any vertical subinterval \([x^{(1)}_1:x^{(1)}_2]\).

Together, these properties ensure that each data structure in \(\mathcal{S}^{(1)}_{l_0}\) can be queried independently, pruned accurately, and aggregated to compute \(\CR(\rg)\) correctly. The full details of this procedure, including leaf-level pruning and scaling operations, are presented in Section~\ref{sec:Wei1}.

\end{proof}

\paragraph{Correctness of $\Draw$ Query}
In this section we prove that our drawing Algorithm~\ref{alg:draw1} and~\ref{alg:draw2} finds an atomic region with the correct probability. 
\begin{lemma}\label{lemma:draw}
    At each time step $t$, Algorithm~\ref{alg:draw1} and~\ref{alg:draw2}, which implements the $\Draw(t)$ operation, samples an atomic region $\rg$ with probability proportional to its cumulative reward.
\end{lemma}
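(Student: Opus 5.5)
The plan is to prove Lemma~\ref{lemma:draw} by a telescoping-product argument over the two root-to-leaf traversals. We take as given the correctness of the two retrieval subroutines: Lemma~\ref{lemma:Wei0} for $\Wei^{(0)}$ and Lemma~\ref{lemma:Wei1} for $\Wei^{(1)}$. With these, every coin probability in Algorithms~\ref{alg:draw1} and~\ref{alg:draw2} equals the stated ratio of exact aggregate rewards $\CR(\cdot)$.

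\textbf{Horizontal stage.} Fix an atomic horizontal interval $I=\RA(l_0)$ at a leaf $l_0$ of $\ATree^{(0)}$, and let $u^0,u^1,\dots,u^m=l_0$ be the root-to-leaf path. The traversal reaches $l_0$ exactly when every move agrees with this path. By \eqref{eq:probright}, the probability of the move $u^j\to u^{j+1}$ is
\[
\frac{\CR([\RA(u^{j+1})][0{:}1])}{\CR([\RA(\LC(u^j))][0{:}1])+\CR([\RA(\RC(u^j))][0{:}1])}.
\]
We then use additivity of $\CR$, which holds because it is an integral and $\RA(u^j)$ is the disjoint union (up to measure zero) of its children's ranges. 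This shows the denominator equals $\CR([\RA(u^j)][0{:}1])$. The product therefore telescopes to $\CR([I][0{:}1])/\CR([0{:}1][0{:}1])$, since the root range is $[0,1]$.

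\textbf{Vertical stage.} Conditioned on $l_0$, the same argument applies to $\ATree^{(1)}$ using \eqref{eq:probrighty} and $\Wei^{(1)}$. Those queries are valid because $\RA(l_0)$ is atomic, which is exactly the hypothesis of Lemma~\ref{lemma:Wei1}. The product telescopes to $\CR([I][J])/\CR([I][0{:}1])$ for the atomic vertical interval $J$. Multiplying the two stages gives $\Pr(\rg\gets\Draw(t))=\CR(\rg)/\CR([0{:}1]^2)$. Finally, $\CR([0{:}1]^2)=\sum_{\rg'}\CR(\rg')$ over all atomic regions, again by additivity, because the atomic regions tile the square.

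\textbf{Main obstacle.} The delicate step is ensuring that the leaves reached in the auxiliary trees really are atomic with respect to all stored data, so that $[I][J]$ is an atomic region. This requires $\ATree^{(0)}$ and $\ATree^{(1)}$ to contain every endpoint inserted so far, which holds by the final step of $\Ins$. It also requires each $\Wei$ call to be invoked on intervals whose endpoints are adversarial endpoints, matching the hypotheses of Lemmas~\ref{lemma:Wei0} and~\ref{lemma:Wei1}. A second subtlety is a zero denominator. If some $\CR([\RA(u^j)][\cdot])=0$, that node is reached with probability zero, so any coin convention is harmless; alternatively, one can invoke positivity of $\fu_t$. Both points are checked before the telescoping.
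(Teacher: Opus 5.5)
Your proposal is correct and follows essentially the same route as the paper. The paper argues by induction along the root-to-leaf path that each node $u_0$ of $\ATree^{(0)}$ is reached with probability $\CR([\RA(u_0)][0{:}1])/\CR([0{:}1][0{:}1])$, states the analogous conditional claim for $\ATree^{(1)}$, multiplies the two stages, and defers correctness of the coin probabilities to Lemmas~\ref{lemma:Wei0} and~\ref{lemma:Wei1}; this is exactly your telescoping product, and your explicit use of additivity, the atomicity of the reached leaves, and the zero-denominator case spell out steps the paper leaves implicit.
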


\begin{proof}
As described in Algorithm~\ref{alg:draw1} and~\ref{alg:draw2}, the drawing process begins by selecting a region of the form \(\rg_1 = [x^{(0)}_1:x^{(0)}_2][0:1]\), where the horizontal interval \([x^{(0)}_1:x^{(0)}_2]\) corresponds to a leaf of \(\Tree^{(0)}\). This region is selected with probability proportional to the cumulative reward \(\CR(\rg_1)\) among all such horizontal slices. Then, within the selected region \(\rg_1\), the algorithm samples an atomic region \(\rg_2 = [x^{(0)}_1:x^{(0)}_2][x^{(1)}_1:x^{(1)}_2]\) with probability proportional to its cumulative reward relative to \(\rg_1\). Consequently, the final region \(\rg = \rg_2\) is drawn with probability proportional to its cumulative reward among all atomic regions in the domain \([0:1][0:1]\).

If the algorithm selects the horizontal region \(\rg_1\) in the first stage with probability
\[
\frac{\CR(\rg_1)}{\CR([0:1][0:1])},
\]
and then samples the atomic region \(\rg_2\) from within \(\rg_1\) with probability
\begin{align}
    \frac{\CR(\rg_2)}{\CR(\rg_1)}, \label{eq:versamplingmain}
\end{align}

then the marginal probability of drawing \(\rg_2\) is
\begin{align*}
   \Pr(\Draw(t) = \rg_2) &= \frac{\CR(\rg_1)}{\CR([0:1][0:1])} \cdot \frac{\CR(\rg_2)}{\CR(\rg_1)} \\
   &= \frac{\CR(\rg_2)}{\CR([0:1][0:1])}.
\end{align*}
Thus, Algorithm~\ref{alg:draw1} and~\ref{alg:draw2} samples each atomic region \(\rg_2\) with probability proportional to its cumulative reward.

During the first stage of handling the $\Draw$ query, Algorithm~\ref{alg:draw1} and~\ref{alg:draw2} follows a randomized path from the root node \( u_0 = \ROOT(\ATree^{(0)}) \) to a leaf. The probability that the drawing algorithm includes either \(\LC(u_0)\) or \(\RC(u_0)\) along this path directly influences which leaf is ultimately selected.

Initially, the algorithm starts at \( u_0 = \ROOT(\ATree^{(0)}) \), which yields
\[
\Pr \left(u_0 \gets \ROOT(\ATree^{(0)}) \right) =  \frac{\CR([\RA(\ROOT(\Tree^{(0)}))][0:1])}{\CR([0:1][0:1])} = \frac{\CR([0:1][0:1])}{\CR([0:1][0:1])} = 1.
\]

At each internal node \( u_0 \), the algorithm proceeds to the right child \(\RC(u_0)\) with probability
\[
\frac{\CR([\RA(\RC(u_0))][0:1])}{\CR([\RA(u_0)][0:1])}.
\]
Thus, the marginal probability that \(\RC(u_0)\) is included in the randomized path is
\begin{align*}
  \Pr \left(u_0 \gets \RC(u_0) \right) &= \Pr(u_0 \text{ appears on the path}) \times \frac{\CR([\RA(\RC(u_0))][0:1])}{\CR([\RA(u_0)][0:1])} \\
  &= \frac{\CR([\RA(u_0)][0:1])}{\CR([\RA(\PAR(u_0))][0:1])} \times \frac{\CR([\RA(\RC(u_0)][0:1]))}{\CR([\RA(u_0)][0:1])} \\
  &= \frac{\CR([\RA(\RC(u_0))][0:1])}{\CR([\RA(\PAR(u_0))][0:1])}.
\end{align*}

By applying this argument inductively, we conclude that for any node \( u_0 \in \ATree^{(0)} \), including the leaves,
\begin{align}
    \Pr\left(u_0 \text{ appears on the path}\right) = \frac{\CR([\RA(u_0)][0:1])}{\CR([0:1][0:1])}. \label{eq:horsampling}
\end{align}
Therefore, as long as the values \( \CR([\RA(u_0)][0:1]) \) are computed correctly, the first stage of our drawing algorithm selects the region \( \rg_1 = [\RA(u_0)][0:1] \) with the correct probability. These queried regions \( [\RA(u_0)][0:1] \) are precisely the ones for which \(\Wei^{(0)}\) is responsible. Hence, as long as \(\Wei^{(0)}([\RA(u_0)][0:1])\) retrieves the correct cumulative reward for its region—guaranteed by Lemma~\ref{lemma:Wei0}—the sampling remains correct. This completes the proof of the correctness of the  horizontal sampling.

Now it remains to prove the correctness of \(\Draw(t)\) through the vertical sampling stage.

We show that the \(\Draw(t)\) algorithm locates the atomic region \(\rg_2\) within the previously selected region \(\rg_1\) with the correct probability \(\frac{\CR(\rg_2)}{\CR(\rg_1)}\). This part of the proof closely follows the same reasoning used to establish the correctness of the horizontal sampling stage.

Let \(l_0\) denote the leaf in \(\ATree^{(0)}\) corresponding to the horizontal interval of the selected region \(\rg_1 = [\RA(l_0)][0:1]\). During the vertical sampling stage, the drawing algorithm begins at the root \(u_1 = \ROOT(\ATree^{(1)})\) and moves to its right child \(\RC(u_1)\) with probability
\[
\Pr\left(u_1 \gets \RC(u_1)\right) = \frac{\CR([\RA(l_0)][\RA(\RC(u_1))])}{\CR([\RA(l_0)][\RA(u_1)])},
\]
and to the left child with the complementary probability.

By an inductive argument similar to the one used for horizontal sampling, we conclude that for any node \(u_1 \in \ATree^{(1)}\), including the leaves, the probability that it appears on the traversal path is
\begin{align}
    \Pr\left(u_1 \text{ appears on the path}\right) = \frac{\CR([\RA(l_0)][\RA(u_1)])}{\CR([\RA(l_0)][0:1])}. \label{eq:versampling}
\end{align}

The correctness of this stage depends on the accurate computation of the cumulative rewards over vertical slabs within the region \(\rg_1\), which is precisely the responsibility of the query \(\Wei^{(1)}\). Its correctness is guaranteed by Lemma~\ref{lemma:Wei1}.

Finally, combining Equation~\eqref{eq:versampling} with the horizontal sampling 
result from Equation~\eqref{eq:horsampling}, we obtain the overall probability 
that \(\Draw(t)\) returns the region \(\rg_2 = [\RA(l_0)][\RA(l_1)]\):
\begin{align*}
    \Pr(\rg_2 \gets \Draw(t)) 
    &= \Pr(\rg_1 \gets \text{Horizontal Sampling}) 
       \times \Pr(\rg_2 \gets \text{Vertical Sampling} \mid \rg_1) \\
    &= \frac{\CR([\RA(l_0)][0:1])}{\CR([0:1][0:1])}
    \times \frac{\CR([\RA(l_0)][\RA(l_1)])}{\CR([\RA(l_0)][0:1])} \\
    &= \frac{\CR([\RA(l_0)][\RA(l_1)])}{\CR([0:1][0:1])},
\end{align*}
which matches Equation~\eqref{eq:versamplingmain}, completing the proof.
\end{proof}
\section{Generalization to $d$-dimensions (\Cref{sec:ds})} \label{section:generalpar}
 We now extend this construction recursively to a $d$-layered data structure, denoted by $\Tree$. The outermost layer is $\Tree^{(0)}$, whose nodes \(v_0 \in \Tree^{(0)}\) each maintain three second-level data structures: a regular structure $\Tree^{(1)}_{v_0}$, and two lazy structures $\LZL^{(1)}_{v_0}$ and $\LZR^{(1)}_{v_0}$. This organization is applied recursively to all levels: every node $v_1$ in the second level structures (i.e., in $\Tree^{(2)}_{v_0,v_1}$, $\LZL^{(1)}_{v_0,v_1}$, or $\LZR^{(1)}_{v_0,v_1}$) maintains its own set of third-level structures, namely $\Tree^{(2)}_{v_0,v_1}$, $\LZL^{(2)}_{v_0,v_1}$, and $\LZR^{(2)}_{v_0,v_1}$, following the same pattern and details as in the warm-up case.

This recursive structure continues until the $(d-1)$-th level. In general, for any node $v_i$ at the $i$-th level, we maintain:
\[
\Tree^{(i+1)}_{v_0, v_1, \dots, v_i}, \quad
\LZL^{(i+1)}_{v_0, v_1, \dots, v_i}, \quad
\LZR^{(i+1)}_{v_0, v_1, \dots, v_i}.
\]
At the $d$-th level, the leaf nodes store all required information regarding the region's cumulative rewards stored as vectors $\vec{\CR}$. Specifically, for each node $v_{d-1}$ in a $d$-th level tree, we store 
\[
\vec{\CR}_{v_0, \dots, v_{d-1}},
\]
which encode the cumulative rewards for their corresponding subregions and are used for scaling and lazy propagation in higher-level queries.

 Next, we describe how the operations $\Ins$, $\Update$, $\Wei$, and $\Draw$ are performed in this recursive setting for a given region $\rg$ of the form:
$$
\rg = [x^{(0)}_1:x^{(0)}_2][x^{(1)}_1:x^{(1)}_2]\cdots[x_1^{(d-1)}:x_2^{(d-1)}].
$$

 \subsection{$\Ins(\rg)$}
 We begin by locating the interval \([x^{(0)}_1:x^{(0)}_2]\) of the region \(\rg\) along the first coordinate in \(\Tree^{(0)}\). We identify the corresponding leaf nodes \(l_0\) and \(u_0\), and compute their lowest common ancestor, denoted by \(v_{l_0,u_0}\). We then follow the same insertion procedure as described in Algorithm~\ref{alg:ins1}. Specifically, we traverse the nodes \(v_0\) along the paths \(p_{l_0}\), \(p_{u_0}\), and \(p_{v_{l_0,u_0}}\) in a bottom-up manner. For each such node \(v_0 \in \Tree^{(0)}\), we insert the remaining coordinate intervals
\[
[x^{(1)}_1:x^{(1)}_2], \dots, [x^{(d-1)}_1:x^{(d-1)}_2]
\]
into one of the associated second-level data structures: \(\Tree^{(1)}_{v_0}\), \(\LZL^{(1)}_{v_0}\), or \(\LZR^{(1)}_{v_0}\), depending on whether \(v_{l_0,u_0}\) lies in the left or right subtree of \(v_0\).

We apply this process recursively at deeper levels. For each node \(v_0\), we locate the interval \([x^{(1)}_1:x^{(1)}_2]\) within its associated second-level tree, identify the corresponding leaves \(l_1\) and \(u_1\), and determine their lowest common ancestor \(v_{l_1,u_1}\). We then traverse the paths \(p_{l_1}\), \(p_{u_1}\), and \(p_{v_{l_1,u_1}}\), and insert the remaining intervals \([x^{(2)}_1:x^{(2)}_2], \dots, [x^{(d-1)}_1:x^{(d-1)}_2]\) into the appropriate third-level data structure: \(\Tree^{(2)}_{v_0,v_1}\), \(\LZL^{(2)}_{v_0,v_1}\), or \(\LZR^{(2)}_{v_0,v_1}\).

This recursive procedure continues until we reach the \((d{-}1)\)-th level. There, the final interval \([x^{(d-1)}_1:x^{(d-1)}_2]\) is inserted into one of the data structures:
\[
\Tree^{(d-1)}_{v_0, v_1, \dots, v_{d-2}},\quad \LZL^{(d-1)}_{v_0, v_1, \dots, v_{d-2}},\quad \text{or} \quad \LZR^{(d-1)}_{v_0, v_1, \dots, v_{d-2}}.
\]

 \subsection{$\Update(\rg)$}
 To handle this query, we once again locate the affected nodes in each layer---specifically, those that were identified and inserted into their respective lazy or regular data structures during the $\Ins$ query. Suppose we reach the $d$-th level tree after traversing the sequence of nodes $v_0, v_1, \dots, v_{d-2}$, arriving at $\Tree^{(d-1)}_{v_0,v_1,\dots,v_{d-2}}$. In this tree, we locate the leaves corresponding to the endpoints of the interval $[x_1^{(d-1)}:x_2^{(d-1)}]$, denoted by $l_{d-1}$ and $u_{d-1}$, respectively. Their first common ancestor is referred to as $v_{l_{d-1},u_{d-1}}$. We then traverse the nodes $v_{d-1}$ along the paths $p_{l_{d-1}}$, $p_{u_{d-1}}$, and $p_{v_{l_{d-1},u_{d-1}}}$, and at each such node, compute the following vector \(\vec{\SC}_{\rg'},
\)
for $\rg' = [\RA(v_0)],[\RA(v_1)],\dots,[\RA(v_{d-1})]$, where each vector $\vec{\SC}^{(i)}_{\rg'}$ is computed according to equation~\eqref{eq:scalingmain}, using the coefficient vector $\fu_t(\rg)$ and then aggregate it with the $\vec{\CR}_{v_0,v_1,\cdots,v_{d-1}}$ to update it. As in the warm-up case, we then update the reward vector $\vec{\CR}_{v_0,v_1,\dots,v_{d-1}}$ at each visited node with the value $\fu_t(\rg)$. The update process, including the lazy update messages, follows exactly the same rule as in the warm-up case. 

 \subsection{$\Wei(\rg)$}
Handling this query efficiently is central to extending the methods developed in the warm-up case to the general \(d\)-dimensional axis-parallel setting. As introduced earlier, our drawing algorithm decomposes a cumulative reward query into a sequence of subqueries of the form:
\begin{align*}
    &\Wei^{(0)}\left(\rg : [x^{(0)}_1:x^{(0)}_2][0:1] \cdots [0:1]\right),\\
    &\Wei^{(1)}\left(\rg : [x'^{(0)}_1:x'^{(0)}_2][x^{(1)}_1:x^{(1)}_2][0:1] \cdots [0:1]\right),\\
    &\quad \vdots\\
    &\Wei^{(d-1)}\left(\rg : [x'^{(0)}_1:x'^{(0)}_2][x'^{(1)}_1:x'^{(1)}_2] \cdots [x'^{(d-2)}_1:x'^{(d-2)}_2][x^{(d-1)}_1:x^{(d-1)}_2]\right),
\end{align*}
where each \([x'^{(i)}_1:x'^{(i)}_2]\) is an atomic interval along the \(i\)-th coordinate, fixed during the \(i\)-th stage of the drawing process. While the mechanics of each subquery follow the warm-up case, the challenge in higher dimensions lies in identifying the correct set of data structures and ensuring the cumulative reward vectors they store can be accurately scaled.

At each stage \(i+1\), a key step is to gather all relevant \((i+1)\)-th level data structures that affect the current subspace:
\[
[x'^{(0)}_1:x'^{(0)}_2],\ [x'^{(1)}_1:x'^{(1)}_2],\ \dots,\ [x'^{(i{-}1)}_1:x'^{(i{-}1)}_2],\ [0:1],\ \dots,\ [0:1].
\]
These structures are assembled into the set \(\mathcal{D}^{(i)}_{l_0, l_1, \dots, l_{i{-}1}}\), where each \(l_j\) is a leaf node in the auxiliary tree \(\ATree^{(j)}\) with \(\RA(l_j) = [x'^{(j)}_1:x'^{(j)}_2]\). This naming reflects the assumption that each \(\RA(l_j)\) is an atomic interval---that is, one stored entirely at a leaf node.

The construction of \(\mathcal{D}^{(i)}_{l_0, l_1, \dots, l_{i{-}1}}\) proceeds recursively. We begin by locating the leaf \(l'_0 \in \Tree^{(0)}\) such that \(\RA(l'_0) \subseteq \RA(l_0)\), and then traverse its ancestor path \(p_{l'_0}\) to collect all relevant lazy insertion structures, as described in the warm-up case. 

For each structure collected at the first level, we repeat the process at the second level: we locate a leaf \(l'_1\) such that \(\RA(l'_1) \subseteq \RA(l_1)\), traverse its ancestor path \(p_{l'_1}\), and gather all associated lazy insertion structures. This continues recursively: at each newly collected structure from level \(j-1\), we locate the leaf containing \(\RA(l_j)\), traverse upward, and collect relevant structures at level \(j+1\), for \(j = 2, \dots, i-1\).

The recursion terminates when all \(i\)-th level structures are collected whose root's associated region fully contains the subspace:
\[
[x'^{(0)}_1:x'^{(0)}_2],\ [x'^{(1)}_1:x'^{(1)}_2],\ \dots,\ [x'^{(i{-}1)}_1:x'^{(i{-}1)}_2],\ [0:1],\ \dots,\ [0:1].
\]
Each such structure is denoted by \(D^{(i)}_{v'_0, v'_1, \dots, v'_{i{-}1}}\), where \(v'_j\) is a node on the path \(p_{l'_j}\). The structure \(D^{(i)}_{v'_0, \dots, v'_{i{-}1}}\) is then added to the set \(\mathcal{D}^{(i)}_{l_0, l_1, \dots, l_{i{-}1}}\). Due to the recursive construction, we can show that the size of this set satisfies:
\[
\abs{\mathcal{D}^{(i)}_{l_0, l_1, \dots, l_{i{-}1}}} \in O(\log^i t).
\]

Once \(\mathcal{D}^{(i)}_{l_0, \dots, l_{i{-}1}}\) is formed, the algorithm evaluates:
\[
\Wei^{(i)}\left(\rg : [x'^{(0)}_1:x'^{(0)}_2] \cdots [x'^{(i{-}1)}_1:x'^{(i{-}1)}_2][x^{(i)}_1:x^{(i)}_2][0:1] \cdots [0:1]\right)
\]
for each data structure \(D^{(i)}_{v'_0, v'_1, \dots, v'_{i{-}1}}\) in the set, and aggregates the results to identify the next atomic interval along the \((i+1)\)-th coordinate.

To ensure correctness, we must verify that the vectors stored in the \(d{-}1\)-th level of each \(D^{(i)}_{v'_0, \dots, v'_{i{-}1}}\) can be properly scaled to the subregion defined by the first \(i\) atomic intervals:
\[
[x'^{(0)}_1:x'^{(0)}_2],\ [x'^{(1)}_1:x'^{(1)}_2],\ \dots,\ [x'^{(i{-}1)}_1:x'^{(i{-}1)}_2][0:1]\cdots[0:1].
\]
This guarantee follows inductively from the way the data structures are collected. For each coordinate index \(j < i\), the interval \(\RA(l_j)\) lies entirely within a single leaf \(l'_j\). As we construct \(\mathcal{D}^{(i)}_{l_0, \dots, l_{i{-}1}}\), we only include lazy structures along ancestor paths from these leaves. Since each lazy structure supports proper scaling over any subregion it contains, the cumulative reward vectors can be scaled correctly at every step. By applying this argument inductively over all \(j < i\), we conclude that all vectors stored in structures from \(\mathcal{D}^{(i)}_{l_0, \dots, l_{i{-}1}}\) can be scaled to the region above. This inductive scaling property allows the drawing algorithm to proceed sequentially across coordinates, correctly and efficiently identifying one atomic interval at a time.

 \subsection{$\Draw(t)$}
The drawing procedure in the \(d\)-dimensional setting extends naturally from the warm-up case. It proceeds in \(d\) sequential stages, where at each stage \(i \in \{0, 1, \dots, d{-}1\}\), the algorithm identifies an atomic interval along the \(i\)-th coordinate using the corresponding auxiliary tree \(\ATree^{(i)}\). Each tree is traversed via a probabilistic descent based on cumulative reward values, thereby selecting the appropriate atomic interval.

The process begins by locating the atomic horizontal interval \([x'^{(0)}_1:x'^{(0)}_2]\) using the query:
\[
\Wei^{(0)}\left([x^{(0)}_1:x^{(0)}_2][0:1]\cdots[0:1]\right).
\]
Conditioned on this horizontal slice, the algorithm continues to the second stage, identifying the atomic interval \([x'^{(1)}_1:x'^{(1)}_2]\) via:
\[
\Wei^{(1)}\left([x'^{(0)}_1:x'^{(0)}_2][x^{(1)}_1:x^{(1)}_2][0:1]\cdots[0:1]\right).
\]
More generally, for stage \(i\), the query has the form:
\[
\Wei^{(i)}\left([x'^{(0)}_1:x'^{(0)}_2] \cdots [x'^{(i{-}1)}_1:x'^{(i{-}1)}_2][x^{(i)}_1:x^{(i)}_2][0:1] \cdots [0:1]\right),
\]
which returns the cumulative reward necessary to sample the atomic interval \([x'^{(i)}_1:x'^{(i)}_2]\).

This routine continues recursively until the final atomic interval \([x'^{(d{-}1)}_1:x'^{(d{-}1)}_2]\) is selected, resulting in the sampled region \(\rg = [x'^{(0)}_1:x'^{(0)}_2]\cdots[x'^{(d{-}1)}_1:x'^{(d{-}1)}_2]\).
\section{Time Complexity (\Cref{section:ds-qhandling})}\label{section:complexity}

We begin by recalling the time complexity of standard multi-dimensional 
range trees (\(\RT\)) that report the set of points within a queried 
axis-aligned rectangular region. Our multi-dimensional data structure 
follows a similar hierarchical construction but maintains intervals 
instead of points. Provided that $\Tree$ remains balanced by 
\Cref{thm:balancedness}, we can adopt the classical bounds for $\Ins$ 
and $\Update$. Due to lazy insertions and the fact that not all 
intervals are explicitly stored in associated next-level trees, the 
time complexity for $\Wei$ and $\Draw$ deviates slightly; we address 
these distinctions below.

\subsection{Time complexity of $\Ins(.)$}
\begin{lemma}\label{lemma:inscomp1}
The time complexity of $\Ins(\rg_{i,t})$ for a region 
$\rg_{i,t} \in \Rg_t$ is
\[
O\!\left(\prod_{i \in [d]} \mathbb{E}[\EH(\Tree^{(i)})]\right).
\]
In particular, under an adaptive or oblivious $\sigma$-smoothed 
adversary this becomes $O\!\left((\sigma Tk)^{\frac{d}{2}}\right)$, 
and under a random-order oblivious adversary it becomes 
$O(\log^d(Tk))$.
\end{lemma}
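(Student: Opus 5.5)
The plan is to argue by induction on the level index, counting how many level-$(i+1)$ structures receive the projection of $\rg_{i,t}$ and how much work each one costs.

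\textbf{Level 0.} Inserting the first-coordinate interval $[x^{(0)}_1:x^{(0)}_2]$ into $\Tree^{(0)}$ consists of $\Loc$ for the two endpoints, the constant-time leaf splits of Algorithm~\ref{alg:ins1}, and $\FCA$. Each of these walks a root-to-leaf path, so the cost is $O(\EH(\Tree^{(0)}))$. By Algorithms~\ref{alg:insreg} and~\ref{alg:inslaz} (see also Observation~\ref{obs:ins}), the next-coordinate projection goes only to structures attached to nodes of $p_{l_0}\cup p_{u_0}$:
\begin{itemize}
\item one regular tree per node on these paths;
\item at most one lazy tree per node on $(p_{l_0}\cup p_{u_0})\setminus p_{v_{l_0,u_0}}$.
\end{itemize}
So at most $3\cdot 2\,\EH(\Tree^{(0)}) = O(\EH(\Tree^{(0)}))$ level-1 structures are touched. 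No cascade into the subtree below $v_{l_0,u_0}$ occurs; this is exactly what lazy insertion avoids.

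\textbf{Inductive step.} Suppose that at level $j$ at most $c^{j}\prod_{i<j}\EH(\Tree^{(i)})$ structures receive the remaining projection. Each one performs the same locate/split/FCA routine at cost $O(\EH(\Tree^{(j)}))$. It then hands the rest of the region to $O(\EH(\Tree^{(j)}))$ structures at level $j+1$, again one regular tree per path node plus the lazy trees off the merged path. Multiplying gives $O(\prod_{i\le j}\EH(\Tree^{(i)}))$ for levels $0,\dots,j$. At level $d-1$ the recursion stops; there, inserting an interval and pushing lazy messages only along the located paths costs $O(\EH(\Tree^{(d-1)}))$ per tree. The constant $c$ depends only on the branching into three associated structures, which is absorbed into the $O(\cdot)$ since the statement's bound has no $d$ prefactor. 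Inserting into the auxiliary trees $\ATree^{(i)}$ adds a lower-order $\sum_i\EH$. Altogether the worst-case cost is $O(\prod_{i\in[d]}\EH(\Tree^{(i)}))$.

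\textbf{Passing to expectations (main obstacle).} Going from $\mathbb{E}[\prod_i\EH(\Tree^{(i)})]$ to $\prod_i\mathbb{E}[\EH(\Tree^{(i)})]$ is the delicate step. I would use the independence across directions assumed in the model of Section~\ref{sec:model}, as in Section~\ref{sec:reductiontoone}. The endpoints along coordinate $i$ are drawn from $\Dist^{(i)}_t$ independently of the other coordinates. The height of every level-$i$ tree, regular or lazy, is at most the height of a BST built from a subsequence of the coordinate-$i$ arrivals, so it is dominated by the random variable bounded in Theorem~\ref{thm:balancedness}. Taking the maximum over all level-$i$ trees as the relevant quantity, these maxima are independent across $i$, so the expectation of the product factorizes.

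The subtle points are:
\begin{itemize}
\item A subsequence of a $\sigma$-smooth stream is still handled by the $r^*$-averse argument. The record-counting proof in Section~\ref{sec:adaptiveadvh} applies to any subsequence, giving the same $\sqrt{2\sigma Tk}$ bound.
\item The bound must cover the maximum over trees, not a single tree. For this I would use the fact that the adaptive bound in that proof is uniform over paths, or else note that all level-$i$ trees are subtrees of the BST over all coordinate-$i$ endpoints.
\end{itemize}

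\textbf{Instantiation.} Plugging in Theorem~\ref{thm:balancedness}, each factor is $O(\sqrt{\sigma Tk})$ for adaptive or oblivious adversaries, giving $O((\sigma Tk)^{d/2})$. Under a random-order adversary each factor is $O(\log(Tk))$, giving $O(\log^d(Tk))$.
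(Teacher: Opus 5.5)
Your counting argument is the same as the paper's. Locate-and-traverse at each level costs $O(\EH)$ and forwards the next projection to $O(\EH)$ regular or lazy structures, so costs multiply over the $d$ levels before Theorem~\ref{thm:balancedness} is plugged in. The paper simply ``takes expectations'' at that point. You are right that the worst-case cost is really a product of \emph{maximum} heights over the level-$i$ trees actually visited, and that the expectation of this product need not factor. But your repair of that step fails, for three reasons.

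\textbf{Independence.} Heights at different levels are not independent under an adaptive adversary. The model of Section~\ref{sec:model} makes only the within-round draws independent across directions. $\Dist^{(i)}_t$ may depend on all of $\Hp_1,\dots,\Hp_{t-1}$, so the adversary can grow the coordinate-$1$ trees exactly when the coordinate-$0$ tree is tall. Even for oblivious adversaries, coordinates $<i$ decide which coordinate-$i$ endpoints reach which level-$i$ tree. So the maximum over level-$i$ trees is not a function of coordinate-$i$ randomness alone.

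\textbf{Domination.} A tree built from an order-preserving subsequence is not a subtree of the full coordinate-$i$ tree, and it can be taller. Arrivals $4,2,6,1,3,5,7$ give a balanced tree of height $3$, whereas the subsequence $1,3,5,7$ gives a chain of height $4$.

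\textbf{Maximum.} A bound on the expected height of each tree (or path) does not bound the expected maximum over the polynomially many level-$i$ trees.

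\textbf{A working replacement.} Let $S_i$ be the sequence of coordinate-$i$ arrivals. On any root-to-leaf path of a tree built from an order-preserving subsequence of $S_i$, the nodes where the path turns right form an increasing subsequence of $S_i$, and those where it turns left form a decreasing one. So every level-$i$ tree has height at most $Z_i:=\mathrm{LIS}(S_i)+\mathrm{LDS}(S_i)+1$, a single variable determined by $S_i$.

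Even against an adaptive adversary, any $m$ of these draws have joint density at most $\sigma^m$ (condition sequentially). Hence $\Pr(\mathrm{LIS}(S_i)\ge m)\le\binom{Tk}{m}\sigma^m/m!\le(e^2\sigma Tk/m^2)^m$, which decays geometrically beyond $2e\sqrt{\sigma Tk}$. Thus $(\mathbb{E}[Z_i^d])^{1/d}=O(\sqrt{\sigma Tk}+d)$. H\"older's inequality then gives $\mathbb{E}[\prod_i Z_i]\le\prod_i(\mathbb{E}[Z_i^d])^{1/d}=O((\sigma Tk)^{d/2})$ for fixed $d$, with no independence needed.

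This route cannot give the random-order bound, since $\mathrm{LIS}+\mathrm{LDS}$ of a random permutation is of order $\sqrt{Tk}$. There you would need four ingredients:
\begin{itemize}
\item each level-$i$ tree receives its keys in uniformly random relative order;
\item a high-probability $O(\log(Tk))$ height bound for random BSTs;
\item a union bound over the trees;
\item the trivial $(Tk)^d$ cost bound on the failure event.
\end{itemize}
Either way, what you obtain are the instantiated bounds, not the literal $\prod_i\mathbb{E}[\EH(\Tree^{(i)})]$ formula.
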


\begin{proof}
At each level $i$ of the $d$-dimensional interval tree, $\Ins$ locates 
the leaves $l_i$ and $u_i$ containing the endpoints 
$\bp^{(i)}_1, \bp^{(i)}_2$ of the projection of $\rg_{i,t}$ onto 
coordinate $i$, and traverses upward along $p_{l_i}$ and $p_{u_i}$ 
to their first common ancestor $v_{l_i, u_i}$, inserting the next 
projection into the appropriate regular or lazy structures at each 
visited node. The locate step costs $O(\EH(\Tree^{(i)}))$ and the 
traverse step costs $O(\EH(\Tree^{(i)}))$ at level $i$, giving a 
combined cost of $O(\EH(\Tree^{(i)}))$ per level. Since $\Ins$ 
recurses across all $d$ levels, the total cost is 
$O\!\left(\prod_{i \in [d]} \EH(\Tree^{(i)})\right)$. Taking 
expectations and applying \Cref{thm:balancedness}, 
$\mathbb{E}[\EH(\Tree^{(i)})] = O(\sqrt{\sigma Tk})$ under adaptive 
and oblivious $\sigma$-smoothed adversaries and $O(\log(Tk))$ under 
the random-order oblivious adversary, yielding 
$O\!\left((\sigma Tk)^{d/2}\right)$ and $O(\log^d(Tk))$ respectively.
\end{proof}

\subsection{Time complexity of $\Update(.)$}
\begin{lemma}\label{lemma:updcomp}
The time complexity of $\Update(\rg_{i,t})$ for a region 
$\rg_{i,t} \in \Rg_t$ is
\[
O\!\left(\prod_{i \in [d]} \mathbb{E}[\EH(\Tree^{(i)})]\right).
\]
In particular, under an adaptive or oblivious $\sigma$-smoothed 
adversary this becomes $O\!\left((\sigma Tk)^{\frac{d}{2}}\right)$, 
and under a random-order oblivious adversary it becomes 
$O(\log^d(Tk))$.
\end{lemma}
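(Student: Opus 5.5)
The plan mirrors the proof of \Cref{lemma:inscomp1}. We charge the cost of $\Update(\rg_{i,t})$ level by level along the recursive locate-and-traverse structure. Then we argue that the extra work specific to updates adds only a constant factor per visited node. That extra work consists of the regular $\FP$ updates \eqref{alg:lupdl'}, the lazy $\FP$ updates \eqref{alg:lupdl'1}, the lazy message sends \eqref{alg:lupd2}--\eqref{alg:lupd3}, and the message propagation of Algorithm~\ref{alg:upd6}.

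\textbf{Step 1: level-$i$ cost.} At level $i$, the update calls $\Loc$ (Algorithm~\ref{alg:locate}) for both endpoints of the $i$-th projection. It then computes $\FCA$ (Algorithm~\ref{alg:fca}) and walks $p_{l_i}\cup p_{u_i}$. Each of these steps costs $O(\EH(\Tree^{(i)}))$. During the downward locate we push pending $\LZM$ messages via Algorithm~\ref{alg:upd6}. Each push performs one vector addition and two message additions, so it adds $O(1)$ per node. At each node of the path we compute one vector $\vec{\SC}_{\rg'}$ or $\vec{\SC}_{\rg''}$ and possibly send one lazy message to a sibling. By \eqref{eq:polyinteg} and \eqref{eq:scalingmain}, computing $\vec{\SC}_{\rg'}$ costs $O(|A|)$, and since the monomial family is fixed this is $O(1)$. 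Hence the work at level $i$, excluding recursion, is $O(\EH(\Tree^{(i)}))$.

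\textbf{Step 2: recursion across levels.} By \Cref{obs:ins} and its $d$-dimensional analogue, the next-level structures touched are exactly those attached to nodes on $p_{l_i}\cup p_{u_i}$. There are at most three such structures (regular, $\LZL$, $\LZR$) per node, so there are $O(\EH(\Tree^{(i)}))$ of them. Each is processed recursively by the same procedure. Unrolling this recursion gives a total cost of $O(\prod_{i\in[d]}\EH(\Tree^{(i)}))$.

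\textbf{Step 3: taking expectations.} We then take expectations. The independence of the adversary's distributions across coordinates (\Cref{sec:reductiontoone}) lets the expectation of the product factor into $\prod_i \mathbb{E}[\EH(\Tree^{(i)})]$. Plugging in \Cref{thm:balancedness} gives $O((\sigma Tk)^{d/2})$ for adaptive or oblivious adversaries and $O(\log^d(Tk))$ for random-order adversaries.

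\textbf{Main obstacle.} The hard step is justifying that factorization and, relatedly, bounding the height of each associated tree $\Tree^{(i)}_{v_0,\dots,v_{i-1}}$ by the height bound for coordinate $i$. The difficulty is that each associated tree receives only a subsequence of the level-$i$ endpoints. To handle this, we would argue that a subsequence of $\sigma$-smooth insertions is again a $\sigma$-smooth insertion sequence of length at most $Tk$, so \Cref{thm:balancedness} applies to it uniformly. For the independence across levels, we would rely on the per-direction independence assumed in \Cref{sec:model}, exactly as in the proof of \Cref{lemma:inscomp1}.
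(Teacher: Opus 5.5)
Your proposal follows the same route as the paper's proof. It reuses the locate-and-traverse paths of $\Ins$ at each of the $d$ levels, charges $O(1)$ per visited node for forming $\vec{\SC}_{\rg'}$ or $\vec{\SC}_{\rg''}$ (your extra accounting of the $\LZM$ pushes is a harmless addition), multiplies path lengths across levels, and instantiates with \Cref{thm:balancedness}. The paper simply multiplies the per-level expected heights, relying on the cross-coordinate independence asserted in \Cref{sec:reductiontoone} and never treating the individual associated trees, so your ``main obstacle'' paragraph is extra care rather than a missing step---but independence is not automatic for an adaptive adversary whose choices depend on other directions' history, so a fully rigorous factorization would need a conditioning or concentration argument instead.
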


\begin{proof}
The $\Update$ operation traverses the same paths identified during 
$\Ins(\rg_{i,t})$ to update the cumulative reward vectors 
$\vec{\CR}(\cdot)$ stored at the last layer of $\Tree$. The affected 
structures are identified exactly as in $\Ins$, following the same 
locate-and-traverse template at each of the $d$ levels. For each 
visited node, the algorithm computes the regular or lazy contribution 
vector $\vec{\SC}_{\rg'}$ or $\vec{\SC}_{\rg''}$ via the vector 
representation of~\Cref{eq:crew} using $\fu_t(\rg_{i,t})$ as the 
coefficient vector, which costs $O(1)$ per node by the parametric 
representation of \Cref{sec:aux_param}. Since the update traverses 
paths of expected length $O(\mathbb{E}[\EH(\Tree^{(i)})])$ at each 
of the $d$ levels, the total expected complexity is 
$O\!\left(\prod_{i \in [d]} \mathbb{E}[\EH(\Tree^{(i)})]\right)$, 
with the same instantiations as \Cref{lemma:inscomp1} under each 
adversary type by \Cref{thm:balancedness}.
\end{proof}

\subsection{Time complexity of $\Wei(\rg)$}
As described earlier, the cumulative reward query $\Wei(\rg)$ is 
decomposed into a sequence of subqueries $\Wei^{(i)}(\rg)$, each 
operating on progressively more refined regions. In each $\Wei^{(i)}$ 
subquery, the first $i$ intervals along the first $i$ coordinates are 
atomic, fixed by earlier sampling stages, while the remaining $d-i$ 
intervals span the full range $[0:1]$. The time complexity depends on 
the number of data structures in $\mathcal{S}^{(i)}_{l_0,\dots,l_{i-1}}$, 
each of which must be queried and aggregated to compute the final 
cumulative reward.

\begin{lemma}\label{lemma:weicomp}
The time complexity of a $\Wei^{(i)}(\rg)$ query of the form
\[
\rg:[x'^{(0)}_1:x'^{(0)}_2][x'^{(1)}_1:x'^{(1)}_2]\cdots
[x'^{(i-1)}_1:x'^{(i-1)}_2][x^{(i)}_1:x^{(i)}_2][0:1]\cdots[0:1]
\]
is
\[
O\!\left(\prod_{j=0}^{i}\mathbb{E}[\EH(\Tree^{(j)})]\right).
\]
In particular, under an adaptive or oblivious $\sigma$-smoothed 
adversary this becomes $O\!\left((\sigma Tk)^{\frac{i+1}{2}}\right)$, 
and under a random-order oblivious adversary it becomes 
$O(\log^{i+1}(Tk))$.
\end{lemma}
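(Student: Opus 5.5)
We prove \Cref{lemma:weicomp} by induction on $i$, charging the work of $\Wei^{(i)}$ to two factors. The first is the number of level-$i$ structures that must be consulted. The second is the per-structure cost of a locate-and-traverse along coordinate $i$.

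\textbf{Base case $i=0$.} Here $\Wei^{(0)}$ locates $l_0,u_0$ in $\Tree^{(0)}$ and computes $v_{l_0,u_0}=\FCA(l_0,u_0)$, at cost $O(\EH(\Tree^{(0)}))$. It then builds $\mathcal{S}^{(1)}_{v_{l_0,u_0}}$ via Algorithm~\ref{alg:crr1}, which contributes one structure per ancestor, so $O(\EH(\Tree^{(0)}))$ structures. Each contribution is read at a root and scaled in $O(1)$ time using \eqref{eq:scalingmain}, since $|A|$ is a constant.

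The pruning passes along $p_{l_0}$ and $p_{u_0}$ issue recursive calls of the exact-match type, one per path node. A naive count gives $O(\EH(\Tree^{(0)})^2)$. To avoid this, we observe that the lazy sums along $p_{v_0}$ are prefix sums along a single root-to-leaf path. They can be accumulated top-down once and reused, which keeps the base case at $O(\EH(\Tree^{(0)}))$.

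\textbf{Inductive step.} Assume the bound for $i-1$. In $\Wei^{(i)}$, the first $i$ intervals are atomic leaves $l_0,\dots,l_{i-1}$. The recursive construction of $\mathcal{D}^{(i)}_{l_0,\dots,l_{i-1}}$ works level by level. At each level $j<i$, every previously collected structure contributes at most one structure per node on a single ancestor path $p_{l'_j}$. This gives at most $\EH(\Tree^{(j)})+1$ new structures per collected structure, so
\[
\bigl|\mathcal{D}^{(i)}_{l_0,\dots,l_{i-1}}\bigr| = O\Bigl(\prod_{j=0}^{i-1}\EH(\Tree^{(j)})\Bigr).
\]
The cost of producing this collection is dominated by the same product.

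Within each collected level-$i$ structure, we perform the $\Wei^{(1)}$-style computation of \eqref{crew1:reg}--\eqref{crew1:laz3}. This consists of two locates, an $\FCA$, pruning along the two paths, and $O(1)$ leaf-boundary scalings. It costs $O(\EH(\Tree^{(i)}))$, including the pushing of pending $\LZM$ messages encountered during descent. Multiplying the two factors gives $O\bigl(\prod_{j=0}^{i}\EH(\Tree^{(j)})\bigr)$.

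\textbf{Passing to expectations.} We must pass from the product of heights to the product of expected heights. We use the independence of the endpoint distributions across coordinates established in \Cref{sec:reductiontoone}. Since the heights of the different layers are functions of independent randomness, the expectation of the product factors into the product of expectations. Substituting \Cref{thm:balancedness} for each factor then yields $O((\sigma Tk)^{(i+1)/2})$ in the adaptive and oblivious cases, and $O(\log^{i+1}(Tk))$ in the random-order case.

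\textbf{Main obstacle.} The step I expect to be delicate is controlling the pruning recursion so that it does not add an extra multiplicative height factor at every level. The fix I would try first is the prefix-sum reuse above, or equivalently charging each recursive call to a distinct node on $p_{l_i}\cup p_{u_i}$. A secondary subtlety is that the heights of the secondary trees are only bounded in the sense of \Cref{thm:balancedness} for each tree individually. I would handle this by bounding every level-$j$ tree by the height of the tree built from all level-$j$ endpoints, which dominates it.
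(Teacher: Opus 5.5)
Your counting argument matches the paper's: $O(\prod_{j<i}\EH(\Tree^{(j)}))$ collected structures, each handled by one locate-and-traverse along coordinate $i$, and then \Cref{thm:balancedness} is substituted.

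Your prefix-sum treatment of the pruning recursion is a genuine refinement that the paper's proof skips. The paper charges $O(\EH(\Tree^{(i)}))$ per structure without accounting for the recursive $\Wei^{(0)}$ calls, each of which rebuilds an $\mathcal{S}^{(1)}$ set. Your fix is valid because each factor in \eqref{eq:scalingmain} splits, per monomial, into a part depending only on the queried node and a part depending only on the ancestor, so $|A|$ running sums along the path suffice. Charging calls to distinct path nodes is not a separate fix, though: it only helps once each call costs $O(1)$.

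The gap is the passage to expectations, which the paper simply asserts and you try to justify.

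\emph{Independence fails for adaptive adversaries.} By \Cref{sec:model}, $\Dist^{(j)}_t$ may depend on past hyperplanes in every direction. So the independence remark in \Cref{sec:reductiontoone} does not apply, and $\mathbb{E}[\prod_j H_j]$ need not equal $\prod_j \mathbb{E}[H_j]$.

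\emph{Your domination step is false.} Which coordinate-$j$ endpoints enter a given secondary tree, and which secondary trees a query visits, are decided by lower-coordinate endpoints. So even for oblivious adversaries you need a domination step, but the one you propose does not hold. Inserting $4,2,6,1,3,5,7$ gives a balanced tree of height $3$, while its subsequence $1,3,5,7$ gives a chain of height $4$.

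\emph{A correct domination.} The left ancestors of any leaf increase in insertion order and the right ancestors decrease. Hence every level-$j$ tree has height at most $\mathrm{LIS}_j+\mathrm{LDS}_j$ of the full coordinate-$j$ sequence. This quantity can only shrink on subsequences and depends on coordinate-$j$ draws alone.

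Every conditional density is at most $\sigma$, so any $m$ draws have joint density at most $\sigma^m$, even adaptively. This gives $\Pr(\mathrm{LIS}_j\ge m)\le(e^2\sigma Tk/m^2)^m$, and likewise for $\mathrm{LDS}_j$. These exponential tails plus H\"older yield $\mathbb{E}[\prod_{j\le i}(1+\mathrm{LIS}_j+\mathrm{LDS}_j)]=O((\sigma Tk)^{(i+1)/2})$ for adaptive and oblivious adversaries, with no independence needed.

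This route gives only square-root bounds in the random-order case, so the $O(\log^{i+1}(Tk))$ claim needs a separate argument. One option: show that each secondary tree itself receives a randomly ordered insertion sequence, then use concentration of random-BST height and a union bound over the trees.
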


\begin{proof}
To process a $\Wei^{(i)}$ query, we first construct the set 
$\mathcal{S}^{(i)}_{l_0,\dots,l_{i-1}}$, consisting of all 
$(d{-}i)$-dimensional data structures influencing the subspace defined 
by the atomic intervals $[x'^{(0)}_1:x'^{(0)}_2], \dots, 
[x'^{(i-1)}_1:x'^{(i-1)}_2]$, where each $l_j$ is a leaf in the 
level-$j$ tree with $\RA(l_j) = [x'^{(j)}_1:x'^{(j)}_2]$.

This construction proceeds recursively: we locate $l_0$ in $\Tree^{(0)}$ 
and for each of its $O(\EH(\Tree^{(0)}))$ ancestors along $p_{l_0}$, 
collect the relevant lazy structures. Within each collected structure 
we locate $l_1$ and traverse its $O(\EH(\Tree^{(1)}))$ ancestors, 
collecting the relevant structures at the next level, and so on 
recursively through all $i$ levels. Since at each level $j$ we 
traverse $O(\EH(\Tree^{(j)}))$ ancestors per structure from the 
previous level, the total number of structures collected in 
$\mathcal{S}^{(i)}_{l_0,\dots,l_{i-1}}$ is 
$O\!\left(\prod_{j=0}^{i-1}\EH(\Tree^{(j)})\right)$.

After constructing this set, we query each of its members to compute 
the contribution over the non-atomic interval $[x^{(i)}_1:x^{(i)}_2]$. 
For each structure, this requires locating the leaves $l_i$ and $u_i$ 
containing $x^{(i)}_1$ and $x^{(i)}_2$ and traversing to their first 
common ancestor $v_{l_i,u_i}$, costing $O(\EH(\Tree^{(i)}))$ per 
structure. The total time complexity is therefore
\[
O\!\left(\prod_{j=0}^{i-1}\EH(\Tree^{(j)}) \cdot \EH(\Tree^{(i)})\right)
= O\!\left(\prod_{j=0}^{i}\EH(\Tree^{(j)})\right).
\]
Taking expectations and applying \Cref{thm:balancedness}, 
$\mathbb{E}[\EH(\Tree^{(j)})] = O(\sqrt{\sigma Tk})$ under adaptive 
and oblivious $\sigma$-smoothed adversaries and $O(\log(Tk))$ under 
the random-order oblivious adversary, giving 
$O\!\left((\sigma Tk)^{(i+1)/2}\right)$ and $O(\log^{i+1}(Tk))$ 
respectively.
\end{proof}

\subsection{Time Complexity of Hit-and-Run~\citep{lovasz2007geometry}}
By~\Cref{them:vempa}, this step takes $O\left(d^3 \log\left( \frac{Y}{\varepsilon} \right)\right).$

\subsubsection{Time Complexity of $\Draw(t)$}

\begin{theorem}\label{thm:drawcomp}
For a $d$-dimensional domain, the $\Draw(t)$ operation samples an 
action at round $t$ in time
\[
O\!\left(d 
\prod_{i=0}^d \mathbb{E}[\EH(\Tree^{(i)})] + d^3\right).
\]
In particular, under an adaptive or oblivious $\sigma$-smoothed 
adversary this becomes $O\!\left(d\,(\sigma Tk)^{\frac{d+1}{2}} + 
d^3\right)$, and under a random-order oblivious adversary it becomes 
$O(d\log^{d+1}(Tk) + d^3)$.
\end{theorem}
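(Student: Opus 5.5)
The plan is to decompose $\Draw(t)$ into its two stages and bound each one separately. Stage 1 is atomic region sampling through the $d$ sequential auxiliary-tree descents of \Cref{alg:draw1}, \Cref{alg:draw2} and their $d$-dimensional generalization. Stage 2 is the Hit-and-Run step of \Cref{alg:hitandrun}. Stage 2 costs $O(d^3\log(Y/\varepsilon))$ by \Cref{them:vempa}, where $Y$ and $\varepsilon$ are treated as constants. This gives the additive $d^3$ term, so all the work lies in Stage 1.

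For Stage 1, fix a stage $i\in\{0,\dots,d-1\}$. The algorithm descends $\ATree^{(i)}$ from the root to a leaf. At each internal node it issues two queries $\Wei^{(i)}$, one for each child, each on a region whose first $i$ coordinates are fixed atomic intervals. The descent has length at most $\EH(\ATree^{(i)})$. The auxiliary tree $\ATree^{(i)}$ indexes exactly the endpoints projected onto coordinate $i$. Its height is therefore governed by the same insertion-order process analyzed in \Cref{thm:balancedness}, so its expected height matches $\mathbb{E}[\EH(\Tree^{(i)})]$ up to constants. By \Cref{lemma:weicomp}, each $\Wei^{(i)}$ call costs $O(\prod_{j=0}^{i}\EH(\Tree^{(j)}))$. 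Stage $i$ thus costs $O(\EH(\Tree^{(i)})\prod_{j=0}^{i}\EH(\Tree^{(j)}))$.

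Next I sum over the $d$ stages. I bound each stage crudely by the product of all $d+1$ height factors, reading the extra factor in $\prod_{i=0}^d$ as the descent length multiplying the largest $\Wei$ cost $\prod_{j=0}^{d-1}$. This gives $O(d\prod_{i=0}^{d}\EH(\Tree^{(i)}))$.

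To pass to expectations, I use the independence of the adversary's randomness across coordinates noted in \Cref{sec:reductiontoone}. The expectation of the product then factors into the product of expectations, which gives the stated form. Substituting \Cref{thm:balancedness} yields $O(\sqrt{\sigma Tk})$ per factor for adaptive or oblivious adversaries and $O(\log(Tk))$ per factor in the random-order case. This gives $O(d(\sigma Tk)^{(d+1)/2}+d^3)$ and $O(d\log^{d+1}(Tk)+d^3)$ respectively.

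The main obstacle is this factorization step, together with the fact that the $\Wei$ cost really multiplies the heights of many distinct next-level trees, namely the collected structures in $\mathcal{S}^{(i)}$, not a single one. I would handle it by bounding every level-$j$ tree's height by the height of a tree built from all coordinate-$j$ endpoints. Each regular or lazy structure holds a subsequence of those insertions in arrival order, and a subsequence of insertions cannot produce a taller binary search tree along a record path than the full sequence does. I would then apply the coordinatewise independence to these dominating variables. Correctness of the descent probabilities is not part of this cost analysis and is taken from \Cref{lemma:draw}.
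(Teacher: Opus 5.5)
Your skeleton (descent length in $\ATree^{(i)}$ times the $\Wei^{(i)}$ cost from \Cref{lemma:weicomp}, summed over the $d$ stages and dominated by the last one, plus $O(d^3)$ for Hit-and-Run) is exactly the paper's proof. The paper then passes to expectations by substituting the first-moment bound of \Cref{thm:balancedness} into each factor without further argument. You are right that this passage is the crux, but both justifications you give for it fail.

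\textbf{The factorization.} Your own stage-$i$ cost contains $\EH(\Tree^{(i)})^2$. The extra $(d{+}1)$-th factor is the descent length in $\ATree^{(d-1)}$, and it multiplies the coordinate-$(d-1)$ heights inside $\Wei^{(d-1)}$. Both are driven by the same coordinate-$(d-1)$ endpoint sequence. Independence across coordinates therefore cannot turn $\mathbb{E}[\EH(\Tree^{(d-1)})^2]$ into $(\mathbb{E}[\EH(\Tree^{(d-1)})])^2$, and \Cref{thm:balancedness} bounds only the first moment. Moreover, the model makes directions independent only within a round. An adaptive adversary may choose $\Dist^{(i)}_t$ as a function of earlier draws in every direction, so the per-coordinate height processes need not be independent at all.

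What works instead is H\"older's inequality plus higher-moment bounds on the height, which the paper's argument can supply. In the $r^*$-averse model, $L_{Tk}-1-\sum_t\sigma p_t$ is a sum of martingale differences with conditional variances at most $\sigma p_t$. Pathwise, $(\sum_t\sigma p_t)^2\le\sigma Tk\sum_t\sigma p_t^2$, and $\mathbb{E}[\sum_t\sigma p_t^2]\le 2$, so $\mathbb{E}[L_{Tk}^2]=O(\sigma Tk)$. In the random-order case, Pittel's $O(1)$ variance gives $\mathbb{E}[\EH^2]=O(\log^2(Tk))$. Because of the cross-coordinate coupling, the full $d$-fold product needs moment bounds up to order $d$. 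These can plausibly be obtained from the same compensator argument, but that remains to be shown.

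\textbf{The domination lemma is false.} Deleting keys from an insertion sequence can lengthen paths, because a deleted key may be the one that split a range. The sequence $(0.9,0.1,0.2,0.3)$ has a single left-to-right maximum, so its right spine contains one split point, while its subsequence $(0.1,0.2,0.3)$ has three. Similarly, $(0.3,0.1,0.5,0.2,0.4)$ builds a tree whose longest root-to-leaf path has $3$ split points, whereas its subsequence $(0.1,0.5,0.2,0.4)$ builds the zigzag $0.1\to0.5\to0.2\to0.4$ with $4$. So the structures collected in $\mathcal{S}^{(i)}$ cannot be dominated by one tree built from all coordinate-$j$ endpoints.

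The correct substitute is to apply \Cref{thm:balancedness} to each structure's own insertion subsequence. Whether a coordinate-$j$ endpoint is routed into a given structure is decided by the same region's projections on coordinates $<j$, which are drawn independently of it within the round. The routed subsequence is therefore itself produced by a $\sigma$-smooth adaptive process of length at most $Tk$. Aggregating over the random, correlated family of structures visited by $\Wei^{(i)}$ then still requires the moment bounds above, not first moments alone.
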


\begin{proof}
The $\Draw(t)$ procedure consists of two components: sampling an 
atomic region proportionally, and sampling $x_t$ within that region.

\paragraph{Atomic region sampling.}
The procedure operates in $d$ sequential stages, one per coordinate. 
At stage $i \in \{0, \dots, d-1\}$, the algorithm traverses the 
auxiliary interval tree $\ATree^{(i)}$ from root to leaf. This 
traversal visits $O(\EH(\Tree^{(i)}))$ nodes, and at each node issues 
a $\Wei^{(i)}$ query to determine the left/right probability of the 
next move. By \Cref{lemma:weicomp}, each $\Wei^{(i)}$ query costs 
$O\!\left(\prod_{j=0}^{i}\EH(\Tree^{(j)})\right)$. The total cost at 
stage $i$ is therefore
\[
O\!\left(\EH(\Tree^{(i)}) \cdot \prod_{j=0}^{i}\EH(\Tree^{(j)})\right)
= O\!\left(\EH(\Tree^{(i)})^2 \cdot \prod_{j=0}^{i-1}
\EH(\Tree^{(j)})\right).
\]
Summing over all $d$ stages and noting that the cost is dominated by 
the last stage $i = d-1$, the total cost for atomic region sampling is
\[
O\!\left(d \cdot \EH(\Tree^{(d-1)}) \cdot 
\prod_{i \in [d]} \EH(\Tree^{(i)})\right).
\]

\paragraph{Sampling within the atomic region.}
Given the sampled atomic region $\rg$, we sample $x_t \in \rg$ 
proportionally to $\MF(x, \Fu_t(x))$ using the Hit-and-Run algorithm 
(\Cref{alg:hitandrun}). Since $\MF(x, \Fu_t(x))$ is log-concave over 
the convex region $\rg$, \Cref{them:vempa} guarantees mixing within 
$O(d^3)$ steps.

\paragraph{Total cost.}
Combining both components and taking expectations via 
\Cref{thm:balancedness}: under adaptive and oblivious $\sigma$-smoothed 
adversaries, $\mathbb{E}[\EH(\Tree^{(i)})] = O(\sqrt{\sigma Tk})$ for 
all $i$, giving
\[
O\!\left(d\,(\sigma Tk)^{\frac{d+1}{2}} + d^3\right);
\]
under the random-order oblivious adversary, 
$\mathbb{E}[\EH(\Tree^{(i)})] = O(\log(Tk))$ for all $i$, giving
\[
O\!\left(d\log^{d+1}(Tk) + d^3\right).
\]
\end{proof}

\subsection{Memory Complexity}\label{sec:memory}
\begin{lemma}\label{lemma:mem}
Assuming the problem space is \( [0,1)^d \), the data structure $\Tree$ requires \( O(t^d) \) memory.
\end{lemma}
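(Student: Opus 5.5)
The plan is to bound the total number of nodes in $\Tree$ by induction over the $d$ levels. Then I would check that each node stores only $O(1)$ words, treating $k$, $d$ and the number $\abs{A}$ of monomials in $\MF$ as constants. The quantity that drives everything is the number of distinct endpoints available along a fixed coordinate $i$ by round $t$. Every region of $\Rg_{t'}$ has its $i$-th projection bounded by $0$, $1$, or a hyperplane of $\Hp_{t'}$ parallel to that coordinate. So at most $k+2$ new values appear per round, and at most $O(kt)$ distinct endpoints appear along coordinate $i$ overall.

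The key structural fact is the following. Any interval tree at level $i$, whether the top-level $\Tree^{(0)}$ or a regular or lazy tree $\Tree^{(i)}_{v_0,\dots,v_{i-1}}$, $\LZL^{(i)}_{v_0,\dots,v_{i-1}}$, $\LZR^{(i)}_{v_0,\dots,v_{i-1}}$, only ever receives endpoints drawn from this pool of $O(kt)$ values. Its leaves are atomic intervals determined by the endpoints inserted into it, because \Cref{alg:ins1} splits a leaf only when a new endpoint is inserted. Hence such a tree has $O(kt)$ leaves, and since it is binary, $O(kt)$ nodes.

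With that fact, let $N_i$ denote the total number of nodes across all level-$i$ trees. Then $N_0 = O(kt)$. Each level-$i$ node owns exactly three level-$(i+1)$ trees, each of size $O(kt)$, so
\[
N_{i+1} \le 3\,N_i \cdot O(kt).
\]
Unrolling gives $N_{d-1} = O\!\left((3c\,kt)^{d}\right)$ for a constant $c$, and summing over levels is dominated by the last term. For each node I would then verify constant storage:
\begin{itemize}
\item the interval endpoints and child and parent pointers;
\item three pointers to next-level structures;
\item one vector $\vec{\CR}_{v_0,\dots}$ in $\mathbb{R}^{\abs{A}}$;
\item one lazy message $\LZM$ in $\mathbb{R}^{\param}$.
\end{itemize}
Both vectors have constant length. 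The auxiliary trees $\ATree^{(0)},\dots,\ATree^{(d-1)}$ add only $O(dkt)$ further nodes. Together these give $O(t^d)$ memory.

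The step I expect to need the most care is justifying that no associated tree exceeds $O(kt)$ nodes. One concern is the tree copying in \Cref{alg:ins1} (e.g.\ $\LZR^{(1)}_{l_0} \gets \Tree^{(1)}_{l_0}$). Copying creates new trees, but each copy is charged to a newly created level-$0$ node and is itself a tree over the same endpoint pool, so it fits the counting $N_{i+1} \le 3N_i\cdot O(kt)$ above. The other concern is that splitting leaves never creates more than a constant number of nodes per inserted endpoint. I would check both by inspecting the insertion routines.
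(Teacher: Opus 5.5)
Your proof is correct and follows the paper's argument. The paper bounds $\Tree^{(0)}$ by $O(t)$ nodes, notes that each of the remaining $d-1$ layers of associated trees partitions its coordinate with $O(t)$ endpoints, and multiplies these to get $O(t^d)$. Your recursion $N_{i+1}\le 3N_i\cdot O(kt)$ is the same count, with the factor of three associated trees, the constant per-node storage, and the auxiliary trees made explicit.
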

\begin{proof}
We start by noting that $\Tree^{(0)}$ contains $O(t)$ nodes. According to the structure of $\Tree$, every node $v \in \Tree^{(0)}$ is associated with a layered tree. In this structure, each layer corresponds to one of the other $d-1$ dimensions. Within each layer, there is a complete partitioning consisting of $O(t)$ endpoints for the respective coordinate. As a result, the total memory requirement for $\Tree$ is $O(t^d)$.
\end{proof}


\section{Efficient 
Adversarial Online Learning: Overview (\Cref{sec:learning})}\label{sec:learningapp}

We consider the adversarial online learning setting where at each 
round $t$, a $\sigma$-smoothed adversary selects hyperplanes $\Hp_t$ 
and assigns a piecewise-structured reward function $\RRF_t$ over the 
induced partition $\Rg_t$ of $[0,1]^d$, with $\MF$ fixed and known. 
The learner selects action $x_t$ by sampling proportionally to 
$\zeta(H_t(x)) = \exp(\eta \MF(x, \Fu_t(x)))$, where 
$\Fu_t(x) = \sum_{t' \leq t} \fu_{t'}(x)$ is the cumulative parameter 
function maintained by $\Tree$, instantiating our proportional sampling 
framework of \Cref{sec:ds} with exponential weighting. Beyond efficient 
sampling, the central objective is regret minimization: given a time 
horizon $T$ and actions $x_1, \dots, x_T \in [0,1)^d$, the regret of 
policy $\pi$ is
\begin{align}
\operatorname{Regret}(\pi, T) \doteq 
\max_{x \in [0,1]^d} \MF(x, \Fu_T(x))
- \Expect\!\left[\sum_{t=1}^T \MF(x_t, \fu_t(x_t))\right],
\end{align}

We consider two feedback models: under 
\emph{full-information} feedback the learner observes the full 
specification of $\RRF_t$ over all regions at each round, while under 
\emph{bandit} feedback only $\RRF_t(\rg_{i,t})$ for the region 
containing $x_t$ is revealed, implicitly disclosing both the structure 
and real-valued reward of that region. In the following subsections, 
we present the detailed data structure, query handling, computational 
challenges of proportional sampling under each feedback model, and 
the corresponding regret and time complexity analyses, together 
constituting the proofs of \Cref{thm:full-info} and \Cref{thm:bandit}.

\section{ Full-Information Feedback: Data Structure, Regret, and Time Complexity (\cref{sec:learning})}\label{app:full-info}

Under full-information feedback, the mapping function $\MF$ is fixed 
and known throughout, and at each round the learner receives the full 
specification of $\fu_t$ over all regions of $\Rg_t$. The learner 
then selects action $x_t$ by sampling proportionally to 
$\zeta(H_t(x)) = \exp(\eta \MF(x, \Fu_t(x)))$ over $[0,1]^d$, where 
$\Fu_t(x) = \sum_{t' \leq t} \fu_{t'}(x)$ is the cumulative parameter 
function maintained by $\Tree$. The online learning algorithm employed 
in this setup is $\Exp$, a no-regret algorithm that selects $x_t$ at 
each round with probability proportional to the exponential of its 
cumulative reward, whose pseudocode is presented in~\cref{alg:exp3}.

\begin{algorithm}[t]
    \begin{algorithmic}[1]
    \caption{$\Exp$}
    \label{alg:exp3}
    \Input{$\eta$}
         \State set $\Fu_1(x)=0$ for all $x \in [0,1]^d$
         \For {$t=1,2, \ldots, T$}
         \State Define $q_t(x)=\frac{\exp(\eta H_{t-1}(x))}{\int_{[0,1]^d} 
         \exp(\eta H_{t-1}(x))\,dx}$ 
         \State Pick $x_t \sim q_t$
         \State Observe $\fu_t$ and receive payoff $\fu_t(x_t)$
         \State Set $\Fu_{t} \gets \Fu_{t-1} + \fu_t$
         \State Set $H_{t} \gets \MF(x, \Fu_{t})$ for all 
         $x \in [0,1]^d$
         \EndFor
    \end{algorithmic}
\end{algorithm}

Handling proportional sampling for $\Exp$ under the adversarial model 
of \Cref{sec:model} introduces two main challenges: the non-linearity 
of $\exp$ breaks the additive structure of cumulative reward vectors 
that made lazy insertion and deferred scaling efficient, and no 
closed-form aggregate reward formula exists beyond degree one, making 
exact integration and regret bounding intractable for higher-degree 
rewards. These challenges require us to restrict the adversary from 
the full generality of \Cref{sec:model}, as detailed below.

\subsection{Challenges of Proportional Sampling under  with the Adversarial Smoothness of Model~\Cref{sec:model}}\label{app:challenges}

\paragraph{Non-additivity under $\exp$ compared to polynomials.}
Similar to the proportional sampling case with $\zeta(x) = x$, 
efficient sampling under $\zeta(x) = \exp(\eta x)$ requires lazy 
insertion data structures and scaling techniques to avoid an excessive 
number of insertions at each round. However, lazy insertions interact 
poorly with the exponential form of $\zeta$, introducing a fundamental 
obstacle that prevents direct application of the same approach. When 
$\zeta(x) = x$, the feedback parameter $\fu_t$ serves as the 
coefficient of monomials in $\MF$ and remains additive across rounds, 
allowing cumulative rewards $\CR$ to be efficiently aggregated over 
disjoint regions and making scaling operations both sound and efficient. 
Under $\zeta(x) = \exp(\eta x)$, however, $\FP$ appears in the 
exponent, so the cumulative reward over a union of regions is no longer 
the sum of individual rewards. This breaks the additive property 
essential to our scaling technique: any attempt to merge disjoint 
structures while deferring insertions requires identifying all atomic 
regions formed by their union, a task computationally equivalent to 
performing all insertions upfront.

\begin{example}\label{examp:full}
In the $2$-dimensional case, consider nodes \(v_0, v_0'\) along a 
root-to-leaf path in \(\Tree^{(0)}\) with \(v_0'\) an ancestor of 
\(v_0\). During \(\Draw(t)\), aggregating the reward over 
\([\RA(v_0)][0{:}1]\) requires combining \(\Tree^{(1)}_{v_0}\), storing 
vertical intervals \([0{:}x^{(1)}_2]\) and \([x^{(1)}_2{:}1]\), with 
\(\LZR^{(1)}_{v_0'}\), storing \([0{:}x^{(1)}_1]\), 
\([x^{(1)}_1{:}x^{(1)}_3]\) and \([x^{(1)}_3{:}1]\), where 
\(x^{(1)}_1 < x^{(1)}_2 < x^{(1)}_3\). Their overlay induces new 
atomic regions such as \([\RA(v_0)][x^{(1)}_1{:}x^{(1)}_2]\) and 
\([\RA(v_0)][x^{(1)}_2{:}x^{(1)}_3]\) that appear in neither structure 
alone. Under \(\zeta(x) = \exp(\eta x)\), given stored values
\begin{align*}
\int_A \exp(\eta H_t(x))\,dx \quad \text{and} \quad 
\int_B \exp(\eta H_t(x))\,dx
\end{align*}
for misaligned intervals $A$ and $B$, one cannot recover
\begin{align*}
\int_{A \cap B} \exp(\eta H_t(x))\,dx
\end{align*}
without explicit re-integration, since exponentials do not factor 
across sums in the exponent. This forces a full re-partitioning of the 
overlay, equivalent in cost to inserting all intervals into a unified 
tree from the start, growing as $O(t)$ per round as the partition 
evolves. Under $\zeta(x) = x$, however, linearity of $\CR$ in $\FP$ 
gives
\begin{align*}
\int_{A \cap B} H_t(x)\,dx = \int_A H_t(x)\,dx + \int_B H_t(x)\,dx 
- \int_{A \cup B} H_t(x)\,dx,
\end{align*}
recoverable directly from stored reward vectors by simple arithmetic, 
with no re-integration or re-partitioning required.
\end{example}

As illustrated in Example~\ref{examp:full}, this obstacle is inherent 
to any lazy insertion scheme under $\zeta(x) = \exp(\eta x)$: combining 
information from disjoint structures during sampling necessitates 
detecting the resulting atomic regions and re-integrating over them, 
effectively undoing the savings achieved by deferring insertions. To 
circumvent this, we restrict the adversary to selecting hyperplanes 
along all but one coordinate from fixed sets of at most $M$ hyperplanes, 
ensuring that the overlay structure between lazy and regular trees is 
known in advance and remains static throughout.

\paragraph{Non-existence of closed-form aggregate reward $\CR(\rg)$ 
for $\MF$ of degree higher than one.}
For polynomial $\MF$ of degree $n \geq 2$, no closed-form expression 
for $\CR(\rg) = \int_{\rg} \exp(\eta \MF(x, \Fu_t(x)))\,dx$ exists 
in general. For instance, when $\MF(x, \Fu_t(x)) = \Fu_t^{(0)} 
(x^{(0)})^2 + \Fu_t^{(1)} x^{(1)}$, the integral $\int \exp(\eta 
\Fu_t^{(0)} (x^{(0)})^2)\,dx$ is related to the Gaussian integral and 
admits no elementary antiderivative. Consequently, $\CR(\rg)$ cannot 
be computed analytically or stored exactly in $\Tree$ for $n \geq 2$, 
making both efficient data structure maintenance and regret analysis 
infeasible beyond degree one.

\paragraph{Computational inefficiency of scaling under piecewise-linear rewards $h_t$.}
Unlike high-degree polynomials, a closed-form expression for the 
aggregate reward of a region does exist for piecewise-linear $\MF$: 
for an axis-parallel rectangular region
\[
\rg = [x^{(0)}_1:x^{(0)}_2][x^{(1)}_1:x^{(1)}_2] \cdots 
[x^{(d-1)}_1:x^{(d-1)}_2],
\]
with $H_t(x)$ linear over $\rg$ and cumulative feedback vector 
$\Fu_t = \langle \Fu_t^{(0)}, \Fu_t^{(1)}, \ldots, \Fu_t^{(d-1)} 
\rangle$, the exponentially weighted cumulative reward is given by
\begin{align}
    \CR(\rg) = \int_{\rg} \exp(\eta \RRF_t(x)) \, dx = 
    \prod_{i=0}^{d-1}
    \begin{cases}
        \displaystyle\frac{\exp(\eta \Fu_t^{(i)} x^{(i)}_2) - 
        \exp(\eta \Fu_t^{(i)} x^{(i)}_1)}{\eta \Fu_t^{(i)}}, 
        & \text{if } \Fu_t^{(i)} \ne 0, \\[1.2ex]
        x^{(i)}_2 - x^{(i)}_1, & \text{if } \Fu_t^{(i)} = 0.
    \end{cases}
    \label{eq:expint}
\end{align}
However, the presence of $\Fu_t^{(i)}$ in the denominator couples all 
$\FP$ components across coordinates: to scale a stored cumulative 
reward from a parent region down to a subregion, each internal node 
would require access to $\Fu_t^{(i)}$ for every atomic subregion in 
its subtree, since $\Fu_t^{(i)}$ varies across atomic regions and the 
exponential terms cannot be factored or cancelled without knowing it 
explicitly. As the number of atomic subregions in a subtree grows as 
$O(t)$, this incurs an $O(t)$ cost per node per round, making 
efficient maintenance infeasible.

\subsection{Full-Information Setup and Adversarial Model}\label{app:full-setup}

In response to the computational challenges detailed in 
\Cref{app:challenges}, we restrict the adversary and reward structure 
for the full-information setting as follows. Due to the non-additivity 
of $\exp(\eta H_t(x))$ across lazy and regular structures, we restrict 
to piecewise-constant rewards, and due to the $O(t)$ cost of resolving 
disjoint endpoint sets across structures in $\mathcal{S}^{(1)}_{v_0}$, 
we restrict the adversary to smooth hyperplane selection along one 
designated direction only.

\paragraph{Full-Information Setup.} Concretely, at each round $t$, $k$ hyperplanes $\Hp_t = \{\hp_{1,t}, 
\dots, \hp_{k,t}\}$ are chosen with directions from $\Ax$ as in 
\Cref{sec:model}, with the following modification: for each 
$\hp_{j,t} \in \Hp_t$, if $\hp_{j,t}$ is parallel to $\ax_1 \in \Ax$, 
it is drawn smoothly from $\Dist^{(1)}_t$ over $[0,1]$, with the total 
number of such hyperplanes growing to $TK$ over time; otherwise it is 
chosen from a fixed set $\mathcal{M}^{(i)} = \{\hp^{(i)}_1, \dots, 
\hp^{(i)}_M\}$ of at most $M$ hyperplanes defined a priori for each 
direction $\ax_i \in \Ax \setminus \{\ax_1\}$. Without loss of 
generality, $\ax_1$ corresponds to the first coordinate of $\AS$; 
otherwise the coordinate axes are permuted accordingly. The hyperplanes 
$\Hp_t$ then partition $[0,1]^d$ into regions $\Rg_t$, over which the 
adversary assigns a piecewise-constant reward $\RRF_t$.

Under this restriction, only the number of hyperplanes along $\ax_1$ 
grows linearly with time, while the number of intervals along all 
remaining directions stays bounded by $M$ throughout. This allows us 
to retain the same structure of lazy and regular insertions with 
significantly lower overhead: since the boundaries along the remaining 
$d-1$ coordinates are known a priori, handling updates and scaling 
operations does not incur a worst-case cost of $O(t^d)$ per round but 
instead reduces to $O(M^d)$, where $M$ is the size of the fixed 
hyperplane sets. In particular, the overlay structure between lazy and 
regular trees is fixed in advance, eliminating the costly atomic region 
detection that arises in the general case.

\subsection{Data Structure and Query Handling for $\Exp$ under 
Full-Information Feedback}\label{app:full-ds}

We now describe how queries are handled to maintain efficiency when sampling \(x_t\) under piecewise constant reward functions. We present the construction in the two-dimensional case; the 
$d$-dimensional generalization follows the same layered approach as~\cref{section:generalpar} with 
adjusted time complexity and regret bounds.

The first-level tree $\Tree^{(0)}$ organizes the unique dynamic 
coordinate, along which the adversary introduces new hyperplanes over 
time. Each node $v_0 \in \Tree^{(0)}$ maintains three second-level 
interval trees over the second coordinate: a regular insertion tree 
$\Tree^{(1)}_{v_0}$ and two lazy insertion trees $\LZL^{(1)}_{v_0}$ 
and $\LZR^{(1)}_{v_0}$, mirroring the structure of~\Cref{sec:ds}. 
This setting, however, differs from the general construction in three 
key respects:

\begin{itemize}
    \item \textbf{Fixed-grid trees at all deeper levels.} Since the 
    adversary's hyperplanes along all remaining coordinates are drawn 
    from fixed sets $\mathcal{M}^{(1)}, \dots, \mathcal{M}^{(d-1)}$ of 
    at most $M$ hyperplanes each, all second-level trees are initialized 
    once over this fixed grid and remain static throughout, each 
    containing $O(M)$ nodes. In the $d$-dimensional case, these become 
    $(d{-}1)$-dimensional regular interval trees, each organized over 
    the fixed grid $\mathcal{M}^{(1)} \times \cdots \times 
    \mathcal{M}^{(d-1)}$ and containing $M^{d-1}$ nodes per first-level 
    node.

    \item \textbf{Lazy structures at the first level only.} Since no 
    new hyperplanes are introduced along any coordinate beyond the 
    first, no deferred insertions arise at deeper levels. Lazy 
    structures $\LZL^{(1)}_{v_0}$ and $\LZR^{(1)}_{v_0}$ are therefore 
    maintained exclusively at nodes $v_0 \in \Tree^{(0)}$, with no 
    lazy components at any level beyond $\Tree^{(0)}$. In the 
    $d$-dimensional case, this remains unchanged: lazy structures are 
    attached solely to nodes of $\Tree^{(0)}$, and all trees at levels 
    $i \geq 1$ are purely regular.

    \item \textbf{Scalar Cumulative Rewards rather than Vectorized.} 
    Since $H_t$ is piecewise-constant over each atomic region 
    $\rg = \prod_{i=0}^{d-1}[x_1^{(i)}{:}x_2^{(i)}]$ with constant 
    value $c_{\rg} \in \mathbb{R}_{\geq 0}$, the exponentially weighted 
    cumulative reward of $\rg$ takes the closed form
    \begin{align}
        \CR(\rg) 
        = \int_{\rg} \exp\!\left(\eta\, H_t(x)\right) dx 
        = \exp(\eta\, c_{\rg})
          \cdot \prod_{i=0}^{d-1}(x_2^{(i)} - x_1^{(i)}),
        \label{eq:expintconst}
    \end{align}
    a single exponential factor multiplied by a product of interval 
    lengths. Since scaling is required only along the first coordinate, 
    this separable form means that all nodes in every second-level tree 
    need only maintain a scalar value rather than a reward vector, while 
    fully preserving the scalability needed for efficient updates.
\end{itemize}

We now specify the scalar quantities maintained at each node of the 
regular and lazy second-level trees.

\paragraph{Exponentially Weighted Cumulative Reward for Regular 
Insertion Data Structures.}
Each node $v_1$ in the regular insertion tree $\Tree^{(1)}_{v_0}$ 
maintains a scalar $\CR_{v_0,v_1}$, aggregating the exponentially 
weighted cumulative reward over all regions 
$\rg = [\bp^{(0)}_1:\bp^{(0)}_2][\bp^{(1)}_1:\bp^{(1)}_2]$ whose 
horizontal projection contains at least one endpoint in $\RA(v_0)$ and 
whose vertical projection is contained in $\RA(v_1)$, that is,
\begin{align}
    \CR_{v_0,v_1} 
    = \sum_{\rg} 
      \CR\!\left([\RA(v_0) \cap [\bp^{(0)}_1:\bp^{(0)}_2]]\,
      [\RA(v_1)\cap[\bp^{(1)}_1:\bp^{(1)}_2]]\right),
    \label{eq:cr-regular}
\end{align}
where $\CR(\cdot)$ is as defined in~\eqref{eq:expintconst}.

\paragraph{Cumulative Feedback Parameters for Lazy Insertion Data 
Structures.}
Each node $w_1$ in the lazy insertion trees $\LZL^{(1)}_{v_0}$ or 
$\LZR^{(1)}_{v_0}$ maintains a scalar $\CR_{v_0,w_1}$, aggregating 
the exponentially weighted cumulative reward over all regions 
$\rg = [\bp^{(0)}_1:\bp^{(0)}_2][\bp^{(1)}_1:\bp^{(1)}_2]$ whose 
horizontal projection fully contains the range of the corresponding 
child of $v_0$ (i.e., $\RA(\LC(v_0))$ for $\LZL^{(1)}_{v_0}$ and 
$\RA(\RC(v_0))$ for $\LZR^{(1)}_{v_0}$) and whose vertical projection 
is contained in $\RA(w_1)$, that is,
\begin{align}
    \CR_{v_0,w_1} 
    = \sum_{\rg} 
      \CR\!\left([\RA(v_0)]\,
      [[\bp^{(1)}_1:\bp^{(1)}_2] \cap \RA(w_1)]\right),
    \label{eq:cr-lazy}
\end{align}
where $\CR(\cdot)$ is as defined in~\eqref{eq:expintconst}, and the sum 
ranges over all lazily inserted regions as defined 
in~\Cref{section:ds-ps}. Since all lazily inserted regions contribute 
uniformly over $\RA(v_0)$, this value scales consistently to any 
subinterval of $[\RA(v_0)]$, including $\RA(\LC(v_0))$ and 
$\RA(\RC(v_0))$, without requiring recomputation.

\paragraph{Handling $\Ins$, $\Update$, and $\Draw$.}
The $\Ins$ and $\Update$ routines follow the same locate-and-traverse 
approach as described in~\Cref{section:ds-ps} for the main structure. 
Specifically, at the first level, $\Ins(\rg)$ locates the boundary 
leaves in $\Tree^{(0)}$ and traverses upward to identify the relevant 
regular and lazy second-level structures, exactly as before. Within 
each second-level tree, updates to the scalar values $\CR_{v_0,v_1}$ 
and $\CR_{v_0,w_1}$ follow the standard lazy propagation technique 
common in the interval tree literature~\citep{cohen2017online}.

\paragraph{Different Proportional Sampling $\Draw$.}
The $\Draw(t)$ procedure follows the same two-step sampling scheme as 
in~\Cref{section:ds-qhandling}: first a horizontal atomic interval is 
sampled from $\Tree^{(0)}$, then a vertical atomic interval is sampled 
conditioned on the horizontal selection. This procedure, however, 
differs from the general case in three key respects:

\begin{itemize}
    \item \textbf{Modified $\Wei$ subroutine.} In the general 
    structure, the additivity of $\zeta(x) = x$ allows $\Wei$ to 
    aggregate cumulative reward across lazy and regular structures via 
    clean vector arithmetic. Under $\zeta(x) = \exp(\eta x)$, this 
    additivity is lost, necessitating a fundamentally different 
    approach. We exploit the fact that all second-level trees are 
    organized over a fixed set of $M$ endpoints along each remaining 
    coordinate: rather than aggregating across structures symbolically, 
    $\Wei$ iterates explicitly over all $O(M)$ leaves of the 
    second-level trees, applies the exponential scaling to each leaf 
    individually, and sums the resulting scalar values bottom-up to 
    recover the correct aggregated cumulative reward. This leaf-wise 
    scaling and summation procedure is what replaces the vector-based 
    aggregation of the general case and is detailed in the scaling 
    procedure below.

    \item \textbf{Exponential scaling.} Rather than scaling via linear 
    vector arithmetic as in the general structure, lazy contributions 
    stored in ancestor nodes along $p_{v_0}$ are propagated 
    multiplicatively as $\exp(\eta \cdot \CR_{v_0,w_1})$, correctly 
    incorporating deferred additive feedback parameter shifts into the 
    exponent of the cumulative reward. This scaling step is performed 
    as part of the $\Wei$ query and is detailed in the scaling procedure 
    below.

    \item \textbf{Uniform sampling within atomic regions.} Since $H_t$ 
    is piecewise-constant over each atomic region, no hit-and-run step 
    is required within the selected region. Instead, a point $x_t$ is 
    drawn by sampling uniformly over each coordinate of the selected 
    atomic region $[\RA(v_0)][\RA(v_1)]$, that is,
    \[
        x_t^{(i)} \sim \mathrm{Uniform}(\RA(v_i)), \qquad i = 0, 1.
    \]
    which can be handled in $O(d)$ without applying~\cref{them:vempa}.
\end{itemize}

\paragraph{Scaling and the $\Wei$ Subroutine.}
In the general structure, the additivity of $\zeta(x) = x$ allows 
$\Wei$ to aggregate cumulative reward across lazy and regular 
structures via simple vector arithmetic, without requiring explicit 
enumeration of the underlying atomic regions. Under $\zeta(x) = 
\exp(\eta x)$, this additivity is lost: deferred lazy contributions 
appear in the exponent and can no longer be recovered by linear 
combination of stored values. This necessitates a fundamentally 
different approach to both scaling and aggregation.

The key enabler is the fixed-grid structure of the second-level trees. 
Since all second-level trees, whether regular or lazy, are organized 
over the same fixed set of $M$ vertical endpoints, every leaf 
$l'_1 \in \LZL^{(1)}_{v'_0}$ has a uniquely corresponding leaf 
$l_1 \in \Tree^{(1)}_{v_0}$ with identical vertical range. This 
alignment makes it possible to propagate deferred contributions 
leaf-by-leaf, without any re-partitioning or atomic region detection. 
In the $d$-dimensional case, the same alignment holds across all 
$(d{-}1)$-dimensional second-level trees, since every level $i \geq 1$ 
is organized over the same fixed grid 
$\mathcal{M}^{(1)} \times \cdots \times \mathcal{M}^{(d-1)}$, and the 
leaf-wise correspondence extends accordingly.

Concretely, suppose the sampling algorithm invokes 
$\Wei^{(0)}([\RA(v_0)][0{:}1])$. The algorithm locates $v_0$ in 
$\Tree^{(0)}$ and applies scaling from all relevant lazy structures 
maintained at its ancestors along the path $p_{v_0}$. Consider one 
such ancestor $v'_0$ for which $v_0$ lies in its left subtree, so 
that the relevant lazy structure is $\LZL^{(1)}_{v'_0}$. For each 
leaf $l'_1 \in \LZL^{(1)}_{v'_0}$, the algorithm identifies the 
corresponding leaf $l_1 \in \Tree^{(1)}_{v_0}$ and performs the 
scaling update
\[
    \CR_{v_0,l_1} \gets \exp(\eta \cdot \CR_{v'_0,l'_1}) \cdot 
    \CR_{v_0,l_1}.
\]
To see why this is correct, suppose $\CR_{v_0,l_1}$ currently 
aggregates two disjoint subregions with feedback parameters $a_1$ 
and $b_1$:
\[
    \CR_{v_0,l_1} = \exp(\eta a_1)(x_2 - x_1)(y_2 - y_1) 
    + \exp(\eta b_1)(x_4 - x_3)(y_4 - y_3),
\]
and let $\CR_{v'_0,l'_1} = c_1$ be the deferred additive shift stored 
in the lazy structure. After scaling:
\begin{align*}
    \CR_{v_0,l_1} &\gets \exp(\eta c_1) \cdot \CR_{v_0,l_1} \\
    &= \exp(\eta(a_1+c_1))(x_2-x_1)(y_2-y_1) 
     + \exp(\eta(b_1+c_1))(x_4-x_3)(y_4-y_3),
\end{align*}
which correctly incorporates the deferred contribution $c_1$ as an 
additive shift in the exponent of every subregion within 
$[\RA(v_0)][\RA(l_1)]$, regardless of how many subregions are 
aggregated at $l_1$.

Once all leaves of $\LZL^{(1)}_{v'_0}$ have been processed, the 
updated leaf values are propagated bottom-up through $\Tree^{(1)}_{v_0}$, 
so that $\ROOT(\Tree^{(1)}_{v_0})$ reflects the correct aggregated 
exponentially weighted cumulative reward. A symmetric procedure 
applies to $\LZR^{(1)}_{v'_0}$ when $v_0$ lies in the right subtree 
of $v'_0$. This is repeated for every ancestor of $v_0$ along 
$p_{v_0}$, with the order of application immaterial since each 
deferred shift enters the exponent additively, giving 
$\exp(\eta c_1)\cdot\exp(\eta c_2) = \exp(\eta(c_1+c_2))$.

Since the second-level trees have $O(M)$ leaves, each scaling step 
costs $O(M)$, and summing bottom-up costs $O(M)$ as well. The total 
cost of scaling all ancestors along $p_{v_0}$ is therefore 
$O(M \cdot \mathbb{E}[\EH(\Tree^{(0)})])$. In the $d$-dimensional 
case, the leaf-wise scaling extends to $M^{d-1}$ leaves per 
second-level tree, giving a total scaling cost of 
$O(M^{d-1} \cdot \mathbb{E}[\EH(\Tree^{(0)})])$ per $\Wei$ query. 
Once scaling is complete, the root of $\Tree^{(1)}_{v_0}$ holds the 
correct value for $\Wei^{(0)}([\RA(v_0)][0{:}1])$, which is then 
used directly by $\Draw(t)$ for proportional sampling.

\subsection{Time Complexity}
\begin{theorem}\label{thm:expcomp}
Under the full-information feedback setting with the adversary 
$\sigma$-smoothed along $\ax_1$ and at most $M$ fixed hyperplanes 
along each remaining direction, the per-round time complexity of 
$\Exp$ is $O\!\left(dM^{d-1}\sqrt{\sigma Tk} + d\right)$ under both 
adaptive and oblivious $\sigma$-smoothed adversaries, and 
$O\!\left(dM^{d-1}\log(Tk) + d\right)$ under the random-order 
oblivious adversary.
\end{theorem}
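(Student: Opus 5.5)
We bound the per-round cost of $\Exp$ by splitting each round into its three operations: $\Ins$/$\Update$ for the $k$ new regions, $\Draw(t)$ (the horizontal stage followed by the $d-1$ conditional stages), and the final uniform sampling inside the selected atomic region. The starting point is \Cref{thm:balancedness}. In the setup of \Cref{app:full-setup}, only coordinate $\ax_1$ receives new endpoints, and these are drawn from the $\sigma$-smooth distributions $\Dist^{(1)}_t$. Hence $\mathbb{E}[\EH(\Tree^{(0)})]=O(\sqrt{\sigma Tk})$ for adaptive and oblivious adversaries, and $O(\log(Tk))$ for random-order adversaries. Every deeper structure $\Tree^{(i)}_{v_0}$, $\LZL^{(1)}_{v_0}$, $\LZR^{(1)}_{v_0}$ is built once over the fixed grid $\mathcal{M}^{(1)}\times\cdots\times\mathcal{M}^{(d-1)}$. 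It therefore has $O(M^{d-1})$ nodes and height $O(\log M)$ per level, and neither quantity changes over the rounds.

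For $\Ins$ and $\Update$, we reuse the locate-and-traverse argument of \Cref{lemma:inscomp1} and \Cref{lemma:updcomp}. At the first level we walk $p_{l_0}\cup p_{u_0}$, which has expected length $O(\EH(\Tree^{(0)}))$. At each visited node we update one regular or lazy second-level structure. On the fixed grid, the cost of that update is at most the size of the grid, $O(M^{d-1})$, and in fact only $O(\log^{d-1}M)$ with lazy propagation. No structure at levels $i\ge1$ ever has to be split, because no new endpoints arrive on those coordinates. Updating one region therefore costs $O(M^{d-1}\EH(\Tree^{(0)}))$, and by linearity of expectation together with \Cref{thm:balancedness}, all $k$ regions of round $t$ together cost $O(kM^{d-1}\sqrt{\sigma Tk})$ in expectation; we treat $k$ as a constant and absorb it.

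For $\Draw(t)$, the horizontal stage descends a root-to-leaf path of $\ATree^{(0)}$ and issues a $\Wei^{(0)}$ query at every node on the path. Following the scaling procedure of \Cref{app:full-ds}, a $\Wei^{(0)}([\RA(v_0)][0{:}1])$ query multiplies the leaves of $\Tree^{(1)}_{v_0}$ by the factors $\exp(\eta\,\CR_{v_0',l_1'})$ contributed by every ancestor $v_0'\in p_{v_0}$, and then sums bottom-up. This costs $O(M^{d-1}\EH(\Tree^{(0)}))$.

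The step I expect to be the main obstacle is keeping the horizontal stage from costing $\EH(\Tree^{(0)})^2 M^{d-1}$, which is what a naive count gives (one such query at each of the $\EH(\Tree^{(0)})$ nodes on the path). I would try to show that the scaled leaf vectors can be carried down the descent path incrementally. When the descent moves from $v_0$ to a child, the multiplicative lazy factors accumulated along $p_{v_0}$ are inherited unchanged, and only the single new lazy structure at that child (together with the child's own regular tree) has to be folded in. Each step of the descent then costs $O(M^{d-1})$, and the whole horizontal stage costs $O(M^{d-1}\EH(\Tree^{(0)}))$. This requires checking that the multiplicative shifts commute, which holds since $\exp(\eta c_1)\exp(\eta c_2)=\exp(\eta(c_1+c_2))$, and that each step evaluates both children using only the inherited prefix.

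The remaining $d-1$ stages run on the static, already-scaled trees of the selected leaf $l_0$. Stage $i$ performs one root-to-leaf descent of height $O(\log M)$, so the conditional stages cost $O(d\log M)$ in total, or at most $O(dM^{d-1})$ if they are bounded crudely. The final uniform sample inside the atomic region costs $O(d)$ and needs no Hit-and-Run (no appeal to \Cref{them:vempa}). Adding the contributions and charging a factor $d$ for the per-coordinate bookkeeping in each scaling step gives $O(dM^{d-1}\mathbb{E}[\EH(\Tree^{(0)})]+d)$. Substituting the two cases of \Cref{thm:balancedness} yields the stated bounds $O(dM^{d-1}\sqrt{\sigma Tk}+d)$ and $O(dM^{d-1}\log(Tk)+d)$.
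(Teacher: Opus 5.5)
Your argument is correct and follows the paper's skeleton. $\Update$ walks the first-level paths and touches one fixed-grid structure of size $M^{d-1}$ per visited node. $\Draw$ is dominated by leaf-wise exponential scaling along a first-level descent. The final point is sampled uniformly in $O(d)$, and \Cref{thm:balancedness} is then substituted.

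You differ from the paper in how you justify the $\Draw$ cost. The paper asserts $O(M^{d-1})$ per visited node without argument. Its own description of the scaling step prices a single $\Wei^{(0)}$ query at $O(M^{d-1}\,\EH(\Tree^{(0)}))$, and it gets its leading $d$ by charging each of the $d$ stages a full first-level traversal. Your incremental scheme carries the per-cell product of the factors $\exp(\eta\,\cdot)$ down the descent and folds in one lazy structure per step. That is exactly what makes the paper's per-node claim true. It also keeps the cost linear in the random height, so \Cref{thm:balancedness} applies by linearity of expectation. The naive $M^{d-1}\EH(\Tree^{(0)})^2$ count would instead need a bound on $\mathbb{E}[\EH(\Tree^{(0)})^2]$, which the theorem does not give.

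Reusing the scaled structures of the selected leaf $l_0$ for stages $1,\dots,d-1$ is also cheaper than the paper's accounting. This requires one $O(M^{d-1})$ bottom-up recomputation of the scaled aggregates, which your phrase ``already-scaled trees'' glosses over. Your count is therefore $O(M^{d-1}\,\EH(\Tree^{(0)}) + dM^{d-1} + d)$, already inside the stated bound, so the extra factor $d$ for ``per-coordinate bookkeeping'' is unnecessary.

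Two details should be stated explicitly. First, the inheritance step needs the descent to follow a root-to-leaf path of $\Tree^{(0)}$ itself, whose leaves are exactly the atomic horizontal intervals. If you descend a differently shaped $\ATree^{(0)}$, a child's range need not be a node range of $\Tree^{(0)}$, and the pruning recursion of $\Wei^{(0)}$ comes back. Second, the structure folded in when moving from $v_0$ to a child is $\LZL^{(1)}_{v_0}$ or $\LZR^{(1)}_{v_0}$, which is attached at the parent, not at the child.

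Your parenthetical $O(\log^{d-1}M)$ bound for multiplicative box updates is unsupported, but you never use it. Absorbing the number of regions per round into the constant matches what the paper's $\Update$ bound does implicitly.
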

\begin{proof}
Each round consists of $\Draw(t)$ followed by $\Update$.

\paragraph{$\Update$ cost.}
The algorithm traverses $\Tree^{(0)}$ in $O(\mathbb{E}[\EH(\Tree^{(0)})])$ 
steps to locate the projection boundaries along the first coordinate 
and identify the affected lazy and regular structures. At each visited 
node, it updates the associated $(d{-}1)$-dimensional second-level 
structure, which is organized over the static grid of size $M^{d-1}$, 
at cost $O(M^{d-1})$ per node. The total $\Update$ cost is 
$O(\mathbb{E}[\EH(\Tree^{(0)})]\cdot M^{d-1})$.

\paragraph{$\Draw$ cost.}
The $\Draw$ procedure performs $d$ sequential sampling stages. At each 
stage, the algorithm traverses $\ATree^{(0)}$ in 
$O(\mathbb{E}[\EH(\Tree^{(0)})])$ steps, and at every visited node 
scales the relevant lazy structures leaf-to-leaf over the fixed grid 
of $M^{d-1}$ leaves at $O(1)$ per leaf, costing $O(M^{d-1})$ per node. 
Each stage therefore costs $O(\mathbb{E}[\EH(\Tree^{(0)})]\cdot M^{d-1})$, 
and summing over all $d$ stages gives
\[
O\!\left(d\cdot\mathbb{E}[\EH(\Tree^{(0)})]\cdot M^{d-1}\right).
\]
Since rewards are piecewise-constant, $x_t$ is sampled uniformly 
within the selected atomic region in $O(d)$, and the total $\Draw$ 
cost is $O\!\left(d\cdot\mathbb{E}[\EH(\Tree^{(0)})]\cdot M^{d-1} + d\right)$, 
which dominates the $\Update$ cost.

\paragraph{Total per-round cost.}
Applying \Cref{thm:balancedness}, $\mathbb{E}[\EH(\Tree^{(0)})] = 
O(\sqrt{\sigma Tk})$ under adaptive and oblivious adversaries, and 
$\mathbb{E}[\EH(\Tree^{(0)})] = O(\log(Tk))$ under the random-order 
adversary, yielding the stated bounds.
\end{proof}

\subsection{Smoothness-Related Lemmas}\label{section:regretful}
In this section, we analyze the regret of the learning algorithm $\Alg$ for $d$-dimensional $\AS$ with piecewise constant $\fu_t$s. We also generalize the regret bounds from~\cite{cohen2017online}, which was achieved for the one-dimensional case with piecewise constant $\fu_t$s, to the $d$-dimensional case where $\fu_t$ can be piecewise linear or even more complex reward functions. We should highlight that the regret analysis highly depends on the degree of \emph{smoothness} the adversary is allowed to treat like. Thus, we first provide a direct generalization for the definition of smoothness to what we implicitly mentioned in Section~\ref{sec:model}.

\begin{observation}[\cite{cohen2017online}]\label{obs:ck1}
    Let $D_1,D_2, \cdots, D_m$ be independently drawn from any distributions (possibly different) whose density functions are bounded by $\sigma$. Then the probability that there exist $D_i<D_j$ such that $D_j-D_i<\epsilon$ is at most $m^2 \sigma \epsilon$.
\end{observation}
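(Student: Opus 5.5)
Fix a pair $D_i,D_j$ with $i\neq j$. I will bound the probability that they land within $\epsilon$ of each other, and then take a union bound over all pairs. The event in the statement is ``there exist $i,j$ with $D_i<D_j$ and $D_j-D_i<\epsilon$''. This is contained in the union, over unordered pairs $\{i,j\}$, of the events $E_{ij}=\{|D_i-D_j|<\epsilon\}$. So it suffices to show $\Pr(E_{ij})\le 2\sigma\epsilon$ for each pair. Since there are $\binom{m}{2}\le m^2/2$ pairs, the total is at most $m^2\sigma\epsilon$.

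For a single pair, I use independence and condition on $D_i$. Let $f_j\le\sigma$ be the density of $D_j$. Then
\[
\Pr(E_{ij}\mid D_i=x)=\int_{x-\epsilon}^{x+\epsilon} f_j(y)\,dy\le 2\sigma\epsilon .
\]
This bound is uniform in $x$, so integrating against the law of $D_i$ gives $\Pr(E_{ij})\le 2\sigma\epsilon$. This step only needs $D_j$ to have a density bounded by $\sigma$; no assumption on $D_i$ beyond independence is used. Because the statement allows arbitrary distinct distributions, the argument must not rely on identical distribution, and indeed it does not.

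The only point requiring care is the constant. If one does not want to use the ordered/unordered bookkeeping, there is a cruder route: union bound over all ordered pairs $(i,j)$, of which there are at most $m^2$. Each ordered pair contributes the one-sided event $\{0<D_j-D_i<\epsilon\}$, which has probability at most $\sigma\epsilon$ by the same conditioning (integrate $f_j$ over $(x,x+\epsilon)$). This also yields $m^2\sigma\epsilon$ directly.

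I expect no real obstacle. The main thing to check is that independence is used correctly in the conditioning step and that the density bound applies to the variable being integrated. In the adaptive setting independence would fail and one would instead condition on the history, but the observation as stated assumes independent draws, so the plain Fubini argument suffices.
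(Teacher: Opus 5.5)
Your argument is correct. Conditioning on $D_i$ and using the density bound on $D_j$ gives $\Pr(0<D_j-D_i<\epsilon)\le\sigma\epsilon$ (or $2\sigma\epsilon$ for the two-sided event), and a union bound over at most $m^2$ ordered pairs (or $\binom{m}{2}$ unordered pairs) yields $m^2\sigma\epsilon$. The paper gives no proof of its own and imports the observation from \cite{cohen2017online}; the pairwise density bound followed by a union bound is the standard proof of this fact.
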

\begin{observation}[\cite{cohen2017online}]\label{obs:ck2}
    Let $0=D_0<D_1<\cdots<D_m=1$ be any points in $[0,1]$ such that $D_i-D_{i-1}>\epsilon$ for all $i$. Let $Y_1,Y_2 \cdots, Y_n$ be drawn uniformly at random from $[0,1)$. Then if $n \geq \frac{1}{\epsilon}\left(\ln \left(\epsilon^{-1}\right)+\right.$ $\ln \left(\delta^{-1}\right)$ ), the probability that there exists $i$ such that there is no $Y_j$ in the interval $\left(D_{i-1}, D_i\right)$ is at most $\delta$.
\end{observation}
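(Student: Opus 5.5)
The plan is a direct union bound over the intervals of the partition: bound the number of intervals, bound the probability that any single interval is missed by all samples, and multiply.

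First, control the number of intervals. The intervals $(D_{i-1},D_i)$ for $i=1,\dots,m$ are disjoint and contained in $[0,1)$. Each has length $D_i-D_{i-1}>\epsilon$, so $m\epsilon<\sum_{i=1}^m (D_i-D_{i-1}) = D_m-D_0 = 1$. Hence $m<1/\epsilon$.

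Second, bound the miss probability for a fixed interval. Fix $i$. Since $Y_j$ is uniform on $[0,1)$, the probability that a single $Y_j$ lies in $(D_{i-1},D_i)$ equals its length, which exceeds $\epsilon$. By independence of $Y_1,\dots,Y_n$,
\[
\Pr\bigl(\text{no } Y_j \in (D_{i-1},D_i)\bigr) \le (1-\epsilon)^n \le e^{-\epsilon n}.
\]
Substituting the hypothesis $n\ge \frac{1}{\epsilon}\bigl(\ln(\epsilon^{-1})+\ln(\delta^{-1})\bigr)$ gives $\epsilon n \ge \ln\frac{1}{\epsilon\delta}$. Therefore $e^{-\epsilon n}\le \epsilon\delta$.

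Third, take the union bound over the $m$ intervals:
\[
\Pr\bigl(\exists i:\ \text{no } Y_j\in(D_{i-1},D_i)\bigr)\le m\cdot \epsilon\delta < \tfrac{1}{\epsilon}\cdot\epsilon\delta=\delta .
\]

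There is no real obstacle here. The only points needing care are the strictness of the gap condition, which is what gives $m<1/\epsilon$ rather than $m\le 1/\epsilon$ (either suffices), and the inequality $1-\epsilon\le e^{-\epsilon}$. We will also remark that open versus half-open endpoints do not matter, since individual points have measure zero under the uniform distribution.
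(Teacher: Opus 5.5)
Your proof is correct: you show there are at most $1/\epsilon$ intervals, each is missed with probability at most $e^{-\epsilon n}\le\epsilon\delta$, and a union bound finishes, which is the standard argument. The paper gives no proof of its own here; it imports the observation from Cohen-Addad and Kanade \citep{cohen2017online}, and your union-bound reasoning is the natural proof of that statement.
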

\begin{observation}\label{obs:ckd}
For $d$-dimensional $\AS$, both Observations~\ref{obs:ck1} and~\ref{obs:ck2} hold for each dimension separately.
\end{observation}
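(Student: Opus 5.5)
The plan is to reduce each $d$-dimensional statement to its one-dimensional counterpart by projecting onto a single coordinate. In the axis-parallel setting of \Cref{sec:model}, every hyperplane $\hp_{j,t}$ is orthogonal to exactly one coordinate axis $i$, so it is determined by a scalar offset $\bp^{(i)}_{j,t}\in[0,1]$. The partition endpoints along coordinate $i$ are therefore exactly the offsets of the hyperplanes orthogonal to axis $i$, together with the fixed boundary points $0$ and $1$. Hyperplanes along other directions contribute no endpoints on this coordinate.

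\textbf{Step 1 (Observation~\ref{obs:ck1} per coordinate).} Fix $i$ and let $D_1,\dots,D_m$ be the offsets along coordinate $i$. By the $\sigma$-smooth model, each $D_j$ is drawn from $\Dist^{(i)}_t$, a distribution on $[0,1]$ with density bounded by $\sigma$. Under an oblivious adversary these draws are independent, so Observation~\ref{obs:ck1} applies verbatim to give the bound $m^2\sigma\epsilon$.

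For an adaptive adversary, I will rerun the pairwise argument behind Observation~\ref{obs:ck1} rather than cite it. For $j>i$, condition on the history up to the draw of $D_j$. The conditional density of $D_j$ is still at most $\sigma$, so $\Pr(|D_j-D_i|<\epsilon \mid \text{history})\le 2\sigma\epsilon$. A union bound over the at most $m^2/2$ pairs then recovers $m^2\sigma\epsilon$.

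\textbf{Step 2 (Observation~\ref{obs:ck2} per coordinate).} Suppose $Y_1,\dots,Y_n$ are uniform on $[0,1)^d$. Their $i$-th coordinates $Y^{(i)}_1,\dots,Y^{(i)}_n$ are i.i.d.\ uniform on $[0,1)$, because the uniform measure on the cube is a product measure. Take $0=D_0<\dots<D_m=1$ to be the sorted endpoints along coordinate $i$, with all gaps exceeding $\epsilon$. Then Observation~\ref{obs:ck2} applies directly to the projected samples: with probability at least $1-\delta$, every interval $(D_{k-1},D_k)$ contains some $Y^{(i)}_j$.

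\textbf{Step 3 (simultaneity, optional).} If both properties are needed for all coordinates at once, I will take a union bound over the $d$ coordinates. This inflates the failure probabilities to $d\,m^2\sigma\epsilon$ and $d\delta$. Independence across directions is not required for this step.

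\textbf{Main obstacle.} The difficult point is the adaptive case of Step 1. Observation~\ref{obs:ck1} is stated for independent draws, so I must check that only the conditional density bound is actually used, and that the batching of $k$ hyperplanes per round does not break this. I plan to handle batching by flattening the round into sequential draws, exactly as in \Cref{sec:reductiontoone}.
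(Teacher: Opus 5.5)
Your proof is correct. The paper gives no proof of this observation. It treats it as immediate from the model of \Cref{sec:model}, where smoothness is imposed separately within each directional group. Your Steps 1 and 2 make that implicit reasoning explicit: the endpoints along coordinate $i$ are exactly the one-dimensional draws, and uniform points in $[0,1)^d$ project to i.i.d.\ uniform points on each axis. Step 3 is the natural way to use both statements on all axes at once.

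Your one genuine addition is the adaptive case. Observation~\ref{obs:ck1}, as imported from \cite{cohen2017online}, assumes independent draws, which an adaptive $\sigma$-smooth adversary does not provide. Your conditional bound $\Pr(|D_j-D_i|<\epsilon \mid \text{history})\le 2\sigma\epsilon$, followed by a union bound over pairs, is what any use of the observation against adaptive adversaries (as permitted in \Cref{thm:full-info}) tacitly requires.

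One minor technicality remains. If the adversary also chooses adaptively how many hyperplanes per round lie along coordinate $i$, then $m$ is random. In that case take the union bound over pairs of hyperplane slots, since each slot's direction is fixed before its draw. This gives $2\sigma\epsilon\,\mathbb{E}\bigl[\binom{m}{2}\bigr]\le \sigma\epsilon\,\mathbb{E}[m^2]\le (Tk)^2\sigma\epsilon$, so the conclusion is unchanged.
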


\subsection{Regret Analysis}
In this section we carry out the regret analysis for the full-information 
setup defined in \Cref{app:full-setup}. While \Cref{thm:full-info} 
focuses on piecewise-constant rewards under the restricted one-direction 
smoothness assumption, we present a more general regret analysis for 
piecewise-linear rewards under smoothness over all directions, 
formalized in \Cref{thm:ddlin} for the $d$-dimensional case. The 
regret bound for piecewise-constant rewards (\Cref{thm:dpwc}) then 
follows as a special case, with \Cref{cor:fullfeed} recovering the 
bound for our restricted setup with smoothness over one direction. 
This ordering allows us to derive the piecewise-constant result by 
specializing the more general piecewise-linear analysis.

\begin{theorem}[$d$-dimensional piecewise-constant rewards]\label{thm:dpwc}
For a $d$-dimensional action space $\AS$ with piecewise-constant 
rewards $\fu_t$, the expected regret of $\Exp$ is bounded by
\[
\eta(e-2)T - \frac{d\ln(\epsilon^*)}{\eta},
\]
where $x^* \in [0,1)^d$ is the optimal action in hindsight not lying 
on any region boundary, $\rg^*$ is the largest region containing $x^*$ 
such that $\Fu_{T+1}$ remains constant and optimal within it, and 
$\epsilon^*$ is the length of the smallest interval among the 
projections of $\rg^*$ onto each of the $d$ coordinate axes.
\end{theorem}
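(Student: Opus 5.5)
We follow the standard potential-function analysis of exponential weights over a continuum, as in \cite{cohen2017online}, extended to $d$ coordinates. Define the potential $W_t \doteq \int_{[0,1]^d} \exp(\eta H_t(x))\,dx$, where $H_t(x)=\MF(x,\Fu_t(x))$, so that $W_0 = 1$ because the domain has unit volume. We bound $\ln(W_T/W_0)$ from above in terms of the learner's expected reward, and from below in terms of the reward of the best action $x^*$. Throughout we assume, as in the standard setting, that the per-round rewards satisfy $\RRF_t(x)\in[0,1]$ and that $\eta\le 1$.

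\textbf{Upper bound.} Write $W_t/W_{t-1} = \Expect_{x\sim q_t}[\exp(\eta \RRF_t(x))]$, where $q_t$ is the sampling density of \Cref{alg:exp3}. Since $\eta\RRF_t(x)\in[0,1]$, we can use $e^{y}\le 1+y+(e-2)y^2$ for $y\in[0,1]$. This gives
\[
\frac{W_t}{W_{t-1}} \le 1+\eta\,\Expect_{q_t}[\RRF_t]+(e-2)\eta^2 .
\]
Next apply $\ln(1+z)\le z$ and sum over $t$. The result is
\[
\ln W_T \le \eta\,\Expect\Big[\sum_t \RRF_t(x_t)\Big] + (e-2)\eta^2 T .
\]
This step uses only the fact that $x_t$ is drawn exactly from $q_t$, which is the exact-sampling guarantee of $\Draw$.

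\textbf{Lower bound.} Restrict the integral defining $W_T$ to the region $\rg^*$. On $\rg^*$ the cumulative parameters $\Fu_T$ are constant and optimal, so $H_T(x)=H_T(x^*)$ for every $x\in\rg^*$. Since $\rg^*$ is an axis-parallel box whose side lengths are each at least $\epsilon^*$, its volume is at least $(\epsilon^*)^d$. Therefore
\[
\ln W_T \ge \eta H_T(x^*) + d\ln \epsilon^* .
\]

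\textbf{Combining.} Combine the upper and lower bounds on $\ln W_T$ and divide by $\eta$. This yields
\[
H_T(x^*) - \Expect\Big[\sum_t \RRF_t(x_t)\Big] \le \eta(e-2)T - \frac{d\ln\epsilon^*}{\eta},
\]
which is the claim, since the comparator's cumulative reward is exactly $H_T(x^*)$.

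The main subtlety is the lower bound. One must justify that the region $\rg^*$ on which the comparator's cumulative reward is attained is a full-dimensional box. This is where the hypothesis that $x^*$ does not lie on any boundary is used: it guarantees $\epsilon^*>0$, so that $\rg^*$ has positive volume. One must also check that piecewise-constancy makes $H_T$ exactly constant on $\rg^*$, rather than only approximately so. The smoothness observations (\Cref{obs:ck1}, \Cref{obs:ckd}) are not needed for this statement; they enter only later, when $\epsilon^*$ is bounded with high probability to instantiate $\eta$.
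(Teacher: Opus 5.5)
Your proof is correct and is essentially the paper's argument. The paper obtains this bound as the zero-slope special case of the potential-function proof of \Cref{thm:ddlin}: telescope $W_{t+1}/W_t$ from above, and lower-bound $W_{T+1}$ by restricting it to the box $\rg^*$. Your one refinement is the sharper inequality $e^{y}\le 1+y+(e-2)y^2$ on $[0,1]$, which is what actually yields the $(e-2)$ constant in the statement, since the written proof of \Cref{thm:ddlin} only uses $e^{x}\le 1+x+x^2$ (your use of $W_T,\Fu_T$ in place of the paper's $W_{T+1},\Fu_{T+1}$ is only an index shift).
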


With the following we derive the regret bound for piecewise-linear reward 
functions $\RRF_t$ in the $d$-dimensional case under axis-parallel 
hyperplanes.

\begin{theorem}[$d$-dimensional piecewise-linear rewards with 
axis-parallel $\Hp_t$]\label{thm:ddlin}
Suppose $\Fu^{(j)}_{T+1}(x^*) < \frac{2}{\eta\epsilon^*}$ for all 
$0 \leq j \leq d$. For a $d$-dimensional action space $\AS$ with 
axis-parallel decision boundaries $\Hp_t$ and piecewise-linear rewards 
$\fu_t$, the expected regret of $\Exp$ satisfies
\begin{align*}
\eta T - \frac{\sum_{i=1}^d \log\!\left(\epsilon^* - 
\frac{\eta \Fu^{(i-1)}_{T+1}(x^*)\epsilon^{*2}}{2}\right)}{\eta} 
\leq \eta T - \frac{d\log(\epsilon^*)}{\eta} + \frac{d}{\eta},
\end{align*}
where $x^* \in [0,1)^d$ is the optimal action in hindsight not 
coinciding with any discontinuity of $\fu_t$, $\rg^*$ is the largest 
region containing $x^*$ such that $\Fu_{T+1}(x^*)$ is constant and 
optimal within it, and $\epsilon^*$ is the length of the smallest 
interval among the projections of $\rg^*$ onto each coordinate axis.
\end{theorem}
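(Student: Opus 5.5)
The proof follows the standard potential argument for $\Exp$ with $W_t \doteq \int_{[0,1]^d}\exp(\eta H_t(x))\,dx$ and $W_0 = 1$. I upper bound $\ln(W_T/W_0)$ by the algorithm's expected reward plus a second-order term, lower bound it by the mass of a small box around $x^*$, and compare the two. The only genuinely new ingredient relative to \Cref{thm:dpwc} is the lower bound: $H_T$ is now linear rather than constant on $\rg^*$, and this is exactly what produces the terms $\log\!\left(\epsilon^* - \eta \Fu^{(i-1)}_{T+1}(x^*)\epsilon^{*2}/2\right)$.

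\emph{Upper bound.} Since $q_t \propto \exp(\eta H_{t-1})$, we have
\[
\frac{W_t}{W_{t-1}} = \Expect_{x\sim q_t}\!\left[\exp(\eta\,\RRF_t(x))\right].
\]
Assuming rewards lie in $[0,1]$ and $\eta\le 1$, I apply $e^{y}\le 1+y+(e-2)y^2$ for $y\in[0,1]$ and then $\ln(1+z)\le z$. This gives
\[
\ln\frac{W_t}{W_{t-1}} \le \eta\,\Expect[\RRF_t(x_t)] + (e-2)\eta^2 \le \eta\,\Expect[\RRF_t(x_t)] + \eta^2 .
\]
Summing over $t$ yields $\ln W_T \le \eta\,\Expect\!\left[\sum_t \RRF_t(x_t)\right] + \eta^2 T$. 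The piecewise-linear structure plays no role in this step; the constant $e-2<1$ is simply absorbed.

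\emph{Lower bound via a box in $\rg^*$.} By definition of $\epsilon^*$, every coordinate projection of $\rg^*$ has length at least $\epsilon^*$. Hence there is an axis-parallel box $B = \prod_{i=1}^{d} I_i \subseteq \rg^*$ with $x^*\in B$ and each side of length $\epsilon^*$. On $\rg^*$ the cumulative reward is a single linear function,
\[
H_T(x) = H_T(x^*) + \sum_{i} \Fu^{(i-1)}_{T+1}\,\bigl(x^{(i-1)} - x^{*(i-1)}\bigr).
\]
Its value decreases from $x^*$ at rate at most $\Fu^{(i-1)}_{T+1}$ along coordinate $i$, so in the worst case $\exp(\eta H_T(x)) \ge \exp(\eta H_T(x^*))\prod_i \exp\!\left(-\eta \Fu^{(i-1)}_{T+1} s_i\right)$, with $s_i\in[0,\epsilon^*]$ the displacement along coordinate $i$. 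Because the integrand is a product over coordinates, Fubini factorizes the integral. Applying $e^{-u}\ge 1-u$ in each coordinate gives
\[
\int_0^{\epsilon^*} e^{-\eta \Fu^{(i-1)}_{T+1} s}\,ds \;\ge\; \epsilon^* - \frac{\eta \Fu^{(i-1)}_{T+1}\epsilon^{*2}}{2}.
\]
The hypothesis $\Fu^{(j)}_{T+1}(x^*) < 2/(\eta\epsilon^*)$ is exactly what makes each of these factors positive, so their logarithms are defined. Therefore
\[
\ln W_T \ge \eta H_T(x^*) + \sum_{i=1}^d \log\!\left(\epsilon^* - \frac{\eta\Fu^{(i-1)}_{T+1}(x^*)\epsilon^{*2}}{2}\right).
\]
Combining this with the upper bound and dividing by $\eta$ gives the first inequality of \Cref{thm:ddlin}, since $H_T(x^*) = \max_x \sum_t \RRF_t(x)$.

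\emph{Second inequality and expected obstacle.} Write $\log\!\left(\epsilon^* - \eta\Fu\epsilon^{*2}/2\right) = \log\epsilon^* + \log(1-u)$ with $u = \eta\Fu\,\epsilon^*/2 \in [0,1)$. The claimed relaxation needs $\log(1-u)\ge -1$ in each coordinate. I expect this to be the delicate step, because $\log(1-u) \ge -1$ holds only for $u\le 1-e^{-1}$, which the stated hypothesis $u<1$ does not guarantee. I would first try to avoid the issue by placing $B$ on the side of $x^*$ in which the linear term is nonnegative, whenever $\rg^*$ allows it; in those coordinates the factor is at least $\epsilon^*$ outright. For the remaining coordinates I would either strengthen the hypothesis to $\eta\Fu^{(j)}_{T+1}\epsilon^*\le 2(1-e^{-1})$, or absorb the excess into the constant, noting that under the stated hypothesis the correction term $-\log(1-u)$ is finite but has no uniform bound. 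I will also reconcile the indexing $0\le j\le d$ with the $d$ coordinates $i-1\in\{0,\dots,d-1\}$ that are actually used.
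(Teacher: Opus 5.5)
Your derivation of $\operatorname{Regret}\le \eta T-\frac{1}{\eta}\sum_{i=1}^d\log\bigl(\epsilon^*-\frac{\eta\Fu^{(i-1)}_{T+1}(x^*)\epsilon^{*2}}{2}\bigr)$ follows the paper's argument. It uses the same potential $W_t$, the same quadratic bound on $e^{y}$ followed by $1+z\le e^z$, the same restriction of the lower bound to a box at the corner of $\rg^*$ where the integrand factorizes, and the same one-dimensional linearization.

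Your worry about the second inequality is justified, and it is a defect of the statement rather than of your proof. The paper obtains that inequality by invoking $-\log(1-u)\le u$ on $[0,1)$. This is backwards, since $-\log(1-u)=\sum_{k\ge 1}u^k/k\ge u$. Set $u_i=\eta\Fu^{(i-1)}_{T+1}(x^*)\epsilon^*/2\in[0,1)$. Then the middle-to-right inequality is equivalent to $-\sum_i\log(1-u_i)\le d$. This fails, for example, for $d=1$ and $u_1=0.9$, where $-\log 0.1\approx 2.3$. So no argument can establish that link under the stated hypothesis.

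What you can prove is the endpoint bound $\operatorname{Regret}\le \eta T-\frac{d\log\epsilon^*}{\eta}+\frac{d}{\eta}$, and it needs no strengthened hypothesis. The loss comes from the step $e^{-u}\ge 1-u$, so keep the exact factor instead:
$\int_0^{\epsilon^*}e^{-\eta\Fu^{(i-1)}_{T+1}(x^*)s}\,ds=\epsilon^*(1-e^{-v_i})/v_i$, with $v_i=\eta\Fu^{(i-1)}_{T+1}(x^*)\epsilon^*\in[0,2)$ and value $\epsilon^*$ at $v_i=0$.
The map $v\mapsto(1-e^{-v})/v$ is decreasing, so this factor is at least $\epsilon^*(1-e^{-2})/2>\epsilon^*e^{-1}$. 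Each logarithm is therefore at least $\log\epsilon^*-1$. The bound then follows under exactly the hypothesis $\Fu^{(j)}_{T+1}(x^*)<2/(\eta\epsilon^*)$.

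Two smaller remarks. First, your proposed remedy of placing $B$ on the side where the linear term is nonnegative is vacuous. Because $x^*$ maximizes the linear function $H$ over $\rg^*$, the box $\rg^*$ lies on the decreasing side in every coordinate with nonzero slope. Second, the relevant decrease rate is $\lvert\Fu^{(i-1)}_{T+1}(x^*)\rvert$, so the hypothesis should be read with absolute values. The paper's choice of $x^*$ as the upper corner of $\rg^*$ is what forces the slopes to be nonnegative.
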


\begin{proof}
Following the standard regret analysis for the $\Exp$ algorithm, we define  
\(
W_t = \int_0^1 \exp \left(\eta H_t(x)\right) dx.
\)
Let the expected reward of the $\Exp$ algorithm at time step $t$ be denoted by  
\(
Y_t = \mathbb{E}_{x \sim q_t}[\RRF_t(x)].
\)
Accordingly, we define the cumulative expected reward of the $\Exp$ algorithm as  
\(
Y^\Exp = \sum_{t=1}^T Y_t.
\)
We now express $W_{T+1}$ as a telescoping product:
\begin{align}
W_{T+1} = \frac{W_{T+1}}{W_T} \times \frac{W_T}{W_{T-1}} \times \cdots \times \frac{W_2}{W_1} \times W_1. \label{eq:LEXP3_1}
\end{align}
To analyze this expression, we bound each term $\frac{W_{t+1}}{W_t}$ separately:
\begin{align}
    \frac{W_{t+1}}{W_{t}} &= \frac{\int_0^1 \exp \left(\eta H_{T+1}(x)\right) dx}{\int_0^1 \exp \left(\eta H_{t}(x)\right) dx},\nonumber\\
    &= \frac{\int_0^1 \exp \left(\eta H_{t}(x)\right)e^{\eta\RRF_t(x)} dx}{\int_0^1 \exp \left(\eta H_{t}(x)\right) dx},\nonumber\\
    &= \int_0^1 q_t(x)e^{\eta\RRF_t(x)}dx.\nonumber
\end{align}
Applying the standard inequality $\exp (x) \leq 1+x+x^2$ for $x \leq 1$, we obtain:
\begin{align}
    \int_0^1 q_t(x)e^{\eta\RRF_t(x)}dx 
    &\leq \int_0^1 q_t(x)\left(1 + \eta\RRF_t(x) + \eta^2\RRF^2_t(x)\right)dx,\nonumber\\
    &= 1 + \eta\int_0^1 q_t(x)\RRF_t(x)dx + \eta^2\int_0^1 q_t(x)\RRF^2_t(x) dx.\nonumber
\end{align}
Next, applying the inequality $1+x \leq \exp(x)$ for all $x \in \mathbb{R}$, we obtain:
\begin{align}
    \frac{W_{t+1}}{W_{t}} &\leq \exp\left(\eta\int_0^1 q_t(x)\RRF_t(x)dx + \eta^2\int_0^1 q_t(x)\RRF^2_t(x) dx\right),\nonumber\\
    &\leq \exp\left(\eta Y_t + \eta^2\right). \label{ineq:LEXP3_2}
\end{align}
Now, substituting inequality~\eqref{ineq:LEXP3_2} into equation~\eqref{eq:LEXP3_1}, we obtain:
\begin{align}
    W_{T+1} &\leq W_1 \prod_{t=1}^T \frac{W_{t+1}}{W_t},\nonumber\\
    &\leq \exp\left(\eta\sum_{t=1}^T Y_t + \eta^2 T\right),\nonumber\\
    &\leq \exp\left(\eta Y^\Exp+\eta^2 T\right),\label{ineq:LEXP3_3}
\end{align}
where by definition we have $W_1 = 1$.

Our proof establishes bounds on $W_{T+1}$ from both sides. For the 
upper bound, we apply inequality~\eqref{ineq:LEXP3_3} directly, as it 
does not depend on the dimensionality of $\AS$. We therefore focus on 
deriving a lower bound for $W_{T+1}$. Note that axis-parallel decision 
boundaries induce a hyper-rectangular shape for $\rg^*$, which with 
piecewise-linear $\fu_t$ forces $x^*$ to lie at a corner of $\rg^*$. 
Denoting the intervals obtained by projecting $\rg^*$ onto the $i$-th 
coordinate by $[l^*_i, r^*_i]$ with $r^*_i - l^*_i \geq \epsilon^*$, 
and assuming without loss of generality that $x^* = \langle r^*_1, 
r^*_2, \dots, r^*_d \rangle$, we write
\begin{align}
    W_{T+1} &= \int_{[0,1]^d} \exp \left(\eta H_{T+1}(x)\right) dV,\nonumber\\
    &\geq \int_{\rg^*} \exp \left(\eta H_{T+1}(x)\right) dV,\nonumber\\
    &\geq \int_{\rg^*} \exp \left(\eta \Fu_{T+1}(x^*)\cdot \bm{x}\right) dV,\nonumber\\
    &\overset{(a)}\geq \int_{l^*_1}^{r^*_1}\int_{l^*_2}^{r^*_2}\cdots\int_{l^*_d}^{r^*_d}\exp\left( \eta Y^\Opt - \eta \sum_{i=1}^d \Fu^{(i-1)}_{T+1}(x^*)\left(r^*_i-x(i-1)\right)\right) dx(d-1)dx(d-2)\cdots dx(0),\label{eq:LEXP3_4}\\
    &\geq e^{\eta Y^\Opt}\prod_{i=1}^d \int_{l^*_i}^{r^*_i}\exp\left(\eta \Fu^{(i-1)}_{T+1}(x^*)\left(x(i-1)-r^*_i \right)\right) dx(i-1),\nonumber\\
    &\geq e^{\eta Y^\Opt} \prod_{i=1}^d \frac{1-\exp \left(-\eta \Fu^{(i-1)}_{T+1}(x^*)\left(r^*_i-l^*_i\right)\right)}{\eta \Fu^{(i-1)}_{T+1}(x^*)},\nonumber\\
    &\geq e^{\eta Y^\Opt} \prod_{i=1}^d \frac{1-\exp \left(-\eta \Fu^{(i-1)}_{T+1}(x^*)\epsilon^*\right)}{\eta \Fu^{(i-1)}_{T+1}(x^*)},\nonumber\\
    &\geq e^{\eta Y^\Opt} \prod_{i=1}^d \left(\epsilon^* - \frac{\eta \Fu^{(i-1)}_{T+1}(x^*)\epsilon^{*^2}}{2}\right),\label{ineq:LEXP3_6}
\end{align}  
The regret bound of \Cref{thm:ddlin} follows by combining the upper 
bound from~\eqref{ineq:LEXP3_3} with the lower bound on $W_{T+1}$ 
derived above. Substituting and taking logarithms on both sides, then 
applying $-\log(1-u) \leq u$ for $u \in [0,1)$ in step (a) and the 
assumption $\Fu^{(j)}_{T+1}(x^*) < \frac{2}{\eta\epsilon^*}$ in step 
(b), we obtain
\begin{align*}
Y^\Opt - Y^\Exp 
&\leq \eta T - \frac{\sum_{i=1}^d \log\!\left(\epsilon^* - 
\frac{\eta \Fu^{(i-1)}_{T+1}(x^*)\epsilon^{*2}}{2}\right)}{\eta} \\
&= \eta T - \frac{\log(\epsilon^*)}{\eta} - 
\frac{\log\!\left(1-\frac{\eta\epsilon^*\Fu^{(0)}_{T+1}(x^*)}{2}
\right)}{\eta} \\
&\overset{(a)}{\leq} \eta T - \frac{\log(\epsilon^*)}{\eta} + 
\frac{\epsilon^*\Fu^{(0)}_{T+1}(x^*)}{2} \\
&\overset{(b)}{\leq} \eta T - \frac{d\log(\epsilon^*)}{\eta} + 
\frac{d}{\eta},
\end{align*}
completing the proof of \Cref{thm:ddlin}.
\end{proof}

\begin{corollary}
For $\eta=\sqrt{\frac{d\log \left(k^2 T^3 \sigma\right)+1}{T}}$, the regret of Theorem~\ref{thm:ddlin} is bounded by  $O(\sqrt{dT\left(\log \left(k^2 T^3 \sigma\right)+1\right)})$.
\end{corollary}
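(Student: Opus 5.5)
The plan is to turn the data-dependent bound of Theorem~\ref{thm:ddlin},
\[
\operatorname{Regret} \;\le\; \eta T - \frac{d\log(\epsilon^*)}{\eta} + \frac{d}{\eta},
\]
into a deterministic rate. First I control $\epsilon^*$ with high probability through smoothness, and then I substitute the prescribed $\eta$. Write $L \doteq \log(k^2T^3\sigma)$.

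\textbf{Step 1: lower-bound $\epsilon^*$ via Observation~\ref{obs:ck1}.} Fix a coordinate. Over $T$ rounds the adversary places at most $m = kT$ endpoints along it, each drawn from a $\sigma$-smooth distribution. Under an adaptive adversary they are only conditionally independent, but Observation~\ref{obs:ck1} is a pairwise union bound in which each conditional density is at most $\sigma$, so it still applies. It gives that some pair of endpoints is closer than $\epsilon$ with probability at most $m^2\sigma\epsilon$. I add the boundary points $0$ and $1$ as deterministic endpoints. Each smooth endpoint lands within $\epsilon$ of $0$ or $1$ with probability at most $2\sigma\epsilon$, which costs only a lower-order term.

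I choose $\epsilon = 1/(k^2T^3\sigma)$, so the failure probability per coordinate is $O(1/T)$. A union bound over the $d$ coordinates (Observation~\ref{obs:ckd}) gives total failure probability $O(d/T)$. On the good event, every atomic interval along every coordinate has length at least $\epsilon$. The region $\rg^*$ is a union of atomic cells, so each of its coordinate projections has length at least $\epsilon$. Hence $\epsilon^* \ge \epsilon$, i.e.\ $-\log\epsilon^* \le L$.

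\textbf{Step 2: combine with the bad event.} On the bad event the per-round regret is at most $1$, assuming rewards are normalized in $[0,1]$, which the proof of Theorem~\ref{thm:ddlin} already uses when it bounds $\RRF_t^2 \le 1$. The bad event therefore contributes at most $T\cdot O(d/T) = O(d)$. Taking expectations,
\[
\operatorname{Regret} \le \eta T + \frac{dL + d}{\eta} + O(d).
\]

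\textbf{Step 3: plug in $\eta = \sqrt{(dL+1)/T}$.} The first term becomes $\sqrt{T(dL+1)}$. For the second term,
\[
\frac{dL+d}{\eta} = (dL+d)\sqrt{\frac{T}{dL+1}} \;\le\; d\,\sqrt{\frac{T(L+1)^2}{dL+1}} = O\!\left(\sqrt{dT(L+1)}\right),
\]
using $(dL+d)^2 \le d\,(dL+1)(L+1)\cdot O(1)$, which holds since $dL+d \le d(L+1)$ and $d(L+1) \le O(dL+1)\cdot\ldots$ up to the constant. The additive $O(d)$ is dominated whenever $T \gtrsim d$. Together these give the claimed $O\big(\sqrt{dT(L+1)}\big)$ bound.

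\textbf{Side conditions.} I also need $\eta \le 1$, which is required for the inequality $e^x \le 1+x+x^2$. This holds once $T \ge dL+1$; otherwise the regret is trivially at most $T = O(\sqrt{dT(L+1)})$. I also need the hypothesis $\Fu^{(j)}_{T+1}(x^*) < 2/(\eta\epsilon^*)$ of Theorem~\ref{thm:ddlin}, which I would carry over as stated.

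\textbf{Main obstacle.} I expect Step 1 to be the delicate part. It requires justifying that $\epsilon^*$ is bounded below by the minimum atomic gap even though $\rg^*$ is defined through optimality of $\Fu_{T+1}$. It also requires that the adaptivity of the adversary does not break the pairwise $\sigma\epsilon$ collision bound. For the latter I would condition on the earlier endpoint when bounding each pair.
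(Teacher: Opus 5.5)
Your proposal is correct and follows the route the paper relies on. The paper states this corollary without a separate proof, and its $\log(k^2T^3\sigma)$ term is exactly what Observation~\ref{obs:ck1} gives with $m=kT$ endpoints and $\epsilon=1/(k^2T^3\sigma)$: failure probability $1/T$, so $-\log\epsilon^*\le \log(k^2T^3\sigma)$ on the good event. One then substitutes $\eta$ and absorbs the bad event and the $\eta\le 1$ condition as you do. The only loose spot is Step 3, where $d(L+1)\le 2(dL+1)$ needs $L\ge 1$; this holds because $L\ge 3\log T$ for $k,\sigma\ge 1$ and $T\ge 2$, so the second term is at most $\sqrt{2}\,\sqrt{dT(L+1)}$.
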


\begin{corollary}\label{cor:fullfeed}
For $\eta = \sqrt{\frac{\log(k^2T^3\sigma) + (d-1)\log M + 1}{T}}$, 
the regret of \Cref{thm:ddlin} under the full-information setup of 
\Cref{app:full-setup}, where $TK$ hyperplanes are drawn smoothly along 
$\ax_1$ and at most $M$ fixed hyperplanes are used along each remaining 
direction, is bounded by
\[
O\!\left(\sqrt{T\left(\log\!\left(k^2T^3\sigma\right) + 
(d-1)\log M + 1\right)}\right).
\]
\end{corollary}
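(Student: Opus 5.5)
The plan is to instantiate the regret bound of \Cref{thm:ddlin}, or more directly its piecewise-constant specialization \Cref{thm:dpwc}, since the setup of \Cref{app:full-setup} has piecewise-constant rewards. That bound has the form $\eta T$ plus a term of order $-\sum_{i}\log \epsilon^*_i/\eta$. I would first rewrite the $d\log\epsilon^*$ term coordinate-wise, with $\epsilon^*_i$ the length of the projection of $\rg^*$ onto coordinate $i$. This is legitimate because $\rg^*$ is a hyperrectangle, and in the derivation of \eqref{ineq:LEXP3_6} the lower bound on $W_{T+1}$ factorizes as a product over coordinates. Replacing each $\epsilon^*_i$ by the common minimum is wasteful, so I would keep the per-coordinate factors. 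The regret then reads $\eta T - \frac{1}{\eta}\sum_{i=0}^{d-1}\log\epsilon^*_i + O(d/\eta)$.

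Next I would lower bound each $\epsilon^*_i$ using the fact that each projection of $\rg^*$ is a union of atomic intervals along that coordinate. This gives $\epsilon^*_i \ge$ the minimum gap between consecutive endpoints on axis $i$.
\begin{itemize}
\item \emph{Coordinate $\ax_1$.} There are at most $Tk$ endpoints drawn from $\sigma$-smooth distributions, so \Cref{obs:ck1} (per coordinate, via \Cref{obs:ckd}) applies. The probability that some two endpoints are closer than $\epsilon$ is at most $(Tk)^2\sigma\epsilon$. Choosing $\epsilon = 1/(k^2T^3\sigma)$ makes this failure probability at most $1/T$. On the failure event, the per-round regret is bounded, since rewards are normalized so that $\eta\RRF_t\le 1$ as used in \eqref{ineq:LEXP3_2}. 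The failure event therefore contributes only $O(1)$ to the expected regret. On the good event, $-\log\epsilon^*_1 \le \log(k^2T^3\sigma)$.
\item \emph{Remaining $d-1$ coordinates.} Endpoints come from the fixed sets $\mathcal{M}^{(i)}$ of at most $M$ hyperplanes. Under the standing normalization that the fixed grid has gaps at least of order $1/M$, this gives $-\log\epsilon^*_i \le \log M + O(1)$.
\end{itemize}

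Summing these, the expected regret is at most
\[
\eta T + \frac{\log(k^2T^3\sigma) + (d-1)\log M + O(d)}{\eta} + O(1).
\]
Plugging in $\eta = \sqrt{(\log(k^2T^3\sigma)+(d-1)\log M+1)/T}$ balances the two terms and yields $O\!\left(\sqrt{T(\log(k^2T^3\sigma)+(d-1)\log M+1)}\right)$. The additive $O(d)/\eta$ is absorbed into the $(d-1)\log M$ term for $M\ge 2$, or into the constants.

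The main obstacle I expect is the second step: rigorously justifying that $\epsilon^*_i$ is controlled by the minimal inter-endpoint gap on each axis, and not by something smaller. In particular, the optimal region $\rg^*$ for the cumulative $\Fu_{T+1}$ must be an intersection of atomic cells whose projections contain at least one full atomic interval. I would argue this by noting that $\Fu_{T+1}$ is constant on every atomic region, so the optimum can be taken on a whole atomic cell. A secondary point is the fixed-grid gap assumption of order $1/M$, which I would state explicitly as the meaning of the $\log M$ term.
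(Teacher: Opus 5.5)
Your proposal is correct and follows the route the paper leaves implicit, since the paper writes out no proof: apply the Hedge bound of \Cref{thm:dpwc}/\Cref{thm:ddlin}, control the smooth axis by \Cref{obs:ck1} with $\epsilon = 1/(k^2T^3\sigma)$ at failure probability $1/T$, charge $\log M$ to each fixed axis, and balance $\eta$. Your per-coordinate $\epsilon^*_i$ refinement is what produces the additive form $\log(k^2T^3\sigma)+(d-1)\log M$, since the single-$\epsilon^*$ statement would only give $d\max\{\log(k^2T^3\sigma),\log M\}$; your explicit assumption that the sets $\mathcal{M}^{(i)}$ have gaps of order $1/M$ is genuinely needed and is used silently by the paper, because two fixed hyperplanes at distance $w \ll 1/M$ cost order $\log(1/w)/\eta$ rather than $\log M/\eta$.
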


\section{Bandit Feedback: Data Structure, Regret, and Time Complexity (\Cref{sec:learning})}\label{app:bandit}

Under bandit feedback, the learner observes only $\fu_t(x_t)$ after 
selecting $x_t$, assumed to be piecewise-linear, rather than the full 
specification of $\fu_t$. The algorithm constructs an 
importance-weighted estimate $\hat{\fu}_t$ of the true $\FP$ value 
$\fu_t$ and updates the data structure accordingly, maintaining an 
estimate of the cumulative parameter function $\hat{\Fu}_t = 
\hat{\Fu}_{t-1} + \hat{\fu}_t$ used to inform decisions in subsequent 
rounds. The online learning algorithm employed in this setup is $\Band$, 
which employs a fixed grid to partition the action space and selects 
$x_t$ proportionally to the exponential of its estimated cumulative 
reward, whose pseudocode is presented in~\Cref{alg:band}.

\begin{algorithm}[t]
    \begin{algorithmic}[1]
    \caption{$\Band$ Algorithm}
    \label{alg:band}
    \Input{$\eta, \mu, \gamma$ all positive and satisfying 
    $1/\mu \in \mathbb{N}$, $\gamma \leq \frac{1}{2}$, 
    $\eta \leq \gamma\mu$}
    \State Set $\mathcal{I}^{(j)} = \langle[(i-1)\mu:i\mu)
    \rangle_{i=1}^{1/\mu}$ be a family of intervals
    \State Set $\hat{H}_1(x) = 0$ for all $x \in [0,1]^d$
    \For{$t = 1, 2, \dots, T$}
        \State Define $q_t(x) = (1-\gamma)\frac{\exp(\eta\hat{H}_t(x))}
        {\int_{[0,1]^d}\exp(\eta\hat{H}_t(x))\,dx} + \gamma$
        \State Pick $x_t \sim q_t$
        \State Let $I_t$ be the hypercube of $\mathcal{I}$ containing 
        $x_t$
        \State Observe $\fu_t(x_t)$ and receive payoff $\fu_t(x_t)$
        \State Set $\hat{\fu}_t(x) = \frac{\fu_t(x)}{q_t(I_t)}$ for 
        all $x \in I_t$ and $\hat{\fu}_t(x) = 0$ for all 
        $x \in [0,1]^d \setminus I_t$
        \State Set $\hat{\Fu}_{t+1} \gets \hat{\Fu}_t + \hat{\fu}_t$
        \State Set $\hat{H}_{t+1} \gets \MF(x, \hat{\Fu}_{t+1})$ for 
        all $x \in [0,1]^d$
    \EndFor
    \end{algorithmic}
\end{algorithm}

\subsection{Bandit Setup and Adversarial Model}\label{app:bandit-setup}

The $\Band$ algorithm partitions $\AS = [0,1]^d$ into a fixed grid of 
$(1/\mu)^d$ cells prior to the start of the game. Each coordinate axis 
$[0,1]$ is divided into intervals of equal length $\mu$, forming the 
interval family $\mathcal{I}^{(i)} = \langle[x^{(i)}_j:x^{(i)}_{j+1}]
\rangle_{j=0}^{1/\mu-1}$ with $x^{(i)}_j = j\mu$. The resulting 
axis-aligned grid defines a collection of atomic regions (grid cells), 
which serve as the fundamental units tracked by the data structure. 
The parameter $\mu$ is chosen carefully to ensure that the grid cells 
capture potential discontinuities in $\fu_t$, enabling semi-accurate 
updates.

Since the grid is fixed in advance, $\Tree$ is static and requires no 
insertions throughout, operating exclusively over the predefined 
grid-induced atomic regions regardless of how the adversary selects 
$\Hp_t$ at each round. This allows us to relax the smoothness 
assumption to the standard oblivious $\sigma$-smoothed adversary of 
\Cref{sec:model} without any additional structural assumptions on $\Ax$: 
since the regions are fixed in advance and remain static throughout, 
the adversary's choice of hyperplanes does not affect the data 
structure. The hierarchical data structure $\Tree$ is constructed by 
initializing independent interval trees along each coordinate axis, 
instantiating a $d$-dimensional $\IT$ with $(1/\mu)^d$ atomic regions 
without employing any lazy insertion mechanisms.

\subsection{Data Structure and Query Handling for $\Band$ under 
Bandit Feedback}\label{app:bandit-ds}

We describe the data structure for the two-dimensional case; the 
$d$-dimensional generalization follows analogously. The data structure 
comprises a first-level interval tree $\Tree^{(0)}$ built over the 
endpoints of $\mathcal{I}^{(0)}$, with each node $v_0 \in \Tree^{(0)}$ 
maintaining a second-level interval tree $\Tree^{(1)}_{v_0}$ over 
$\mathcal{I}^{(1)}$. The alignment between $\RA(v_0)$ and $\RA(v_1)$ 
of a node $v_1 \in \Tree^{(1)}_{v_0}$ defines a rectangular region 
$[\RA(v_0)][\RA(v_1)]$ in the grid, and a pair of leaf nodes $l_0 \in 
\Tree^{(0)}$ and $l_1 \in \Tree^{(1)}_{l_0}$ uniquely determines an 
atomic grid cell $[\RA(l_0)][\RA(l_1)]$.

\paragraph{Key structural differences from the proportional sampling 
and full-information data structures.}
\begin{itemize}
    \item \textbf{Static grid with no insertions.} Since the grid of 
    $(1/\mu)^d$ cells is fixed prior to the game, $\Tree$ requires no 
    insertions throughout. All structures are purely regular interval 
    trees initialized once over the fixed grid $[\mathcal{I}^{(0)}] 
    \cdots [\mathcal{I}^{(d-1)}]$, with no lazy components 
    at any level.

    \item \textbf{Reward parameter vectors at the leaves rather than 
    aggregate reward vectors.} Unlike the proportional sampling and 
    full-information settings, here we maintain the estimated cumulative 
    reward parameter vectors $\hat{\Fu}_t(\rg)$ at the leaves of the 
    last level, rather than vectorized integrals of the reward. The 
    aggregate reward $\hat{\CR}(\rg)$ is computed on demand from these 
    parameter vectors using the closed-form formula of~\Cref{eq:expint}, 
    instantiated with $\hat{\Fu}_t$ in place of $\Fu_t$. Both the 
    parameter vector $\hat{\Fu}_t(\rg)$ and the scalar $\hat{\CR}(\rg)$ 
    must be maintained simultaneously: the scalar is needed for 
    proportional sampling during $\Draw$, while the vector is needed 
    to correctly recompute the scalar after each update 
    via~\Cref{eq:expint}, since terms of the form 
    $\prod_{i=0}^{d-1}\frac{1}{\eta\hat{\Fu}_t^{(i)}}$ appear 
    explicitly in the formula.
\end{itemize}

\paragraph{Stored quantities.}
For each node $v_1 \in \Tree^{(1)}_{v_0}$, we maintain:
\begin{itemize}
    \item A vector $\vec{\hat{\CR}}_{v_0,v_1} = \sum_{\rg \subseteq 
    [\RA(v_0)][\RA(v_1)]} \hat{\Fu}_t(\rg)$, representing the estimated 
    cumulative $\FP$ over all atomic regions $\rg$ contained in 
    $[\RA(v_0)][\RA(v_1)]$;
    \item A scalar $\hat{\CR}_{v_0,v_1} = \sum_{\rg \subseteq 
    [\RA(v_0)][\RA(v_1)]} \hat{\CR}(\rg)$, denoting the total estimated 
    exponentially weighted cumulative reward over the same region.
\end{itemize}
Since atomic regions reside at the leaf level, each region has the 
form $\rg' = [\RA(l_0)][\RA(l_1)]$. The leaf $l_1 \in \Tree^{(1)}_{l_0}$ 
directly stores $\vec{\hat{\CR}}_{l_0,l_1} = \hat{\Fu}_t(\rg')$, which 
is used to compute $\hat{\CR}(\rg')$ via~\Cref{eq:expint}. As long 
as $\vec{\hat{\CR}}_{l_0,l_1} = \hat{\Fu}_t(\rg')$ holds for all pairs 
$l_0 \in \Tree^{(0)}$, $l_1 \in \Tree^{(1)}_{l_0}$, correctness of 
the sampling algorithm is guaranteed.

\paragraph{Handling $\Update$.}
After selecting $x_t$ and observing $\fu_t(x_t)$, the algorithm 
constructs $\hat{\fu}_t(x_t) = \frac{\fu_t(x_t)}{q_t(I_t)}$ for the 
atomic region $I_t$ containing $x_t$, and $\hat{\fu}_t(x) = 0$ 
elsewhere. It locates the corresponding leaves $l_0 \in \Tree^{(0)}$ 
and $l_1 \in \Tree^{(1)}_{l_0}$ by first locating $x^{(0)}_t$ in 
$\Tree^{(0)}$ and then $x^{(1)}_t$ in $\Tree^{(1)}_{l_0}$, and updates 
the parameter vector at the leaf:
\[
\vec{\hat{\CR}}_{l_0,l_1} \gets \vec{\hat{\CR}}_{l_0,l_1} + 
\hat{\fu}_t(x_t),
\]
then recomputes $\hat{\CR}_{l_0,l_1}$ via~\Cref{eq:expint}. The 
updated values are propagated bottom-up along $p_{l_1}$ in 
$\Tree^{(1)}_{l_0}$, updating $\vec{\hat{\CR}}_{l_0,v_1}$ and 
$\hat{\CR}_{l_0,v_1}$ for each $v_1 \in p_{l_1}$ using the standard 
lazy propagation technique of~\cite{cohen2017online}. The algorithm 
then traverses $p_{l_0}$ in $\Tree^{(0)}$ and repeats the same update 
procedure in $\Tree^{(1)}_{v_0}$ for each $v_0 \in p_{l_0}$, 
propagating updates across the full structure. Since the partitioning 
is fixed, no lazy insertions are required; all updates are performed 
through simple bottom-up aggregations of $\vec{\hat{\CR}}$ and 
$\hat{\CR}$ values, ensuring the structure is always up to date and 
ready for sampling $x_{t+1}$. The $d$-dimensional extension follows 
directly, with updates propagating bottom-up through all $d$ levels.

\paragraph{Handling $\Draw(t)$.}
The $\Draw(t)$ procedure follows the same two-stage sampling scheme 
as in~\Cref{section:ds-qhandling}: a root-to-leaf traversal of 
$\ATree^{(0)}$ selects an atomic horizontal interval $\RA(v_0)$ with 
probability proportional to $\hat{\CR}([\RA(v_0)][0:1])$, followed 
by a conditioned traversal of $\ATree^{(1)}$ to select an atomic 
vertical interval $\RA(v_1)$ with probability proportional to 
$\hat{\CR}([\RA(v_0)][\RA(v_1)])$, together forming the sampled atomic 
region $[\RA(v_0)][\RA(v_1)]$. Left/right probabilities at each node 
are computed directly from the stored scalar values $\hat{\CR}_{v_0,v_1}$ 
without any scaling step, since the structure contains no lazy 
components.

\paragraph{Sampling within an atomic region $\rg$.}
After $\Draw(t)$ selects an atomic region $\rg$, the learner samples 
$x_t \in \rg$ from the distribution
\[
x_t \sim \frac{\exp(\eta \hat{H}_t(x))}{\int_{x \in \rg} 
\exp(\eta \hat{H}_t(x))\,dx}.
\]
This sampling step is performed using the Hit-and-Run algorithm, which 
by \Cref{them:vempa} produces a sample within $O(d^3)$ steps. The 
density $\frac{\exp(\eta \hat{H}_t(x))}{\int_{\rg}\exp(\eta 
\hat{H}_t(x))\,dx}$ is log-concave, as it is the normalization of a 
log-concave function over a convex domain. Since each atomic region 
$\rg$ is a convex subset of $\AS$ and $\hat{H}_t(x)$ is linear over 
$\rg$ under piecewise-linear rewards, the conditions of 
\Cref{them:vempa} are satisfied, guaranteeing efficient mixing and 
accurate sampling within $O(d^3)$ steps.

\subsection{Regret Analysis}\label{section:regretband}
Here we analyze the case of bandit observations, where only the $\FP$ 
corresponding to the chosen action $x_t$, i.e., $\fu_t(x_t)$, is 
observed. The actual real-valued reward is recovered via 
$\MF(x_t, \fu_t(x_t))$. The analysis of piecewise-constant reward 
functions has been previously established in~\cite{cohen2017online} 
and extends directly to $d$ dimensions via \Cref{obs:ckd}. We therefore 
focus on piecewise-linear rewards in the general $d$-dimensional case.

\begin{theorem}[$d$-dimensional $\AS$ with piecewise-linear $\fu_t$]
\label{thm:ddblin}
Assume $\hat{\Fu}^{(j)}_{T+1}(x^*) \leq \frac{2}{\eta\mu}$ for all 
$0 \leq j \leq d-1$, $\gamma \leq \frac{1}{2}$, and $\eta \leq 
\gamma\mu$. For a $d$-dimensional $\AS$ with axis-parallel decision 
boundaries, with probability 1, the expected regret of $\Band$ is 
bounded by
\begin{align*}
2\gamma T + \frac{2\eta T}{\mu} + dk\mu\sigma T 
- \frac{\sum_{i=0}^{d-1}\log\!\left(\mu - 
\frac{\eta\mu^2\Fu^{(i)}_{T+1}(x^*)}{2}\right)}{\eta}
\leq 2\gamma T + \frac{2\eta T}{\mu} + dk\mu\sigma T 
- \frac{d\log\mu}{\eta} + \frac{d}{\eta},
\end{align*}
where $x^* \in [0,1)^d$ is the optimal action in hindsight not 
coinciding with any discontinuity of $\fu_t$, and $\rg^* = 
\prod_{i=0}^{d-1}[(l^*_i-1)\mu:l^*_i\mu)$ is the grid cell containing 
$x^*$ such that $\Fu_{T+1}(x^*)$ is constant and optimal within it.
\end{theorem}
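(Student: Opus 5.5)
The plan is to follow the template of \Cref{thm:ddlin}: bound $W_{T+1}$ from above and below, then add two error terms. One error term comes from the importance-weighted estimates and the $\gamma$-mixing. The other comes from grid discretization.

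Set $\hat W_t=\int_{[0,1]^d}\exp(\eta\hat H_t(x))\,dx$, so that $\hat W_1=1$.

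\textbf{Upper bound.} Write $p_t$ for the exponential-weights part of $q_t$, so that $q_t=(1-\gamma)p_t+\gamma$. Because $\eta\le\gamma\mu$ and $q_t(I_t)\ge\gamma\mu^d$, we should have $\eta\hat\fu_t\le 1$ pointwise. This condition is what lets us use $e^x\le 1+x+x^2$ exactly as in \eqref{ineq:LEXP3_2}, which gives
\[
\log\frac{\hat W_{t+1}}{\hat W_t}\le \eta\int p_t\hat h_t+\eta^2\int p_t\hat h_t^2 .
\]
We then take expectations over $x_t$, using unbiasedness on grid cells: $\mathbb{E}[\hat h_t]$ equals the cell-averaged reward. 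This turns the first-order term into $\frac{\eta}{1-\gamma}\,\mathbb{E}[\text{reward of the algorithm}]$. For the second-order term we use $p_t\le q_t/(1-\gamma)$ together with the bound $\mathbb{E}[\hat h_t^2]\lesssim 1/(\mu q_t)$. This yields the $2\eta T/\mu$ term. Converting the factor $1/(1-\gamma)$ and the $\gamma$-mixing loss into an additive $2\gamma T$ uses $\gamma\le 1/2$ and rewards in $[0,1]$.

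\textbf{Lower bound.} Here we restrict the integral to the grid cell $\rg^*$ containing $x^*$. We repeat the computation of \eqref{eq:LEXP3_4}--\eqref{ineq:LEXP3_6}, with $\epsilon^*$ replaced by $\mu$ and $\Fu$ replaced by $\hat\Fu$. The assumption $\hat\Fu^{(j)}_{T+1}(x^*)\le 2/(\eta\mu)$ makes each factor $\mu-\eta\mu^2\Fu^{(i)}/2$ positive, and it also gives the final simplification to $-d\log\mu/\eta+d/\eta$ via $-\log(1-u)\le u$.

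\textbf{Discretization error.} This is the main obstacle. The grid-cell optimum can differ from the true optimum $x^*$ whenever some boundary of $\Hp_t$ falls inside the cell $\rg^*$, because then the reward is not linear on that cell. For each coordinate, each round has at most $k$ smoothed hyperplanes. Each one lands in a fixed width-$\mu$ slab with probability at most $\sigma\mu$; this is the per-coordinate argument of \Cref{obs:ck1} and \Cref{obs:ckd}. A union bound over $d$ directions and $T$ rounds, with rewards bounded by $1$, then charges at most $dk\mu\sigma T$ in expectation.

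The delicate point is adaptivity. The cell $\rg^*$ is defined in hindsight, so we cannot simply fix a slab in advance. I would first try to bound, for every one of the $1/\mu$ slabs along a coordinate simultaneously, the expected number of rounds in which a hyperplane cuts that slab. This count is $\le\sigma\mu$ per hyperplane by the density bound. Summing over rounds and comparing against the best cell through linearity should avoid any union over cells.

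\textbf{Combining.} Taking logarithms of both bounds on $\hat W_{T+1}$, using $\mathbb{E}[\hat H_{T+1}(x)]=H_{T+1}$ on cells, and adding the discretization term gives the displayed bound.
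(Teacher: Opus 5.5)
Your outline follows the paper's proof essentially step for step: an EXP3 potential bounded with $e^x\le 1+x+x^2$, the lower bound obtained by restricting $\hat W_{T+1}$ to $\rg^*$ with $\epsilon^*$ replaced by $\mu$, and a $dk\mu\sigma T$ charge from smoothness. However, three of its quantitative steps do not hold as written (the first two fail only once $d\ge 2$). The paper's proof makes the same three moves without further justification, so following it does not close them.

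First, $q_t(I_t)\ge\gamma\mu^d$ together with $\eta\le\gamma\mu$ and $h_t\le 1$ only gives $\eta\hat h_t\le\eta/(\gamma\mu^d)\le\mu^{1-d}$, which exceeds $1$. To use $e^x\le 1+x+x^2$ you need $\eta\le\gamma\mu^d$.

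Second, $\mathbb{E}_t[\hat h_t(x)^2]\le 1/q_t(I(x))$, and $q_t(I)$ is of order $\mu^d q_t$, not $\mu q_t$. Hence $\mathbb{E}_t\bigl[\int p_t\hat h_t^2\bigr]\le\sum_I p_t(I)/q_t(I)\le\mu^{-d}/(1-\gamma)$, with near-equality when $p_t$ is uniform and $h_t\equiv 1$. This is the usual $\eta N T$ term of EXP3 with $N=\mu^{-d}$ cells. Your estimate $\mathbb{E}[\hat h_t^2]\lesssim 1/(\mu q_t)$ is the one-dimensional one, and the term you actually get is $2\eta T/\mu^d$, not $2\eta T/\mu$; the paper imports the latter from \cite{cohen2017online}.

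Third, $-\log(1-u)\le u$ is false; for $u\in(0,1)$ one has $-\log(1-u)>u$. The hypothesis allows $u=\eta\mu\hat\Fu^{(i)}_{T+1}(x^*)/2$ to reach $1$. Then $-\log(1-u)/\eta$ is unbounded, and the factor $\mu-\eta\mu^2\hat\Fu^{(i)}_{T+1}(x^*)/2$ can even vanish. The simplification to $d/\eta$ therefore needs something like $\hat\Fu^{(i)}_{T+1}(x^*)\le 1/(\eta\mu)$, which gives $u\le 1/2$ and $-\log(1-u)\le\log 2<1$.

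On the discretization term, you have spotted something the paper skips. It only invokes \Cref{obs:ck1} coordinate-wise, and that observation concerns pairs of breakpoints being close, not a cell chosen in hindsight. But the fix you sketch does not work. Let $N_S$ be the number of hyperplanes landing in a slab $S$. Knowing $\mathbb{E}[N_S]\le k\sigma\mu T$ for every fixed $S$ says nothing about $\mathbb{E}[N_{S^*}]$ for the slab $S^*$ containing $x^*$. That slab is a function of the same realizations, and linearity of expectation does not apply to a random index.

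What you need is a bound on $\mathbb{E}[\max_S N_S]$ over the $1/\mu$ slabs of each coordinate. Each hyperplane lands in $S$ with conditional probability at most $\sigma\mu$ given the past, even against an adaptive adversary, so $\mathbb{E}[e^{N_S}]\le e^{(e-1)k\sigma\mu T}$. Hence $\mathbb{E}[\max_S N_S]\le\log\sum_S\mathbb{E}[e^{N_S}]\le(e-1)k\sigma\mu T+\log(1/\mu)$. Summing over coordinates gives your $dk\mu\sigma T$ term, up to a constant factor and an additive $d\log(1/\mu)$.
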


\begin{proof}
The proof follows the same telescoping approach as in 
\Cref{section:regretful}, with $W_t$ now defined as a function of the 
estimated rewards $\hat{\fu}_t$. We define
\[
W_t = \int_{[0,1]^d} \exp\!\left(\eta \hat{H}_t(x)\right) dx,
\]
where $\hat{H}_t = \hat{H}_{t-1} + \hat{\RRF}_t$, and bound $W_{T+1}$ 
from both sides.

\paragraph{Upper bound on $W_{T+1}$.}
We bound $\frac{W_{t+1}}{W_t}$ as
\begin{align}
\frac{W_{t+1}}{W_t} 
&= \frac{\int_{[0,1]^d} \exp(\eta\hat{H}_t(x)) 
e^{\eta\hat{\RRF}_t(x)}\,dx}
{\int_{[0,1]^d} \exp(\eta\hat{H}_t(x))\,dx} \nonumber\\
&= \frac{1}{1-\gamma}\int_{[0,1]^d}(q_t(x)-\gamma)
e^{\eta\hat{\RRF}_t(x)}\,dx \nonumber\\
&\overset{(a)}{\leq} \frac{1}{1-\gamma}\int_{[0,1]^d}
(q_t(x)-\gamma)\left(1 + \eta\hat{\RRF}_t(x) + 
\eta^2\hat{\RRF}^2_t(x)\right)dx \nonumber\\
&\leq 1 + \frac{1}{1-\gamma}\int_{[0,1]^d}
q_t(x)\left(\eta\hat{\RRF}_t(x) + 
\eta^2\hat{\RRF}^2_t(x)\right)dx \nonumber\\
&\leq \exp\!\left(\frac{\eta}{1-\gamma}\int_{[0,1]^d} 
q_t(x)\hat{\RRF}_t(x)\,dx + \frac{\eta^2}{1-\gamma}
\int_{[0,1]^d} q_t(x)\hat{\RRF}^2_t(x)\,dx\right),
\label{ineq:Band_1}
\end{align}
where (a) applies $e^x \leq 1 + x + x^2$ for $x \leq 1$. 
Telescoping via~\eqref{eq:LEXP3_1} and using the importance-weighted 
estimator $\hat{\RRF}_t(x) = \frac{\RRF_t(x)}{q_t(I_t)}
\mathbf{1}[x \in I_t]$:
\begin{align}
W_{T+1} &\leq \exp\!\left(\frac{\eta}{1-\gamma}\sum_{t=1}^T
\sum_{I \in \mathcal{I}}\int_I q_t(x)\frac{\RRF_t(x)}{q_t(I)}\,dx 
+ \frac{\eta^2}{1-\gamma}\sum_{t=1}^T\int_{[0,1]^d} 
q_t(x)\hat{\RRF}^2_t(x)\,dx\right) \nonumber\\
&\leq \exp\!\left(\frac{\eta}{1-\gamma}\sum_{t=1}^T
\sum_{I \in \mathcal{I}}\int_I q_t(x)\frac{\RRF_t(x)}{q_t(x)}\,dx 
+ \frac{\eta^2}{1-\gamma}\sum_{t=1}^T\int_{[0,1]^d} 
q_t(x)\hat{\RRF}^2_t(x)\,dx\right) \nonumber\\
&\leq \exp\!\left(\frac{\eta}{1-\gamma}\sum_{t=1}^T\RRF_t(x_t) 
+ \frac{\eta^2}{1-\gamma}\sum_{t=1}^T\hat{\RRF}_t(x_t)\right).
\label{ineq:Band_2}
\end{align}

\paragraph{Lower bound on $W_{T+1}$.}
We restrict the integral to the optimal grid cell $\rg^*$. Since 
$\fu_t$ is piecewise-linear with axis-parallel boundaries, $x^*$ lies 
at a corner of $\rg^*$; without loss of generality assume $x^* = 
\langle l^*_0\mu, \dots, l^*_{d-1}\mu\rangle$. The integral factorizes 
over coordinates:
\begin{align}
W_{T+1} &\geq \int_{\rg^*}\exp(\eta\hat{H}_{T+1}(x))\,dx \nonumber\\
&= \prod_{i=0}^{d-1}\int_{(l^*_i-1)\mu}^{l^*_i\mu}
\exp(\eta\hat{H}_{T+1}(x))\,dx^{(i)}. \label{ineq:Band_3a}
\end{align}
For each coordinate $i$, since $\hat{H}_{T+1}$ is linear over $\rg^*$ 
with slope $\hat{\Fu}^{(i)}_{T+1}(x^*)$ and $x^*$ is at the right 
endpoint $l^*_i\mu$, we have for all $x^{(i)} \in [(l^*_i-1)\mu, 
l^*_i\mu]$:
\[
\hat{H}_{T+1}(x) \geq \hat{Y}^\Opt - 
\hat{\Fu}^{(i)}_{T+1}(x^*)(l^*_i\mu - x^{(i)}).
\]
Therefore:
\begin{align}
\int_{(l^*_i-1)\mu}^{l^*_i\mu}\exp(\eta\hat{H}_{T+1}(x))\,dx^{(i)}
&\geq e^{\eta\hat{Y}^\Opt}\int_{(l^*_i-1)\mu}^{l^*_i\mu}
\exp\!\left(-\eta\hat{\Fu}^{(i)}_{T+1}(x^*)
(l^*_i\mu - x^{(i)})\right)dx^{(i)} \nonumber\\
&= e^{\eta\hat{Y}^\Opt}
\int_0^{\mu}\exp\!\left(-\eta\hat{\Fu}^{(i)}_{T+1}(x^*)
\,s\right)ds \nonumber\\
&\overset{(b)}{\geq} e^{\eta\hat{Y}^\Opt}
\left(\mu - \frac{\eta\mu^2\hat{\Fu}^{(i)}_{T+1}(x^*)}{2}\right),
\label{ineq:Band_3b}
\end{align}
where (b) applies $e^{-u} \geq 1 - u$ to bound 
$\int_0^\mu e^{-\eta\hat{\Fu}^{(i)}_{T+1}(x^*)s}\,ds \geq 
\int_0^\mu (1 - \eta\hat{\Fu}^{(i)}_{T+1}(x^*)s)\,ds = \mu - 
\frac{\eta\mu^2\hat{\Fu}^{(i)}_{T+1}(x^*)}{2}$. Combining 
\eqref{ineq:Band_3a} and \eqref{ineq:Band_3b}:
\begin{align}
W_{T+1} \geq e^{\eta\hat{Y}^\Opt}\prod_{i=0}^{d-1}
\left(\mu - \frac{\eta\mu^2\hat{\Fu}^{(i)}_{T+1}(x^*)}{2}\right).
\label{ineq:Band_3}
\end{align}

\paragraph{Combining the bounds.}
Merging~\eqref{ineq:Band_2} and~\eqref{ineq:Band_3}, taking 
logarithms and dividing by $\eta$:
\begin{align*}
\hat{Y}^\Opt + \frac{\sum_{i=0}^{d-1}\log\!\left(\mu - 
\frac{\eta\mu^2\hat{\Fu}^{(i)}_{T+1}(x^*)}{2}\right)}{\eta}
\leq \frac{1}{1-\gamma}Y^\Band + \frac{\eta}{1-\gamma}
\sum_{t=1}^T\hat{\RRF}_t(x_t).
\end{align*}
Adding $Y^\Opt$ to both sides:
\begin{align}
Y^\Opt - Y^\Band 
&\leq \frac{\gamma}{1-\gamma}Y^\Band + \frac{\eta}{1-\gamma}
\hat{Y}^\Band - \frac{\sum_{i=0}^{d-1}\log\!\left(\mu - 
\frac{\eta\mu^2\Fu^{(i)}_{T+1}(x^*)}{2}\right)}{\eta} 
+ Y^\Opt - \hat{Y}^\Opt \nonumber\\
&\overset{(a)}{\leq} 2\gamma Y^\Band + 2\eta\hat{Y}^\Band 
- \frac{\sum_{i=0}^{d-1}\log\!\left(\mu - 
\frac{\eta\mu^2\Fu^{(i)}_{T+1}(x^*)}{2}\right)}{\eta} 
+ Y^\Opt - \hat{Y}^\Opt,
\label{ineq:Band_4}
\end{align}
where (a) holds since $\gamma \leq \nicefrac{1}{2}$. Taking 
expectations and bounding each term independently: following 
\cite{cohen2017online}, the first two terms are bounded by $2\gamma T$ 
and $\nicefrac{2\eta T}{\mu}$ respectively. By \Cref{obs:ckd}, 
\Cref{obs:ck1} applies coordinate-wise, giving 
$\mathbb{E}[Y^\Opt - \hat{Y}^\Opt] \leq dk\mu\sigma T$. Applying 
$-\log(1-u) \leq u$ for $u \in [0,1)$ and the assumption 
$\hat{\Fu}^{(i)}_{T+1}(x^*) \leq \frac{2}{\eta\mu}$ coordinate-wise 
then yields the stated bound.
\end{proof}

\begin{corollary}\label{cor:ddblin}
For $\gamma = \frac{1}{2}T^{-1/3}$, $\mu = \frac{1}{k\sigma}T^{-1/3}$, 
and $\eta = T^{-2/3}$, the regret of \Cref{thm:ddblin} is bounded by
\[
(3 + 2dk\sigma)T^{2/3} - \frac{d}{3}T^{2/3}\log T 
= O\!\left(\operatorname{poly}(d, k, \sigma, \log T)\, T^{2/3}\right).
\]
\end{corollary}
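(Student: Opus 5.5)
The plan is to substitute the stated parameters into the right-hand side of \Cref{thm:ddblin}, namely
\[
2\gamma T + \frac{2\eta T}{\mu} + dk\mu\sigma T - \frac{d\log\mu}{\eta} + \frac{d}{\eta},
\]
simplify term by term, and then read off the $O(\operatorname{poly}(d,k,\sigma,\log T)\,T^{2/3})$ conclusion. I would not revisit the telescoping argument; the corollary should be pure arithmetic on top of \Cref{thm:ddblin}. Before substituting, I would list the hypotheses of \Cref{thm:ddblin} that must hold for $\gamma=\tfrac12 T^{-1/3}$, $\mu=\tfrac{1}{k\sigma}T^{-1/3}$, $\eta=T^{-2/3}$: $\gamma\le\tfrac12$, the condition $1/\mu\in\mathbb{N}$ (after rounding $\mu$ if necessary), $\eta\le\gamma\mu$, and $\hat{\Fu}^{(j)}_{T+1}(x^*)\le \frac{2}{\eta\mu}$. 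The last one I would take as assumed, exactly as in the theorem.

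The substitutions are then:
\[
2\gamma T = T^{2/3},
\]
\[
\frac{2\eta T}{\mu} = 2T^{-2/3}\cdot T\cdot k\sigma T^{1/3} = 2k\sigma T^{2/3},
\]
\[
dk\mu\sigma T = dT^{2/3},
\]
\[
\frac{d}{\eta} = dT^{2/3}.
\]
Grouping these gives a linear term with constant $1+2k\sigma+2d$. The intended statement packages this as $(3+2dk\sigma)T^{2/3}$; since $d,k,\sigma\ge1$, we have $1+2k\sigma+2d\le 3+2dk\sigma$ up to the constant absorbed in the $O(\cdot)$, and I would record that comparison explicitly. The remaining term is $-\frac{d\log\mu}{\eta} = -dT^{2/3}\bigl(-\tfrac13\log T-\log(k\sigma)\bigr)$. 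This contributes a $\tfrac{d}{3}T^{2/3}\log T$ term together with a $dT^{2/3}\log(k\sigma)$ term.

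I expect two bookkeeping points to be the main obstacles. The first is the constraint $\eta\le\gamma\mu$. With the stated choices, $\gamma\mu=\frac{1}{2k\sigma}T^{-2/3}$, so the constraint amounts to $2k\sigma\le 1$, which fails whenever $k\sigma\ge1$. My first attempt would be to treat this constraint as needed only in step (a) of \Cref{ineq:Band_1}, where it guarantees $\eta\hat{\RRF}_t\le1$. I would then check whether the stated $\eta$ is still admissible up to constants there, for instance under a normalization of rewards by $k\sigma$.

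The second is the sign of the $\tfrac{d}{3}T^{2/3}\log T$ term. Carried through literally from \Cref{thm:ddblin}, $-\frac{d\log\mu}{\eta}$ is nonnegative because $\mu<1$. I would therefore track the sign convention of the logarithmic term as it is written in the corollary, and in the final display state that this term has magnitude $\tfrac{d}{3}T^{2/3}\log T + dT^{2/3}\log(k\sigma)$.

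The asymptotic conclusion is insensitive to both issues. Every term is $T^{2/3}$ times a polynomial in $d,k,\sigma$ and at most a factor of $\log T$ or $\log(k\sigma)$. Hence the total is $O(\operatorname{poly}(d,k,\sigma,\log T)\,T^{2/3})$, which is the claimed bound. I would close by noting this explicitly, so that the $O(\cdot)$ claim stands independently of the exact form of the leading constants.
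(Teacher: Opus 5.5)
Your route, plugging the parameters into the final bound of \Cref{thm:ddblin}, is also the paper's route; the paper gives no argument beyond that substitution. Your term-by-term arithmetic is correct, and it shows the display cannot be proved as written. We have $-\frac{d\log\mu}{\eta}=\frac{d}{3}T^{2/3}\log T+dT^{2/3}\log(k\sigma)\ge 0$, so the $\log T$ term enters with a plus sign and an extra $dT^{2/3}\log(k\sigma)$ appears. The stated expression is negative for large $T$, which already fails for $h_t\equiv 0$. On the linear part no hidden constant is needed, since $3+2dk\sigma-(1+2k\sigma+2d)=2(d-1)(k\sigma-1)\ge 0$. So rather than tracking the statement's sign, say outright that what you prove is $(3+2dk\sigma)T^{2/3}+\frac{d}{3}T^{2/3}\log T+dT^{2/3}\log(k\sigma)$, and hence only the $O(\cdot)$ claim.

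The genuine gap is the hypothesis $\eta\le\gamma\mu$. For the stated triple $\eta/(\gamma\mu)=2k\sigma>1$, so \Cref{thm:ddblin} cannot be invoked at all; the paper's plug-in overlooks this too. Your closing claim that the asymptotic conclusion is insensitive to this issue is therefore unjustified.

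\begin{itemize}
\item If you keep $\eta=T^{-2/3}$, step (a) in the derivation of \eqref{ineq:Band_1} only has $\eta\hat h_t\le 2k\sigma$. Within that argument, the substitute for $e^x\le 1+x+x^2$ on $[0,2k\sigma]$ is $e^x\le 1+x+\phi x^2$ with $\phi=(e^{2k\sigma}-1-2k\sigma)/(2k\sigma)^2$. This inflates the $2\eta T/\mu$ term by a factor exponential in $k\sigma$, so the bound is no longer polynomial in $k,\sigma$.
\item Your normalization idea, once unwound, is the same algorithm run with step size $\eta/(2k\sigma)$. It changes the parameter rather than validating $\eta=T^{-2/3}$.
\end{itemize}

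The clean repair is to change one parameter and use the theorem as a black box.

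\begin{itemize}
\item Taking $\eta=\gamma\mu=\frac{1}{2k\sigma}T^{-2/3}$ gives $(2+d+2dk\sigma)T^{2/3}+2dk\sigma T^{2/3}\bigl(\frac13\log T+\log(k\sigma)\bigr)$.
\item Keeping $\eta,\mu$ and setting $\gamma=k\sigma T^{-1/3}$ (admissible once $T\ge(2k\sigma)^3$) gives $(4k\sigma+2d)T^{2/3}+\frac{d}{3}T^{2/3}\log T+dT^{2/3}\log(k\sigma)$.
\end{itemize}

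Both are $O(\operatorname{poly}(d,k,\sigma,\log T)\,T^{2/3})$, modulo rounding $1/\mu$ to an integer and the assumed bound on $\hat F^{(j)}_{T+1}(x^*)$.

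Do not try instead to re-derive step (a) directly. In $d$ dimensions a grid cell is only guaranteed $q_t$-mass $\gamma\mu^d$, so even $\eta\le\gamma\mu$ gives merely $\eta\hat h_t\le\mu^{1-d}$ there.
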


\subsection{Time Complexity}
\begin{theorem}\label{thm:bandcomp}
Under the bandit feedback setting with a fixed grid of $(1/\mu)^d$ 
cells, the per-round time complexity of $\Band$ is
\[
O\!\left(d\log^{d}(1/\mu) + d^3\right).
\]
In particular, for the optimal choice $\mu = \frac{1}{k\sigma}T^{-1/3}$ 
this becomes $O\!\left(d\log^{d}(k\sigma T^{1/3}) + d^3\right)$.
\end{theorem}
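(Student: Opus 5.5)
The statement concerns the bandit data structure of \Cref{app:bandit-ds}. I would prove it by first fixing the sizes of the trees, then costing $\Update$, then costing $\Draw(t)$ including the Hit-and-Run step, and finally substituting the optimal $\mu$ from \Cref{cor:ddblin}.

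\textbf{Tree heights.} Since the grid of $(1/\mu)^d$ cells is fixed before the game, every level-$i$ tree has exactly $1/\mu$ leaves. We build each such tree once as a balanced interval tree, and no insertions ever occur. Hence $\EH(\Tree^{(i)}) = O(\log(1/\mu))$ deterministically for every $i$, and we never need \Cref{thm:balancedness}. Because there are no lazy structures, each node of the last level stores the pair $\vec{\hat{\CR}}_{v_0,\dots,v_{d-1}}$ and $\hat{\CR}_{v_0,\dots,v_{d-1}}$. Both are recomputable from their children in $O(d)$ time via \eqref{eq:expint}: the vector has $d$ entries, and the closed form is a product of $d$ factors.

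\textbf{$\Update$ cost.} $\hat{\fu}_t$ is nonzero only on the single cell $I_t$, so the update touches exactly one leaf tuple $(l_0,\dots,l_{d-1})$. It then propagates bottom-up through the nested structures: every $v_0\in p_{l_0}$, every $v_1$ on the path in $\Tree^{(1)}_{v_0}$, and so on. I would bound the number of touched nodes by $\prod_i O(\log(1/\mu)) = O(\log^d(1/\mu))$, as in \Cref{lemma:updcomp} with the lazy parts removed. Recomputing the scalar via \eqref{eq:expint} costs $O(d)$ per node, giving $O(d\log^d(1/\mu))$.

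\textbf{$\Draw$ cost.} This is the step I expect to need the most care, because the generic bound of \Cref{thm:drawcomp} gives $\prod_{i=0}^{d}$ rather than $\log^d$. The way around this is that, with no lazy components, $\Wei^{(i)}$ needs no collection of a set $\mathcal{S}^{(i)}$ and no pruning. At stage $i$ we have already fixed leaves $l_0,\dots,l_{i-1}$, and we descend the single tree $\Tree^{(i)}_{l_0,\dots,l_{i-1}}$. The left/right probability at each node is read from stored scalars, namely the roots of the level-$(i{+}1)$ structures attached to the two children, in $O(1)$ time.

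\begin{itemize}
\item Each stage therefore costs $O(\log(1/\mu))$.
\item All $d$ stages together cost $O(d\log(1/\mu))$, which is $O(d\log^d(1/\mu))$.
\item Hit-and-Run inside the selected cell costs $O(d^3)$ by \Cref{them:vempa}, for fixed accuracy and warm start. Its conditions hold because $\hat H_t$ is linear on the cell and the cell is a box.
\end{itemize}

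If instead one insists on the generic $\Wei$ route, the stored aggregates must be read one level deeper. I would then argue that the descent at level $i$ costs $O(\log(1/\mu))$ times the already-fixed prefix, and bound the whole by $O(d\log^d(1/\mu))$, with the last stage dominating.

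\textbf{Conclusion.} Summing the three parts gives $O(d\log^d(1/\mu)+d^3)$. Substituting $\mu=\frac{1}{k\sigma}T^{-1/3}$ gives $\log(1/\mu)=\log(k\sigma T^{1/3})$, which yields the stated bound.
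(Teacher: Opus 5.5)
Your argument is correct and follows the paper's skeleton: static balanced trees of height $O(\log(1/\mu))$, the cost of \Update{}, a $d$-stage \Draw{} plus Hit-and-Run, then substituting $\mu$. Where you differ is in where the dominant $d\log^d(1/\mu)$ term comes from.

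The paper charges \Update{} only $O(d\log(1/\mu))$, adding $O(\log(1/\mu))$ nodes per level. It charges each \Draw{} stage $O(\log^d(1/\mu))$ as ``one operation over the $d$-layered structure.''

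You instead count the nested update as touching $\prod_i O(\log(1/\mu))$ stored boxes. That is what the paper's own description of \Update{} entails: it repeats the propagation in $\Tree^{(1)}_{v_0}$ for every $v_0\in p_{l_0}$, because every box containing the updated cell changes. You also observe that, with no lazy structures, each left/right probability is a lookup of a stored aggregate, so region sampling costs only $O(d\log(1/\mu))$.

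Your accounting is the more accurate one on both operations. The paper's \Update{} count adds the levels rather than multiplying them, and its \Draw{} count is looser than necessary. Unlike the paper's proof, you also explain why the generic $\prod_{i=0}^{d}$ bound of \Cref{thm:drawcomp}, which would give $\log^{d+1}$, does not arise here. Both routes end at $O(d\log^d(1/\mu)+d^3)$.

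Two small repairs, neither of which affects the bound:
\begin{itemize}
\item Apply \eqref{eq:expint} only at the single updated cell. At every other touched node, update the scalar by adding the change in that cell's value. At an internal node the scalar is a sum of exponential weights of cells with different parameter vectors, so evaluating \eqref{eq:expint} on the aggregated vector would give the wrong value. Doing it additively also makes the per-node cost $O(1)$.
\item Your $O(1)$ lookup presumes each node caches the aggregate over its full box. If, as in the paper's $d$-dimensional description, only last-level nodes store values, each lookup follows a chain of $O(d)$ roots. Sampling then costs $O(d^2\log(1/\mu))$, which is still $O(d\log^d(1/\mu))$ once $\log(1/\mu)\ge 2$, because $2^{d-1}\ge d$.
\end{itemize}
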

\begin{proof}
Since $\Tree$ is static with a fixed grid of $(1/\mu)^d$ cells, all 
level-$i$ trees have height $O(\log(1/\mu))$ throughout, as no 
insertions occur after initialization. Each round consists of two 
operations.

\paragraph{$\Update$ cost.}
After observing $\fu_t(x_t)$, the algorithm locates the leaf 
$(l_0,\dots,l_{d-1})$ containing $x_t$ and propagates the update 
bottom-up through all $d$ levels. At each level $i$, the traversal 
visits $O(\log(1/\mu))$ nodes with $O(1)$ cost each, giving total 
$\Update$ cost $O(d\log(1/\mu))$.

\paragraph{$\Draw$ cost.}
The Draw procedure proceeds in $d$ sequential stages. At each stage 
$i\in\{0,\dots,d-1\}$, one operation over the static $d$-layered 
structure costs $O\!\left(\prod_{j=0}^{d-1}\EH(\Tree^{(j)})\right) = 
O(\log^d(1/\mu))$, as it traverses all $d$ layers each of height 
$O(\log(1/\mu))$. Summing over all $d$ stages, the atomic-region 
sampling cost is $O(d\log^d(1/\mu))$. Given the sampled atomic region 
$\rho$, sampling $x_t\in\rho$ via Hit-and-Run requires $O(d^3)$ steps 
by \Cref{them:vempa}, since $g(x,\widehat{F}_t(x))$ is log-concave 
over the convex region $\rho$. The total $\Draw$ cost is therefore 
$O(d\log^{d}(1/\mu)+d^3)$.

\paragraph{Total per-round cost.}
The $\Update$ cost $O(d\log(1/\mu))$ is dominated by the $\Draw$ cost, 
giving total per-round complexity
\[
O\!\left(d\log^{d}(1/\mu) + d^3\right).
\]
Substituting the optimal $\mu = \frac{1}{k\sigma}T^{-1/3}$ yields 
$O\!\left(d\log^{d}(k\sigma T^{1/3}) + d^3\right)$.
\end{proof}

\end{document}